\DeclareMathAlphabet{\mathsfit}{\encodingdefault}{\sfdefault}{m}{sl}
\SetMathAlphabet{\mathsfit}{bold}{\encodingdefault}{\sfdefault}{bx}{n}
\newcommand{\rhonorm}{{\alpha_{M}}}
\newcommand{\oldmodel}{M_{\theta_t}}
\newcommand{\rhonormtheta}{{\alpha_{\oldmodel}}}
\newcommand{\adv}{A_{\varphi^*_0,\varphi^*_1}}
\newcommand{\advsingle}{A_{\varphi^*}}
\tikzstyle{phases_res} = [rectangle,rounded corners, text centered, text width=8.5cm,text height=0.4cm,draw=black,fill=white!30]
\tikzstyle{process} = [rectangle,text centered, text width=6cm, draw=black,fill=gray!7]
\tikzstyle{mainarrow} = [thick,->,>=stealth,text width=3.7cm]
\tikzstyle{arrow} = [thick,->,>=stealth]
\tikzstyle{virtual_arrow} = [thick,-,>=stealth]
\theoremstyle{plain}
\newtheorem{theorem}{Theorem}[section]
\newtheorem{lemma}[theorem]{Lemma}
\newtheorem{corollary}[theorem]{Corollary}
\theoremstyle{definition}
\newtheorem{definition}[theorem]{Definition}
\newtheorem{remark}[theorem]{Remark}
\definecolor{mygray}{gray}{.9}
\definecolor{green}{rgb}{0, 0.5, 0}
\definecolor{mywhite}{rgb}{1, 1, 1}
\definecolor{orange}{rgb}{1.0, 0.6, 0.2}
\definecolor{red}{rgb}{1.0, 0.0, 0.0}
\definecolor{blue}{rgb}{0.0, 0.0, 1.0}
\definecolor{teal}{rgb}{0.0, 0.4, 0.4}
\definecolor{purple}{rgb}{0.65,0,0.65}
\definecolor{saffron}{rgb}{0.95,0.75,0.2}
\definecolor{turquoise}{rgb}{0.0,0.5,0.5}
\definecolor{black}{rgb}{0,0,0}
\title{Adversarial Counterfactual Environment Model Learning}
\author{
  Xiong-Hui Chen$^{1,3,}$\thanks{The authors contribute equally} , Yang Yu$^{1,3,*,}$\thanks{Yang Yu is the corresponding author.} , Zheng-Mao Zhu$^{1}$, Zhihua Yu$^{1,2}$, Zhenjun Chen$^{1,2}$, \\\textbf{Chenghe Wang}$^{1}$, \textbf{Yinan Wu}$^{2}$, \textbf{Hongqiu Wu}$^{1}$, \textbf{Rong-Jun Qin}$^{1,3}$\\\textbf{Ruijin Ding}$^{4}$,
\textbf{Fangsheng Huang}$^{2}$\\
$^{1}$ National Key Laboratory of Novel Software Technology, 
   Nanjing University\\
  $^{2}$ Meituan,
  $^{3}$ Polixir.ai,
  $^{4}$ Tsinghua University\\
\texttt{\{chenxh, yuy, zhuzm\}@lamda.nju.edu.cn, \{yuzh,chenzj\}@smail.nju.edu.cn}\\
\texttt{wangch@lamda.nju.edu.cn, wuyinan02@meituan.com, \{wuhq,qinrj\}@lamda.nju.edu.cn}\\
 \texttt{drj17@mails.tsinghua.edu.cn, huangfangsheng@meituan.com}
}
\begin{document}
\doparttoc %
\faketableofcontents %

\maketitle

\begin{abstract}

An accurate environment dynamics model is crucial for various downstream tasks, such as counterfactual prediction, off-policy evaluation, and offline reinforcement learning. 
Currently, these models were learned through empirical risk minimization (ERM) by step-wise fitting of historical transition data.
However, we first show that, particularly in the sequential decision-making setting, this approach may catastrophically fail to predict counterfactual action effects due to the selection bias of behavior policies during data collection.
To tackle this problem, we introduce a novel model-learning objective called  adversarial weighted empirical risk minimization (AWRM).  AWRM incorporates an adversarial policy that exploits the model to generate a data distribution that weakens the model's prediction accuracy, and subsequently, the model is learned under this adversarial data distribution.
We implement a practical algorithm, GALILEO, for AWRM and evaluate it on two synthetic tasks, three continuous-control tasks, and  \textit{a real-world application}. The experiments demonstrate that GALILEO can accurately predict counterfactual actions and improve various downstream tasks, including offline policy evaluation and improvement, as well as online decision-making.

\end{abstract}

\vspace{-2mm}
\section{Introduction}
\vspace{-2mm}
A good environment dynamics model for action-effect prediction is essential for many downstream tasks. For example, humans or agents can leverage this model to conduct simulations to understand future outcomes, evaluate other policies' performance, and discover better policies. With environment models, costly real-world trial-and-error processes can be avoided.
These tasks are vital research problems in counterfactual predictions~\cite{counterfactual_min,Alaa2018LimitsOE}, off-policy evaluation (OPE)~\cite{ope-1,ope-2}, and offline reinforcement learning (Offline RL)~\cite{offline-overview,mopo}.  In these problems, the core role of the models is to answer queries on counterfactual data unbiasedly, that is, given states, correctly answer \textit{what might happen} if we were to carry out actions \textit{unseen} in the training data. However, addressing counterfactual queries differentiates environment model learning from standard supervised learning (SL), which directly fits the offline dataset for empirical risk minimization (ERM). In essence, the problem involves training the model on one dataset and testing it on another with a shifted distribution, specifically, the dataset generated by counterfactual queries.  This challenge surpasses the SL's capability as it violates the independent and identically distributed (\textit{i.i.d.}) assumption~\cite{offline-overview}. 

\begin{figure*}[t]
	\vspace{-18mm}
		\centering
		\subfigure[a selection bias example]{\includegraphics[width=0.31\linewidth]{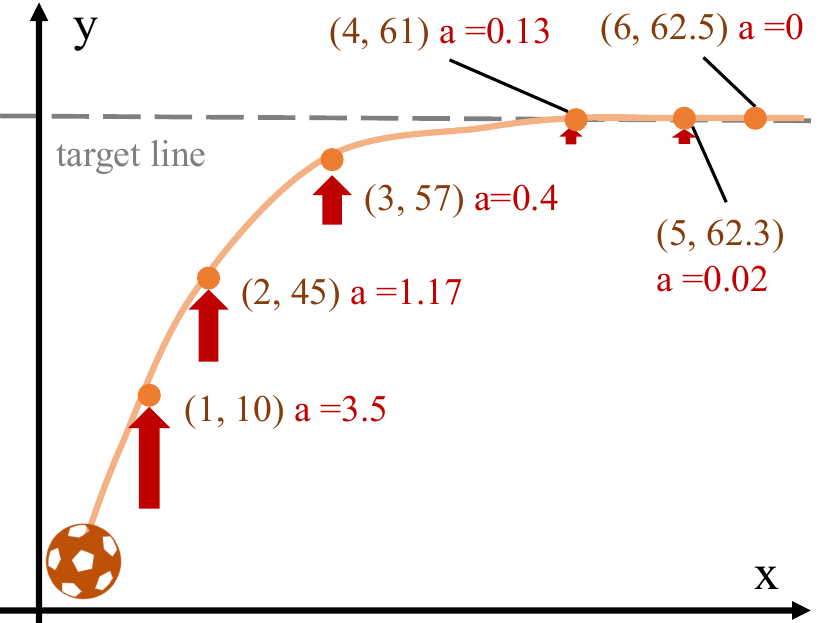}
			\label{fig:demo}
		}
		\subfigure[prediction in training]{
			\includegraphics[width=0.21\linewidth]{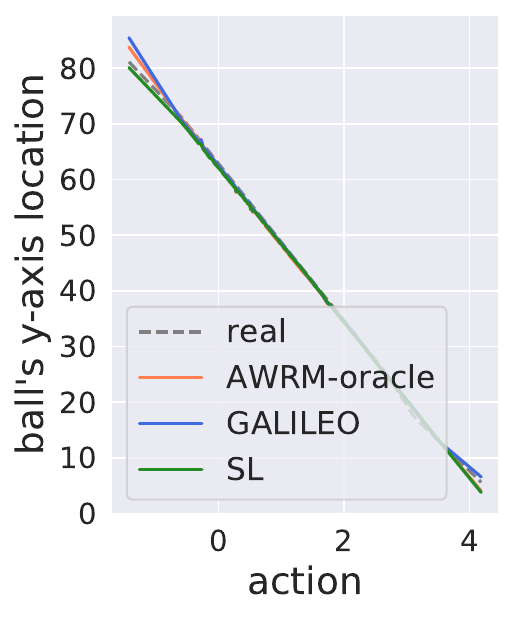}
			\label{fig:demo-train}
		}
		\subfigure[response curve]{
			\includegraphics[width=0.225\linewidth]{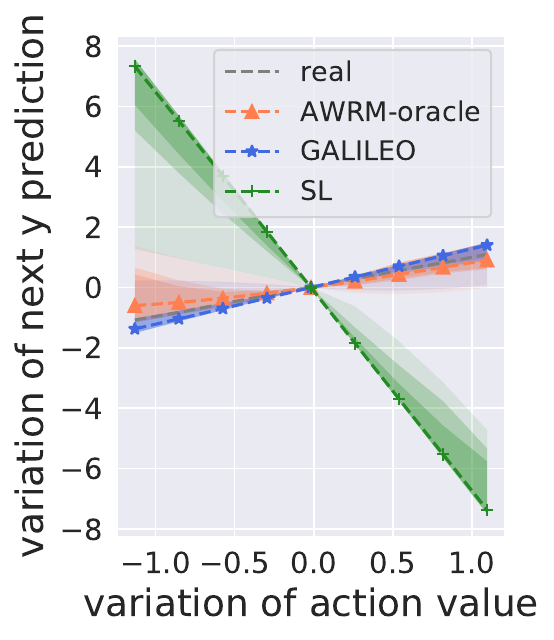}
			\label{fig:demo-test}
		}
		\vspace{-3mm}
		\caption{An example of selection bias and predictions under counterfactual queries.  Suppose a ball locates in a 2D plane whose position is $(x_t,y_t)$ at time $t$. The ball will move to $(x_{t+1}, y_{t+1})$ according to $x_{t+1}=x_t+1$ and $y_{t+1}\sim \mathcal{N}(y_t+a_t, 2)$. Here, $a_t$ is chosen by a control policy $a_t\sim P_\phi(a|y_t)=\mathcal{N}((\phi-y_t)/15, 0.05)$ parameterized by $\phi$, which tries to keep the ball near the line $y=\phi$. In Fig.~\ref{fig:demo}, the behavior policy $\mu$ is $P_{62.5}$. Fig.~\ref{fig:demo-train} shows the collected training data and the learned models' prediction of the next position of $y$. Besides, the dataset superfacially presents the relation that the corresponding next $y$ will be smaller with a larger action.  However, the truth is not because the larger $a_t$ causes a smaller $y_{t+1}$, but the policy selects a small $a_t$ when $y_t$ is close to the target line. \textbf{Mistakenly exploiting the ``association'' will lead to local optima with serious factual errors}, e.g.,  believed that $y_{t+1} \propto P^{-1}_\phi(y_t|a)+a_t  \propto \phi - 14 a_t$, where $P^{-1}_\phi$ is the inverse function of $P_\phi$.
		When we estimate the response curves by fixing $y_t$ and reassigning action $a_t$ with other actions $a_t + \Delta a$, where $\Delta a \in [-1,1]$ is a variation of action value, we found that the model of SL indeed exploit the association and give opposite responses, while in AWRM and its practical implementation GALILEO, the predictions are closer to the ground truths ($y_{t+1} \propto y_t +a_t$). The result is in Fig.~\ref{fig:demo-test}, where the darker a region is, the more samples are fallen in. 
		AWRM injects data collected by adversarial policies for model learning to eliminate the unidentifiability between $y_{t+1} \propto P^{-1}_\phi(y_t|a)+a_t$ and $y_{t+1} \propto y_t +a_t$ in offline data.
		} 
		\label{fig:response-curve}
		\vspace{-7mm}
	\end{figure*}

\begin{wrapfigure}[26]{r}{0.44 \textwidth}
	\vspace{-2mm}
	\centering
	\includegraphics[width=0.85\linewidth]{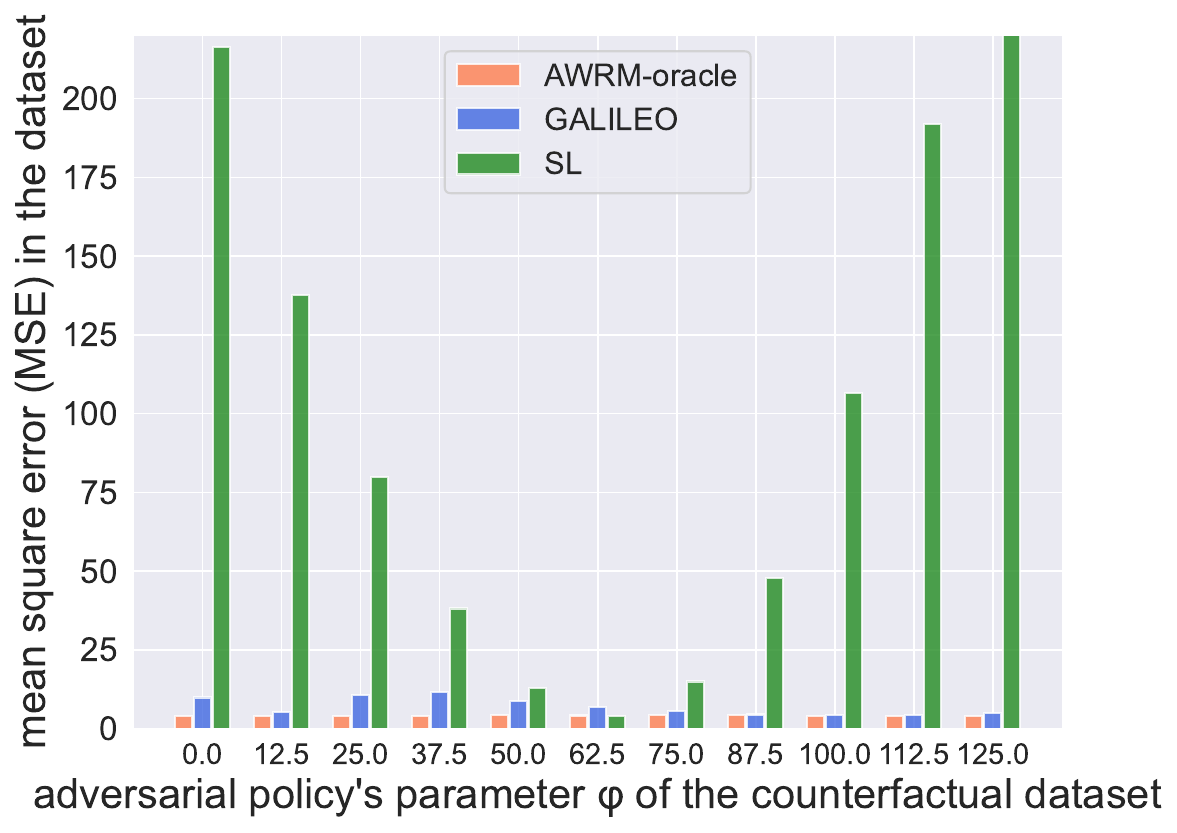}
	\vspace{-2mm}
	\caption{An illustration of the prediction error in counterfactual datasets. The error of SL is small only in training data ($\phi=62.5$) but becomes much larger in the dataset ``far away from'' the training data. AWRM-oracle selects the oracle worst counterfactual dataset for training for each iteration (pseudocode is in Alg.~\ref{alg:AWRM-oracle}) which reaches small MSE in all datasets and gives correct response curves (Fig.~\ref{fig:demo-test}). GALILEO approximates the optimal adversarial counterfactual data distribution based on the training data and model. Although the MSE of GALILEO is a bit larger than SL in the training data, in the counterfactual datasets, the MSE is on the same scale as AWRM-oracle.}
	\label{fig:demo-res}
	\vspace{-1mm}
\end{wrapfigure}
In this paper, we concentrate on faithful dynamics model learning in sequential decision-making settings like RL. Specifically, we first highlight a distinct situation of distribution shift that can easily lead to \textit{catastrophic failures in the model's predictions}: 
In many real-world applications, offline data is often gathered using a single policy with exhibiting selection bias, meaning that, for each state, actions are chosen unfairly.   As illustrated in Fig.~\ref{fig:demo}, to maintain the ball's trajectory along a target line, a behavior policy applies a smaller force when the ball's location is nearer to the target line.  When a dataset is collected with such selection bias, the association between the states (location)  and actions (force) will make SL hard to identify the correct causal relationship of the states and actions to the next states respectively (see Fig.~\ref{fig:demo-test}). 
Then when we query the model with counterfactual data, the predictions might be catastrophic failures.
The selection bias can be regarded as an instance of the distributional-shift problem in offline model-based RL, which has also received great attention~\cite{offline-overview,mopo,chen.nips21,opt-ombrl}. However, previous methods employing \textit{naive supervised learning} for environment model learning tend to overlook this issue during the learning process, addressing it instead by limiting policy exploration and learning in high-risk regions. 	So far, how to learn a faithful environment model that can alleviate the problem directly has rarely been discussed.

In this work, for faithful environment model learning, we first introduce weighted empirical risk minimization (WERM), which is a typical solution to solve the selection bias problem in causal inference for individual treatment effects (ITEs) estimation in many scenarios like patients' treatment selection~\cite{Imbens1999TheRO,Alaa2018LimitsOE}. 
ITEs measure the effects of treatments on individuals by administering treatments uniformly and evaluating the differences in outcomes. To estimate ITEs from offline datasets with selection bias, they estimate an inverse propensity score (IPS) to reweight the training data, approximating the data distribution under a uniform policy, and train the model under this reweighted distribution. Compared with ITEs estimation, the extra challenge of faithful model learning in sequential decision-making settings include:  (1) the model needs to answer queries on numerous different policies, resulting in various and unknown target data distributions for reweighting, and (2) the IPS should account for the cumulative effects of behavior policies on state distribution rather than solely focusing on bias of actions. To address these issues, we propose an objective called adversarial weighted empirical risk minimization (AWRM). 
 For each iteration, AWRM employs adversarial policies to construct an adversarial counterfactual dataset that maximizes the model's prediction error, and drive the model to reduce the prediction risks under the adversarial counterfactual data distribution. 
However, obtaining the adversarial counterfactual data distribution is infeasible in the offline setting. Therefore, we derive an approximation of the counterfactual data distribution queried by the optimal adversarial policy and provide a tractable solution to learn a model from the approximated data distribution.   As a result, we propose a practical approach named \textbf{G}enerative \textbf{A}dversarial off\textbf{LI}ne counterfactua\textbf{L} \textbf{E}nvironment m\textbf{O}del learning (GALILEO) for AWRM. Fig.~\ref{fig:demo-res} illustrates the difference in the prediction errors learned by these algorithms. 
	
Experiments are conducted in two synthetic tasks, three continuous-control tasks, and \textit{a real-world application}.  We first verify that GALILEO can make accurate predictions on counterfactual data queried by other policies compared with baselines.  We then demonstrate that the model learned by GALILEO is helpful to several downstream tasks including offline policy evaluation and improvement, and online decision-making in a large-scale production environment.

\vspace{-2mm}
\section{Preliminaries}
\vspace{-2mm}

We first introduce weighted empirical risk minimization (WERM) through inverse propensity scoring (IPS), which is commenly used in individualized treatment effects  (ITEs) estimation~\cite{intro_ps_observation_study}.  It can be regarded as a scenario of single-step model learning . We define $M^*(y|x,a)$ as the one-step environment, where $x$ denotes the state vector containing pre-treatment covariates (such as age and weight), $a$ denotes the treatment variable which is the action intervening with the state $x$, and  $y$ is the feedback of the environment.
When the offline dataset is collected with a behavior policy $\mu(a|x)$ which has selection bias, a classical solution to handle the above problem is WERM through IPS $\omega$~\cite{ips_model,ips_model_rep_1,ips_werm_backdoor}:

\begin{definition} The learning objective of WERM through IPS is formulated as 
	\begin{equation}
		\begin{split}
			&\min_{M \in \mathcal{M}} \mathbb{E}_{x, a, y \sim p^{\mu}_{M^*}}[\omega (x, a) \ell(M(y|x, a), y)],
		\end{split}
		\label{equ:crm}
	\end{equation}
	where $\omega (x, a):=\frac{\beta(a|x)}{\mu(a|x)}$, $\beta$ and $\mu$ denote the policies in testing and training domains, and  the joint probability $p^{\mu}_{M^*}(x,a,y):=\rho_0(x)\mu(a|x)M^*(y|x,a)$ in which $\rho_0(x)$ is the distribution of state. $\mathcal{M}$ is the model space. $\ell$ is a loss function. 
	\label{def:crm_ips}
\end{definition}
\vspace{-2mm}
The $\omega$ is also known as importance sampling (IS) weight, which corrects the sampling bias by aligning the training data distribution with the testing data. 
By selecting different $\hat \omega$ to approximate $\omega$ to learn the model $M$, current environment model learning algorithms employing  reweighting are fallen into the framework.
For standard ITEs estimation, $\omega = \frac{1}{\hat \mu}$ (i.e., $\beta$ is a uniform policy) for balancing treatments, where $\hat \mu$ is an approximated behavior policy $\mu$.  Note that it is a reasonable weight in ITEs estimation: ITEs are defined to evaluate the differences of effect on each state under a uniform policy.

In sequential decision-making setting, decision-making processes in a sequential environment are often formulated into Markov Decision Process (MDP)~\cite{sutton2018rl}.
MDP depicts an agent interacting with the environment through actions. In the first step, states are sampled from an initial state distribution $x_0 \sim \rho_0(x)$. 
Then at each time-step $t\in\{0, 1, 2, ...\}$, the agent takes an action $a_t\in \mathcal{A}$ through a policy $\pi(a_t|x_t) \in \Pi$ based on the state $x_t\in \mathcal{X}$, then the agent receives a reward $r_t$ from a reward function $r(x_t,a_t)\in \mathbb{R}$ and transits to the next state $x_{t+1}$ given by a transition function $M^*(x_{t+1}|x_t,a_t)$ built in the environment. $\Pi$, $\mathcal{X}$, and $\mathcal{A}$ denote the policy, state, and action spaces.

\vspace{-3mm}
\section{Related Work}
\vspace{-3mm}

We give related adversarial algorithms for model learning in the following and leave other related work in Appx.~\ref{app:related}. 
In particular, 
in ITEs estimation, GANTIE~\cite{Yoon2018GANITEEO} uses a generator to fill counterfactual outcomes for each data pair and a discriminator to judge the source  (treatment group or control group) of the filled data pair.
The generator is trained to minimize the output of the discriminator. 
\cite{scigan} propose SCIGAN to extend GANITE to continuous ITEs estimation via a hierarchical discriminator architecture. 
In real-world applications, environment model learning based on Generative Adversarial Imitation Learning (GAIL) has also been adopted for sequential decision-making problems~\cite{gail}. GAIL is first proposed for policy imitation~\cite{gail}, which uses the imitated policy to generate trajectories by interacting with the environment. The policy is learned with the trajectories through RL which maximizes the cumulative rewards given by the discriminator. \cite{virtual_taobao,simulator-mayi} use GAIL for environment model learning by regarding the environment model as the generator and the behavior policy as the ``environment'' in standard GAIL.  \cite{mom} further inject the technique into a unified objective for model-based RL, which joints model and policy optimization.
Our study reveals the connection between adversarial model learning and the WERM through IPS, where previous adversarial model learning methods can be regarded as partial implementations of GALILEO, explaining the effectiveness of the former.

\vspace{-3mm}
\section{Adversarial Counterfactual Environment Model Learning}
\vspace{-3mm}
In this section, we first propose a new offline model-learning objective for sequential decision-making setting in Sec.~\ref{sec:problem_formulation};
In Sec.~\ref{sec:surrogate}, we give a surrogate objective to the proposed objective; Finally, we give a practical solution  in Sec.~\ref{sec:solution}.

\vspace{-2mm}
\subsection{Problem Formulation}
\label{sec:problem_formulation}
\vspace{-1mm} 
In scenarios like offline policy evaluation and improvement,  it is crucial for the environment model to have generalization ability in counterfactual queries, as we need to query accurate feedback from $M$ for numerous different policies.
Referring to the formulation of WERM through IPS in Def.~\ref{def:crm_ips}, these requirements necessitate minimizing counterfactual-query risks for $M$ under multiple unknown policies, rather than focusing on \textit{a specific target policy $\beta$}. More specifically, the question is: If $\beta$ is unknown and can be varied, how can we generally reduce the risks in counterfactual queries? 
In this article, we call the model learning problem in this setting ``counterfactual environment model learning'' and propose a new objective to address the issue. To be compatible with multi-step environment model learning, we first define a generalized WERM through IPS based on Def.~\ref{def:crm_ips}:
\begin{definition} 	In an MDP, given a transition function $M^*$ that satisfies $M^*(x'|x,a)>0, \forall x \in \mathcal{X}, \forall a \in \mathcal{A}, \forall x' \in \mathcal{X}$ and $\mu$ satisfies $\mu(a|x)>0, \forall a \in \mathcal{A}, \forall x \in \mathcal{X}$, the learning objective of generalized WERM through IPS is:
	\begin{align}
			\min_{M \in \mathcal{M}} &\mathbb{E}_{x, a, x' \sim \rho_{M^*}^{\mu}}[\omega(x,a,x') \ell_M(x,a,x')],
		\label{equ:cqrm_gips}
	\end{align}
	where $\omega (x,a,x')=\frac{\rho_{M^*}^{\beta}(x,a,x')}{\rho_{M^*}^{\mu}(x,a,x')}$, $\rho_{M^*}^{\mu}$ and $\rho_{M^*}^{\beta}$ the training and  testing data distributions collected by policy $\mu$ and $\beta$ respectively.  We define $\ell_M(x,a,x'):=\ell(M(x'|x,a), x')$ for brevity.
	\label{def:cqrm_gips}
\end{definition}
In an MDP, for any given policy $\pi$, we have $\rho_{M^*}^{\pi}(x,a,x')=\rho_{M^*}^\pi(x)\pi(a|x)M^*(x'|x,a)$ where $\rho_{M^*}^\pi(x)$ denotes the occupancy measure of $x$ for policy $\pi$. This measure can be defined as $(1-\gamma)\mathbb{E}_{x_0\sim\rho_0}\left[\sum^{\infty}_{t=0}\gamma^t {\rm Pr}(x_t=x|x_0, {M^*})\right]$~\cite{sutton2018rl,gail} where ${\rm Pr}^\pi \left[ x_t=x|x_0, {M^*}\right]$ is the discounted state visitation probability that the policy $\pi$ visits $x$ at time-step $t$ by executing in the environment $M^*$ and starting at the state $x_0$. Here $\gamma\in \left[0,1\right]$ is the discount factor.
We also define $\rho^{\pi}_{M^*}(x,a):=\rho^{\pi}_{M^*}(x) \pi(a|x)$ for simplicity.

With this definition, $\omega$ can be rewritten: $\omega= \frac{\rho_{M^*}^\beta(x) \beta(a|x) M^*(x'|x,a)}{\rho_{M^*}^\mu(x) \mu(a|x) M^*(x'|x,a)}= \frac{\rho_{M^*}^\beta(x,a)}{\rho_{M^*}^\mu(x, a) }$.
In single-step environments, for any policy $\pi$, $\rho_{M^*}^\pi(x)=\rho_0(x)$. Consequently, we obtain $\omega= \frac{\rho_0(x) \beta(a|x)}{\rho_0(x) \mu(a|x)}=\frac{\beta(a|x)}{\mu(a|x)}$, and the objective degrade to Eq.~\eqref{equ:crm}. Therefore, Def.~\ref{def:crm_ips} is a special case of this generalized form.  
\begin{remark} $\omega$ is referred to as density ratio and is commonly used to correct the weighting of rewards in off-policy datasets to estimate the value of a specific target policy in off-policy evaluation ~\cite{ope-2,ope-3}.  Recent studies in offline RL also provide similar evidence through upper bound analysis, suggesting that offline model learning should be corrected to specific target policies' distribution using $\omega$~\cite{ombrl-reweight-2}. We derive the objective from the perspective of selection bias correlation, further demonstrate the necessity and effects of this term.
\end{remark}

In contrast to previous studies,  in this article, we would like to propose an objective for faithful model learning which can generally reduce the risks in counterfactual queries in scenarios where $\beta$ \textit{is unknown and can be varied}.  To address the problem, we introduce adversarial policies that can \textit{iteratively} induce the worst prediction performance of the current model and propose to optimize WERM under the adversarial policies. In particular, we propose \textbf{A}dversarial \textbf{W}eighted  empirical   \textbf{R}isk \textbf{M}inimization (AWRM) based on Def.~\ref{def:cqrm_gips} to handle this problem.
\begin{definition}
	Given the MDP transition function $M^*$, the learning objective of adversarial weighted  empirical  risk minimization through IPS is formulated as
	\begin{small}
		\begin{equation}
			\begin{split}
				&\min_{M \in \mathcal{M}} \max_{\beta \in \Pi}L(\rho_{M^*}^\beta, M) 
				=  \min_{M \in \mathcal{M}} \max_{\beta \in \Pi} \mathbb{E}_{x, a, x' \sim \rho_{M^*}^{\mu}}[\omega(x, a|\rho_{M^*}^\beta) \ell_M(x,a,x')],
			\end{split}
			\label{equ:acrm}
		\end{equation}
	\end{small}
	where $\omega(x, a|\rho_{M^*}^\beta) = \frac{\rho_{M^*}^\beta(x, a)}{\rho_{M^*}^\mu(x,a)}$, and the re-weighting term $\omega(x, a|\rho_{M^*}^\beta)$ is conditioned on the distribution $\rho_{M^*}^\beta$ of the adversarial policy $\beta$. In the following, we will ignore $\rho_{M^*}^\beta$ and use $\omega(x, a)$ for brevity.
	\vspace{-2mm}
\end{definition}
In a nutshell, Eq.~\eqref{equ:acrm} minimizes the maximum model loss under all counterfactual data distributions $\rho_{M^*}^{\beta}, \beta \in \Pi$ to guarantee the generalization ability for counterfactual queried by policies in $\Pi$. 	 

 \vspace{-2mm}
\subsection{Surrogate AWRM through Optimal Adversarial Data Distribution Approximation}
\label{sec:surrogate}
 \vspace{-1mm}
The main challenge of solving AWRM is constructing the data distribution $\rho_{M^*}^{\beta^*}$ of the best-response policy $\beta^*$ in $M^*$ since obtaining additional data from $M^*$ can be expensive in real-world applications. In this paper, instead of deriving the optimal $\beta^*$, our solution is to offline estimate the optimal adversarial distribution $\rho_{M^*}^{\beta^*}(x,a,x')$ with respect to $M$, enabling the construction of a surrogate objective to optimize $M$ without directly querying the real environment $M^*$.

In the following, we select $\ell_M$ as the negative log-likelihood loss for our full derivation, instantiating $L(\rho_{M^*}^\beta, M)$ in Eq.~\eqref{equ:acrm} as:
	$\mathbb{E}_{x, a \sim \rho_{M^*}^{\mu}}[\omega(x,a|\rho_{M^*}^\beta) \mathbb{E}_{M^*}\left(-  \log M(x'|x,a)\right) ]$,
where $\mathbb{E}_{M^*}\left[\cdot\right]$ denotes $ \mathbb{E}_{x' \sim M^*(x'|x,a)}\left[\cdot\right]$. Ideally, for any given $M$, it is obvious that the optimal $\beta$ is the one that makes $\rho_{M^*}^\beta(x, a)$ assign all densities to the point  with the largest negative log-likelihood.
However, this maximization process is impractical, particularly in continuous spaces. To provide a tractable yet relaxed solution, we introduce an $L_2$ regularizer to the original objective in Eq.~\eqref{equ:acrm}.
\begin{small}
	\begin{align}
		\min_{M \in \mathcal{M}}  \max_{\beta \in \Pi} \bar L(\rho_{M^*}^\beta, M) =\min_{M \in \mathcal{M}}  \max_{\beta \in \Pi} L(\rho_{M^*}^\beta, M)  -\frac{\alpha}{2} \Vert \rho_{M^*}^{\beta}(\cdot,\cdot) \Vert^2_2,
		\label{equ:raw_obj-reg}
	\end{align}
\end{small}
where $\alpha$ denotes the regularization coefficient of $\rho_{M^*}^{\beta}$ and $\Vert \rho_{M^*}^\beta(\cdot, \cdot) \Vert^2_2=\int_{\mathcal{X},\mathcal{A}} (\rho_{M^*}^\beta(x, a))^2 \mathrm{d}a\mathrm{d}x$. 

Now we present the final results and the intuitions behind them, while providing a full derivation in Appx.\ref{app:proof}.  Suppossing we have $\bar \rho_{M^*}^{\bar \beta^*}$ representing the approximated data distribution of the approximated best-response policy $\bar \beta^*$ under model $M_\theta$ parameterized by $\theta$, we can find the optimal $\theta^*$ of $\min_{\theta} \max_{\beta \in \Pi} \bar L(\rho_{M^*}^\beta, M_\theta)$ (Eq.~\eqref{equ:raw_obj-reg})  through iterative optimization of  the objective $\theta_{t+1} =\min_{\theta}  \bar L(\bar \rho_{M^*}^{\bar \beta^*}, M_\theta)$.  To this end, we  approximate $\bar \rho_{M^*}^{\bar \beta^*}$ via the last-iteration model $\oldmodel$ and derive an upper bound objective for $\min_{\theta} \bar L(\bar \rho_{M^*}^{\bar \beta^*}, M_\theta)$:
\begin{small}
	\begin{align}
		\theta_{t+1} = \min_{\theta} \mathbb{E}_{\rho^{\mu}_{M^*}}\Bigg[ \frac{-1}{\alpha_0(x, a)} \log M_\theta(x'|x,a) \underbrace{\Bigg(\underbrace{f\left(\frac{\rho^{\mu}_{\oldmodel}(x,a,x')}{\rho^{\mu}_{M^*}(x,a,x')}\right)}_{\textcolor{blue}{\rm discrepancy}} - \underbrace{f\left(\frac{\rho^{\mu}_{\oldmodel}(x,a)}{\rho^{\mu}_{M^*}(x,a)}\right)}_{\textcolor{blue}{\rm density-ratio~baseline}} + \underbrace{H_{M^*}(x,a)}_{\textcolor{blue}{\rm stochasticity}}\Bigg)}_{W(x,a,x')} \Bigg],
	\label{equ:ub_obj}
	\end{align}
\end{small}
where $\mathbb{E}_{\rho^{\mu}_{M^*}}\left[\cdot\right]$ denotes $ \mathbb{E}_{x,a,x'\sim \rho^{\mu}_{M^*}}\left[\cdot\right]$, $f$ is a convex and lower semi-continuous (l.s.c.) function satisfying $f'(x)\le 0, \forall x \in \mathcal{X}$, which is also called $f$ function in $f$-divergence~\cite{f-func}, $\alpha_0(x,a)=\rhonormtheta \rho^{\mu}_{M^*}(x,a)$, and $H_{M^*}(x,a)$ denotes the entropy of $M^*(\cdot|x,a)$.

\begin{remark}[Intuition of $W$] After derivation, we found that the optimal adversarial data distribution can be approximated by $\bar \rho_{M^*}^{\bar \beta^*}(x,a) = \int_{\mathcal{X}} \rho^{\mu}_{M^*}(x,a,x') W(x,a,x') \textrm{d} x'$ (see Appx.~\ref{app:proof}), leading to the upper-bound objective Eq.~\eqref{equ:ub_obj}, which is a WERM dynamically weighting by $W$.  Intuitively, $W$  assigns more learning propensities for data points with (1) larger discrepancy between $\rho^{\mu}_{\oldmodel}$ (generated by model) and $\rho^{\mu}_{M^*}$ (real-data distribution), or (2) larger stochasticity of the real model $M^*$. The latter is contributed by the entropy $H_{M^*}$, while the former is contributed by the first two terms combined. 
In particular, through the definition of $f$-divergence, we known that the discrepancy of two distribution $P$ and $Q$ can be measured by $\int_{\mathcal{X}} Q(x) f(P(x)/Q(x)) \textrm{d} x$, thus the terms  $f(\rho^{\mu}_{\oldmodel}(x,a,x')/\rho^{\mu}_{M^*}(x,a,x'))$ can be interpreted as the discrepancy measure unit between $\rho^{\mu}_{\oldmodel}(x,a,x')$ and $\rho^{\mu}_{M^*}(x,a,x')$, while $f(\rho^{\mu}_{\oldmodel}(x,a)/\rho^{\mu}_{M^*}(x,a))$ serves as a baseline on $x$ and $a$ measured by $f$ to balance the discrepancy contributed by $x$ and $a$, making $M$ focus on errors on $x'$.

\end{remark}
In summary, by adjusting the weights $W$, the learning process will iteratively exploit subtle errors of the current model in any data point, \textit{regardless of  how many proportions it contributes in the original data distribution, to eliminate potential unidentifiability on counterfactual data caused by selection bias.}

\vspace{-1mm}
\subsection{Tractable Solution}
\label{sec:solution}

In Eq.~\eqref{equ:ub_obj}, the terms $f(\rho^{\mu}_{\oldmodel}(x,a,x')/\rho^{\mu}_{M^*}(x,a,x')) - f(\rho^{\mu}_{\oldmodel}(x,a)/\rho^{\mu}_{M^*}(x,a))$ are still intractable.  Thanks to previous successful practices in GAN~\cite{gan} and GAIL~\cite{gail}, we achieve the objective via a generator-discriminator-paradigm objective through similar derivation. We show the results as follows and leave the complete derivation in Appx.~\ref{app:sec:tract_solution}. In particular, by introducing two discriminators $D_{\varphi^*_0}(x,a,x')$ and  $D_{\varphi^*_0}(x,a)$,  we can optimize the surrogate objective Eq.~\eqref{equ:ub_obj}  via:
\begin{small}
	\begin{align}
			\theta_{t+1}=&\max_{\theta}~ \Big( \mathbb{E}_{\rho^{\hat \mu}_{\oldmodel}}\left[\adv(x,a,x') \log {M_\theta}(x'|x,a) \right] +\mathbb{E}_{\rho^{\mu}_{M^*}}\left[(H_{M^*}(x,a)-\adv(x,a,x')) \log {M_\theta}(x'|x,a) \right] \Big)	\nonumber \\
			s.t. \quad & \varphi_0^* = \arg \max_{\varphi_0} \Big( \mathbb{E}_{\rho^{\mu}_{M^*}} \left[\log D_{\varphi_0}(x,a,x')\right]  + \mathbb{E}_{ \rho^{\hat \mu}_{\oldmodel}} \left[\log (1-D_{\varphi_0}(x,a,x'))\right] \Big)	\nonumber\\
			& \varphi_1^* = \arg \max_{\varphi_1}~ \Big(\mathbb{E}_{ \rho^{\mu}_{M^*}} \left[\log D_{\varphi_1}(x,a)\right]  + \mathbb{E}_{\rho^{\hat \mu}_{\oldmodel}} \left[\log (1 - D_{\varphi_1}(x,a))\right] \Big), 
			\label{equ:final_solution}
	\end{align}
\end{small}
where $\mathbb{E}_{\rho^{\mu}_{M}}[\cdot]$ is a simplification of $\mathbb{E}_{x,a,x'\sim \rho^{\mu}_{M}}[\cdot]$, $\adv(x,a,x') =  \log D_{\varphi^*_0}(x,a,x')- \log D_{\varphi^*_1}(x,a)$, and $\varphi_0$ and $\varphi_1$ are the parameters of $D_{\varphi_0}$ and $D_{\varphi_1}$ respectively. 
We learn a policy $\hat \mu \approx \mu$ via imitation learning based on the offline dataset $\mathcal{D}_{\rm real}$~\cite{behavior_cloning,gail}.
Note that in the process, we ignore the term $\alpha_0(x,a)$ for simplifying the objective. The discussion on the impacts of removing $\alpha_0(x,a)$ is left in App.~\ref{app:limitation}. 

In Eq.~\eqref{equ:final_solution}, the reweighting term $W$ from Eq.~\eqref{equ:ub_obj}  is split into two terms in the RHS of the equation: the first term  is a GAIL-style objective~\cite{gail}, treating $M_\theta$ as the policy generator, $\hat \mu$ as the environment, and $A$ as the advantage function, while the second term is WERM through $H_{M^*}- \adv$. The first term resembles the previous adversarial model learning objectives~\cite{demer,virtual_taobao,simulator-mayi}. These two terms have intuitive explanations:\textit{ the first term} assigns learning weights on data generated by the model $\oldmodel$. If the predictions of the model appear realistic, mainly assessed by $D_{\varphi_0^*}$, the propensity weights would be increased, encouraging the model to generate more such kind of data; Conversely, \textit{the second term} assigns weights on real data generated by $M^*$. If the model's predictions seem unrealistic (mainly assessed by $-D_{\varphi_0^*}$) or stochastic (evaluated by $H_{M^*}$), the propensity weights will be increased, encouraging the model to pay more attention to these real data points when improving the likelihoods.

Based on the above implementations, we propose \textbf{G}enerative \textbf{A}dversarial off\textbf{LI}ne counterfactua\textbf{L} \textbf{E}nvironment m\textbf{O}del learning  (GALILEO) for environment model learning. We list a brief of GALILEO in Alg.~\ref{alg:galileo-main} and the details in Appx.~\ref{app:detail}.

\begin{algorithm}[h]
	\caption{Pseudocode for GALILEO}
	\label{alg:galileo-main}
	\begin{flushleft}
		\textbf{Input}:\\
		$\mathcal{D}_{\rm real}$: offline dataset sampled from $\rho^{\mu}_{M^*}$ where $\mu$ is the behavior policy; $N$: total iterations;\\
		\textbf{Process}:
	\end{flushleft}	
	\begin{algorithmic}[1]
		\State Approximate a behavior policy $\hat \mu$ via behavior cloning through offline dataset 	$\mathcal{D}_{\rm real}$
		\State Initialize an environment model $M_{\theta_1}$
		\For{$t= 1:N$}
		\State Use $\hat{\mu}$ to generate a dataset $\mathcal{D}_{\text{gen}}$ with the model $M_{\theta_t}$
		\State Update the discriminators $D_{\varphi_0}$ and $D_{\varphi_1}$ through the second and third equations in Eq.~\eqref{equ:final_solution} where $\rho^{\hat \mu}_{M_{\theta_t}}$ is estimated by $\mathcal{D}_{\rm gen}$ and $\rho^{\mu}_{M^*}$ is estimated by $\mathcal{D}_{\rm real}$
		\State Generative adversarial training for $M_{\theta_t}$ by regarding $\adv$ as the advantage function and computing the gradient to $M_{\theta_t}$, named $g_{\rm pg}$, with a standard policy gradient method like TRPO~\cite{trpo} or PPO~\cite{ppo} based on $\mathcal{D}_{\rm gen}$.
		\State Regard  $H_{M^*}- \adv$ as the reweighting term for WERM and compute the gradient to $M_{\theta_t}$ based on $\mathcal{D}_{\rm real}$. Record it as $g_{\rm sl}$.
		\State Update the model $\theta_{t+1} \leftarrow \theta_{t} + g_{\rm pg} + g_{\rm sl}$.
		\EndFor
	\end{algorithmic}
\end{algorithm}

\section{Experiments}
In this section, we first conduct experiments in two synthetic environments to quantify  the performance of  GALILEO on counterfactual queries~\footnote{code~\url{https://github.com/xionghuichen/galileo}.}.  Then we deploy GALILEO in two complex environments: MuJoCo in Gym~\cite{mujoco} and a real-world food-delivery platform to test the performance of GALILEO in difficult tasks. The results are in Sec.~\ref{sec:exp:model_perf}.
Finally, to further verify the abiliy GALILEO, in Sec.~\ref{sec:exp:downstream_task}, we apply models learned by GALILEO to several downstream tasks including off-policy evaluation, offline policy improvement, and online decision-making in production environment.
The algorithms compared are: (1) \textbf{SL}: using standard empirical risk minimization for model learning;  (2) \textbf{IPW}~\cite{intro_causal_inference}: a standard implementation of WERM based IPS;  (3) \textbf{SCIGAN}~\cite{scigan}: an adversarial algorithms for model learning used for causal effect estimation, which can be roughly regarded as a partial implementation of GALILEO (Refer to Appx.~\ref{app:compare}). We give a detailed description in Appx.~\ref{app:baseline}.

	\begin{figure}[t]
		\centering
		\subfigure[GNFC]{		\includegraphics[width=0.45\linewidth]{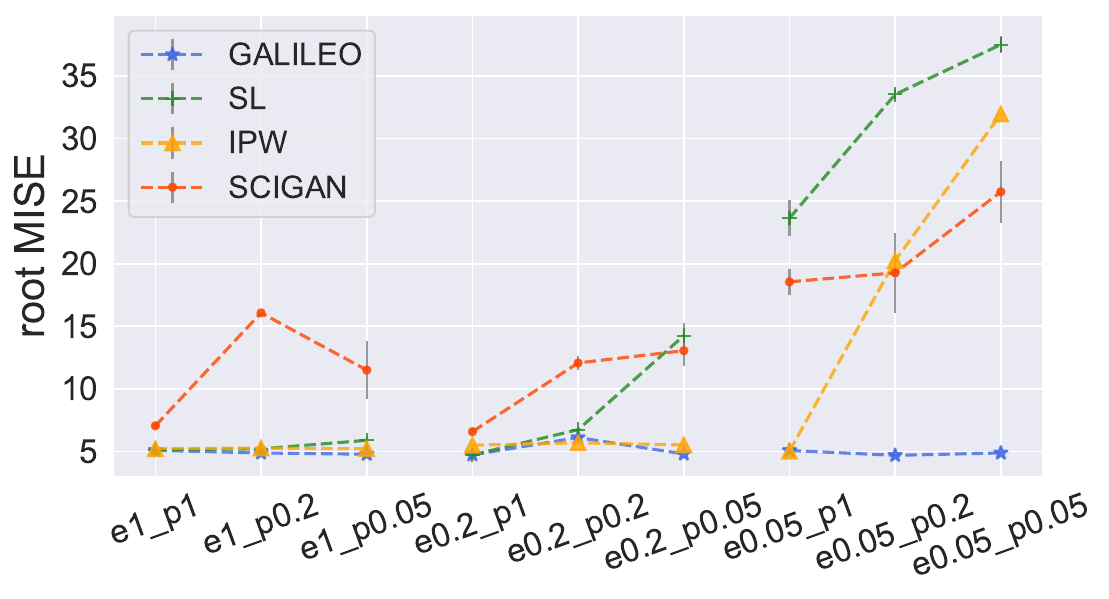}}
		\hspace{5mm}
		\subfigure[TCGA]{	\includegraphics[width=0.43\linewidth]{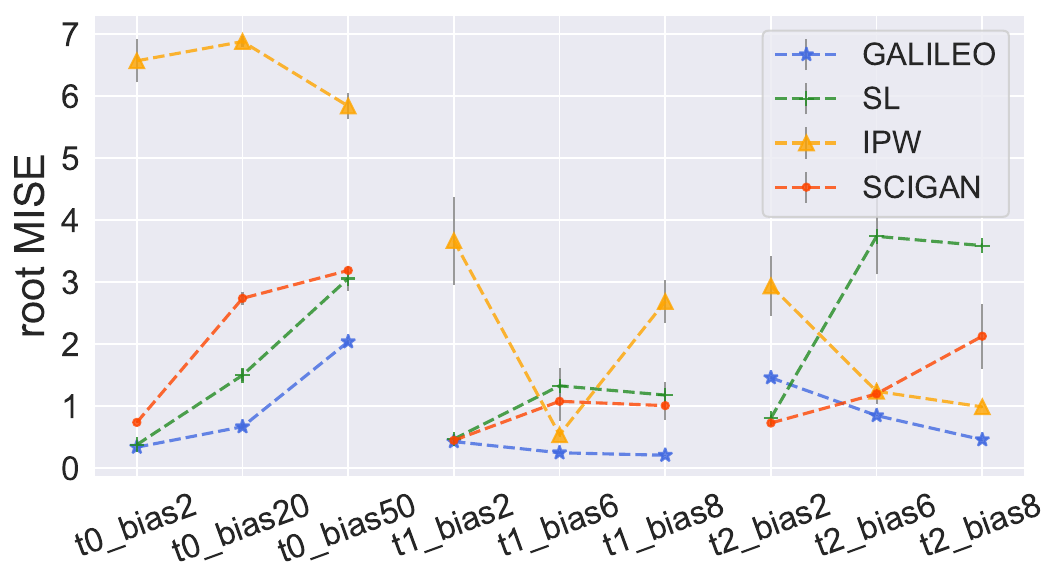}
		}
		\vspace{-3mm}
		\caption{Illustration of the performance in GNFC and TCGA. The grey bar  denotes the standard error ($\times 0.3$ for brevity) of $3$ random seeds.}
		\label{fig:syn-overall}
		\vspace{-6mm}
	\end{figure}
	
	\subsection{Environment Settings}
	
	\vspace{-2mm}
	\paragraph{Synthetic Environments}  	Previous experiments on counterfactual environment model learning are based on single-step semi-synthetic data simulation~\cite{scigan}.  As GALILEO is compatible with \textbf{\textit{single-step}} environment model learning, we first benchmark GALILEO in the same task named TCGA as previous studies do~\cite{scigan}.  Based on  the three synthetic response functions, we construct 9 tasks by choosing different parameters of selection bias on $\mu$ which is constructed with beta distribution, and design a coefficient $c$ to control the selection bias. We name the tasks with the format of ``t?\_bias?''. For example, \texttt{t1\_bias2} is the task with the \textit{first} response functions and $c=2$. The details of TCGA is in Appx.~\ref{app:tcga-setting}.
	Besides, for \textbf{\textit{sequential}} environment model learning under selection bias, we construct a new synthetic environment, general negative feedback control (GNFC), which can represent a classic type of task with policies having selection bias, where Fig.~\ref{fig:demo} is also an instance of GNFC. We construct 9 tasks on GNFC by adding behavior policies $\mu$ with different scales of uniform noise $U(-e, e)$ with probabilities $p$. Similarly, we name them with the format ``e?\_p?''. 

	\vspace{-3.5mm}
	\paragraph{Continuous-control Environments} 		 We select 3 MuJoCo environments from D4RL~\cite{d4rl} to construct our model learning tasks.  We compare it with a standard transition model learning algorithm used in the previous offline model-based RL algorithms~\cite{mopo,morel}, which is a standard supervised learning. We name the method OFF-SL. Besides, we also implement IPW and SCIGAN as the baselines.
	In D4RL benchmark, only the ``medium'' tasks is collected with a fixed policy, i.e., the behavior policy is with 1/3 performance to the expert policy), which is most matching to our proposed problem. So we train models in datasets HalfCheetah-medium, Walker2d-medium, and Hopper-medium.

	\vspace{-3.5mm}
	\paragraph{A Real-world Large-scale Food-delivery Platform} 
	We finally deploy GALILEO in a real-world large-scale food-delivery platform. 
	We focus on a Budget Allocation task to the Time period (BAT) in the platform (see Appx.~\ref{app:bat-setting} for details). 
	The goal of the BAT task is to handle the imbalance problem between the demanded orders from customers and the supply of delivery clerks in different time periods by allocating reasonable allowances to those time periods. 
	The challenge of the environment model learning in BAT tasks is similar to the challenge in Fig.~\ref{fig:response-curve}:  the behavior policy is a human-expert policy, which tends to increase the budget of allowance in the time periods with a lower supply of delivery clerks, otherwise  tends to decrease the budget (We gives a real-data instance in Appx.~\ref{app:bat-setting}).
	
	\vspace{-2mm}
	\subsection{Prediction Accuracy on Shifted Data Distributions}
	\label{sec:exp:model_perf}
	
	\vspace{-2mm}
	\paragraph{Test in Synthetic Environments} 	For all of the tasks, we select mean-integrated-square error  $\mathrm{MISE}=\mathbb{E}\left[ \int_{\mathcal{A}} \left(M^*(x'|x,a) -  M(x'|x,a)\right)^2 \mathrm{d} a \right]$ as the metric,  which is a metric to measure the accuracy in counterfactual queries by considering the prediction errors in the whole action space. The results are summarized in Fig.~\ref{fig:syn-overall} and the detailed results can be found in Appx.~\ref{app:detailed_results}. 
	The results show that the property of the behavior policy (i.e., $e$ and $p$) dominates the generalization ability of the baseline algorithms. 
     When $e=0.05$, almost all of the baselines fail and give a completely opposite response curve, while GALILEO gives the correct response. (see Fig.~\ref{fig:response-e0.05}). IPW still performs well when $0.2\le e \le 1.0$ but fails when $e=0.05, p<=0.2$. We also found that SCIGAN can reach a better performance than other baselines when $e=0.05, p<=0.2$, but the results in other tasks are unstable. GALILEO is the only algorithm that is robust to the selection bias and outputs correct response curves in all of the tasks. Based on the experiment, we also indicate that the commonly used overlap assumption is unreasonable to a certain extent especially in real-world applications since it is impractical to inject noises into the whole action space. 
     \begin{wrapfigure}[15]{r}{0.22 \textwidth}
     	\centering
     	\includegraphics[width=1.0\linewidth]{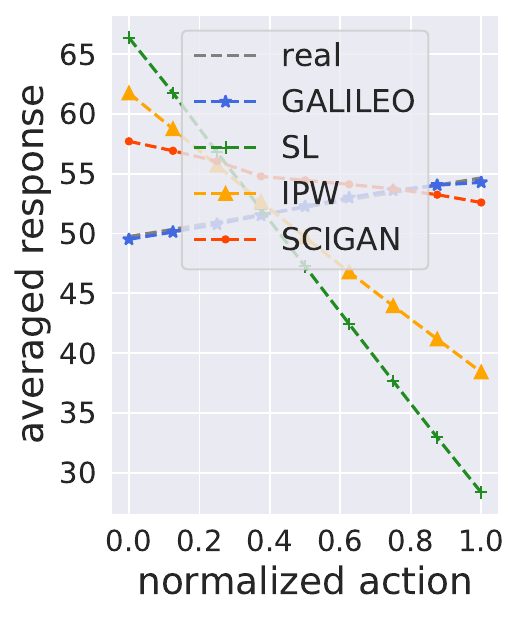}
     	\vspace{-7mm}
     	\caption{Illustration of the averaged response curves in task \texttt{e0.05\_p0.2}}
     	\label{fig:response-e0.05}
     \end{wrapfigure}
	The problem of overlap assumption being violated, e.g., $e<1$ in our setting, should be taken into consideration otherwise the algorithm will be hard to use in practice if it is sensitive to the noise range.  On the other hand, we found the phenomenon in TCGA experiment is similar to the one in GNFC, which demonstrates the compatibility of GALILEO to single-step environments.

	 We also found that the results of IPW are unstable in TCGA experiment. 
	It might be because the behavior policy is modeled with beta distribution while the propensity score $\hat \mu$ is modeled with Gaussian distribution. Since IPW directly reweight loss with $\frac{1}{\hat \mu}$, the results are sensitive to the error of $\hat \mu$. 
		
	Finally, we plot the averaged response curves which are constructed by equidistantly sampling action from the action space and averaging the feedback of the states in the dataset as the averaged response. One result is in Fig.~\ref{fig:response-e0.05} (all curves can be seen in Appx.~\ref{app:detailed_results}).  For those tasks where baselines fail in reconstructing response curves, GALILEO not only reaches a better MISE score but reconstructs almost exact responses, while the baselines might give completely opposite responses.

	\begin{figure*}[t]
		\centering
		\subfigure[medium (train)]{
			\includegraphics[width=0.28\linewidth]{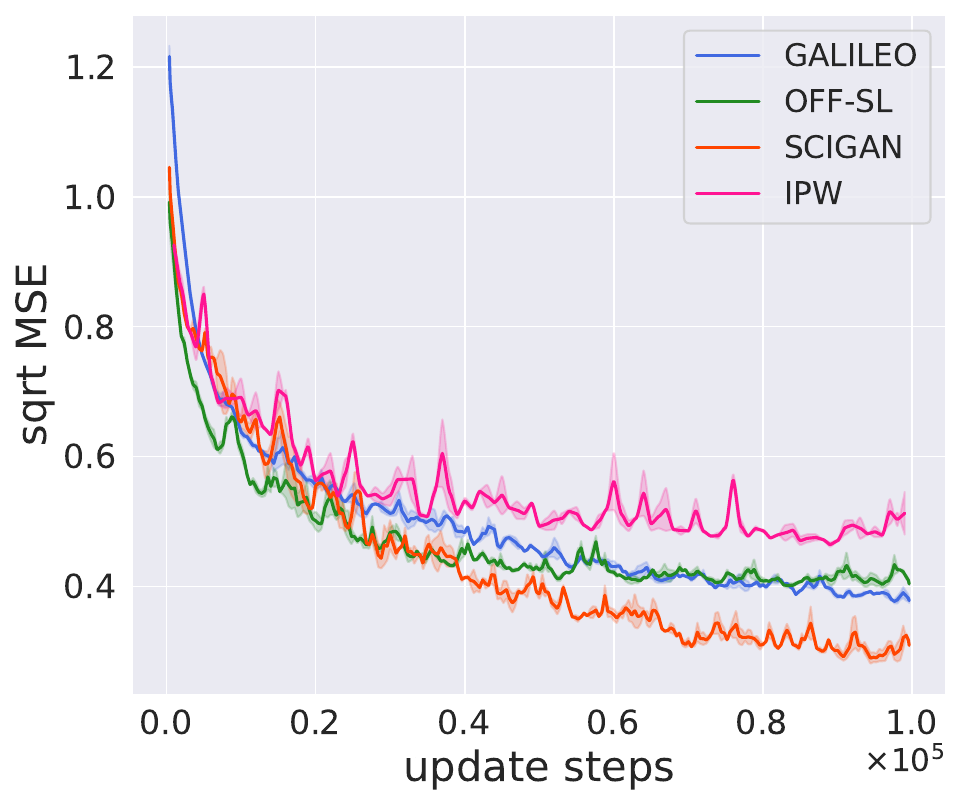}
		}
		\subfigure[medium-replay (test)]{
			\includegraphics[width=0.28\linewidth]{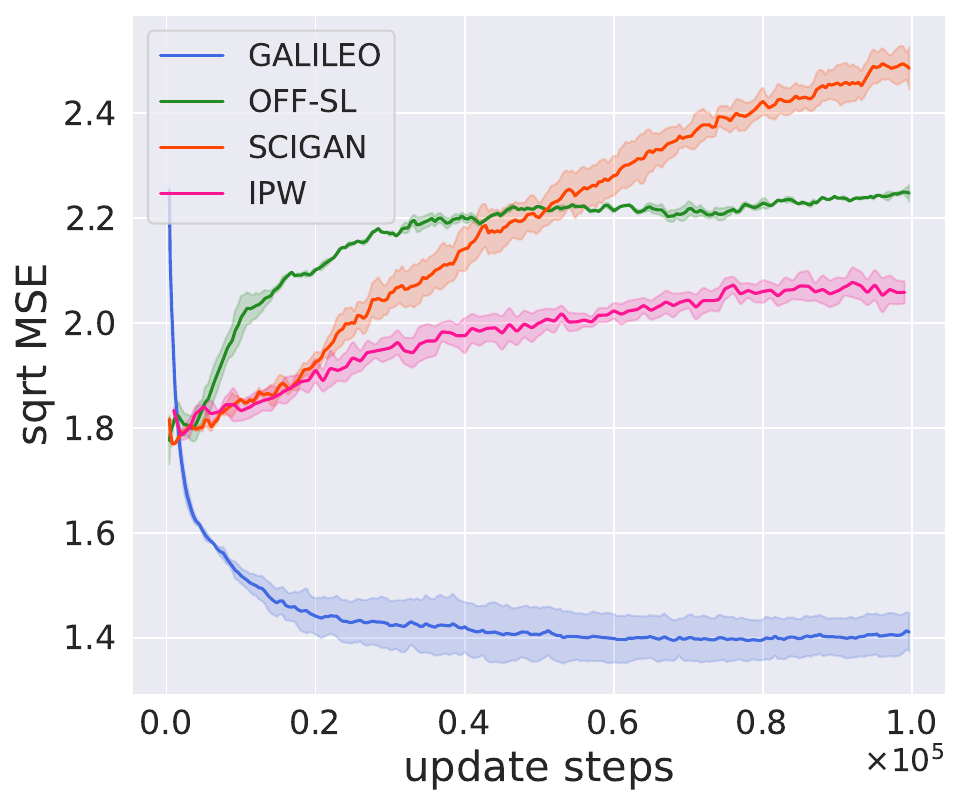}
		}
		\subfigure[expert (test)]{
			\includegraphics[width=0.28\linewidth]{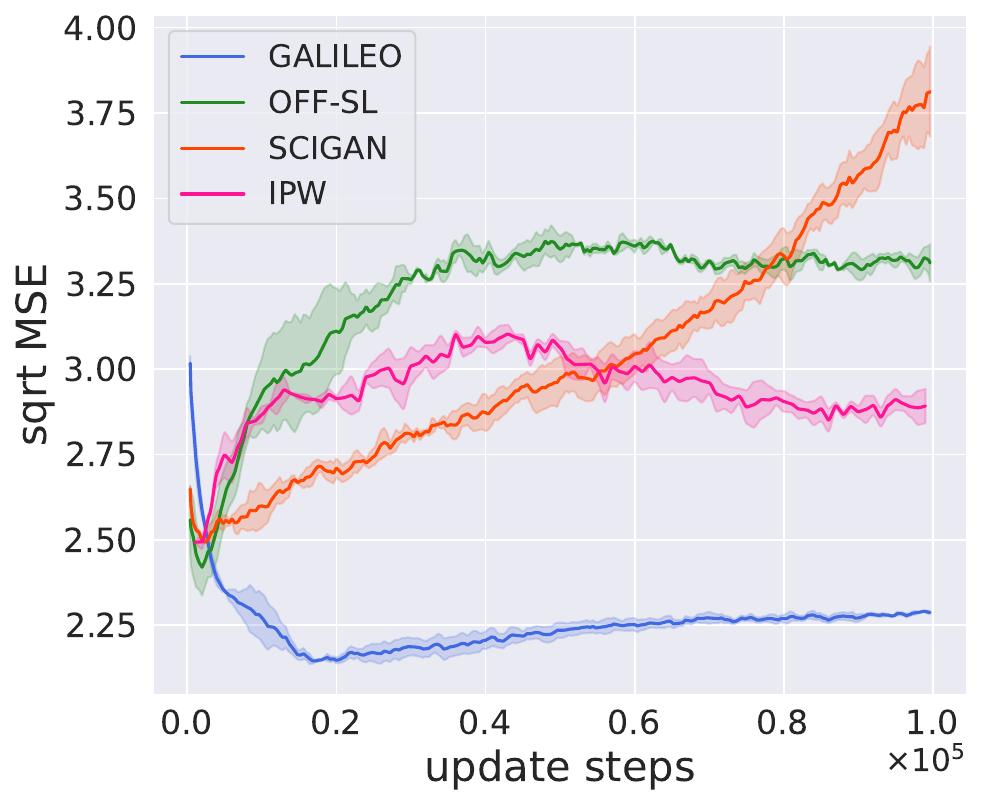}
		}
		\vspace{-3mm}
		\caption{Illustration of learning curves of the halfcheetah tasks (full results are in Appx.~\ref{app:mujoco_res}). The X-axis record the steps of the environment model update, and the Y-axis is the corresponding prediction error. The figures with titles ending in ``(train)'' means the dataset is used for training while the titles ending in ``(test)'' means the dataset is \textbf{just used for testing}. The solid curves are the mean reward and the shadow is the standard error of three seeds.}
		\label{fig:learning_curve_mujoco-main}
		\vspace{-6mm}
	\end{figure*} 
	
			\vspace{-3.5mm}
		\paragraph{Test in MuJoCo Benchmarks} We test the prediction error of the learned model in corresponding unseen ``expert'' and ``medium-replay'' datasets.  Fig.~\ref{fig:learning_curve_mujoco-main} illustrates the results in halfcheetah.  We can see that all algorithms perform well in the training datasets. OFF-SL can even reach a bit lower error. However, when we verify the models through ``expert'' and ``medium-replay'' datasets, which are collected by other policies, the performance of GALILEO is more stable and better than all other algorithms.  
		As the training continues, the baseline algorithms even gets worse and worse. The phenomenon are similar among  three datasets, and we leave the full results in Appx.~\ref{app:mujoco_res}.
			\begin{figure}[ht]
		\centering
		\subfigure[response curves in City A]{	\includegraphics[width=0.3\linewidth]{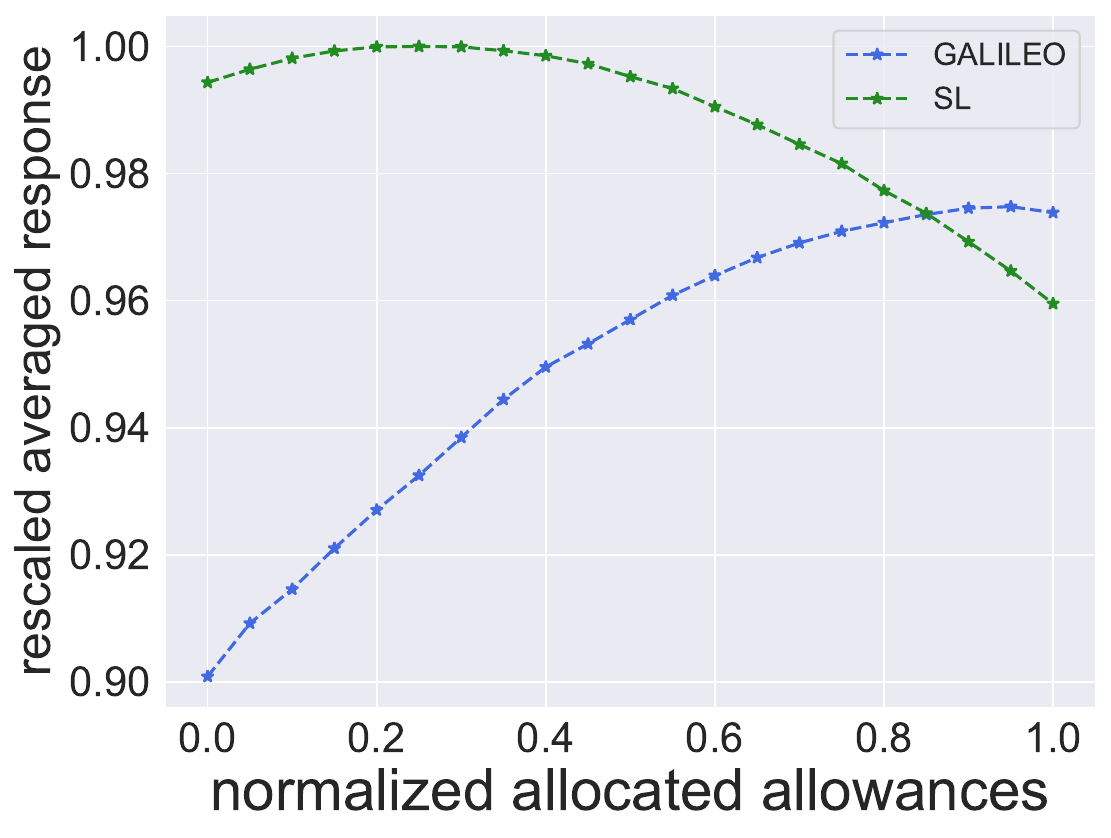}
			\label{fig:response-A-main}
		}
		\subfigure[AUUC curves]{	\includegraphics[width=0.3\linewidth]{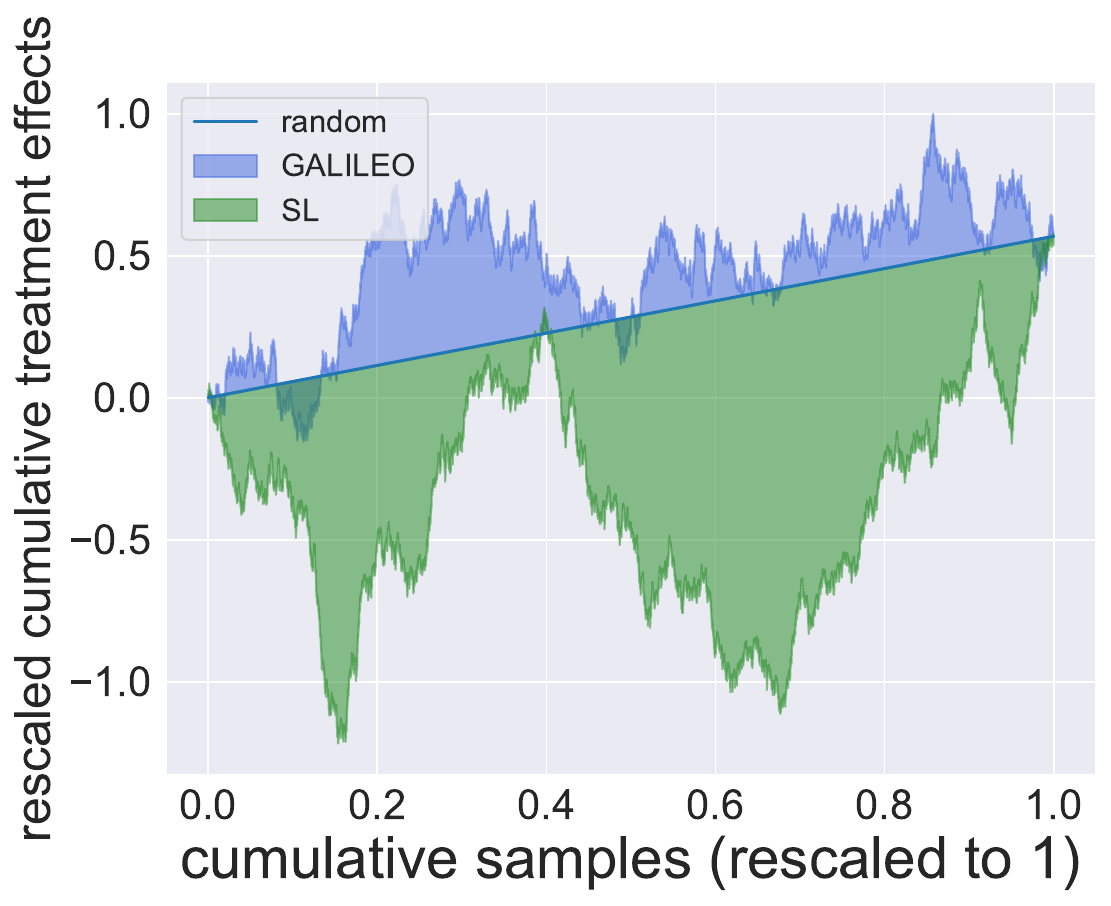}
			\label{fig:meituan-auuc-main}
		}
		\subfigure[A/B test in the City A]{	\includegraphics[width=0.3\linewidth]{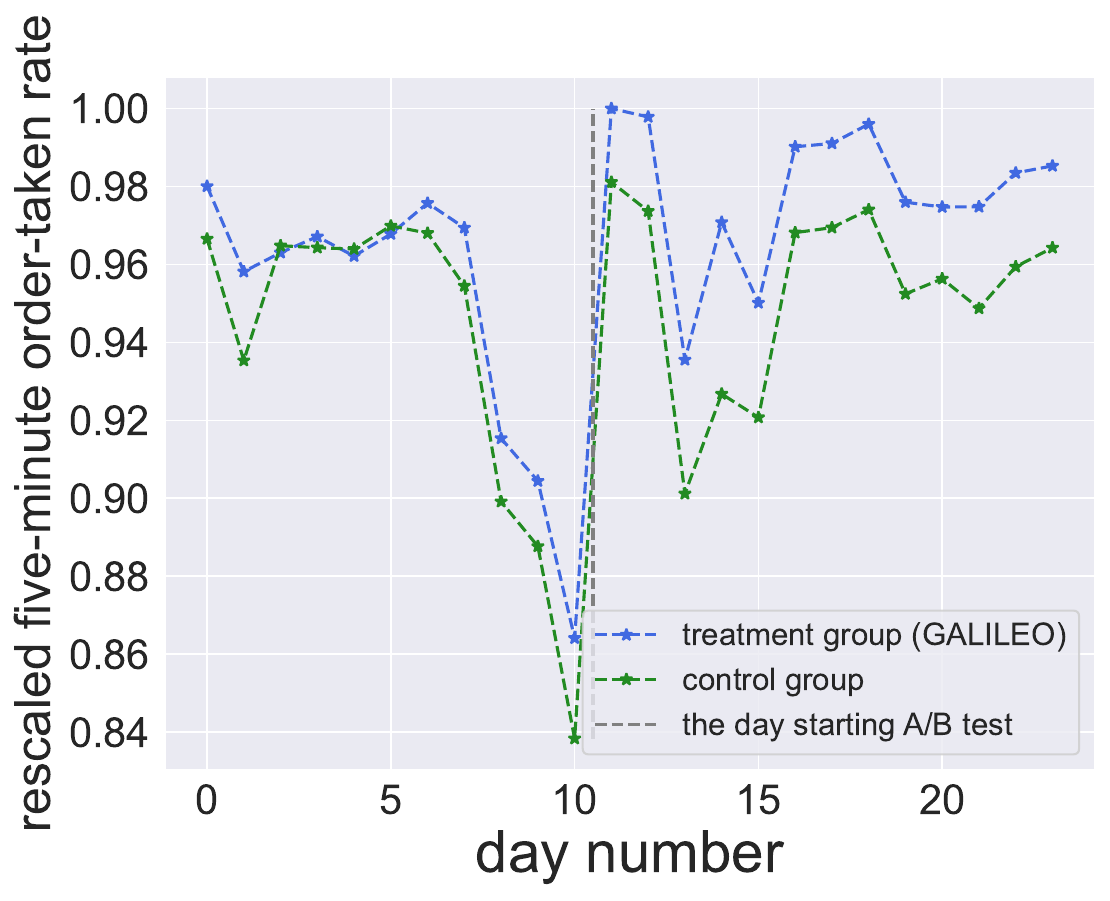}
			\label{fig:ab-main}
		}
		\vspace{-3mm}
		\caption{Parts of the results in  BAT tasks. Fig.~\ref{fig:response-A-main} demonstrate the averaged response curves of the SL and GALILEO model in City A.  It is expected to be monotonically increasing through our prior knowledge. 
		In Fig.~\ref{fig:meituan-auuc-main} show the AUCC curves, where the model with larger areas above the ``random'' line makes better predictions in randomized-controlled-trials data~\cite{aucc}. 
		}
		\vspace{-4mm}
	\end{figure}

		\vspace{-3.5mm}
		\paragraph{Test in a Real-world Dataset} We first learn a model to predict the supply of delivery clerks (measured by fulfilled order amount) on given allowances. 
		Although the SL model can efficiently fit the offline data, the tendency of the response curve is easily to be incorrect. As can be seen in Fig.~\ref{fig:response-A-main}, with a larger budget of allowance, the prediction of the supply is decreased in SL, which obviously goes against our prior knowledge. This is because, in the offline dataset, the corresponding supply will be smaller when the allowance is larger.
		It is conceivable that if we learn a policy through the model of SL, the optimal solution is canceling all of the allowances, which is obviously incorrect in practice. On the other hand, the tendency of GALILEO's response is correct. In Appx.~\ref{app:bat_all_res}, we plot all the results in 6 cities. We further collect some  randomized controlled trials data, and the Area Under the Uplift Curve (AUUC)~\cite{auuc}  curve in Fig.~\ref{fig:meituan-auuc-main} verify that  GALILEO gives a reasonable sort order on the supply prediction while the standard SL technique fails to achieve this task.

	\vspace{-3mm}
	\subsection{Apply GALILEO to Downstream Tasks}
	\label{sec:exp:downstream_task}

	\paragraph{Off-policy Evaluation (OPE)} 
	\begin{wraptable}[15]{r}{0.58\textwidth}
		
		\caption{Results of OPE on DOPE benchmark.  We list the \textbf{averaged} performances on three tasks. The detailed results are in Appx.~\ref{app:ope}. $\pm$ is the standard deviation among the tasks. We bold the best scores for each metric.}
		\label{tab:ope-summary}
		\resizebox{0.57\textwidth}{16mm}{
			\begin{tabular}{l|cccc}
				\toprule
				Algorithm      & Norm. value gap & Rank corr.      & Regret@1 \\ 
				\midrule
				\cellcolor{mygray} GALILEO        & 	\cellcolor{mygray} \textbf{0.37 $\pm$ 0.24} & 	\cellcolor{mygray} \textbf{0.44 $\pm$ 0.10 }&	\cellcolor{mygray}  \textbf{0.09 $\pm$ 0.02} \\
				Best DICE      & 0.48 $\pm$ 0.19 & 0.15 $\pm$ 0.04 & 0.42 $\pm$ 0.28 \\
				VPM            & 0.71 $\pm$ 0.04 & 0.29 $\pm$ 0.15 & 0.17 $\pm$ 0.11 \\
				FQE (L2)       & 0.54 $\pm$ 0.09 & -0.19 $\pm$ 0.10 & 0.34 $\pm$ 0.03 \\
				IS             & 0.67 $\pm$ 0.01 & -0.40 $\pm$ 0.15 & 0.36 $\pm$ 0.27 \\
				Doubly Rubost  & 0.57 $\pm$ 0.07 & -0.14 $\pm$ 0.17 & 0.33 $\pm$ 0.06 \\
				\bottomrule
			\end{tabular}
			
		}
	\end{wraptable}
	We first verify the ability of the models in MuJoCo environments by adopting them into off-policy evaluation tasks. We use 10 unseen policies constructed by DOPE benchmark~\cite{dope} to conduct our experiments. We select three common-used metrics: value gap, regret@1, and rank correlation and averaged the results among three tasks in Tab.~\ref{tab:ope-summary}. The baselines and the corresponding results we used are the same as the one proposed by~\cite{dope}.  As seen in Tab.~\ref{tab:ope-summary}, compared with all the baselines, OPE by GALILEO always reach the  better performance with \textit{ a large margin (at least 23\%, 193\% and 47\% respectively)}, which verifies that GALILEO can eliminate the effect of selection bias and give correct evaluations on unseen policies.

	\vspace{-2mm}
	\paragraph{Offline Policy Improvement}  We then verify the generalization ability of the models  in MuJoCo environments by adopting them into offline model-based RL. To strictly verify the ability of the models, we abandon all tricks to suppress policy exploration and learning in risky regions as current offline model-based RL algorithms~\cite{mopo} do, and we just use the standard SAC algorithm~\cite{sac} to fully exploit the models to search an optimal policy. 
	Unfortunately, we found that the compounding error will still be inevitably large in the 1,000-step rollout, which is the standard horizon in MuJoCo tasks, leading all models to fail to derive a reasonable policy. 
	To better verify the effects of models on policy improvement, we learn and evaluate the policies with three smaller horizons: $H \in \{10,20,40\}$.  The results are listed in Tab.~\ref{tab:mj_policy}. We first averaged the normalized return (refer to ``avg. norm.'') under each task, and we can see that the policy obtained by GALILEO is significantly higher than other models (the improvements are 24\% to 161\%). 
	But in HalfCheetah, IPW works slightly better. However, compared with MAX-RETURN, it can be found that all methods fail to derive reasonable policies because their policies' performances are far away from the optimal policy. By further checking the trajectories, we found that all the learned policies just keep the cheetah standing in the same place or even going backward.  
	This phenomenon is also similar to the results in MOPO~\cite{mopo}. In MOPO's experiment in the medium datasets, the truncated-rollout horizon used in Walker and Hopper for policy training is set to 5, while HalfCheetah has to be set to \textit{the minimal value: 1}.  
	These phenomena indicate that HalfCheetah may still have unknown problems, resulting in the generalization bottleneck of the models.  
	
	\begin{table*}[t]	
		\centering
		\caption{ Results of policy performance directly optimized through SAC~\cite{sac} using the learned dynamics models and deployed in MuJoCo environments. MAX-RETURN is the policy performance of SAC in the MuJoCo environments, and ``avg. norm.'' is the averaged normalized return of the policies in the 9 tasks, where the returns are normalized to lie between 0 and 100, where a score of 0 corresponds to the worst policy, and 100 corresponds to MAX-RETURN.}

		\resizebox{\textwidth}{14mm}{
			\begin{tabular}{l|rrr|rrr|rrr|r}
				\toprule
				Task         & \multicolumn{3}{c|}{Hopper}                       & \multicolumn{3}{c|}{Walker2d}            & \multicolumn{3}{c|}{HalfCheetah}  & avg. norm.    \\ \midrule
				Horizon     & 
				\multicolumn{1}{c}{H=10} &
				\multicolumn{1}{c}{H=20} &
				\multicolumn{1}{c|}{H=40} &
				\multicolumn{1}{c}{H=10} &
				\multicolumn{1}{c}{H=20} &
				\multicolumn{1}{c|}{H=40} &
				\multicolumn{1}{c}{H=10} &
				\multicolumn{1}{c}{H=20} &
				\multicolumn{1}{c|}{H=40} &   \multicolumn{1}{c}{/} \\
				\midrule
				\cellcolor{mygray} GALILEO &
					\cellcolor{mygray}  \textbf{13.0 $\pm$ 0.1} &
				\cellcolor{mygray} 	\textbf{33.2 $\pm$ 0.1} &
				\cellcolor{mygray} 	\textbf{53.5 $\pm$ 1.2} &
				\cellcolor{mygray} 	\textbf{11.7 $\pm$ 0.2} &
				\cellcolor{mygray} 	\textbf{29.9 $\pm$ 0.3} &
				\cellcolor{mygray} 	\textbf{61.2 $\pm$ 3.4} &
					\cellcolor{mygray} 0.7 $\pm$ 0.2 &
				\cellcolor{mygray} 	-1.1 $\pm$ 0.2 &
				\cellcolor{mygray} 	-14.2 $\pm$ 1.4 &	\cellcolor{mygray}  \textbf{51.1} \\
				OFF-SL           & 4.8 $\pm$ 0.5  & 3.0 $\pm$ 0.2  & 4.6 $\pm$ 0.2  & 10.7 $\pm$ 0.2 & 20.1 $\pm$ 0.8  &   37.5 $\pm$ 6.7   & 0.4 $\pm$ 0.5 & -1.1 $\pm$ 0.6 &  -13.2 $\pm$ 0.3 &  21.1 \\
				IPW      &       5.9 $\pm$ 0.7         &     4.1 $\pm$ 0.6           &           5.9 $\pm$ 0.2     &     4.7 $\pm$ 1.1           &        2.8 $\pm$ 3.9         &   14.5 $\pm$ 1.4   &        \textbf{ 1.6 $\pm$ 0.2}      &        \textbf{0.5 $\pm$ 0.8}                 &  \textbf{-11.3 $\pm$ 0.9} & 19.7    \\
				SCIGAN       &      12.7 $\pm$ 0.1        &         29.2 $\pm$ 0.6       &     46.2 $\pm$ 5.2             &        8.4 $\pm$ 0.5        &         9.1 $\pm$ 1.7        &  1.0 $\pm$ 5.8    &        1.2 $\pm$ 0.3       &       -0.3 $\pm$ 1.0                  &   -11.4 $\pm$ 0.3 & 41.8  \\
				\midrule
				MAX-RETURN & 13.2 $\pm$ 0.0 & 33.3 $\pm$ 0.2 & 71.0 $\pm$ 0.5 & 14.9 $\pm$ 1.3 & 60.7 $\pm$ 11.1 &  221.1 $\pm$ 8.9    & 2.6 $\pm$ 0.1 & 13.3 $\pm$ 1.1          &      49.1 $\pm$ 2.3 &  100.0
				\\ \bottomrule
			\end{tabular}
		}
		\vspace{-5.5mm}
		\label{tab:mj_policy}
	\end{table*}

	\vspace{-3.5mm}
	\paragraph{Online Decision-making in a Production Environment} Finally, we search for the optimal policy via model-predict control (MPC) using cross-entropy planner~\cite{planet} based on the learned model and deploy the policy in a real-world platform. The results of A/B test in City A is shown in Fig.~\ref{fig:ab-main}. It can be seen that after the day of the A/B test, the treatment group (deploying our policy) significant improve the five-minute order-taken rate than the baseline policy (the same as the behavior policy). In summary, \textit{the policy improves the supply from 0.14 to 1.63 percentage points to the behavior policies in the 6 cities}. The details of these results are in Appx.~\ref{app:bat_all_res}.

\vspace{-2mm}
\section{Discussion and Future Work}
\vspace{-2mm}
In this work, we propose AWRM which handles the generalization challenges of the counterfactual environment model learning. By theoretical modeling, we give a tractable solution to handle AWRM and propose GALILEO. 
GALILEO  is verified in synthetic environments, complex robot control tasks,  and a real-world platform, and shows great generalization ability on counterfactual queries.

Giving correct answers to counterfactual queries is important for policy learning. 
We hope the work can inspire researchers to develop more powerful tools for counterfactual environment model learning.  
The current limitation lies in: There are several simplifications in the theoretical modeling process (further discussion is in Appx.~\ref{app:limitation}), which can be modeled more elaborately . Besides, experiments on MuJoCo indicate that these tasks are still challenging to give correct predictions on counterfactual data. 
These should also be further investigated in future work.

\normalem
\bibliography{iclr2023_conference}
\bibliographystyle{abbrv}
\clearpage
\appendix

	\renewcommand{\thepart}{}
	\renewcommand{\partname}{}
	\part{Appendix} %
	\parttoc %

 \clearpage
	\section{Proof of Theoretical Results}
	\label{app:proof}

	In the proof section, we replace the notation of $\mathbb{E}$ with an integral for brevity. Now we rewrite the original objective $\bar L(\rho^\beta_{M^*}, M)$ as: 
	\begin{align}
		\min_{M \in \mathcal{M}} \max_{\beta \in \Pi}  \int_{\mathcal{X}, \mathcal{A}} \rho_{M^*}^{\mu}(x, a) \omega(x,a) \int_{\mathcal{X}}  M^*(x'|x,a) \left(- \log M(x'|x,a) \right) \mathrm{d} x' \mathrm{d} a \mathrm{d} x - \frac{\alpha}{2} \Vert   \rho_{M^*}^{\beta}(\cdot, \cdot) \Vert^2_2,
		\label{equ:int_obj_l2}
	\end{align}

	where $\omega(x,a)=\frac{ \rho_{M^*}^{\beta}(x, a)}{ \rho_{M^*}^{\mu}(x, a)}$ and $\Vert   \rho_{M^*}^{\beta}(\cdot, \cdot) \Vert^2_2= \int_{\mathcal{X}, \mathcal{A}} \rho_{M^*}^{\beta}(x,a)^2 \mathrm{d} a \mathrm{d} x$, which is the squared $l_2$-norm. In an MDP, given any policy $\pi$, $\rho_{M^*}^{\pi}(x,a,x')=\rho_{M^*}^\pi(x)\pi(a|x)M^*(x'|x,a)$ where $\rho_{M^*}^\pi(x)$ denotes the occupancy measure of $x$ for policy $\pi$, which can be defined~\cite{sutton2018rl,gail} as $\rho_{M^*}^\pi(x):=(1-\gamma)\mathbb{E}_{x_0\sim\rho_0}\left[\sum^{\infty}_{t=0}\gamma^t {\rm Pr}(x_t=x|x_0, {M^*})\right]$ where ${\rm Pr}^\pi \left[ x_t=x|x_0, {M^*}\right]$ is the state visitation probability that $\pi$ starts at state $x_0$ in model $M^*$ and receive $x$ at timestep $t$ and $\gamma\in \left[0,1\right]$ is the discount factor. 
	
		\begin{figure*}[ht]
			\centering
			\scalebox{0.75}{
				\begin{tikzpicture}[node distance=2cm]
					\node (start) [phases_res] {$\min_{M \in \mathcal{M}} \max_{\beta \in \Pi} L(\rho_{M^*}^\beta, M)$};
					\node (p1) [process,left of=start,yshift=-1.5cm,xshift=-2.0cm] {estimate the optimal adversarial distribution $\rho^{\bar \beta^*}_{M^*}$ given $M$ (Sec.~\ref{app:sec:lemma4.3})};
					\node (p2) [process,below of=p1] {derive $\bar \rho^{\bar \beta^*}_{M^*}$ as a generalized representation  of $\rho^{\bar \beta^*}_{M^*}$ (Sec.~\ref{app:sec:gen-opt-pi})};
					\node (res1) [phases_res,below of=start,yshift=-3.5cm] {let $\rho^{\bar \beta^*}_{M^*}$ as the distribution of the best-response policy $\arg \max_{\beta \in \Pi} L(\rho_{M^*}^\beta, M)$, we have the surrogate objective $\min_{M \in \mathcal{M}} L(\bar \rho_{M^*}^{\bar \beta^*}, M)$  (Sec.~\ref{app:sec:thm})};
					\node (final) [phases_res,below of=res1,yshift=-0.5cm] {tractable solution (Sec.~\ref{app:sec:tract_solution})};
					\draw [arrow] ([xshift=-0.7cm]p1.south) -- node[anchor=east] {intermediary policy $\kappa$} ([xshift=-0.7cm]p2.north);
					\draw [arrow] ([xshift=0.7cm]p1.south) -- node[anchor=west] {generator function $f$} ([xshift=0.7cm]p2.north);
					\draw [mainarrow] (start) -- node[anchor=west,xshift=0.5cm] {(model an easy-to-estimate distribution of the best-response policy $\bar \beta^*$)} (res1);
					\draw [virtual_arrow,dashed] (start.south) |- (p1);
					\draw [virtual_arrow,dashed] (p2) -| (res1.north);
					\draw [mainarrow] (res1) -- node[anchor=west,xshift=0.5cm] {approximate $\bar \rho_{M^*}^{\bar \beta^*}$ with variational representation}(final);
				\end{tikzpicture}
			}
			\caption{The overall pipeline to model the tractable solution to AWRM. $f$ is a generator function defined by $f$-divergence~\cite{fgan}. $\kappa$ is an intermediary policy introduced in the estimation.}
			\label{fig:theorem_pipeline}
		\end{figure*}
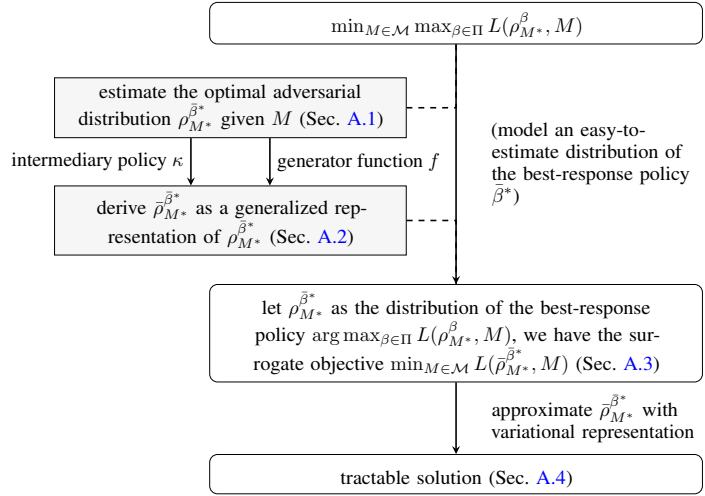
		
	The overall pipeline to model the tractable solution to AWRM is given in Fig.~\ref{fig:theorem_pipeline}.	In the following, we will summarize the modelling process based on Fig.~\ref{fig:theorem_pipeline}. We first approximate the optimal distribution $\rho_{M^*}^{\bar \beta^*}$ via Lemma.~\ref{the:opt-pi}.
	\begin{lemma} Given any $M$ in $\bar L(\rho^{\beta}_{M^*}, M)$, the distribution $\rho_{M^*}^{\bar \beta^*}(x, a)$ of the ideal best-response policy $\bar \beta^*$ satisfies:
			\begin{equation}
				\frac{1}{\rhonorm} (D_{KL}(M^*(\cdot |x,a), M(\cdot |x,a)) + H_{M^*}(x,a)), \label{eq:dist_obj}
			\end{equation}
		where $D_{KL}(M^*(\cdot |x,a), M(\cdot |x,a))$ is the Kullback-Leibler (KL) divergence between $M^*(\cdot |x,a)$ and $M(\cdot |x,a)$, $H_{M^*}(x,a)$
		denotes the entropy of $M^*(\cdot|x,a)$, and $\rhonorm$ is the regularization coefficient $\alpha$ in Eq.~\eqref{equ:raw_obj-reg} and also as a normalizer of Eq.~\eqref{eq:dist_obj}.
		\label{the:opt-pi}
		\vspace{-2mm}
	\end{lemma}
	Note that the ideal best-response policy $\bar \beta^*$ is not the real best-response policy $\beta^*$. The distribution $\rho_{M^*}^{\bar \beta^*}$ is an approximation of the real optimal adversarial distribution. We give a discussion of the rationality of the ideal best-response policy $\bar \beta^*$ as a replacement of the real best-response policy $\beta^*$ in Remark~\ref{remark:optimal_dist}.
	Intuitively,  $\rho_{M^*}^{\bar \beta^*}$ has larger densities on the data where the divergence between the approximation model and the real model (i.e., $D_{KL}(M^*(\cdot |x,a), M(\cdot |x,a))$)  is larger or the stochasticity of the real model (i.e.,  $H_{M^*}$) is larger. 
	
	However, the integral process of $D_{KL}$ in Eq.~\eqref{eq:dist_obj} is intractable in the offline setting as it explicitly requires the conditional probability function of $M^*$. Our solution to solve the problem is utilizing the offline dataset $\mathcal{D}_{\rm real}$ as the empirical \textit{joint} distribution $\rho^\mu_{M^*}(x,a,x')$ and adopting practical techniques for distance estimation on two joint distributions,  like GAN~\cite{gan,fgan}, to approximate Eq.~\eqref{eq:dist_obj}.  
	To adopt that solution, we should first transform  Eq.~\eqref{eq:dist_obj} into a form under joint distributions.
	Without loss of generality, we introduce an intermediary policy $\kappa$, of which $\mu$ can be regarded as a specific instance. 
	Then we have $M(x'|x,a)=\rho^{\kappa}_{M}(x,a,x')/\rho^{\kappa}_{M}(x,a)$ for any $M$ if $\rho^{\kappa}_{M}(x,a)>0$. Assuming $\forall x \in \mathcal{X}, \forall a\in \mathcal{A}, \rho_{M^*}^{\kappa}(x, a)>0$ if $\rho_{M^*}^{\bar \beta^*}(x, a)>0$, which will hold when $\kappa$ overlaps with $\mu$, then Eq.~\eqref{eq:dist_obj} can transform to:
	\begin{small}
		\begin{align*}
			\frac{1}{\alpha_0(x, a)} \Bigg( \int_{\mathcal{X}} \rho^{\kappa}_{M^*}(x,a,x') \log \frac{\rho^{\kappa}_{M^*}(x,a,x')}{\rho^{\kappa}_{M}(x,a,x')} \mathrm{d} x'-\rho^{\kappa}_{M^*}(x,a) \left(\log \frac{\rho^{\kappa}_{M^*}(x,a)}{\rho^{\kappa}_{M}(x,a)} - H_{M^*}(x,a)\right)\Bigg),
		\end{align*}
	\end{small}
	where $\alpha_0(x,a)=\rhonorm \rho^{\kappa}_{M^*}(x,a)$.  We notice that the form $\rho^{\kappa}_{M^*}\log \frac{\rho^{\kappa}_{M^*}}{\rho^{\kappa}_{M}}$ is the integrated function in reverse KL divergence, which is an instance of $f$ function in $f$-divergence~\cite{f-func}. Replacing that form with $f$ function, we obtain a generalized representation of $\rho_{M^*}^{\bar \beta^*}$:
	\begin{small}
		\begin{equation}
			\begin{split}
				\bar \rho_{M^*}^{\bar \beta^*}:=\frac{1}{\alpha_0(x, a)} \Bigg( \int_{\mathcal{X}} \rho^{\kappa}_{M^*}(x,a,x') f\left(\frac{\rho^{\kappa}_{M}(x,a,x')}{\rho^{\kappa}_{M^*}(x,a,x')}\right) \mathrm{d} x' 	-   \rho^{\kappa}_{M^*}(x,a) \left(f\left(\frac{\rho^{\kappa}_{M}(x,a)}{\rho^{\kappa}_{M^*}(x,a)}\right) - H_{M^*}(x,a)\right)\Bigg),
				\label{lemma:gen-opt-pi}
			\end{split}
		\end{equation}
	\end{small}
	where $f: \mathbb{R}_{+} \rightarrow \mathbb{R}$ is a convex and lower semi-continuous (l.s.c.) function.
	$\bar \rho_{M^*}^{\bar \beta^*}$ gives a generalized representation of the optimal adversarial distribution to maximize the error of the model.  
	Based on Eq.~\eqref{lemma:gen-opt-pi}, we have a surrogate objective of AWRM which can avoid querying $M^*$ to construct $\rho^{\beta^*}_{M^*}$: 
	\begin{theorem}
		Let $\bar \rho_{M^*}^{\bar \beta^*}$ as the data distribution of the best-response policy $\bar \beta^*$ in Eq.~\eqref{equ:raw_obj-reg} under model $M_\theta$ parameterized by $\theta$, then we can find the optimal $\theta^*$ of $\min_{\theta} \max_{\beta \in \Pi} \bar L(\rho_{M^*}^\beta, M_\theta)$ (Eq.~\eqref{equ:raw_obj-reg})  via iteratively optimizing the objective $\theta_{t+1} =\min_{\theta}  \bar L(\bar \rho_{M^*}^{\bar \beta^*}, M_\theta)$, where $\bar \rho_{M^*}^{\bar \beta^*}$ is approximated via the last-iteration model $\oldmodel$.  Based on Corollary~\ref{the:rearrangement-corollary}, we derive an upper bound objective for $\min_{\theta} \bar L(\bar \rho_{M^*}^{\bar \beta^*}, M_\theta)$:
		\begin{small}
			\begin{align*}
				\theta_{t+1} = \min_{\theta} \mathbb{E}_{\rho^{\kappa}_{M^*}}\Bigg[ \frac{-1}{\alpha_0(x, a)} \log M_\theta(x'|x,a) \underbrace{\left(f\left(\frac{\rho^{\kappa}_{\oldmodel}(x,a,x')}{\rho^{\kappa}_{M^*}(x,a,x')}\right) - f\left(\frac{\rho^{\kappa}_{\oldmodel}(x,a)}{\rho^{\kappa}_{M^*}(x,a)}\right) + H_{M^*}(x,a)\right)}_{W(x,a,x')} \Bigg],
			\end{align*}
		\end{small}
		where $\mathbb{E}_{\rho^{\kappa}_{M^*}}\left[\cdot\right]$ denotes $ \mathbb{E}_{x,a,x'\sim \rho^{\kappa}_{M^*}}\left[\cdot\right]$, $f$ is a l.s.c function  satisfying $f'(x)\le 0, \forall x \in \mathcal{X}$, and $\alpha_0(x,a)=\rhonormtheta \rho^\kappa_{M^*}(x,a)$.
		\label{the:ub_obj}
	\end{theorem}
	Thm.~\ref{the:ub_obj} approximately achieve AWRM by using $\kappa$ and a pseudo-reweighting module $W$. $W$ assigns learning propensities for data points with larger differences between distributions $\rho^\kappa_{\oldmodel}$ and $\rho^\kappa_{M^*}$. 
	{By adjusting the weights, the learning process will exploit subtle errors in any data point, whatever how many proportions it contributes, to correct potential generalization errors on counterfactual data.}
	
	\begin{remark}
	In practice, we need to use real-world data to construct the distribution $\rho^{\kappa}_{M^*}$. In the offline model-learning setting, we only have a real-world dataset $\mathcal{D}_{\rm real}$ collected by the behavior policy $\mu$, which is the empirical distribution of $\rho^{\mu}_{M^*}$.   Let $\kappa=\mu$, we have 
	\begin{small}
		\begin{align*}
			\theta_{t+1} = \min_{\theta} \mathbb{E}_{\rho^{\mu}_{M^*}}\Bigg[ \frac{-1}{\alpha_0(x, a)} \log M_\theta(x'|x,a) \underbrace{\Bigg(f\left(\frac{\rho^{\mu}_{\oldmodel}(x,a,x')}{\rho^{\mu}_{M^*}(x,a,x')}\right) - f\left(\frac{\rho^{\mu}_{\oldmodel}(x,a)}{\rho^{\mu}_{M^*}(x,a)}\right) + H_{M^*}(x,a)\Bigg)}_{W(x,a,x')} \Bigg],
		\end{align*}
	\end{small}
\textit{which is Eq.~\eqref{equ:ub_obj} in the main body.}
	\end{remark}

	In Thm.~\ref{the:ub_obj}, the terms $f(\rho^{\kappa}_{\oldmodel}(x,a,x')/\rho^{\kappa}_{M^*}(x,a,x')) - f(\rho^{\kappa}_{\oldmodel}(x,a)/\rho^{\kappa}_{M^*}(x,a))$ are still intractable.  Thanks to previous successful practices in GAN~\cite{gan} and GAIL~\cite{gail}, we achieve the objective via a generator-discriminator-paradigm objective through similar derivation. We show the results as follows and leave the complete derivation in Appx.~\ref{app:sec:tract_solution}. In particular, by introducing two discriminators $D_{\varphi^*_0}(x,a,x')$ and  $D_{\varphi^*_0}(x,a)$, letting $\kappa=\mu$, we can optimize the surrogate objective Eq.~\eqref{equ:ub_obj}  via:
	\begin{small}
		\begin{align*}
			\theta_{t+1}=&\max_{\theta}~ \Big( \mathbb{E}_{\rho^{\hat \mu}_{\oldmodel}}\left[\adv(x,a,x') \log {M_\theta}(x'|x,a) \right] +\mathbb{E}_{\rho^{\mu}_{M^*}}\left[(H_{M^*}(x,a)-\adv(x,a,x')) \log {M_\theta}(x'|x,a) \right] \Big)	\nonumber \\
			s.t. \quad & \varphi_0^* = \arg \max_{\varphi_0} \Big( \mathbb{E}_{\rho^{\mu}_{M^*}} \left[\log D_{\varphi_0}(x,a,x')\right]  + \mathbb{E}_{ \rho^{\hat \mu}_{\oldmodel}} \left[\log (1-D_{\varphi_0}(x,a,x'))\right] \Big)	\nonumber\\
			& \varphi_1^* = \arg \max_{\varphi_1}~ \Big(\mathbb{E}_{ \rho^{\mu}_{M^*}} \left[\log D_{\varphi_1}(x,a)\right]  + \mathbb{E}_{\rho^{\hat \mu}_{\oldmodel}} \left[\log (1 - D_{\varphi_1}(x,a))\right] \Big), 
		\end{align*}
	\end{small}
	where $\mathbb{E}_{\rho^{\mu}_{M}}[\cdot]$ is a simplification of $\mathbb{E}_{x,a,x'\sim \rho^{\mu}_{M}}[\cdot]$, $\adv(x,a,x') =  \log D_{\varphi^*_0}(x,a,x')- \log D_{\varphi^*_1}(x,a)$, and $\varphi_0$ and $\varphi_1$ are the parameters of $D_{\varphi_0}$ and $D_{\varphi_1}$ respectively. 
	We learn a policy $\hat \mu \approx \mu$ via imitation learning based on the offline dataset $\mathcal{D}_{\rm real}$~\cite{behavior_cloning,gail}.
	Note that in the process, we ignore the term $\alpha_0(x,a)$ for simplifying the objective. The discussion on the impacts of removing $\alpha_0(x,a)$ is left in App.~\ref{app:limitation}. 
	
	\subsection{Proof of Lemma~\ref{the:opt-pi}}
	\label{app:sec:lemma4.3}

	\begin{proof}  Given a transition function $M$ of an MDP, the distribution of the best-response policy $\beta^*$ satisfies:
		
		\begin{align*}
			\rho_{M^*}^{\beta^*}=& \arg \max_{\rho_{M^*}^{\beta}} \int_{\mathcal{X}, \mathcal{A}}  \rho_{M^*}^{\mu}(x, a) \omega(x,a) \int_{\mathcal{X}}  M^*(x'|x,a) \left(- \log M(x'|x,a) \right)\mathrm{d} x' \mathrm{d} a \mathrm{d} x - \frac{\alpha}{2} \Vert   \rho_{M^*}^{\beta}(\cdot, \cdot) \Vert^2_2\\
			=&\arg \max_{\rho_{M^*}^{\beta}} \int_{\mathcal{X}, \mathcal{A}}  \rho_{M^*}^{\beta}(x, a)  \underbrace{\int_{\mathcal{X}}  M^*(x'|x,a) \left(- \log M(x'|x,a) \right)\mathrm{d} x'}_{g(x,a)} \mathrm{d} a \mathrm{d} x - \frac{\alpha}{2} \Vert   \rho_{M^*}^{\beta}(\cdot, \cdot) \Vert^2_2\\
			=& \arg  \max_{\rho_{M^*}^{\beta}} \frac{2}{\alpha}  \int_{\mathcal{X}, \mathcal{A}}  \rho_{M^*}^{\beta}(x, a) g(x,a)  \mathrm{d} a \mathrm{d} x - \Vert   \rho_{M^*}^{\beta}(\cdot, \cdot) \Vert^2_2\\
			=& \arg  \max_{\rho_{M^*}^{\beta}} \frac{2}{\alpha}  \int_{\mathcal{X}, \mathcal{A}}  \rho_{M^*}^{\beta}(x, a) g(x,a)  \mathrm{d} a \mathrm{d} x - \Vert   \rho_{M^*}^{\beta}(\cdot, \cdot) \Vert^2_2 - \frac{\Vert g(\cdot,\cdot) \Vert^2_2 }{\alpha^2}\\
			=& \arg  \max_{\rho_{M^*}^{\beta}} -\left(-2  \int_{\mathcal{X}, \mathcal{A}}  \rho_{M^*}^{\beta}(x, a) \frac{g(x,a)}{\alpha}  \mathrm{d} a \mathrm{d} x + \Vert   \rho_{M^*}^{\beta}(\cdot, \cdot) \Vert^2_2 + \frac{\Vert g(\cdot,\cdot) \Vert^2_2 }{\alpha^2}\right)\\
			=& \arg \max_{\rho^\beta_{M^*}} - \Vert \rho^\beta_{M^*}(\cdot, \cdot) - \frac{g(\cdot, \cdot)}{\alpha}\Vert^2_2.
		\end{align*}

		We know that the occupancy measure $\rho^\beta_{M^*}$ is a density function with a constraint $\int_{\mathcal{X}} \int_{\mathcal{A}} \rho^\beta_{M^*}(x,a) \mathrm{d} a \mathrm{d} x=1$. 
		Assuming the occupancy measure $\rho^\beta_{M^*}$ has an upper bound $c$, that is $0 \le \rho^\beta_{M^*}(x,a) \le c, \forall a \in \mathcal{A}, \forall x \in \mathcal{X}$, 
		constructing a regularization coefficient  $\rhonorm =\int_{\mathcal{X}}\int_{\mathcal{A}} (D_{KL}(M^*(\cdot|x,a), M(\cdot|x,a))+H_{M^*}(x,a))\mathrm{d}x\mathrm{d}a$ as a constant value given any $M$, then we have
		\begin{align*}
			\rho^{\beta^*}_{M^*}(x,a)&=\frac{g(x,a)}{\rhonorm}\nonumber\\
			&=\frac{\int_{\mathcal{X}}M^*(x'|x,a)\log \frac{M^*(x'|x,a)}{M(x'|x,a)} \mathrm{d} x - \int_{\mathcal{X}}M^*(x'|x,a)\log M^*(x'|x,a) \mathrm{d} x }{\rhonorm}\nonumber\\
			&=\frac{D_{KL}(M^*(\cdot|x,a), M(\cdot|x,a))+H_{M^*}(x,a)}{\rhonorm}\nonumber\\
			&\propto \Big(D_{KL}(M^*(\cdot|x,a), M(\cdot|x,a))+H_{M^*}(x,a)\Big),
		\end{align*} which is the optimal density function of Eq.~\eqref{equ:int_obj_l2} with $\alpha=\alpha_M$. 
		
		Note that in some particular $M^*$, we still cannot construct a $\beta$ that can generate an occupancy specified by $g(x,a)/\alpha_M$ for any $M$. We can only claim the distribution of the ideal best-response policy $\bar \beta^*$ satisfies:
		\begin{align}
			\rho_{M^*}^{\bar \beta^*}(x, a) = \frac{1}{\alpha_M} (D_{KL}(M^*(\cdot |x,a), M(\cdot |x,a)) + H_{M^*}(x,a)),
			\label{eq:beta_distribution_solution}
		\end{align}
		where $\alpha_M$ is a normalizer that $\alpha_M =\int_{\mathcal{X}}\int_{\mathcal{A}} (D_{KL}(M^*(\cdot|x,a), M(\cdot|x,a))+H_{M^*}(x,a))\mathrm{d}x\mathrm{d}a$. 
		We give a discussion of the rationality of the ideal best-response policy $\bar \beta^*$ as a replacement of the real  best-response policy $\beta^*$ in Remark~\ref{remark:optimal_dist}.

	\end{proof}
	\begin{remark}
		The optimal solution Eq.~\eqref{eq:beta_distribution_solution} relies on $g(x,a)$. In some particular $M^*$, it is intractable to derive a $\beta$ that can generate an occupancy specified by $g(x,a)/\rhonorm$.
		Consider the following case: a state $x_1$ in $M^*$ might be harder to reach than another state $x_2$, e.g., $M^*(x_1|x,a)<M^*(x_2|x,a), \forall x \in \mathcal{X}, \forall a \in \mathcal{A}$, 
		then it is impossible to find a $\beta$ that the occupancy satisfies $\rho^{\beta}_{M^*}(x_1,a)>\rho^{\beta}_{M^*}(x_2,a)$. In this case, Eq.~\eqref{eq:beta_distribution_solution} can be a sub-optimal solution. Since this work focuses on task-agnostic solution derivation while the solution to the above problem should rely on the specific description of $M^*$, we leave it as future work.	
		However, we point out that Eq.~\eqref{eq:beta_distribution_solution} is a reasonable re-weighting term even as a sub-optimum: $\rho_{M^*}^{\bar \beta^*}$ gives larger densities on the data where the distribution distance between the approximation model and the real model (i.e., $D_{KL}(M^*, M)$) is larger or the stochasticity of the real model (i.e.,  $H_{M^*}$) is larger.
		\label{remark:optimal_dist}
	\end{remark}
	
	\subsection{Proof of Eq.~(\ref{lemma:gen-opt-pi})} 
	\label{app:sec:gen-opt-pi}

	The integral process of $D_{KL}$ in Eq.~\eqref{eq:dist_obj} is intractable in the offline setting as it explicitly requires the conditional probability function of $M^*$.  
	Our motivation for the tractable solution is utilizing the offline dataset $\mathcal{D}_{\rm real}$ as the empirical \textit{joint} distribution $\rho^\mu_{M^*}(x,a,x')$ and adopting practical techniques for distance estimation on two joint distributions,  like GAN~\cite{gan,fgan}, to approximate Eq.~\eqref{eq:dist_obj}.  
	To adopt that solution, we should first transform  Eq.~\eqref{eq:dist_obj} into a form under joint distributions. Without loss of generality, we introduce an intermediary policy $\kappa$, of which $\mu$ can be regarded as a specific instance. Then we have $M(x'|x,a)=\rho^{\kappa}_{M}(x,a,x')/\rho^{\kappa}_{M}(x,a)$ for any $M$ if $\rho^{\kappa}_{M}(x,a)>0$. Assuming $\forall x \in \mathcal{X}, \forall a\in \mathcal{A}, \rho_{M^*}^{\kappa}(x, a)>0$ if $\rho_{M^*}^{\bar \beta^*}(x, a)>0$, which will hold when $\kappa$ overlaps with $\mu$, then Eq.~\eqref{eq:dist_obj} can transform to:
	
	\begin{align}
		\rho_{M^*}^{\bar \beta^*}(x, a)=&\frac{D_{KL}(M^*(\cdot|x,a), M(\cdot|x,a))+H_{M^*}(x,a)}{\rhonorm}\nonumber\\
		=&\frac{1}{\rhonorm} \int_{\mathcal{X}} M^*(x'|x,a) \left(\log \frac{M^*(x'|x,a)}{M(x'|x,a)} - \log M^*(x'|x,a)\right) \mathrm{d}x' \nonumber\\
		=& \frac{1}{\rhonorm \rho^{\kappa}_{M^*}(x,a)}  \int_{\mathcal{X}} \rho^{\kappa}_{M^*}(x,a)M^*(x'|x,a)\left(\log \frac{M^*(x'|x,a)}{M(x'|x,a)} - \log M^*(x'|x,a) \right)  \mathrm{d}x' \label{equ:joint}\\
		=& \frac{1}{\rhonorm\rho^{\kappa}_{M^*}(x,a)} \int_{\mathcal{X}} \rho^{\kappa}_{M^*}(x,a, x')\left( \log \frac{\rho_{M^*}^{\kappa}(x,a, x')}{\rho_{M}^\kappa(x,a, x')} + \log \frac{\rho^\kappa_{M}(x,a)}{\rho^\kappa_{M^*}(x,a)}  - \log M^*(x'|x,a) \right)  \mathrm{d}x'  \nonumber\\
		=& \frac{1}{\rhonorm \rho^{\kappa}_{M^*}(x,a)}  \Bigg(\int_{\mathcal{X}} \rho^{\kappa}_{M^*}(x,a, x') \log \frac{\rho_{M^*}^{\kappa}(x,a, x')}{\rho_{M}^\kappa(x,a, x')} \mathrm{d}x'-\nonumber\\
		& \rho^{\kappa}_{M^*}(x,a) \log\frac{\rho^\kappa_{M^*}(x,a)}{\rho^\kappa_{M}(x,a)} \underbrace{\int_{\mathcal{X}} M^*(x'|x,a) \mathrm{d} x'}_{=1} -\rho^{\kappa}_{M^*}(x,a) \int_{\mathcal{X}} M^*(x'|x,a) \log M^*(x'|x,a) \mathrm{d} x'  \Bigg)\nonumber\\
		=& \frac{1}{\alpha_0(x, a)} \Bigg(\int_{\mathcal{X}} \rho^{\kappa}_{M^*}(x,a, x') \log \frac{\rho_{M^*}^{\kappa}(x,a, x')}{\rho_{M}^\kappa(x,a, x')} \mathrm{d}x'- \rho^{\kappa}_{M^*}(x,a) \log\frac{\rho^\kappa_{M^*}(x,a)}{\rho^\kappa_{M}(x,a)} + \rho^{\kappa}_{M^*}(x,a) H_{M^*}(x,a)  \Bigg)
		\label{equ:joint-final}
	\end{align}
	where $\alpha_0(x,a)=\rhonorm\rho^{\kappa}_{M^*}(x,a)$.
	
	\begin{definition}[$f$-divergence]
		Given two distributions $P$ and $Q$, two absolutely continuous density functions $p$ and $q$ with respect to a base measure $\mathrm{d}x$ defined on the domain $\mathcal{X}$, we define the $f$-divergence~\cite{fgan},
		\begin{align}
			D_{f}(P\Vert Q) = \int_{\mathcal{X}} q(x) f\left(\frac{p(x)}{q(x)}\right) \mathrm{d}x,
			\label{equ:f-div}
		\end{align}
		where the generator function $f: \mathbb{R}_{+} \rightarrow \mathbb{R}$ is a convex, lower-semicontinuous function.
		\label{def:f-div}
	\end{definition}
	
	We notice that the terms $\rho^{\kappa}_{M^*}(x,a, x') \log \frac{\rho_{M^*}^{\kappa}(x,a, x')}{\rho_{M}^\kappa(x,a, x')}$ and $ \rho^{\kappa}_{M^*}(x,a) \log \frac{\rho_{M^*}^{\kappa}(x,a)}{\rho_{M}^\kappa(x,a)}$ are the integrated functions in reverse KL divergence, which is an instance of $f$ function in $f$-divergence  (See Reverse-KL divergence of Tab.1 in~\cite{fgan} for more details).  Replacing that 
	form $q \log \frac{q}{p}$ with $qf(\frac{p}{q})$, we obtain a generalized representation of $\rho_{M^*}^{\bar \beta^*}$:
	\begin{small}
		\begin{align}
			\bar \rho_{M^*}^{\bar \beta^*}:=\frac{1}{\alpha_0(x, a)} \Bigg( \int_{\mathcal{X}} \rho^{\kappa}_{M^*}(x,a,x') f\left(\frac{\rho^{\kappa}_{M}(x,a,x')}{\rho^{\kappa}_{M^*}(x,a,x')}\right) \mathrm{d} x'-  \rho^{\kappa}_{M^*}(x,a) \left(f\left(\frac{\rho^{\kappa}_{M}(x,a)}{\rho^{\kappa}_{M^*}(x,a)}\right) - H_{M^*}(x,a)\right)\Bigg),
			\label{equ:joint-f-div-final}
		\end{align}
	\end{small}

		\subsection{Proof of Thm.~\ref{the:ub_obj}}
		\label{app:sec:thm}
		
		We first introduce several useful lemmas for the proof.

		\begin{lemma}
			\textbf{Rearrangement inequality} The rearrangement inequality states that, for two sequences $a_1 \geq a_2 \geq \ldots \geq a_n$ and $b_1 \geq b_2 \geq \ldots \geq b_n$, the inequalities
			\begin{align*}
				a_1b_1+a_2b_2+\cdots+a_n b_n \geq a_1b_{\pi(1)}+a_2b_{\pi(2)}+\cdots+a_n b_{\pi(n)} \geq a_1b_n+a_2b_{n-1}+\cdots+a_n b_1
			\end{align*}
			hold, where $\pi(1), \pi(2), \ldots, \pi(n)$ is any permutation of $1,2,\ldots,n$.
		\end{lemma}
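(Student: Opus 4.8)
The plan is to prove the rearrangement inequality by the classical exchange (``bubble-sort'') argument. First I would observe that $S(\pi):=\sum_{k=1}^{n}a_k b_{\pi(k)}$ ranges over a finite set as $\pi$ runs over the $n!$ permutations, so a maximizer and a minimizer both exist; the task is to identify them as the sorted and anti-sorted pairings. The workhorse is a one-line exchange estimate: if $i<j$ and $b_{\pi(i)}<b_{\pi(j)}$, and $\pi'$ agrees with $\pi$ except that the values at positions $i$ and $j$ are swapped, then
\begin{align*}
S(\pi')-S(\pi)=\big(a_i b_{\pi(j)}+a_j b_{\pi(i)}\big)-\big(a_i b_{\pi(i)}+a_j b_{\pi(j)}\big)=(a_i-a_j)\big(b_{\pi(j)}-b_{\pi(i)}\big)\ge 0,
\end{align*}
since $a_i\ge a_j$ (because $i<j$) and $b_{\pi(j)}>b_{\pi(i)}$ by assumption. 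So swapping out an ``inversion'' of this type never decreases $S$; by the mirror computation, swapping an $i<j$ with $b_{\pi(i)}>b_{\pi(j)}$ gives $S(\pi')-S(\pi)=(a_i-a_j)(b_{\pi(j)}-b_{\pi(i)})\le 0$, so it never increases $S$.

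Next I would promote this local move to a global statement via a monovariant. If the sequence $\big(b_{\pi(k)}\big)_k$ is not non-increasing, then some adjacent pair $(i,i+1)$ has $b_{\pi(i)}<b_{\pi(i+1)}$; swapping it keeps $S$ non-decreasing and lowers the nonnegative, integer-valued inversion count by exactly one, so after finitely many steps we reach a permutation $\pi^\circ$ with $b_{\pi^\circ(1)}\ge\cdots\ge b_{\pi^\circ(n)}$. Since $\big(b_{\pi^\circ(k)}\big)_k$ is a non-increasing rearrangement of the already non-increasing multiset $b_1\ge\cdots\ge b_n$, these two sequences coincide termwise, hence $S(\pi^\circ)=\sum_k a_k b_k=S(\mathrm{id})$; chaining the inequalities gives $S(\mathrm{id})\ge S(\pi)$ for every $\pi$, the upper bound. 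Running the procedure with the opposite move — repeatedly swapping adjacent $i<i+1$ with $b_{\pi(i)}>b_{\pi(i+1)}$, which only decreases $S$ and decreases the count of such pairs — drives any $\pi$ to the index-reversing permutation $k\mapsto n+1-k$, yielding $S(\pi)\ge\sum_k a_k b_{n+1-k}$, the lower bound.

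The only point needing care, and the step I would expect to be the main obstacle to a fully rigorous write-up, is the termination/completeness of the sorting process: one must verify that an \emph{adjacent} inversion-removing swap is always available whenever $\big(b_{\pi(k)}\big)_k$ is not monotone, that each such swap strictly decreases an integer-valued potential so the procedure halts, and that it halts exactly at a configuration whose $S$-value equals $S(\mathrm{id})$ (respectively $\sum_k a_k b_{n+1-k}$). Ties among the $a_k$ or the $b_k$ are harmless here: they merely make the sorted (resp. anti-sorted) pairing \emph{a} maximizer (resp. minimizer) rather than the unique one, which is all that is needed for the use of this lemma as a building block in the proof of Thm.~\ref{the:ub_obj}. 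For the integral formulation appearing in the paper, the same exchange idea applied on level sets gives the Hardy--Littlewood continuous analogue, or one may simply invoke the finite inequality on the relevant discretizations and pass to the limit.
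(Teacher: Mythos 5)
Your proof is correct: the adjacent-exchange (bubble-sort) argument with the inversion count as a terminating monovariant is the standard proof of the rearrangement inequality, and the one subtlety you flag — that an adjacent inversion always exists when the sequence is not monotone, and that each swap strictly lowers an integer potential — is handled properly. Note, however, that the paper offers no proof of this lemma at all: it is quoted as a classical fact and used only as the input to Lemma~\ref{the:rearrangement-discrete} (the Chebyshev-sum consequence obtained by averaging the $n$ cyclic shifts), so there is no in-paper argument to compare against. One small remark on your closing paragraph: the continuous analogue the paper actually needs (Lemma~\ref{lemma:rearrangement}) is not obtained by discretizing the finite inequality and passing to a limit, but by the direct symmetrization identity $\int\!\!\int p(x)p(y)\bigl(f(x)-f(y)\bigr)\bigl(g(x)-g(y)\bigr)\,\mathrm{d}y\,\mathrm{d}x \ge 0$, which is a Chebyshev-type integral inequality for comonotone functions rather than a Hardy--Littlewood rearrangement statement; that route avoids the measurability and limit-interchange issues your discretization sketch would have to address.
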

		
		\begin{lemma}
			For two sequences $a_1 \geq a_2 \geq \ldots \geq a_n$ and $b_1 \geq b_2 \geq \ldots \geq b_n$, the inequalities
			\begin{align*}
				\sum_{i=1}^n\frac{1}{n}a_i b_i\geq\sum_{i=1}^n\frac{1}{n}a_i\sum\frac{1}{n}b_i
			\end{align*}
			hold.
			\label{the:rearrangement-discrete}
		\end{lemma}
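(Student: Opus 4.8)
The plan is to derive this Chebyshev-type sum inequality directly from the rearrangement inequality stated just above. First I would clear denominators: multiplying both sides by $n^2$, it suffices to prove
\[
n\sum_{i=1}^n a_i b_i \;\geq\; \Bigl(\sum_{i=1}^n a_i\Bigr)\Bigl(\sum_{i=1}^n b_i\Bigr).
\]

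Next I would introduce the $n$ cyclic permutations $\pi_k$ of $\{1,\dots,n\}$ given by $\pi_k(i)\equiv i+k \pmod n$ for $k=0,1,\dots,n-1$. Since $a_1\ge\cdots\ge a_n$ and $b_1\ge\cdots\ge b_n$ are similarly ordered, the rearrangement inequality yields, for each fixed $k$, the bound $\sum_{i=1}^n a_i b_{\pi_k(i)} \le \sum_{i=1}^n a_i b_i$. Summing over $k=0,\dots,n-1$ and noting that for each fixed $i$ the index $\pi_k(i)$ runs over all of $\{1,\dots,n\}$ exactly once as $k$ varies, the left-hand side collapses to $\sum_{i=1}^n a_i \sum_{j=1}^n b_j$. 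Therefore $\bigl(\sum_i a_i\bigr)\bigl(\sum_j b_j\bigr) \le n\sum_i a_i b_i$, and dividing back by $n^2$ recovers the stated form.

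A one-line alternative, if one prefers to avoid permutations, is to expand the manifestly nonnegative double sum $\sum_{i,j}(a_i-a_j)(b_i-b_j)\ge 0$ — each summand is a product of two factors of the same sign because both sequences are sorted in the same direction — which gives $2n\sum_i a_i b_i - 2\bigl(\sum_i a_i\bigr)\bigl(\sum_j b_j\bigr)\ge 0$, the same conclusion. There is no genuine obstacle here; the only points needing a moment's care are the bookkeeping that each $b$-index is hit exactly once per cyclic orbit (equivalently, that the cross terms in the double-sum expansion collect correctly) and the hypothesis that the two sequences are ordered the same way, which is precisely what forces every relevant term to be nonnegative.
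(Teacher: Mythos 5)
Your main argument — applying the rearrangement inequality to each of the $n$ cyclic shifts and summing so that the left side collapses to $\bigl(\sum_i a_i\bigr)\bigl(\sum_j b_j\bigr)$ — is exactly the proof the paper gives for this lemma, and it is correct. (Your one-line alternative via $\sum_{i,j}(a_i-a_j)(b_i-b_j)\ge 0$ is also sound; it is in fact the argument the paper uses for the continuous analogue in Lemma~\ref{lemma:rearrangement}.)
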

		\begin{proof}
			By rearrangement inequality, we have
			\begin{align*}
				\sum_{i=1}^n a_i b_i&\geq a_1 b_1+a_2 b_2+\cdots+a_n b_n\\
				\sum_{i=1}^n a_i b_i&\geq a_1b_2+a_2b_3+\cdots+a_n b_1 \\
				\sum_{i=1}^n a_i b_i&\geq a_1b_3+a_2b_4+\cdots+a_n b_2 \\
				&\vdots \\
				\sum_{i=1}^n a_i b_i&\geq a_1b_n+a_2b_1+\cdots+a_n b_{n-1} 
			\end{align*}
			Then we have
			\begin{align*}
				n\sum_{i=1}^n a_i b_i&\geq \sum_{i=1}^n a_i \sum_{i=1}^n b_i \\
				\sum_{i=1}^n\frac{1}{n}a_i b_i&\geq\sum_{i=1}^n\frac{1}{n}a_i\sum\frac{1}{n}b_i
			\end{align*}
		\end{proof}
		Now we extend Lemma~\ref{the:rearrangement-discrete} into the continuous integral scenario:
		\begin{lemma}
			\label{lemma:rearrangement}
			Given $\mathcal{X}\subset\mathbb{R}$, for two functions $f:\mathcal{X}\to\mathbb{R}$ and $g:\mathcal{X}\to\mathbb{R}$ that $f(x) \ge f(y)$ if and only if $g(x)\ge g(y), \  \forall x,y\in\mathcal{X}$, the inequality
			\begin{align*}
				\int_{\mathcal{X}} p(x)f(x)g(x)\mathrm{d}x\geq \int_{\mathcal{X}} p(x)f(x)\mathrm{d}x\int_{\mathcal{X}} p(x)g(x)\mathrm{d}x
			\end{align*}
			holds, where $p:\mathcal{X} \to \mathbb{R}$ and $p(x)>0, \forall x \in \mathcal{X}$ and $\int_{\mathcal{X}} p(x) \mathrm{d} x=1 $.
		\end{lemma}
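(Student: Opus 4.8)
The plan is to prove the continuous rearrangement inequality (Lemma~\ref{lemma:rearrangement}) by a symmetrization argument that mirrors the discrete proof of Lemma~\ref{the:rearrangement-discrete}, but carried out with a double integral against the product measure $p \otimes p$. First I would introduce two independent copies of the random variable: let $x, y \in \mathcal{X}$ be drawn independently, each with density $p$. The key identity is
\begin{align*}
\int_{\mathcal{X}}\!\int_{\mathcal{X}} p(x) p(y) \bigl(f(x) - f(y)\bigr)\bigl(g(x) - g(y)\bigr)\,\mathrm{d}x\,\mathrm{d}y
&= 2\int_{\mathcal{X}} p(x) f(x) g(x)\,\mathrm{d}x - 2\int_{\mathcal{X}} p(x) f(x)\,\mathrm{d}x \int_{\mathcal{X}} p(y) g(y)\,\mathrm{d}y,
\end{align*}
which follows by expanding the product in the integrand into its four terms, using $\int_{\mathcal{X}} p = 1$ on the cross terms, and Fubini to split the double integral of $p(x)p(y) f(x) g(y)$ into a product of single integrals. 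So the claimed inequality is equivalent to the statement that the left-hand double integral is nonnegative.

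Next I would establish that the integrand is pointwise nonnegative. By hypothesis, $f$ and $g$ are ``co-monotone'': $f(x) \ge f(y) \iff g(x) \ge g(y)$. Hence for every pair $(x,y)$, the factors $f(x) - f(y)$ and $g(x) - g(y)$ have the same sign (both $\ge 0$, both $\le 0$, or at least one is zero), so their product is $\ge 0$; and $p(x) p(y) > 0$ since $p$ is strictly positive. Therefore the double integral of a nonnegative function is nonnegative, and dividing by $2$ gives exactly
\[
\int_{\mathcal{X}} p(x) f(x) g(x)\,\mathrm{d}x \;\ge\; \int_{\mathcal{X}} p(x) f(x)\,\mathrm{d}x \int_{\mathcal{X}} p(x) g(x)\,\mathrm{d}x,
\]
which is the assertion of the lemma.

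I would then address the measure-theoretic fine print. One should assume $f$ and $g$ are measurable and integrable against $p\,\mathrm{d}x$ (and that $fg$ is integrable, which is implicit in the statement), so that all integrals above are finite and Fubini's theorem applies to the product integrand $p(x)p(y)|f(x)-f(y)||g(x)-g(y)|$, which is dominated by an integrable function of the separated variables. The co-monotonicity hypothesis, as literally stated with the ``if and only if'', forces $f$ and $g$ to be simultaneously nondecreasing in any linear order induced on $\mathcal{X}$ by either function; the sign argument above only uses the weaker consequence that $(f(x)-f(y))(g(x)-g(y)) \ge 0$ for all $x, y$, so no monotonicity of $\mathcal{X} \subset \mathbb{R}$ itself is actually needed.

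The main obstacle I anticipate is purely bookkeeping rather than conceptual: making sure the integrability assumptions are spelled out so that the expansion of the double integral into four separate integrals is legitimate (no $\infty - \infty$), and phrasing the sign argument so it correctly covers the boundary cases where $f(x) = f(y)$ or $g(x) = g(y)$. The ``$\iff$'' in the hypothesis is slightly stronger than what the proof requires, so I would either weaken the hypothesis in the statement to ``$(f(x)-f(y))(g(x)-g(y)) \ge 0$ for all $x,y$'' or simply note in the proof that this is the only property used. Everything else is a direct continuous analogue of the averaging step $n\sum a_i b_i \ge \sum a_i \sum b_i$ in Lemma~\ref{the:rearrangement-discrete}, with the normalized counting measure replaced by $p\,\mathrm{d}x$.
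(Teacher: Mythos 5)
Your proposal is correct and follows essentially the same route as the paper: the paper's proof also starts from the pointwise co-monotonicity inequality $(f(x)-f(y))(g(x)-g(y))\ge 0$, integrates it against $p(x)p(y)$ over $\mathcal{X}\times\mathcal{X}$, and expands the double integral using $\int_{\mathcal{X}}p=1$ to obtain the claimed inequality. Your additional remarks on integrability and on the hypothesis being stronger than needed are sensible but do not change the argument.
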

		\begin{proof}
			Since $(f(x)-f(y))(g(x)-g(y))\geq 0, \forall x,y\in \mathcal{X}$, we have
			\begin{small}
				
				\begin{align*}
					&\int_{x\in\mathcal{X}}\int_{y\in\mathcal{X}}p(x)p(y)(f(x)-f(y))(g(x)-g(y))\mathrm{d}y\mathrm{d}x\geq 0 \\
					&\int_{x\in\mathcal{X}}\int_{y\in\mathcal{X}}p(x)p(y)f(x)g(x)+p(x)p(y)f(y)g(y)-p(x)p(y)f(x)g(y)-p(x)p(y)f(y)g(x)\mathrm{d}y\mathrm{d}x\geq 0 \\
					&\int_{x\in\mathcal{X}}\int_{y\in\mathcal{X}}p(x)p(y)f(x)g(x)+p(x)p(y)f(y)g(y)\mathrm{d}y\mathrm{d}x\geq \int_{x\in\mathcal{X}}\int_{y\in\mathcal{X}}p(x)p(y)f(x)g(y)+p(x)p(y)f(y)g(x)\mathrm{d}y\mathrm{d}x  \\
					&\int_{x\in\mathcal{X}}\left(\int_{y\in\mathcal{X}}p(x)p(y)f(x)g(x)\mathrm{d}y + \int_{y\in\mathcal{X}}p(x)p(y)f(y)g(y)\mathrm{d}y\right)\mathrm{d}x\geq \int_{x\in\mathcal{X}}\int_{y\in\mathcal{X}}p(x)p(y)f(x)g(y)+p(x)p(y)f(y)g(x)\mathrm{d}y\mathrm{d}x  \\
					&\int_{x\in\mathcal{X}}\left(p(x)f(x)g(x)+ \int_{y\in\mathcal{X}}p(x)p(y)f(y)g(y)\mathrm{d}y\right)\mathrm{d}x\geq \int_{x\in\mathcal{X}}\int_{y\in\mathcal{X}}p(x)p(y)f(x)g(y)+p(x)p(y)f(y)g(x)\mathrm{d}y\mathrm{d}x  \\
					&\int_{x\in\mathcal{X}}p(x)f(x)g(x)\mathrm{d}x + \int_{x\in\mathcal{X}}\int_{y\in\mathcal{X}}p(x)p(y)f(y)g(y)\mathrm{d}y\mathrm{d}x\geq \int_{x\in\mathcal{X}}\int_{y\in\mathcal{X}}p(x)p(y)f(x)g(y)+p(x)p(y)f(y)g(x)\mathrm{d}y\mathrm{d}x  \\
					&\int_{x\in\mathcal{X}}p(x)f(x)g(x)\mathrm{d}x + \int_{y\in\mathcal{X}}p(y)f(y)g(y)\mathrm{d}y\geq \int_{x\in\mathcal{X}}\int_{y\in\mathcal{X}}p(x)p(y)f(x)g(y)+p(x)p(y)f(y)g(x)\mathrm{d}y\mathrm{d}x  \\
					2&\int_{x\in\mathcal{X}}p(x)f(x)g(x)\mathrm{d}x\geq 2\int_{y\in\mathcal{X}} \int_{x\in\mathcal{X}}p(x)p(y)f(x) g(y)\mathrm{d}y\mathrm{d}x\\
					2&\int_{x\in\mathcal{X}}p(x)f(x)g(x)\mathrm{d}x\geq 2\int_{x\in\mathcal{X}}p(x)f(x)\mathrm{d}x\int_{x\in\mathcal{X}}p(x)g(x)\mathrm{d}x\\
					&\int_{x\in\mathcal{X}}p(x)f(x)g(x)\mathrm{d}x\geq \int_{x\in\mathcal{X}}p(x)f(x)\mathrm{d}x\int_{x\in\mathcal{X}}p(x)g(x)\mathrm{d}x
				\end{align*}
			\end{small}
		\end{proof}
		
		\begin{corollary} Let $g(\frac{p(x)}{q(x)})=- \log \frac{p(x)}{q(x)}$ where $p(x)>0, \forall x \in \mathcal{X}$ and $q(x)>0, \forall x \in \mathcal{X}$,  for $\upsilon >0$,  the inequality
			\begin{align*}
				\int_{\mathcal{X}} q(x)f(\upsilon \frac{p(x)}{q(x)})g(\frac{p(x)}{q(x)})\mathrm{d}x\geq \int_{\mathcal{X}} q(x)f(\upsilon \frac{p(x)}{q(x)})\mathrm{d}x\int_{\mathcal{X}} q(x)g(\frac{p(x)}{q(x)})\mathrm{d}x,
			\end{align*}
			holds if $f'(x) \le 0, \forall x \in \mathcal{X}$. It is not always satisfied for $f$ functions of $f$-divergence. We list a comparison of $f$ on that condition in Tab.~\ref{tab:fdiv-cond}. 
			\label{the:rearrangement-corollary}
		\end{corollary}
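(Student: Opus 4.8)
The plan is to obtain this corollary as a direct application of Lemma~\ref{lemma:rearrangement}, taking the base probability density there to be $q$ and choosing the two functions to be $F(x):=f\bigl(\upsilon\, p(x)/q(x)\bigr)$ and $G(x):=g\bigl(p(x)/q(x)\bigr)=-\log\bigl(p(x)/q(x)\bigr)$. With these identifications, the left-hand side of the claimed inequality is exactly $\int_{\mathcal{X}} q(x)F(x)G(x)\,\mathrm{d}x$ and the right-hand side is exactly $\int_{\mathcal{X}} q(x)F(x)\,\mathrm{d}x\int_{\mathcal{X}} q(x)G(x)\,\mathrm{d}x$, so the statement follows as soon as the hypothesis of Lemma~\ref{lemma:rearrangement} is verified for $F$ and $G$ (and $q$ is normalized, which it is in the $f$-divergence setting where $q$ denotes a density).

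Hence the only thing to check is that $F$ and $G$ are similarly ordered, i.e.\ $\bigl(F(x)-F(y)\bigr)\bigl(G(x)-G(y)\bigr)\ge 0$ for all $x,y\in\mathcal{X}$ — and this sign condition is in fact all that the proof of Lemma~\ref{lemma:rearrangement} uses. First I would set $t(x):=p(x)/q(x)>0$, so that $G(x)=-\log t(x)$ is a strictly decreasing function of $t(x)$. Next, the hypothesis $f'(u)\le 0$ for all $u$ makes $f$ non-increasing, and since $\upsilon>0$ the map $u\mapsto f(\upsilon u)$ is non-increasing as well; therefore $F(x)=f\bigl(\upsilon\, t(x)\bigr)$ is a non-increasing function of $t(x)$. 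Thus both $F$ and $G$ are non-increasing in the common quantity $t$: for any pair $x,y$, assuming without loss of generality $t(x)\ge t(y)$, we get $F(x)-F(y)\le 0$ and $G(x)-G(y)\le 0$, whence their product is non-negative. Feeding this into the integral Chebyshev-type computation carried out in the proof of Lemma~\ref{lemma:rearrangement} yields the corollary verbatim.

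There is essentially no hard step here; the only subtlety — and the reason $f'\le 0$ cannot be dropped — is that $G$ is strictly decreasing while $F$ is only required to be comonotone with it, so if $f$ were increasing on part of its domain (as happens for several standard $f$-divergence generators) then $F$ and $G$ would be oppositely ordered there and the inequality could reverse; this is precisely the dichotomy recorded in Tab.~\ref{tab:fdiv-cond}. The minor bookkeeping point I would be careful about is that Lemma~\ref{lemma:rearrangement} is phrased with an ``if and only if'' ordering condition, whereas for $f$ merely non-increasing the correct reading is the weaker sign condition $\bigl(F(x)-F(y)\bigr)\bigl(G(x)-G(y)\bigr)\ge 0$; since that weaker condition is exactly what the lemma's proof requires, the argument still goes through.
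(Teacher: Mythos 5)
Your proposal matches the paper's proof essentially verbatim: the paper also sets $F(x)=f(\upsilon\,p(x)/q(x))$ and $G(x)=g(p(x)/q(x))$, observes that $f'\le 0$ and $g'(t)=-1/t<0$ make them comonotone in $t=p/q$, and invokes Lemma~\ref{lemma:rearrangement}. Your remark that the lemma's ``if and only if'' hypothesis should be read as the weaker sign condition $(F(x)-F(y))(G(x)-G(y))\ge 0$ (which is all its proof uses) is a correct and slightly more careful reading than the paper's own wording.
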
 
		\begin{proof}
			
			$g'(x)=-\log x = - \frac{1}{x} < 0, \forall x \in \mathcal{X}$. Suppose $f'(x) \le 0, \forall x \in \mathcal{X}$, we have $f(x) \ge f(y)$ if and only if $g(x)\ge g(y), \  \forall x,y\in\mathcal{X}$ holds. Thus $f(\upsilon \frac{p(x)}{q(x)}) \ge f(\upsilon \frac{p(y)}{q(y)})$ if and only if $g(\frac{p(x)}{q(x)})\ge g(\frac{p(y)}{q(y)}), \  \forall x,y\in\mathcal{X}$ holds for all $\upsilon > 0$. By defining $F(x) =f( \upsilon  \frac{p(x)}{q(x)}))$ and $G(x)=g(\frac{p(x)}{q(x)})$ and using Lemma~\ref{lemma:rearrangement}, we have: 
			\begin{align*}
				\int_{\mathcal{X}} q(x)F(x)G(x)\mathrm{d}x\geq \int_{\mathcal{X}} q(x)F(x)\mathrm{d}x\int_{\mathcal{X}} q(x)G(x)\mathrm{d}x.
			\end{align*}
			Then we know
			\begin{align*}
				\int_{\mathcal{X}} q(x)f(\upsilon \frac{p(x)}{q(x)})g(\frac{p(x)}{q(x)})\mathrm{d}x\geq \int_{\mathcal{X}} q(x)f(\upsilon \frac{p(x)}{q(x)})\mathrm{d}x\int_{\mathcal{X}} q(x)g(\frac{p(x)}{q(x)})\mathrm{d}x
			\end{align*} holds.
		\end{proof}
		
		\begin{table*}[ht]
			\centering
			\caption{Properties of $f'(x) \le 0, \forall x \in \mathcal{X}$ for $f$-divergences.}
			\begin{tabular}{l|l|l}
				\toprule
				Name &  Generator function $f(x)$ &  If $f'(x) \le 0, \forall x \in \mathcal{X}$\\
				\midrule
				Kullback-Leibler & $x\log x$& False\\
				Reverse KL & $- \log x$ & True\\
				Pearson $\chi^2$ & $(x-1)^2$ & False\\
				Squared Hellinger&$(\sqrt{x}-1)^2$ & False\\
				Jensen-Shannon&$-(x+1) \log \frac{1+x}{2} + x \log x$ & False\\
				GAN& $x \log x - (x+1) \log (x+1)$ & True\\
				\bottomrule
			\end{tabular}
			\label{tab:fdiv-cond}
		\end{table*}
		Now, we prove Thm.~\ref{the:ub_obj}. 	For better readability, we first rewrite Thm.~\ref{the:ub_obj} as follows:
		\begin{theorem}
			Let $\bar \rho_{M^*}^{\bar \beta^*}$ as the data distribution of the best-response policy $\bar \beta^*$ in Eq.~\eqref{equ:raw_obj-reg} under model $M_\theta$ parameterized by $\theta$, then we can find the optimal $\theta^*$ of $\min_{\theta} \max_{\beta \in \Pi} \bar L(\rho_{M^*}^\beta, M_\theta)$ (Eq.~\eqref{equ:raw_obj-reg})  via iteratively optimizing the objective $\theta_{t+1} =\min_{\theta}  \bar L(\bar \rho_{M^*}^{\bar \beta^*}, M_\theta)$, where $\bar \rho_{M^*}^{\bar \beta^*}$ is approximated via the last-iteration model $\oldmodel$.  Based on Corollary~\ref{the:rearrangement-corollary}, we have an upper bound objective for $\min_{\theta} \bar L(\bar \rho_{M^*}^{\bar \beta^*}, M_\theta)$ and derive the following objective
			\begin{align*}
				\theta_{t+1} = \arg \max_{\theta} \mathbb{E}_{\rho^{\kappa}_{M^*}}\Bigg[\frac{1}{\alpha_0(x, a)} \log M_\theta(x'|x,a)  \underbrace{\left(f\left(\frac{\rho^{\kappa}_{\oldmodel}(x,a,x')}{\rho^{\kappa}_{M^*}(x,a,x')}\right) - f\left(\frac{\rho^{\kappa}_{\oldmodel}(x,a)}{\rho^{\kappa}_{M^*}(x,a)}\right) + H_{M^*}(x,a)\right)}_{W(x,a,x')} \Bigg],
			\end{align*}
			where $\alpha_0(x,a)=\rhonormtheta \rho^{\kappa}_{M^*}(x,a)$, $\mathbb{E}_{\rho^{\kappa}_{M^*}}\left[\cdot\right]$ denotes $ \mathbb{E}_{x,a,x'\sim \rho^{\kappa}_{M^*}}\left[\cdot\right]$, $f$ is the generator function in $f$-divergence which satisfies $f'(x)\le 0, \forall x \in \mathcal{X}$, and $\theta$ is the parameters of $M$.
			$\oldmodel$ denotes a probability function with the same parameters as the learned model (i.e.,  $\bar \theta=\theta$) but the parameter is fixed and only used for sampling.
		\end{theorem}
		
		\begin{proof}
			Let $\bar \rho_{M^*}^{\bar \beta^*}$ as the data distribution of the best-response policy $\bar \beta^*$ in Eq.~\eqref{equ:raw_obj-reg} under model $M_\theta$ parameterized by $\theta$, then we can find the optimal $\theta_{t+1}$ of $\min_{\theta} \max_{\beta \in \Pi} \bar L(\rho_{M^*}^\beta, M_\theta)$ (Eq.~\eqref{equ:raw_obj-reg})  via iteratively optimizing the objective $\theta_{t+1} =\min_{\theta}  \bar L(\bar \rho_{M^*}^{\bar \beta^*}, M_\theta)$, where $\bar \rho_{M^*}^{\bar \beta^*}$ is approximated via the last-iteration model $\oldmodel$:
			\begin{small}
				\begin{align}
					\theta_{t+1}=\min_\theta  \int_{\mathcal{X}, \mathcal{A}} \bar \rho_{M^*}^{\bar \beta^*}(x,a)  &\int_{\mathcal{X}}  M^*(x'|x,a) \left(- \log M_\theta(x'|x,a) \right) \mathrm{d} x' \mathrm{d} a \mathrm{d} x \label{equ:remove_l2}\\
					=\min_{\theta} \int_{\mathcal{X}, \mathcal{A}} \frac{1}{\alpha_0(x, a)} \Bigg(& \int_{\mathcal{X}} \rho^{\kappa}_{M^*}(x,a,x') f\left(\frac{\rho^{\kappa}_{\oldmodel}(x,a,x')}{\rho^{\kappa}_{M^*}(x,a,x')}\right) \mathrm{d} x' \int_{\mathcal{X}} M^*(x'|x,a) (-\log M_\theta(x'|x,a)) \mathrm{d} x' \nonumber \\
					&-  \rho^{\kappa}_{M^*}(x,a) \left(f\left(\frac{\rho^{\kappa}_{\oldmodel}(x,a)}{\rho^{\kappa}_{M^*}(x,a)}\right) -  H_{M^*}(x, a)\right)\int_{\mathcal{X}} M^*(x'|x,a) (-\log M_\theta(x'|x,a)) \mathrm{d} x' \Bigg) \mathrm{d} a  \mathrm{d} x   \nonumber \\
					=\min_{\theta} \int_{\mathcal{X}, \mathcal{A}} \frac{1}{\alpha_0(x, a)} \Bigg(& \int_{\mathcal{X}} \rho^{\kappa}_{M^*}(x,a,x') f\left(\frac{\rho^{\kappa}_{\oldmodel}(x,a,x')}{\rho^{\kappa}_{M^*}(x,a,x')}\right) \mathrm{d} x' \left(\int_{\mathcal{X}} M^*(x'|x,a) (-\log \frac{M_\theta(x'|x,a)}{M^*(x'|x,a)}) \mathrm{d} x' + H_{M^*}(x,a)\right) \nonumber \\
					&-  \rho^{\kappa}_{M^*}(x,a) \left(f\left(\frac{\rho^{\kappa}_{\oldmodel}(x,a)}{\rho^{\kappa}_{M^*}(x,a)}\right) -  H_{M^*}(x, a)\right)\int_{\mathcal{X}} M^*(x'|x,a) (-\log M_\theta(x'|x,a)) \mathrm{d} x' \Bigg) \mathrm{d} a  \mathrm{d} x   \nonumber  \\
					\le \min_{\theta} \int_{\mathcal{X}, \mathcal{A}} \frac{1}{\alpha_0(x, a)} \Bigg(& \underbrace{\rho^{\kappa}_{M^*}(x,a) \int_{\mathcal{X}} M^*(x'|x,a)  f\left(\frac{\rho^{\kappa}_{\oldmodel}(x,a,x')}{\rho^{\kappa}_{M^*}(x,a,x')}\right) (-\log \frac{M_\theta(x'|x,a)}{M^*(x'|x,a)}) \mathrm{d} x' }_{ \mathrm{based~on~Corollary}~\ref{the:rearrangement-corollary}} \nonumber  \\ &-  \rho^{\kappa}_{M^*}(x,a) \left(f\left(\frac{\rho^{\kappa}_{\oldmodel}(x,a)}{\rho^{\kappa}_{M^*}(x,a)}\right) -  H_{M^*}(x, a)\right)\int_{\mathcal{X}} M^*(x'|x,a) (-\log M_\theta(x'|x,a)) \mathrm{d} x' \Bigg) \mathrm{d} a  \mathrm{d} x   \nonumber \\
					=\min_{\theta} \int_{\mathcal{X}, \mathcal{A}} \frac{1}{\alpha_0(x, a)} \Bigg(& \rho^{\kappa}_{M^*}(x,a) \int_{\mathcal{X}}\left( M^*(x'|x,a)  f\left(\frac{\rho^{\kappa}_{\oldmodel}(x,a,x')}{\rho^{\kappa}_{M^*}(x,a,x')}\right) (-\log M_\theta(x'|x,a)) \right) \mathrm{d} x' \nonumber \\ 
					&-  \rho^{\kappa}_{M^*}(x,a) \left(f\left(\frac{\rho^{\kappa}_{\oldmodel}(x,a)}{\rho^{\kappa}_{M^*}(x,a)}\right) -  H_{M^*}(x, a)\right)\int_{\mathcal{X}} M^*(x'|x,a) (-\log M_\theta(x'|x,a)) \mathrm{d} x' \Bigg) \mathrm{d} a  \mathrm{d} x \nonumber \\
					=\max_{\theta} \int_{\mathcal{X}, \mathcal{A}, \mathcal{X}} \frac{1}{\alpha_0(x, a)}&  \rho^{\kappa}_{M^*}(x,a,x')\log M_\theta(x'|x,a) \left(f\left(\frac{\rho^{\kappa}_{\oldmodel}(x,a,x')}{\rho^{\kappa}_{M^*}(x,a,x')}\right)-f\left(\frac{\rho^{\kappa}_{\oldmodel}(x,a)}{\rho^{\kappa}_{M^*}(x,a)}\right)+ H_{M^*}(x, a)\right) \mathrm{d} x' \mathrm{d} a  \mathrm{d} x ,
					\label{equ:merge_f}
				\end{align}
			\end{small}
			
			where $\oldmodel$ is introduced to approximate the term $\bar \rho_{M^*}^{\bar \beta^*}$ and fixed when optimizing $\theta$.
			In Eq.~\eqref{equ:remove_l2}, $\Vert \rho_{M^*}^{\beta}(\cdot, \cdot) \Vert^2_2$ for Eq.~\eqref{equ:int_obj_l2} is eliminated as it does not contribute to the gradient of $\theta$.
			Assume $f'(x) \le 0, \forall x \in \mathcal{X}$, let $\upsilon(x,a): = \frac{\rho^\kappa_{\oldmodel}(x,a)}{\rho^\kappa_{M^*}(x,a)}>0$, $p(x'|x,a)=M_{\theta}(x'|x,a)$, and $q(x'|x,a)=M^*(x'|x,a)$, the first inequality can be derived by adopting Corollary~\ref{the:rearrangement-corollary} and eliminating the first $H_{M^*}$ since it does not contribute to the gradient of $\theta$.
			
		\end{proof}
		
		\subsection{Proof of the Tractable Solution}
		\label{app:sec:tract_solution}

		Now we are ready to prove the tractable solution:
		\begin{proof}
			
			The core challenge is that the term $f(\frac{\rho^{\kappa}_{\oldmodel}(x,a,x')}{\rho^{\kappa}_{M^*}(x,a,x')}) - f(\frac{\rho^{\kappa}_{\oldmodel}(x,a)}{\rho^{\kappa}_{M^*}(x,a)})$ is still intractable. In the following, we give a tractable solution to Thm.~\ref{the:ub_obj}. First, we resort to the first-order approximation. Given some $u\in (1-\xi, 1+\xi),\xi>0$, we have
			\begin{align}
				f(u)\approx f(1)+f^\prime(u)(u-1), \label{equ:first_order_approx_app}
			\end{align}
			where $f'$ is the first-order derivative of $f$. By Taylor's formula and the fact that $f'(u)$ of the generator function $f$ is bounded in $(1-\xi, 1+\xi)$, the approximation error is no more than $\mathcal{O}(\xi^2)$. 
			Substituting $u$ with $\frac{p(x)}{q(x)}$ in Eq.~\eqref{equ:first_order_approx_app},  the pattern $f(\frac{p(x)}{q(x)})$ in Eq.~\eqref{equ:merge_f} can be converted to $\frac{p(x)}{q(x)}f'(\frac{p(x)}{q(x)})-f'(\frac{p(x)}{q(x)})+f(1)$, then we have:
			\begin{small}
				\begin{align*}
					\theta_{t+1} =\arg \max_{\theta} \frac{1}{\alpha_0(x,a)}& \int_{\mathcal{X}, \mathcal{A}} \Bigg( \rho^{\kappa}_{M^*}(x,a) \int_{\mathcal{X}} M^*(x'|x,a)  f\left(\frac{\rho^{\kappa}_{\oldmodel}(x,a,x')}{\rho^{\kappa}_{M^*}(x,a,x')}\right) \log M_\theta(x'|x,a) \mathrm{d} x' -  \\
					&\rho^{\kappa}_{M^*}(x,a) f\left(\frac{\rho^{\kappa}_{\oldmodel}(x,a)}{\rho^{\kappa}_{M^*}(x,a)}\right)\int_{\mathcal{X}} M^*(x'|x,a) \log M_\theta(x'|x,a) \mathrm{d} x'+ \\
					&\rho^{\kappa}_{M^*}(x,a) H_{M^*}(x, a) \int_{\mathcal{X}} M^*(x'|x,a) \log M_\theta(x'|x,a) \mathrm{d} x'  \Bigg) \mathrm{d} a  \mathrm{d} x   \\
					\approx \arg \max_{\theta} \int_{\mathcal{X}, \mathcal{A}} \Bigg(& \rho^{\kappa}_{\oldmodel}(x,a) \int_{\mathcal{X}} \oldmodel(x'|x,a)  f'\left(\frac{\rho^{\kappa}_{\oldmodel}(x,a,x')}{\rho^{\kappa}_{M^*}(x,a,x')}\right) \log M_\theta(x'|x,a) \mathrm{d} x' - \\
					& \rho^{\kappa}_{M^*}(x,a) \int_{\mathcal{X}} M^*(x'|x,a)  \left(f'\left(\frac{\rho^{\kappa}_{\oldmodel}(x,a,x')}{\rho^{\kappa}_{M^*}(x,a,x')}\right)-f(1)\right) \log M_\theta(x'|x,a) \mathrm{d} x' -\\
					&\rho^{\kappa}_{ \oldmodel}(x,a) f'\left(\frac{\rho^{\kappa}_{\oldmodel}(x,a)}{\rho^{\kappa}_{M^*}(x,a)}\right) \int_{\mathcal{X}} M^*(x'|x,a) \log M_\theta(x'|x,a) \mathrm{d} x' + \\
					& \rho^{\kappa}_{M^*}(x,a) \left(f'\left(\frac{\rho^{\kappa}_{\oldmodel}(x,a)}{\rho^{\kappa}_{M^*}(x,a)}\right)-f(1)\right)\int_{\mathcal{X}} M^*(x'|x,a) \log M_\theta(x'|x,a) \mathrm{d} x' + \\
					&\rho^{\kappa}_{M^*}(x,a) H_{M^*}(x, a) \int_{\mathcal{X}} M^*(x'|x,a) \log M_\theta(x'|x,a) \mathrm{d} x'  \Bigg) \mathrm{d} a  \mathrm{d} x   \\
					= \arg \max_{\theta} \int_{\mathcal{X}, \mathcal{A}, \mathcal{X}} & \frac{1}{\alpha_0(x,a)} \rho^{\kappa}_{\oldmodel}(x,a,x)  \left(f'\left(\frac{\rho^{\kappa}_{\oldmodel}(x,a,x')}{\rho^{\kappa}_{M^*}(x,a,x')}\right) - f'\left(\frac{\rho^{\kappa}_{\oldmodel}(x,a)}{\rho^{\kappa}_{M^*}(x,a)}\right) \right) \log M_\theta(x'|x,a) \mathrm{d} x'\mathrm{d} a \mathrm{d} x + \\
					\int_{\mathcal{X}, \mathcal{A}, \mathcal{X}} &  \frac{1}{\alpha_0(x,a)} \rho^{\kappa}_{M^*}(x,a,x') \left(f'\left(\frac{\rho^{\kappa}_{\oldmodel}(x,a)}{\rho^{\kappa}_{M^*}(x,a)}\right) - f'\left(\frac{\rho^{\kappa}_{\oldmodel}(x,a,x')}{\rho^{\kappa}_{M^*}(x,a,x')}\right)+ H_{M^*}(x, a) \right) \log M_\theta(x'|x,a) \mathrm{d} x' \mathrm{d} a  \mathrm{d} x.
				\end{align*}
			\end{small}
			Note that the part $\rho^\kappa_{M^*}(x,a)$ in $\rho^\kappa_{M^*}(x,a, x')$  can be canceled because of $\alpha_0(x,a)=\rhonormtheta \rho^\kappa_{M^*}(x,a)$, but we choose to keep it and ignore $\alpha_0(x,a)$. The benefit is that we can estimate $\rho^\kappa_{M^*}(x,a,x')$ from an empirical data distribution through  data collected by $\kappa$ in $M^*$ directly, rather than from a uniform distribution which is harder to be generated. 
			Although keeping $\rho^\kappa_{M^*}(x,a)$ incurs extra bias in theory, the results in our experiments show that it has not made significant negative effects in practice. We leave this part of modeling in future work. In particular, by ignoring $\alpha_0(x,a)$, we have:
			\begin{small}
				\begin{align}
					\theta_{t+1} = \arg \max_{\theta} \int_{\mathcal{X}, \mathcal{A}, \mathcal{X}} &  \rho^{\kappa}_{\oldmodel}(x,a,x)  \left(f'\left(\frac{\rho^{\kappa}_{\oldmodel}(x,a,x')}{\rho^{\kappa}_{M^*}(x,a,x')}\right) - f'\left(\frac{\rho^{\kappa}_{\oldmodel}(x,a)}{\rho^{\kappa}_{M^*}(x,a)}\right) \right) \log M_\theta(x'|x,a) \mathrm{d} x'\mathrm{d} a \mathrm{d} x + \\
					\int_{\mathcal{X}, \mathcal{A}, \mathcal{X}} &   \rho^{\kappa}_{M^*}(x,a,x') \left(f'\left(\frac{\rho^{\kappa}_{\oldmodel}(x,a)}{\rho^{\kappa}_{M^*}(x,a)}\right) - f'\left(\frac{\rho^{\kappa}_{\oldmodel}(x,a,x')}{\rho^{\kappa}_{M^*}(x,a,x')}\right)+ H_{M^*}(x, a) \right) \log M_\theta(x'|x,a) \mathrm{d} x' \mathrm{d} a  \mathrm{d} x.
					\label{equ:ub_obj_app}
				\end{align}
			\end{small}
			
			We can estimate $f'\left(\frac{\rho^{\kappa}_{\oldmodel}(x,a)}{\rho^{\kappa}_{M^*}(x,a)}\right)$ and $f'\left(\frac{\rho^{\kappa}_{\oldmodel}(x,a, x')}{\rho^{\kappa}_{M^*}(x,a, x')}\right)$ through Lemma~\ref{the:f_prime_est-app}. 
			\begin{lemma}[$f'(\frac{p}{q})$ estimation~\cite{fdiv}]
				Given a function $T_\varphi:\mathcal{X} \rightarrow \mathbb{R}$ parameterized by $\varphi \in \Phi$, if $f$ is convex and lower semi-continuous, by finding the maximum point of $\varphi$ in the following objective: 
				\begin{align*}
					\varphi^* = \arg \max_{\varphi} \mathbb{E}_{x \sim p(x)} \left[T_{\varphi}(x)\right] -  \mathbb{E}_{x \sim q(x)} \left[f^*(T_{\varphi}(x))\right],
				\end{align*}
				we have $f'(\frac{p(x)}{q(x)})=T_{\varphi^*}(x)$. $f^*$ is Fenchel conjugate of $f$~\cite{convex_analysis}.
				\label{the:f_prime_est-app}
			\end{lemma}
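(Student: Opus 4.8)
The plan is to obtain this from convex duality, following the variational representation of $f$-divergences in~\cite{fgan}. The starting point is that the generator $f:\mathbb{R}_+\to\mathbb{R}$ is convex and lower-semicontinuous (Def.~\ref{def:f-div}), so it coincides with its biconjugate: $f(u)=f^{**}(u)=\sup_{t}\{u t - f^*(t)\}$, where $f^*$ is the Fenchel conjugate with the sign convention of~\cite{fgan}. The extra fact I would use is that, when $f$ is differentiable, the supremum in $\sup_t\{ut-f^*(t)\}$ is attained precisely at $t=f'(u)$: the first-order condition reads $u=(f^*)'(t)$, and $(f^*)'$ is the functional inverse of $f'$ (equivalently, $t$ is optimal iff $u\in\partial f^*(t)$, i.e.\ iff $t\in\partial f(u)$).

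First I would substitute $u=p(x)/q(x)$ into the biconjugate identity and insert it into the definition of the $f$-divergence:
\begin{align*}
D_f(P\Vert Q)=\int_{\mathcal{X}} q(x)\,f\!\left(\tfrac{p(x)}{q(x)}\right)\mathrm{d}x=\int_{\mathcal{X}} q(x)\,\sup_{t}\Big\{t\,\tfrac{p(x)}{q(x)}-f^*(t)\Big\}\,\mathrm{d}x .
\end{align*}
Next I would move the pointwise supremum outside the integral: optimizing the integrand independently at each $x$ over a measurable function $T$ turns this into $\sup_T\big(\mathbb{E}_{x\sim p}[T(x)]-\mathbb{E}_{x\sim q}[f^*(T(x))]\big)$, which is exactly the objective in the statement (up to the inherited sign convention for $f^*$; this is consistent with the worked instance $f(u)=u\log u-(u+1)\log(u+1)$, $T_\varphi=\log D_\varphi$, $f^*(T_\varphi)=\log(1-D_\varphi)$ invoked later). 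Since the integrand at each fixed $x$ is maximized by $t=f'(p(x)/q(x))$, the maximizing function $T_{\varphi^*}$ must agree with this pointwise optimum, i.e.\ $T_{\varphi^*}(x)=f'(p(x)/q(x))$, which is the claim.

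The main obstacle — and essentially the only nontrivial point — is justifying the exchange of the supremum with the integral. This requires that the class $\{T_\varphi\}$ over which one optimizes be rich enough to contain (or approximate arbitrarily well) the pointwise optimizer $x\mapsto f'(p(x)/q(x))$, and that a measurable selection of pointwise maximizers exists; both are standard in this setting, where $T_\varphi$ is a sufficiently expressive function approximator, so the $\arg\max$ over $\varphi$ recovers the measurable-function optimum. A secondary technicality is non-differentiability of $f$ at the boundary of its effective domain (or $f$ taking the value $+\infty$): there one replaces $f'$ by a selection from the subdifferential $\partial f$, and the duality $t\in\partial f(u)\Leftrightarrow u\in\partial f^*(t)$ still characterizes the optimizer, so the conclusion is unaffected on the support of $p$ and $q$.
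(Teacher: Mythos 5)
Your argument is correct, and it is the standard convex-duality derivation of the variational representation of $f$-divergences: biconjugacy $f=f^{**}$, pointwise optimization of $ut-f^*(t)$ via $t\in\partial f(u)\Leftrightarrow u\in\partial f^*(t)$, and an interchange of supremum and integral justified by richness of the class $\{T_\varphi\}$. The paper itself gives no proof of this lemma — it simply imports it from \cite{fgan}, where exactly this argument appears — so there is nothing to diverge from; you have correctly identified the one substantive caveat (the parametric family must contain, or approximate, the pointwise optimizer, else one only gets a lower bound) and the paper's nonstandard sign convention for $f^*$, which, as you note, is reconciled by the worked instance $T_\varphi=\log D_\varphi$, $f^*(T_\varphi)=\log(1-D_\varphi)$.
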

			In particular, 
			\begin{align*}
				\varphi_0^* = \arg \max_{\varphi_0}~ &\mathbb{E}_{x,a, x'\sim \rho^{\kappa}_{M^*}} \left[T_{\varphi_0}(x,a,x')\right] - \mathbb{E}_{x,a, x'\sim \rho^{\kappa}_{\oldmodel}} \left[f^*(T_{\varphi_0}(x,a,x'))\right]\\
				\varphi_1^* = \arg \max_{\varphi_1}~&\mathbb{E}_{x,a\sim \rho^{\kappa}_{M^*}} \left[T_{\varphi_1}(x,a)\right] - \mathbb{E}_{x,a \sim \rho^{\kappa}_{\oldmodel}} \left[f^*(T_{\varphi_1}(x,a))\right],
			\end{align*}
			then we have $f'\left(\frac{\rho^{\kappa}_{\oldmodel}(x,a, x')}{\rho^{\kappa}_{M^*}(x,a, x')}\right)\approx T_{\varphi_0^*}(x,a,x')$ and $f'\left(\frac{\rho^{\kappa}_{\oldmodel}(x,a)}{\rho^{\kappa}_{M^*}(x,a)}\right)\approx T_{\varphi_1^*}(x,a)$.
			Given $\varphi_0^*$ and $\varphi_1^*$, let $\adv(x,a,x')=T_{\varphi_0^*}(x,a,x') - T_{\varphi_1^*}(x,a)$, then we can optimize $\theta$ via:
			\begin{align*}
				\theta_{t+1} = \arg \max_{\theta} \int_{\mathcal{X}, \mathcal{A}, \mathcal{X}} & \rho^{\kappa}_{\oldmodel}(x,a,x)  \left(T_{\varphi_0^*}(x,a,x') - T_{\varphi_1^*}(x,a)\right) \log M_\theta(x'|x,a) \mathrm{d} x'\mathrm{d} a \mathrm{d} x + \\
				\int_{\mathcal{X}, \mathcal{A}, \mathcal{X}} &  \rho^{\kappa}_{M^*}(x,a,x') \left(T_{\varphi_1^*}(x,a)- T_{\varphi_0^*}(x,a,x')+ H_{M^*}(x, a) \right) \log M_\theta(x'|x,a) \mathrm{d} x' \mathrm{d} a  \mathrm{d} x  \\
				=\arg \max_{\theta} \int_{\mathcal{X}, \mathcal{A}, \mathcal{X}} & \rho^{\kappa}_{\oldmodel}(x,a,x)  \adv(x,a,x') \log M_\theta(x'|x,a) \mathrm{d} x'\mathrm{d} a \mathrm{d} x + \\
				\int_{\mathcal{X}, \mathcal{A}, \mathcal{X}} &  \rho^{\kappa}_{M^*}(x,a,x') (-\adv(x,a,x')+ H_{M^*}(x, a)) \log M_\theta(x'|x,a) \mathrm{d} x' \mathrm{d} a  \mathrm{d} x.
			\end{align*}
			
			Based on the specific $f$-divergence, we can represent $T$ and $f^*(T)$ with a discriminator $D_{\varphi}$. It can be verified that $f(u)=u\log u - (u+1) \log (u+1)$, $T_{\varphi}(u)=\log D_{\varphi}(u)$, and $f^*(T_{\varphi}(u))=-\log (1 - D_{\varphi}(u))$ proposed in~\cite{fgan} satisfies the condition $f'(x) \le 0, \forall x \in \mathcal{X}$ (see Tab.~\ref{tab:fdiv-cond}). We select the former in the implementation and convert the tractable solution to:
			\begin{small}
				\begin{equation}
					\begin{split}
						& \theta_{t+1} = \arg \max_{\theta}~\mathbb{E}_{ \rho^{\kappa}_{\oldmodel}} \left[ \adv(x,a,x') \log {M_\theta}(x'|x,a) \right] + \mathbb{E}_{ \rho^{\kappa}_{M^*}} \left[(H_{M^*}(x,a)- \adv(x,a,x')) \log {M_\theta}(x'|x,a) \right]\\
						& s.t. \quad\quad \varphi_0^* = \arg \max_{\varphi_0} \mathbb{E}_{\rho^{\kappa}_{M^*}} \left[\log D_{\varphi_0}(x,a,x')\right] + \mathbb{E}_{ \rho^{\kappa}_{\oldmodel}} \left[\log (1-D_{\varphi_0}(x,a,x'))\right]\\
						& \quad\quad\quad \varphi_1^* = \arg \max_{\varphi_1}~\mathbb{E}_{ \rho^{\kappa}_{M^*}} \left[\log D_{\varphi_1}(x,a)\right] + \mathbb{E}_{\rho^{\kappa}_{\oldmodel}} \left[\log (1 - D_{\varphi_1}(x,a))\right],
					\end{split}
					\label{equ:final_solution-app}
				\end{equation}
			\end{small}
			where $\adv(x,a,x') =  \log D_{\varphi_0^*}(x, a, x')  - \log D_{\varphi_1^*}(x, a)$, $\mathbb{E}_{\rho^{\kappa}_{\oldmodel}}[\cdot]$ is a simplification of $\mathbb{E}_{x,a,x'\sim \rho^{\kappa}_{\oldmodel}}[\cdot]$. 
			
		\end{proof}
		\begin{remark}
			In practice, we need to use the real-world data to construct the distribution $\rho^{\kappa}_{M^*}$ and the generative data to construct $\rho^{\kappa}_{\oldmodel}$. 
			In the offline model-learning setting, we only have a real-world dataset $\mathcal{D}_{\rm real}$ collected by the behavior policy $\mu$. 
			We can learn a policy $\hat \mu \approx \mu$ via imitation learning based on $\mathcal{D}_{\rm real}$~\cite{behavior_cloning,gail} and let $\hat \mu$ be the policy $\kappa$.
			Then we can regard $\mathcal{D}_{\rm real}$ as the empirical data distribution of $\rho^{\kappa}_{M^*}$ and the trajectories collected by $\hat \mu$ in the model $\oldmodel$ as the empirical data distribution of $\rho^{\kappa}_{\oldmodel}$.  Based on the above specializations, we have:
				\begin{small}
				\begin{align*}
					\theta_{t+1}=&\max_{\theta}~ \Big( \mathbb{E}_{\rho^{\hat \mu}_{\oldmodel}}\left[\adv(x,a,x') \log {M_\theta}(x'|x,a) \right] +\mathbb{E}_{\rho^{\mu}_{M^*}}\left[(H_{M^*}(x,a)-\adv(x,a,x')) \log {M_\theta}(x'|x,a) \right] \Big)	\nonumber \\
					s.t. \quad & \varphi_0^* = \arg \max_{\varphi_0} \Big( \mathbb{E}_{\rho^{\mu}_{M^*}} \left[\log D_{\varphi_0}(x,a,x')\right]  + \mathbb{E}_{ \rho^{\hat \mu}_{\oldmodel}} \left[\log (1-D_{\varphi_0}(x,a,x'))\right] \Big)	\nonumber\\
					& \varphi_1^* = \arg \max_{\varphi_1}~ \Big(\mathbb{E}_{ \rho^{\mu}_{M^*}} \left[\log D_{\varphi_1}(x,a)\right]  + \mathbb{E}_{\rho^{\hat \mu}_{\oldmodel}} \left[\log (1 - D_{\varphi_1}(x,a))\right] \Big), 
				\end{align*}
			\end{small}
		\textit{which is Eq.~\eqref{equ:final_solution} in the main body.}

		\end{remark}
		
		\section{Discussion and Limitations of the Theoretical Results}
		\label{app:limitation}
		We summarize the limitations of current theoretical results and future work as follows:
		\begin{enumerate}
			\item As discussed in Remark~\ref{remark:optimal_dist}, the solution Eq.~\eqref{eq:beta_distribution_solution} relies on $\rho^{\beta}_{M^*}(x,a)\in [0,c], \forall a \in \mathcal{A}, \forall x \in \mathcal{X}$. In some particular $M^*$, it is intractable to derive a $\beta$ that can generate an occupancy specified by $g(x,a)/\rhonorm$. 
			If more knowledge of $M^*$ or $\beta^*$ is provided or some mild assumptions can be made on the properties of $M^*$ or  $\beta^*$, we may model $\rho$ in a more sophisticated approach to alleviating the above problem.
			
			\item In the tractable solution derivation, we ignore the term $\alpha_0(x,a)=\rhonormtheta \rho^\kappa_{M^*}(x,a)$ (See Eq.~\eqref{equ:ub_obj_app}). The benefit is that $\rho^\kappa_{M^*}(x,a,x')$ in the tractable solution can be estimated through offline datasets directly. 
			Although the results in our experiments show that it does not produce significant negative effects in these tasks, ignoring $\rho^\kappa_{M^*}(x,a)$ indeed incurs extra bias in theory. In future work, techniques for estimating $\rho^\kappa_{M^*}(x,a)$  ~\cite{dist_estimation} can be incorporated to correct the bias. On the other hand, $\rhonormtheta$ is also ignored in the process. $\rhonormtheta$ can be regarded as a global rescaling term of the final objective Eq.~\eqref{equ:ub_obj_app}. Intuitively, it constructs an adaptive learning rate for Eq.~\eqref{equ:ub_obj_app}, which increases the step size when the model is better fitted and decreases the step size otherwise.  It can be considered to further improve the learning process in future work, e.g., cooperating with empirical risk minimization by balancing the weights of the two objectives through $\rhonormtheta$.
		\end{enumerate}
		\section{Societal Impact}
		\label{app:impact}
		This work studies a method toward counterfactual environment model learning. Reconstructing an accurate environment of the real world will promote the wide adoption of decision-making policy optimization methods in real life, enhancing our daily experience. 
		We are aware that decision-making policy in some domains like recommendation systems that interact with customers may have risks of causing price discrimination and misleading customers if inappropriately used.  A promising way to reduce the risk is to introduce fairness into policy optimization and rules to constrain the actions (Also see our policy design in Sec.~\ref{app:bat-setting}). We are involved in and advocating research in such directions. We believe that business organizations would like to embrace fair systems that can ultimately bring long-term financial benefits by providing a better user experience.

		\section{AWRM-oracle Pseudocode}
		We list the pseudocode of AWRM-oracle in Alg.~\ref{alg:AWRM-oracle}.
		
		\begin{algorithm}[ht]
			\caption{AWRM with Oracle Counterfactual Datasets}
			\label{alg:AWRM-oracle}
			\begin{flushleft}
				\textbf{Input}:\\
				$\Phi$: policy space;
				$N$: total iterations\\
				\textbf{Process}:
			\end{flushleft}	
			\begin{algorithmic}[1]
				\State Generate counterfactual datasets $\{\mathcal{D}_{\pi_\phi}\}$ for all adversarial policies $\pi_\phi, \phi \in \Phi$
				\State Initialize  an environment model $M_{\theta}$
				\For{i = 1:N}
				\State Select $\mathcal{D}_{\pi_\phi}$ with worst prediction errors through $M_{\theta}$ from $\{\mathcal{D}_{\pi_\phi}\}$
				\State Optimize $M_{\theta}$ with standard supervised learning based on $\mathcal{D}_{\pi_\phi}$
				\EndFor
			\end{algorithmic}
		\end{algorithm}

		\section{Implementation}
		\label{app:detail}
		
		\subsection{Details of the GALILEO Implementation}
		\label{app:AWRM-code}
		
		The approximation of Eq.~\eqref{equ:first_order_approx_app} holds only when $p(x)/q(x)$ is close to $1$, which might not be satisfied. To handle the problem, we inject a standard supervised learning loss
		\begin{align}
			\arg \max_\theta \mathbb{E}_{ \rho^{\kappa}_{M^*}} \left[\log {M_\theta}(x'|x,a) \right]
			\label{equ:stand_bc}
		\end{align}
		to replace the second term of the above objective
		when the output probability of $D$ is far away from $0.5$ ($f'(1)=\log 0.5$). 
		
		In the offline model-learning setting, we only have a real-world dataset $\mathcal{D}$ collected by the behavior policy $\mu$. 
		We learn a policy $\hat \mu \approx \mu$ via behavior cloning with $\mathcal{D}$~\cite{behavior_cloning,gail} and let $\hat \mu$ be the policy $\kappa$. We regard $\mathcal{D}$ as the empirical data distribution of $\rho^{\kappa}_{M^*}$ and the trajectories collected by $\hat \mu$ in the model $\oldmodel$ as the empirical data distribution of $\rho^{\kappa}_{\oldmodel}$.
		But the assumption $\forall x \in \mathcal{X}, \forall a\in \mathcal{A}, \mu(a|x)>0$ might not be satisfied. In behavior cloning, we model $\hat \mu$ with a Gaussian distribution and constrain the lower bound of the variance with a small value $\epsilon_\mu > 0$ to keep the assumption holding. 
		Besides, we add small Gaussian noises $\textbf{u} \sim \mathcal{N}(0, \epsilon_{D})$ to the inputs of $D_{\varphi}$ to handle the mismatch between $\rho^{\mu}_{M^*}$ and $\rho^{\hat \mu}_{M^*}$ due to $\epsilon_\mu$. In particular, for $\varphi_0$ and $\varphi_1$ learning, we have: 
		\begin{small}
			\begin{align*}
				\varphi_0^* &= \arg \max_{\varphi_0}~ \mathbb{E}_{\rho^{\kappa}_{M^*}, \textbf{u}} \left[\log D_{\varphi_0}(x+u_x,a+u_a,x'+u_{x'})\right] + \mathbb{E}_{ \rho^{\kappa}_{\oldmodel}, \textbf{u}} \left[\log (1-D_{\varphi_0}(x+u_x,a+u_a,x'+u_{x'}))\right]\\
				\varphi_1^* &= \arg \max_{\varphi_1}~\mathbb{E}_{ \rho^{\kappa}_{M^*}, \textbf{u}} \left[\log D_{\varphi_1}(x+u_x,a+u_a)\right] + \mathbb{E}_{\rho^{\kappa}_{\oldmodel}, \textbf{u}} \left[\log (1 - D_{\varphi_1}(x+u_x,a+u_a))\right],
			\end{align*}
		\end{small}
		where $\mathbb{E}_{\rho^{\kappa}_{\oldmodel}, \textbf{u}}[\cdot]$ is a simplification of $\mathbb{E}_{x,a,x'\sim \rho^{\kappa}_{\oldmodel}, \textbf{u} \sim \mathcal{N}(0, \epsilon_{D})}[\cdot]$ and $\textbf{u}=[u_x,u_a,u_{x'}]$.

		On the other hand,  we notice that the first term in  Eq.~\eqref{equ:final_solution-app} is similar to the objective of GAIL~\cite{gail} by regarding $M_{\theta}$ as the policy to learn and $\kappa$ as the environment to generate data. For better capability in sequential environment model learning, here we introduce some practical tricks inspired by GAIL for model learning~\cite{virtual_taobao,demer}: we introduce an MDP for $\kappa$ and $M_{\theta}$, where the reward is defined by the discriminator $D$, i.e., $r(x,a,x')=\log D(x,a,x')$. 
		$M_{\theta}$ is learned to maximize the cumulative rewards. With advanced policy gradient methods~\cite{trpo,ppo}, the objective is converted to $\max_{\theta}\left[\adv(x,a,x') \log M_{\theta}(x,a,x')\right]$, where $A=Q^{\kappa}_{\oldmodel}-V^{\kappa}_{\oldmodel}$, $Q^{\kappa}_ {M_{\bar  \theta}}(x,a,x')=\mathbb{E}\left[\sum^\infty_{t=0} \gamma^t r(x_t, a_t, x_{t+1})\mid (x_t, a_t, x_{t+1})=(x,a,x'),\kappa, \oldmodel\right]$, and $V^{\kappa}_{M_{\bar  \theta}}(x,a)=\mathbb{E}_{M_{\bar  \theta}}\left[Q^\kappa_{M_{\bar  \theta}}(x,a,x')\right]$. $A$ in Eq.~\eqref{equ:final_solution-app} can also be constructed similarly.
		Although it looks unnecessary in theory since the one-step optimal model $M_{\theta}$ is the global optimal model in this setting, the technique is helpful in practice as it makes $A$ more sensitive to the compounding effect of one-step prediction errors: we would consider the cumulative effects of  prediction errors induced by multi-step transitions in environments. In particular, to consider the cumulative effects of prediction errors induced by multi-step of transitions in environments, we overwrite function $\adv$ as $\adv=Q^{\kappa}_{\oldmodel}-V^{\kappa}_{\oldmodel}$, where $Q^{\kappa}_{\oldmodel}(x,a,x')=\mathbb{E}\left[\sum^\infty_{t} \gamma^t\log D_{\varphi_0^*}(x_t, a_t, x_{t+1})|(x_t,a_t,x_{t+1})=(x,a,x'),\kappa, \oldmodel\right]$ and $V^{\kappa}_{\oldmodel}(x,a)=\mathbb{E}\left[\sum^\infty_{t} \gamma^t \log D_{\varphi_1^*}(x_t, a_t)|(x_t,a_t)=(x,a),\kappa, \oldmodel\right]$. To give an algorithm for single-step environment model learning, we can just set $\gamma$ in $Q$ and $V$ to $0$. 
		
		\begin{algorithm}[t]
			\caption{GALILEO pseudocode}
			\label{alg:galileo}
			\begin{flushleft}
				\textbf{Input}:\\
				$\mathcal{D}_{\rm real}$: offline dataset sampled from $\rho^{\mu}_{M^*}$ where $\mu$ is the behavior policy; \\
				$N$: total iterations;\\
				\textbf{Process}:
			\end{flushleft}	
			\begin{algorithmic}[1]
				\State Approximate a behavior policy $\hat \mu$ via behavior cloning
				\State Initialize an environment model $M_{\theta_1}$
				\For{$t= 1:N$}
				\State Use $\hat{\mu}$ to generate a dataset $\mathcal{D}_{\text{gen}}$ with the model $M_{\theta_t}$
				\State Update the discriminators $D_{\varphi_0}$ and $D_{\varphi_1}$ via Eq.~\eqref{equ:final_solution_gail_D2} and  Eq.~\eqref{equ:final_solution_AWRM_D2} respectively, where $\rho^{\hat \mu}_{M_{\theta_t}}$ is estimated by $\mathcal{D}_{\rm gen}$ and $\rho^{\mu}_{M^*}$ is estimated by $\mathcal{D}_{\rm real}$
				\State Update $Q$ and $V$ via Eq.~\eqref{equ:final_solution_gail_Q} and  Eq.~\eqref{equ:final_solution_gail_V} through $\mathcal{D}_{\rm gen}$, $D_{\varphi_0}$, and $D_{\varphi_1}$
				\State Update the model $M_{\theta_t}$ via the first term of Eq.~\eqref{equ:final_solution_AWRM},  which is implemented with a standard policy gradient method like TRPO~\cite{trpo} or PPO~\cite{ppo}. Record the policy gradient $g_{\rm pg}$
				\If{$p_0< \mathbb{E}_{\mathcal{D}_{\rm gen}}\left[D_{\varphi_0}(x_t,a_t,x_{t+1}) \right]<p_1$} 
				\State Compute the gradient of $M_{\theta_t}$ via the second term of Eq.~\eqref{equ:final_solution_AWRM} and record it as $g_{\rm sl}$
				\Else
				\State Compute the gradient of $M_{\theta_t}$ via  Eq.~\eqref{equ:stand_bc} and record it as $g_{\rm sl}$
				\EndIf
				\State Rescale $g_{\rm sl}$ via Eq.~\eqref{eq:grad_rescale}
				\State Update the model $M_{\theta_t}$ via the gradient $g_{\rm sl}$ and obtain $M_{\theta_{t+1}}$
				\EndFor
			\end{algorithmic}
		\end{algorithm}

		\begin{figure}[ht]
			\centering
			\includegraphics[width=1.0\linewidth]{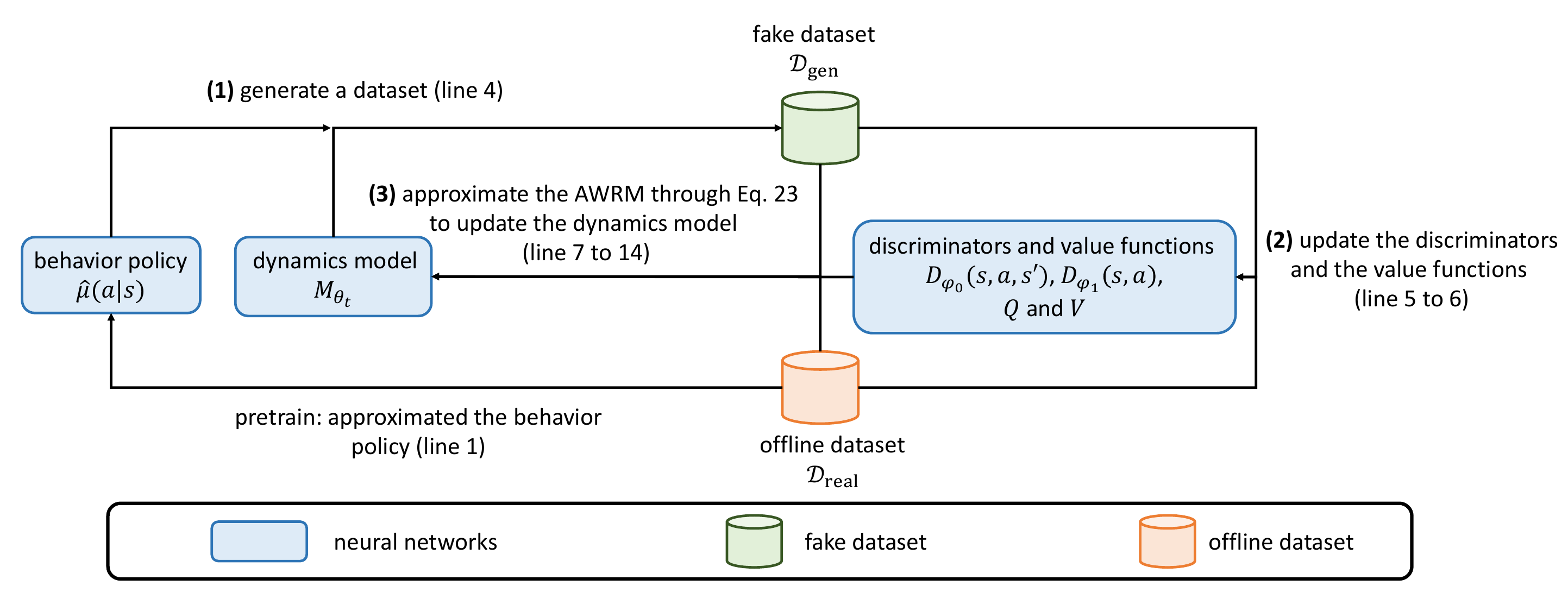}
			\caption{ Illustration of the workflow of the GALILEO algorithm.}
			\label{fig:galileo-pipeline}
		\end{figure}
		
		By adopting the above implementation techniques, we convert the objective into the following formulation
		\begin{small}
			\begin{align}
				& \theta_{t+1} = \arg \max_{\theta} \mathbb{E}_{\rho^{\kappa}_{\oldmodel}} \left[ \adv(x,a,x') \log {M_\theta}(x'|x,a) \right] + \mathbb{E}_{\rho^{\kappa}_{M^*}} \left[(H_{M^*}(x,a)- \adv(x,a,x')) \log {M_\theta}(x'|x,a) \right]\label{equ:final_solution_AWRM}\\
				s.t. & \quad\quad\quad Q^{\kappa}_{\oldmodel}(x,a,x')=\mathbb{E}\left[\sum^\infty_{t} \gamma^t\log D_{\varphi_0^*}(x_t, a_t, x_{t+1})|(x_t,a_t,x_{t+1})=(x,a,x'),\kappa, \oldmodel\right] \label{equ:final_solution_gail_Q}\\
				& \quad\quad\quad V^{\kappa}_{\oldmodel}(x,a)=\mathbb{E}\left[\sum^\infty_{t} \gamma^t \log D_{\varphi_1^*}(x_t, a_t)|(x_t,a_t)=(x,a),\kappa, \oldmodel\right] \label{equ:final_solution_gail_V}\\
				& \quad\quad\quad \varphi_0^* = \arg \max_{\varphi_0} \mathbb{E}_{\rho^{\kappa}_{M^*}, \mathbf{u}} \left[\log D_{\varphi_0}(x+u_x,a+u_a,x'+u_{x'})\right] + \mathbb{E}_{\rho^{\kappa}_{\oldmodel}, \mathbf{u}} \left[\log (1-D_{\varphi_0}(x+u_x,a+u_a,x'+u_{x'}))\right]\label{equ:final_solution_gail_D2}\\
				& \quad\quad\quad \varphi_1^* = \arg \max_{\varphi_1} \mathbb{E}_{ \rho^{\kappa}_{M^*}, \mathbf{u}} \left[\log D_{\varphi_1}(x+u_x,a+u_a)\right] + \mathbb{E}_{ \rho^{\kappa}_{\oldmodel}, \mathbf{u}} \left[\log (1 - D_{\varphi_1}(x+u_x,a+u_a))\right]\label{equ:final_solution_AWRM_D2},
			\end{align} 
		\end{small}
		where $ \adv(x,a,x') =  Q^{\kappa}_{M_{\theta}}(x,a,x') - V^{\kappa}_{M_{\theta}}(x,a)$. In practice, GALILEO optimizes the first term of Eq.~\eqref{equ:final_solution_AWRM} with conservative policy gradient algorithms (e.g., PPO~\cite{ppo} or TRPO~\cite{trpo}) to avoid unreliable gradients for model improvements. 
		Eq.~\eqref{equ:final_solution_gail_D2} and Eq.~\eqref{equ:final_solution_AWRM_D2} are 
		optimized with supervised learning. 
		The second term of Eq.~\eqref{equ:final_solution_AWRM} is optimized with supervised learning with a re-weighting term $-\adv + H_{M^*}$. Since $H_{M^*}$ is unknown, we use $H_{M_\theta}$ to estimate it. When the mean output probability of a batch of data is larger than $0.6$ or small than $0.4$, we replace the second term of Eq.~\eqref{equ:final_solution_AWRM} with a standard supervised learning in Eq.~\eqref{equ:stand_bc}. 
		Besides, unreliable gradients also exist in the process of optimizing the second term of Eq.~\eqref{equ:final_solution_AWRM}. 
		In our implementation, we use the scale of policy gradients to constrain the gradients of the second term of Eq.~\eqref{equ:final_solution_AWRM}. In particular, we first compute the $l_2$-norm of the gradient of the first term of Eq.~\eqref{equ:final_solution_AWRM} via conservative policy gradient algorithms, named $||g_{\rm pg}||_2$. Then we compute the $l_2$-norm of the gradient of the second term of Eq.~\eqref{equ:final_solution_AWRM}, name $||g_{\rm sl}||_2$. Finally, we rescale the gradients of the second term $g_{\rm sl}$ by
		\begin{align}
			g_{\rm sl} \leftarrow g_{\rm sl} \frac{||g_{\rm pg}||_2}{\max \{||g_{\rm pg}||_2, ||g_{\rm sl}||_2\}}.
			\label{eq:grad_rescale}
		\end{align}
		For each iteration, Eq.~\eqref{equ:final_solution_AWRM}, Eq.~\eqref{equ:final_solution_gail_D2}, and Eq.~\eqref{equ:final_solution_AWRM_D2}  are trained with certain steps (See Tab.~\ref{tab:hp}) following the same framework as GAIL. Based on the above techniques, we summarize the pseudocode of GALILEO in Alg.~\ref{alg:galileo}, where $p_0$ and $p_1$ are set to $0.4$ and $0.6$ in all of our experiments. 
		The overall architecture is shown in Fig.~\ref{fig:galileo-pipeline}.

		\subsection{Connection with Previous Adversarial Algorithms}
		\label{app:compare}
		
		Standard GAN~\cite{gan} can be regarded as a partial implementation including the first term of Eq.~\eqref{equ:final_solution_AWRM} and Eq.~\eqref{equ:final_solution_gail_D2} by degrading them into the single-step scenario. In the context of GALILEO, the objective of GAN is
		\begin{align*}
			&\theta_{t+1} = \arg \max_{\theta}~\mathbb{E}_{\rho^{\kappa}_{\oldmodel}} \left[ \advsingle(x,a,x') \log {M_\theta}(x'|x,a) \right]\\
			s.t.& \quad \varphi* = \arg \max_{\varphi}~ \mathbb{E}_{\rho^{\kappa}_{M^*}} \left[\log D_{\varphi}(x,a,x')\right] + \mathbb{E}_{\rho^{\kappa}_{\oldmodel}} \left[\log (1-D_{\varphi}(x,a,x'))\right],
		\end{align*} 
		where $\advsingle(x,a,x') =  \log D_{\varphi^*}(x,a,x')$.
		In the single-step scenario, $\rho^{\kappa}_{\oldmodel}(x,a,x')=\rho_0(x) \kappa(a|x)\oldmodel(x'|a,x)$. The term $\mathbb{E}_{\rho^{\kappa}_{\oldmodel}} \left[ \advsingle(x,a,x') \log {M_\theta}(x'|x,a) \right]$ can convert to $\mathbb{E}_{\rho^{\kappa}_{M_{\theta}}} \left[\log D_{\varphi^*}(x,a,x')  \right]$ by replacing the gradient of $\oldmodel(x'|x,a) \nabla_\theta \log M_{\theta}(x'|x,a)$ with $\nabla_\theta M_{\theta}(x'|x,a)$~\cite{sutton2018rl}. Previous algorithms like GANITE~\cite{Yoon2018GANITEEO} and SCIGAN~\cite{scigan} can be regarded as variants of the above training framework.
		
		The first term of Eq.~\eqref{equ:final_solution_AWRM} and Eq.~\eqref{equ:final_solution_gail_D2} are similar to the objective of GAIL by regarding $M_{\theta}$ as the ``policy'' to imitate and $\hat \mu$ as the ``environment'' to collect data. In the context of GALILEO, the objective of GAIL is:
		\begin{small}
			\begin{align*}
				& \theta_{t+1} = \arg \max_{\theta} \mathbb{E}_{\rho^{\kappa}_{\oldmodel}} \left[ \advsingle(x,a,x') \log {M_\theta}(x'|x,a) \right] \\
				s.t.& \quad Q^{\kappa}_{\oldmodel}(x,a,x')=\mathbb{E}\left[\sum^\infty_{t} \gamma^t\log D_{\varphi^*}(x_t, a_t, x_{t+1})|(x_t,a_t,x_{t+1})=(x,a,x'),\kappa, \oldmodel\right]\\
				&\quad\quad\quad  \varphi^* = \arg \max_{\varphi} \mathbb{E}_{\rho^{\kappa}_{M^*}} \left[\log D_{\varphi}(x,a,x')\right] + \mathbb{E}_{\rho^{\kappa}_{\oldmodel}} \left[\log (1-D_{\varphi}(x,a,x'))\right],
			\end{align*} 
		\end{small}
		where $\advsingle(x,a,x') = Q^{\kappa}_{M_{\theta}}(x,a,x') - V^{\kappa}_{M_{\theta}}(x,a)$ and $V^{\kappa}_{\oldmodel}(x,a)=\mathbb{E}_{\oldmodel(x,a)}\left[Q^{\kappa}(x,a,x')\right]$.
		
		\section{Additional Related Work}
		\label{app:related}
		
		Our primitive objective is inspired by weighted empirical risk minimization (WERM) based on inverse propensity score (IPS). WERM is originally proposed to solve the generalization problem of domain adaptation in machine learning literature. For instance, we would like to train a predictor $M(y|x)$ in a domain with distribution $P_{\rm train}(x)$ to minimize the prediction risks in the domain with distribution $P_{\rm test}(x)$, where $P_{\rm test}\neq P_{\rm test}$. To solve the problem, we can train a weighted objective with $\max_M \mathbb{E}_{x \sim P_{\rm train}}[\frac{P_{\rm test}(x)}{P_{\rm train}(x)} \log M(y|x)]$, which is called weighted empirical risk minimization methods~\cite{da1,da2,da3,da4,datashift_book}. These results have been extended and applied to causal inference, where the predictor is required to be generalized from the data distribution in observational studies (source domain) to the data distribution in randomized controlled trials (target domain)~\cite{ips_model,ips_model_rep_1,cfr-isw,ips_werm_backdoor,rcfr}. In this case, the input features include a state $x$ (a.k.a. covariates) and an action  $a$ (a.k.a. treatment variable) which is sampled from a policy. We often assume the distribution of $x$, $P(x)$ is consistent between the source domain and the test domain, then we have $\frac{P_{\rm test}(x)}{P_{\rm train}(x)}=\frac{P(x)\beta(a|x)}{P(x)\mu(a|x)}=\frac{\beta(a|x)}{\mu(a|x)}$, where $\mu$ and $\beta$ are the policies in source and target domains respectively. In \cite{ips_model,ips_model_rep_1,cfr-isw}, the policy in randomized controlled trials is modeled as a uniform policy, then $\frac{P_{\rm test}(x)}{P_{\rm train}(x)}=\frac{P(x)\beta(a|x)}{P(x)\mu(a|x)}=\frac{\beta(a|x)}{\mu(a|x)} \propto \frac{1}{\mu(a|x)}$. 	 $\frac{1}{\mu(a|x)}$ is also known as inverse propensity score  (IPS). In \cite{rcfr}, it assumes that the policy in the target domain is predefined as $\beta(a|x)$ before environment model learning, then it uses $ \frac{\beta}{\mu}$ as the IPS. The differences between AWRM and previous works are fallen in two aspects: (1) We consider the distribution-shift problem in the sequential decision-making scenario. In this scenario, we not only consider the action distribution mismatching between the behavior policy $\mu$ and the policy to evaluation $\beta$, but also the follow-up effects of policies to the state distribution; (2) For faithful offline policy optimization, we require the environment model to have generalization ability in numerous different policies. The objective of AWRM is proposed to guarantee the generalization ability of $M$ in numerous different policies instead of a specific policy.

		On a different thread, there are also studies that bring counterfactual inference techniques of causal inference into model-based RL~\cite{cf-gps,coda,causal_curiosity}. These works consider that the transition function is relevant to some hidden noise variables and use Pearl-style structural causal models (SCMs), which is a directed acyclic graphs to define the causality of nodes in an environment, to handle the problem. SCMs can help RL in different ways: \cite{cf-gps} approximate the posterior of the noise variables based on the observation of data, and environment models are learned based on the inferred noises. The generalization ability is improved if we can infer the correct value of the noise variables. \cite{coda} discover several local causal structural models of a global environment model, then data augmentation strategies by leveraging these local structures to generate counterfactual experiences. \cite{causal_curiosity} proposes a representation learning technique for causal factors, which is an instance of the hidden noise variables, in partially observable Markov decision processes (POMDPs). With the learned representation of causal factors, the performance of policy learning and transfer in downstream tasks will be improved.
		
		Instead of considering the hidden noise variables in the environments, our study considers the environment model learning problem in the fully observed setting and focuses on unbiased causal effect estimation in the offline dataset under behavior policies collected with selection bias. 
		
		In offline model-based RL, the problem is called distribution shift ~\cite{mopo,offline-overview,chen.nips21} which has received great attentions. However, previous algorithms do not handle the model learning challenge directly but propose techniques to suppress policy sampling and learning in risky regions~\cite{mopo,morel}. 
		Although these algorithms have made great progress in offline policy optimization in many tasks, so far, how to learn a better environment model in this scenario has rarely been discussed.
		
	We are not the first article to use the concept of density ratio for weighting. In off-policy estimation,  \cite{ope-1,ope-2,ope-3} use density ratio to evaluate the value of a given target policy $\beta$. These methods attempt to solve an accurate approximation of $\omega(s,a|\rho^\beta)$. The objective of our work, AWRM, is for the model to provide faithful feedback for different policies, formalized as minimizing the model error of the density function for any $\beta$ in the policy space $\Pi$. The core of this problem is how to obtain an approximation of the density function of the best-response $\beta^*$ corresponding to the current model, and then approximate the corresponding $\omega(s,a|\rho^{\beta^*})$). It should be emphasized that since $\beta^*$ is unknown in advance and will change as the induction bias of $M$ is changed, the solutions proposed in \cite{ope-1,ope-2,ope-3} cannot be applied to AWRM; Recently, 
			\cite{ombrl-reweight-1,ombrl-reweight-2} use density ratio weighting to learn a model. The purpose of weighting is to make the model adapt to the current policy learning and adjust the model learning according to the current policy, which is the same as the WERM objective proposed in Def.~\ref{def:cqrm_gips}. \cite{mml} also utilizes density ratio weighting to learn a model. Instead of estimating them based on the offline dataset and target policy as previous works do, they propose to design an adversarial model learning objective by constructing two function classes $\mathcal{V}$ and $\mathcal{W}$, satisfying the target policy's value $V_\beta$ and density ratio $\omega_\beta$ are covered, i.e., $V_\beta \in \mathcal{V}$ and $\omega_\beta \in \mathcal{W}$.
Different from previous articles, our approach uses adversarial weighting to learn a universal model that provides good feedback for any target policy $\beta \in \Pi$, i.e., AWRM, instead of learning a model suitable to a specific target policy.

		\section{Experiment Details}
		\subsection{Settings}
		\subsubsection{General Negative Feedback Control (GNFC)}
		\label{app:gnfc-setting}

		The design of GNFC is inspired by a classic type of scenario that behavior policies $\mu$ have selection bias and easily lead to counterfactual risks: 
		For some internet platforms, we would like to allocate budgets to a set of targets (e.g., customers or cities) to increase the engagement of the targets in the platforms. 
		Our task is to train a model to predict targets' feedback on engagement given targets' features and allocated budgets. 
		
		In these tasks, for better benefits, the online working policy (i.e., the behavior policy) will tend to cut down the budgets if targets have better engagement, otherwise, the budgets might be increased.
		The risk of counterfactual environment model learning in the task is that: the object with better historical engagement will be sent to smaller budgets because of the selection bias of the behavior policies, then the model might exploit this correlation for learning and get a conclusion that: increasing budgets will reduce the targets' engagement, which violates the real causality. 
		We construct an environment and a behavior policy to mimic the above process. In particular, the behavior policy $\mu_{GNFC}$ is 
		\begin{align*}
			\mu_{GNFC}(x)= \frac{(62.5 - {\rm mean}(x))}{15} + \epsilon,
		\end{align*}
		where $\epsilon$ is a sample noise, which will be discussed later. The environment includes two parts:
		
		(1) response function $M_1(y|x,a)$:
		\begin{align*}
			M_1(y|x,a) = \mathcal{N}({\rm mean}(x) + a, 2) 
		\end{align*}
		
		(2) mapping function $M_2(x'|x,y)$:
		\begin{align*}
			M_2(x'|x,a,y) = y - {\rm mean}(x) + x
		\end{align*}
		The transition function $M^*$ is a composite of  $M^*(x'|x,a)=M_2(x'|x,a, M_1(y|x,a))$. 
		The behavior policies have selection bias: the actions taken are negatively correlated with the states, as illustrated in Fig.~\ref{fig:info_gnfc_S} and Fig.~\ref{fig:info_gnfc_A}. 
		We control the difficulty of distinguishing the correct causality of $x$, $a$, and $y$ by designing different strategies of noise sampling on  $\epsilon$. In principle, with a larger number or more pronounced disturbances, there are more samples violating the correlation between $x$ and $a$, then more samples can be used to find the correct causality. Therefore, we can control the difficulty of counterfactual environment model learning by controlling the strength of disturbance. In particular,  we sample $\epsilon$ from a uniform distribution $U(-e, e)$ with probability $p$. That is, $\epsilon=0$ with probability $1-p$ and $\epsilon \sim U(-e, e)$ with probability $p$. Then with larger $p$, there are more samples in the dataset violating the negative correlation (i.e., $\mu_{GNFC}$), and with larger $e$, the difference of the feedback will be more obvious. By selecting different $e$ and $p$, we can construct different tasks to verify the effectiveness and ability of the counterfactual environment model learning algorithm.

		\begin{figure}[ht]
			\centering
			\subfigure[$y$ in collected trajectories.]{
				\includegraphics[width=0.4\linewidth]{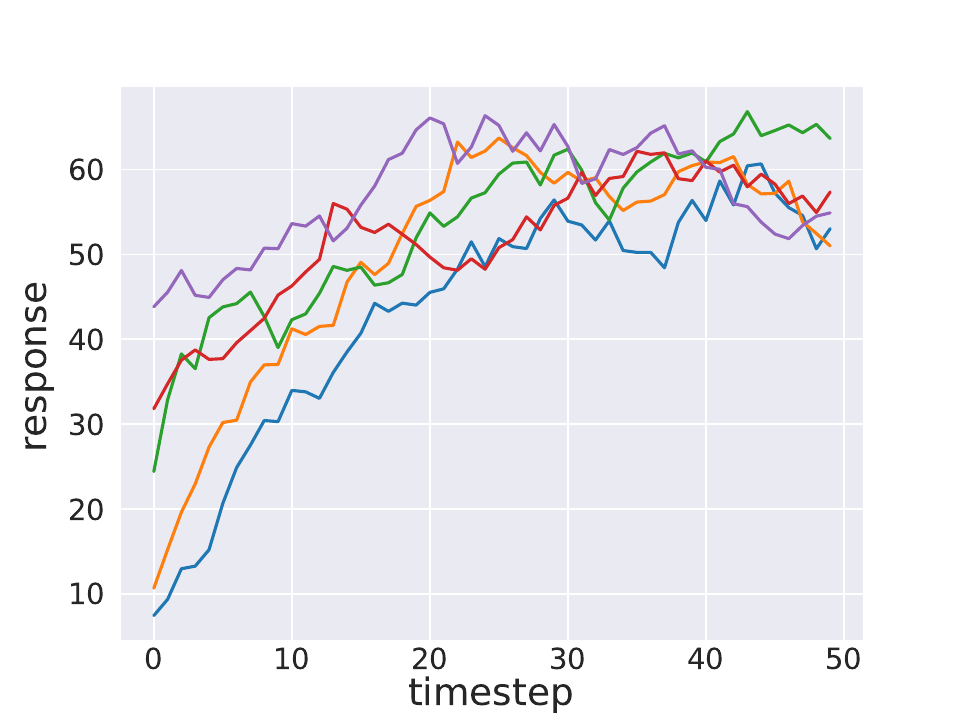}
				\label{fig:info_gnfc_S}
			}
			\subfigure[$a$ in collected trajectories.]{
				\includegraphics[width=0.4\linewidth]{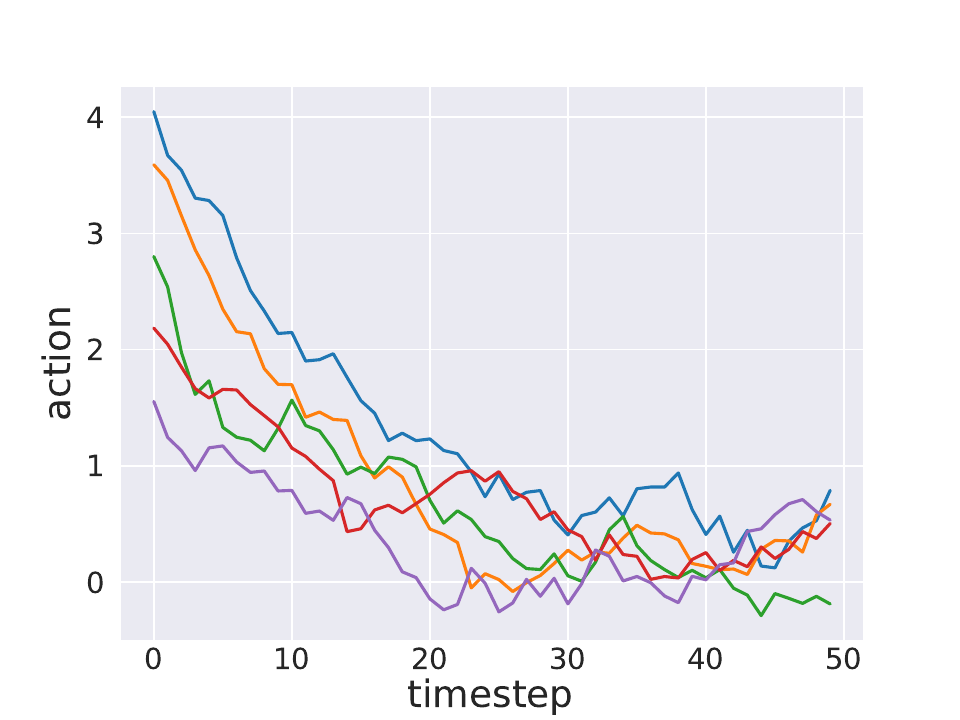}
				\label{fig:info_gnfc_A}
			}
			\caption{Illustration of information about the collected dataset in GNFC. Each color of the line denotes one of the collected trajectories. The X-axis denotes the timestep of a trajectory.}
			\label{fig:info_gnfc}
		\end{figure}
		
		\subsubsection{The Cancer Genomic Atlas (TCGA)}
		\label{app:tcga-setting}
		The Cancer Genomic Atlas (TCGA) is a project that has profiled and analyzed large numbers of human tumors to discover molecular aberrations at the DNA, RNA, protein, and epigenetic levels. The resulting rich data provide a significant opportunity to accelerate our understanding of the molecular basis of cancer. We obtain features, $\mathbf{x}$, from the TCGA dataset and consider three continuous treatments as done in SCIGAN~\cite{scigan}. Each treatment, $a$, is associated with a set of parameters, $\mathbf{v}_1$, $\mathbf{v}_2$, $\mathbf{v}_3$, that are sampled randomly by sampling a vector from a standard normal distribution and scaling it with its norm. We assign interventions by sampling a treatment, $a$, from a beta distribution, $a \mid \mathbf{x} \sim \operatorname{Beta}\left(\alpha, \beta\right)$. $\alpha \ge 1$ controls the sampling bias and $\beta=\frac{\alpha - 1}{a^*} + 2 - \alpha$, where $a^*$ is the optimal treatment. This setting of $\beta$ ensures that the mode of $\operatorname{Beta}\left(\alpha, \beta\right)$ is $a^*$. 
		
		The  calculation of treatment response and optimal treatment are shown in Table \ref{table:treatmentresponse}. 
		
		\begin{table}[!h]
			\centering
			\caption{Treatment response used to generate semi-synthetic outcomes for patient features $\mathbf{x}$. In the experiments, we set $C=10$.}
			\scalebox{0.85}{\begin{tabular}{ccc}
					\toprule
					Treatment & Treatment Response & Optimal treatment \\
					\midrule
					1 & $f_{1}(\mathbf{x}, a_1)=C\left(\left(\mathbf{v}_{1}^{1}\right)^{T} \mathbf{x}+12\left(\mathbf{v}_{2}^{1}\right)^{T} \mathbf{x} a_1-12\left(\mathbf{v}_{3}^{1}\right)^{T} \mathbf{x} a_1^{2}\right)$ & $a_{1}^{*}=\frac{\left(\mathbf{v}_{2}^{1}\right)^{T} \mathbf{x}}{2\left(\mathbf{v}_{3}^{1}\right)^{T} \mathbf{x}}$\\
					\midrule
					2 & $f_{2}(\mathbf{x}, a_2)=C\left(\left(\mathbf{v}_{1}^{2}\right)^{T} \mathbf{x}+\sin \left(\pi\left(\frac{\mathbf{v}_{2}^{2 T} \mathbf{x}}{\mathbf{v}_{3}^{2 T} \mathbf{x}}\right) a_2\right)\right)$ & $a_{2}^{*}=\frac{\left(\mathbf{v}_{3}^{2}\right)^{T} \mathbf{x}}{2\left(\mathbf{v}_{2}^{2}\right)^{T} \mathbf{x}}$\\
					\midrule
					3 & $f_{3}(\mathbf{x}, a_3)=C\left(\left(\mathbf{v}_{1}^{3}\right)^{T} \mathbf{x}+12 a_3(a_3-b)^{2}\right.$, where $\left.b=0.75 \frac{\left(\mathbf{v}_{2}^{3}\right)^{T} \mathbf{x}}{\left(\mathbf{v}_{3}^{3}\right)^{T} \mathbf{x}}\right)$ & $\frac{3}{b}$ if $b\ge0.75$, 1 if $b<0.75$\\
					\bottomrule
			\end{tabular}}
			\label{table:treatmentresponse}
		\end{table}
		
		We conduct experiments on three different treatments separately and change the value of bias $\alpha$ to assess the robustness of different methods to treatment bias. When the bias of treatment is large, which means $\alpha$ is large, the training set contains data with a strong bias on treatment so it would be difficult for models to appropriately predict the treatment responses out of the distribution of training data. 
		
		\newcommand{\tp}{t}
		
		\subsubsection{Budget Allocation task to the Time period (BAT)}
		\label{app:bat-setting}
		
		\begin{figure}[ht]
			\centering
			\includegraphics[width=0.8\linewidth]{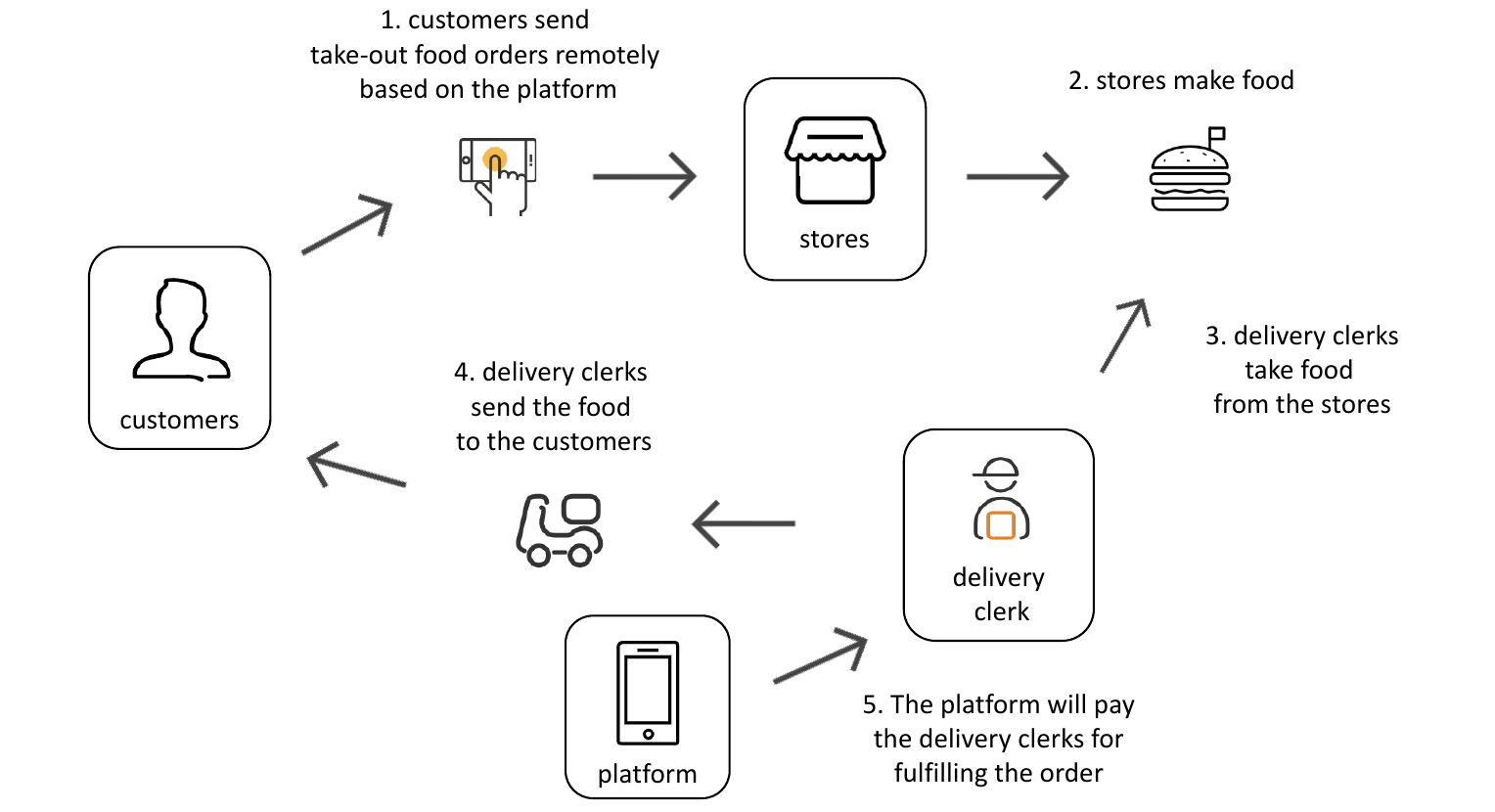}
			\caption{Illustration of the workflow of the food-delivery platform.}
			\label{fig:meituan-pipeline}
		\end{figure}
		
		We deploy GALILEO in a real-world large-scale food-delivery platform. 
		The platform contains various food stores, and food delivery clerks.  The overall workflow is as follows: the platform presents the nearby food stores to the customers and the customers make orders, i.e., purchase take-out foods from some stores on the platform. The food delivery clerks can select orders from the platform to fulfill.
		After an order is selected to fulfill, the delivery clerks will take the ordered take-out foods from the stores and then send the food to the customers.
		The platform will pay the delivery clerks (mainly in proportion to the distance between the store and the customers' location)  once the orders are fulfilled. An illustration of the workflow can be found in Fig.~\ref{fig:meituan-pipeline}.
		
		However, there is an imbalance problem between the demanded orders from customers and the supply of delivery clerks to fulfill these orders. For example, at peak times like lunchtime, there will be many more demanded orders than at other times, and the existed delivery clerks might not be able to fulfill all of these orders timely. 
		The goal of the Budget Allocation task to the Time period (BAT) is to handle the imbalance problem in time periods by sending reasonable allowances to different time periods. More precisely, the goal of BAT is to make all orders (i.e., the demand) sent in different time periods can be fulfilled (i.e., the supply) timely.
		
			The core challenge of the environment model learning in BAT tasks is similar to the challenge in Fig.~\ref{fig:response-curve}. Specifically, the behavior policy in BAT tasks is a human-expert policy, which will tend to increase the budget of allowance in the time periods with a lower supply of delivery clerks, otherwise will decrease the budget (Fig.~\ref{fig:response_data} gives an instance of this phenomenon in the real data). 
		
		\begin{figure*}
			\centering
			\includegraphics[width=0.4\linewidth]{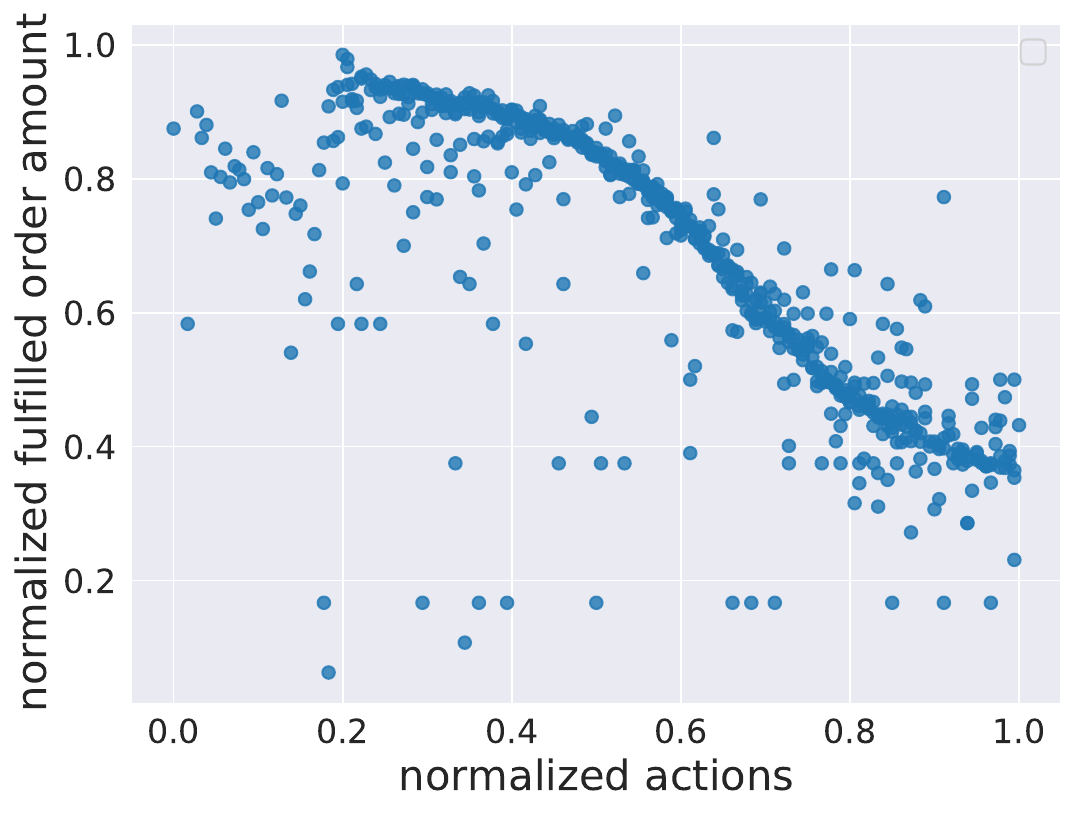}
				
			\caption{Illustration of relationship between user feedback and the actions of the offline dataset in the real-world food-delivery platform.}
			\label{fig:response_data}
		\end{figure*}
		
		To handle the imbalance problem in different time periods, in the platform, the orders in different time periods $\tp \in [0,1,2..., 23]$ will be allocated with different allowances $c \in \mathcal{N}^+$. For example, at 10 A.M. (i.e., $\tp=10$), we add $0.5\$$ (i.e., $c=0.5$) allowances to all of the demanded orders. From 10 A.M. to 11 A.M., the delivery clerks who take orders and send food to customers will receive extra allowances. Specifically, if the platform pays the delivery clerks $2\$$ for fulfilling the order, now he/she will receive $2.5\$$. 
		For each day, the budget of allowance $C$ is fixed. We should find the best budget allocation policy $\pi^*(c|\tp)$ of the limited budget $C$ to make as many orders as possible can be taken timely. 
		
		To find the policy, we first learn a model to reconstruct the response of allowance for each delivery clerk $\hat M(y_{\tp+1}|s_\tp, p_\tp, c_\tp)$, where $y_{\tp+1}$ is the taken orders of the delivery clerks in state $s_\tp$, $c_\tp$ is the allowances, $p_\tp$ denotes static features of the time period $\tp$. In particular, the state $s_t$ includes historical order-taken information of the delivery clerks, current orders information, the feature of weather, city information, and so on. 
		Then we use a rule-based mapping function $f$ to fill the complete next time-period states, i.e., $s_{\tp+1}=f(s_\tp, p_\tp, c_\tp, y_{\tp+1})$. Here we define the composition of the above functions $\hat M$ and $f$ as $\hat M_f$. 
		Finally, we learn a budget allocation policy based on the learned model.  For each day, the policy we would like to find is:
		\begin{align*}
			\max_{\pi} \mathbb{E}_{s_0 \sim \mathcal{S}} \left[\sum_{t=0}^{23} y_{t} |\hat M_f, \pi \right],\\
			{\rm s.t.,} \sum_{t, s\in \mathcal{S}} c_t y_t \le C
		\end{align*}
		In our experiment, we evaluate the degree of balancing between demand and supply by computing the averaged five-minute order-taken rate, that is the percentage of orders picked up within five minutes. Note that the behavior policy is fixed for the long term in this application. So we directly use the data replay with a small scale of noise (See Tab.~\ref{tab:hp}) to reconstruct the behavior policy for model learning in GALILEO. 
		
		\textbf{Also note that although we model the response for each delivery clerk, for fairness, the budget allocation policy is just determining the allowance of each time period $\tp$ and keeps the allowance to each delivery clerk $s$ the same.}  
		
		\subsection{Baseline Algorithms}
		\label{app:baseline}
		The algorithm we compared are: (1) Supervised Learning (SL): training a environment model to minimize the expectation of prediction error, without considering the counterfactual risks; (2) inverse propensity weighting (IPW)~\cite{intro_causal_inference}: a  practical way to balance the selection bias by re-weighting. It can be regarded as $\omega=\frac{1}{\hat \mu}$, where $\hat \mu$ is another model learned to approximate the behavior policy; (3) SCIGAN: a recent proposed adversarial algorithm for model learning for continuous-valued interventions~\cite{scigan}. All of the baselines algorithms are implemented with the same capacity of neural networks (See~Tab.~\ref{tab:hp}).
		
		\subsubsection{Supervised Learning (SL)}
		As a baseline, we train a multilayer perceptron model to directly predict the response of different treatments, without considering the counterfactual risks. We use mean square error to estimate the performance of our model so that the loss function can be expressed as $MSE = \frac{1}{n}\sum^n_{i=1}\left(y_i - \hat{y}_i\right)^2$, where $n$ is the number of samples, $y$ is the true value of response and $\hat{y}$ is the predicted response. In practice, we train our SL models using Adam optimizer and the initial learning rate $3e^{-4}$ on both datasets TCGA and GNFC. The architecture of the neural networks is listed in Tab.~\ref{tab:hp}.
		\subsubsection{Inverse Propensity Weighting (IPW)}
		Inverse propensity weighting~\cite{intro_causal_inference} is an approach where the treatment outcome model uses sample weights to balance the selection bias by re-weighting. The weights are defined as the inverse propensity of actually getting the treatment, which can be expressed as $\frac{1}{\hat \mu(a|x)}$, where $x$ stands for the feature vectors in a dataset, $a$ is the corresponding action and $\hat \mu(a|x)$ indicates the action taken probability of $a$ given the features $x$ within the dataset. $\hat \mu$ is learned with standard supervised learning. Standard IPW leads to large weights for the points with small sampling probabilities and finally makes the learning process unstable. We solve the problem by clipping the propensity score: $\hat \mu \leftarrow \min(\hat \mu, 0.05)$, which is common used in existing studies~\cite{cipw}. The loss function can thus be expressed as $\frac{1}{n}\sum^n_{i=1}\frac{1}{\hat \mu(a_i|x_i)}\left(y_i - \hat{y}_i\right)^2$.  
		The architecture of the neural networks is listed in Tab.~\ref{tab:hp}.
		
		\subsubsection{SCIGAN}
		SCIGAN~\cite{scigan} is a model that uses generative adversarial networks to learn the data distribution of the counterfactual outcomes and thus generate individualized response curves. SCIGAN does not place any restrictions on the form of the treatment-does response functions and is capable of estimating patient outcomes for multiple treatments, each with an associated  parameter. SCIGAN first trains a generator to generate response curves for each sample within the training dataset. The learned generator can then be used to train an inference network using standard supervised methods. For fair comparison, we increase the number of parameters for the open-source version of SCIGAN so that the SCIGAN model can have same order of magnitude of network parameters as GALILEO. In addition, we also finetune the hyperparameters (Tab. \ref{tab:hp_scigan}) of the enlarged SCIGAN to realize its full strength. We set num\_dosage\_samples 9 and $\lambda=10$. 
		
		\begin{table}[ht]
			\centering
			\caption{Table of hyper-parameters for SCIGAN. }
			\begin{tabular}{l|c}
				\toprule
				Parameter & Values\\
				\midrule
				Number of  samples & 3, 5, 7, 9, 11\\
				$\lambda$ & 0.1, 1, 10, 20\\
				\bottomrule
			\end{tabular}
			\label{tab:hp_scigan}
		\end{table}

		\subsection{Hyper-parameters}
		
		We list the hyper-parameter of GALILEO in Tab.~\ref{tab:hp}.
		
		\begin{table}[ht]
			\centering
			\caption{Table of hyper-parameters for all of the tasks. }
			
			\resizebox{\textwidth}{!}{
				\begin{tabular}{l|c|c|c|c}
					\toprule
					Parameter & GNFC & TAGC & MuJoCo &  BAT \\
					\midrule
					hidden layers of all neural networks  & 4 & 4 &  5  & 5\\
					hidden units of all neural networks   & 256 & 256 & 512 & 512 \\
					collect samples for each time of model update & 5000 & 5000 & 40000 & 96000 \\
					batch size of discriminators & 5000 & 5000 & 40000 & 80000 \\
					horizon & 50 & 1 & 100 & 48 (half-hours) \\
					$\epsilon_\mu$ (also $\epsilon_D$) & 0.005 & 0.01 & 0.05 (0.1 for walker2d) & 0.05 \\
					times for discriminator update & 2 & 2& 1 & 5 \\
					times for model update & 1 & 1 & 2 & 20 \\
					times for supervised learning update & 1 & 1 & 4 & 20 \\
					learning rate for supervised learning & 1e-5 & 1e-5 & 3e-4 & 1e-5  \\
					$\gamma$ & 0.99&0.0 & 0.99 & 0.99 \\
					clip-ratio & NAN & NAN & NAN & 0.1 \\
					max $D_{KL}$ & 0.001 & 0.001 & 0.001 & NAN \\
					optimization algorithm (the first term of Eq.~\eqref{equ:final_solution_AWRM}) & TRPO & TRPO & TRPO &PPO\\ 
					\bottomrule
				\end{tabular}
			}
			\label{tab:hp}
		\end{table}
		\subsection{Computation Resources}
		\label{app:ressources}
		
		We use one Tesla V100 PCIe 32GB GPU and a 32-core Intel(R) Xeon(R) Gold 5118 CPU @ 2.30GHz to train all of our models.

		\section{Additional Results}
            \label{app:detailed_results}

		\subsection{Test in Single-step Environments}
		\label{app:tcga}

	\begin{figure}[t]
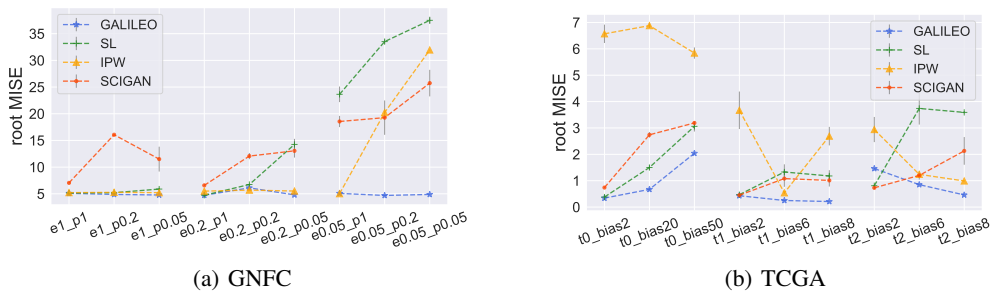

		\centering
		\subfigure[GNFC]{		\includegraphics[width=0.45\linewidth]{{resources/gnfc-overall}.pdf}
			\label{fig:gnfc-overall}
		}
		\hspace{5mm}
		\subfigure[TCGA]{	\includegraphics[width=0.43\linewidth]{{resources/tcga-overall}.pdf}
			\label{fig:tcga-overall}
		}
		\vspace{-3mm}
		\caption{Illustration of the performance in GNFC and TCGA. The grey bar  denotes the standard error ($\times 0.3$ for brevity) of $3$ random seeds.}
		\vspace{-6mm}
	\end{figure}

		The results of GNFC tasks are summarized in Fig.~\ref{fig:gnfc-overall} and the detailed results can be found in Tab.~\ref{tab:result-gnfc}. 
		The results show that the property of the behavior policy (i.e., $e$ and $p$) dominates the generalization ability of the baseline algorithms. 
		When $e=0.05$, almost all of the baselines fail and give a completely opposite response curve. IPW still perform well when $0.2\le e \le 1.0$ but fails when $e=0.05, p<=0.2$. We also found that SCIGAN can reach a better performance than other baselines when $e=0.05, p<=0.2$, but the results in other tasks are unstable. GALILEO is the only algorithm that is robust to the selection bias and outputs correct response curves in all of the tasks. 
		Based on the experiment, we also indicate that the commonly used overlap assumption is unreasonable to a certain extent especially in real-world applications since it is impractical to inject noises into the whole action space. The problem of overlap assumption being violated should be taken into consideration otherwise the algorithm will be hard to use in practice if it is sensitive to the noise range.

		The results of TCGA tasks are summarized in Fig.~\ref{fig:tcga-overall} and the detailed results can be found in Tab.~\ref{tab:result-tcga}.  We found the phenomenon in this experiment is similar to the one in GNFC, which demonstrates the compatibility of GALILEO to single-step environments. We also found that the results of IPW are unstable in this experiment. 
		It might be because the behavior policy is modeled with beta distribution while the propensity score $\hat \mu$ is modeled with Gaussian distribution. Since IPW directly reweight loss function with $\frac{1}{\hat \mu}$, the results are sensitive to the error on $\hat \mu$. 
		GALILEO also models $\hat \mu$ with Gaussian distribution but the results are more stable since GALILEO does not re-weight through $\hat \mu$ explicitly.

		We give the averaged responses for all of the tasks and the algorithms in Fig.~\ref{fig:response_sl_tcga} to Fig.~\ref{fig:response_galileo_gnfc}. 
		We randomly select 20\% of the states in the dataset and equidistantly sample actions from the action space for each sampled state, and plot the averaged predicted feedback of each action. The real response is slightly different among different figure as the randomly-selected states for testing is different. We sample $9$ points in GNFC tasks and $33$ points in TAGC tasks for plotting.
		
		\subsection{All of the Result Table}
		We give the result of CNFC in Tab.~\ref{tab:result-gnfc}, TCGA in Tab.~\ref{tab:result-tcga}, BAT in Tab.~\ref{tab:real}, and MuJoCo in Tab.~\ref{tab:mujoco}.

		\subsection{Ablation Studies}
		\label{app:abl}
		\begin{wrapfigure}[18]{R}{0.4 \textwidth}
				\centering
				\includegraphics[width=1.0\linewidth]{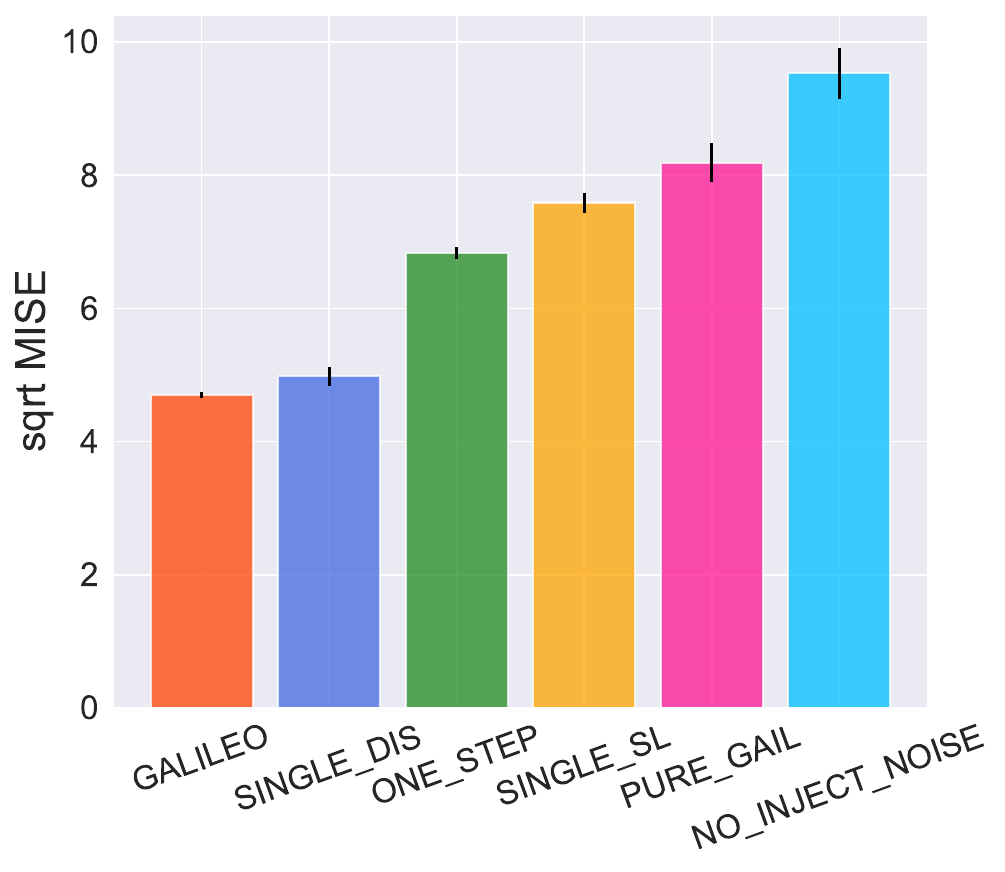}
				\caption{Illustration of the ablation studies. The error bars are the standard error.}
				\label{fig:cnfc-abl}
			\end{wrapfigure}
			
			In  Appx.~\ref{app:AWRM-code}, we introduce several techniques to develop a practical GALILEO algorithm. Based on task \texttt{e0.2\_p0.05} in GNFC, we give the ablation studies to investigate the effects of these techniques. 
			We first compare two variants that do not handle the assumptions violation problems:
			(1) \texttt{NO\_INJECT\_NOISE}: set $\epsilon_\mu$ and $\epsilon_D$ to zero, which makes the overlap assumption not satisfied;;
			(2) \texttt{SINGLE\_SL}: without replacing the second term in Eq.~\eqref{equ:final_solution} with standard supervised learning even when the output probability of $D$ is far away from $0.5$. Besides, we introduced several tricks inspired by GAIL and give a comparison of these tricks and GAIL: 
			(3) \texttt{ONE\_STEP}: use one-step reward instead of cumulative rewards (i.e., Q and V; see Eq.~\eqref{equ:final_solution_gail_Q} and Eq.~\eqref{equ:final_solution_gail_V}) for re-weighting, which is implemented by set $\gamma$ to $0$;
			(4) \texttt{SINGE\_DIS}: remove $T_{\varphi^*_1}(x,a)$ and replace it with $\mathbb{E}_{M_{\theta}}\left[T_{\varphi^*_0}(x,a,x') \right]$, which is inspired by GAIL that uses a value function as a baseline instead of using another discriminator; (5) \texttt{PURE\_GAIL}: remove the second term in Eq.~\eqref{equ:final_solution}. It can be regarded as a naive adoption of GAIL and a partial implementation of GALILEO.

			We summarize the results in Fig.~\ref{fig:cnfc-abl}.  Based on the results of \texttt{NO\_INJECT\_NOISE} and  \texttt{SINGLE\_SL}, we can see that handling the assumption violation problems is important and will increase the ability on counterfactual queries. The results of \texttt{PURE\_GAIL} tell us that the partial implementation of GALILEO is not enough to give stable predictions on counterfactual data; On the other hand, the result of \texttt{ONE\_STEP} also demonstrates that embedding the cumulative error of one-step prediction is helpful for GALILEO training; Finally, we also found that \texttt{SINGLE\_DIS} nearly has almost no effect on the results. It suggests that, empirically, we can use $\mathbb{E}_{M_{\theta}}\left[T_{\varphi^*_0}(x,a,x') \right]$ as a replacement for $T_{\varphi^*_1}(x,a)$, which can reduce the computation costs of the extra discriminator training.
			
			\subsection{Worst-Case Prediction Error}
			\label{app:mmse}
			In theory, GALILEO increases the generalization ability by focusing on the worst-case samples' training to achieve AWRM. To demonstrate the property, we propose a new metric named Mean-Max Square Error (MMSE): 
			$\mathbb{E}\left[ \max_{a \in \mathcal{A}} \left(M^*(x'|x,a) - M(x'|x,a)\right)^2 \right]$ and give the results of MMSE for GNFC in Tab.~\ref{tab:result-gnfc-mmse} and for TCGA in Tab.~\ref{tab:result-tcga-mmse}.
			
	\clearpage
			
			\subsection{Detailed Results in the MuJoCo Tasks}
			\label{app:mujoco_res}
			
			We select 3 environments from D4RL~\cite{d4rl} to construct our model learning tasks.  We compare it with a typical transition model learning algorithm used in the previous offline model-based RL algorithms~\cite{mopo,morel}, which is a variant of standard supervised learning. We name the method OFF-SL. Besides, we also implement IPW and SCIGAN as the baselines. We train models in datasets HalfCheetah-medium, Walker2d-medium, and Hopper-medium, which are collected by a behavior policy with 1/3 performance to the expert policy, then we test them in the corresponding expert dataset. For better  training efficiency,  the trajectories in the  training and testing datasets are truncated, remaining \textit{the first 100 steps}.
			We plot the converged results and learning curves of the three MuJoCo tasks in Tab.~\ref{tab:mujoco} and Fig.~\ref{fig:learning_curve_mujoco} respectively.

			\begin{figure}[ht]
				\centering
				\subfigure[halfcheetah-medium (train)]{
					\includegraphics[width=0.305\linewidth]{{resources/mujoco/halfcheetah_medium-mse}.pdf}
				}
				\subfigure[hopper-medium (train)]{
					\includegraphics[width=0.305\linewidth]{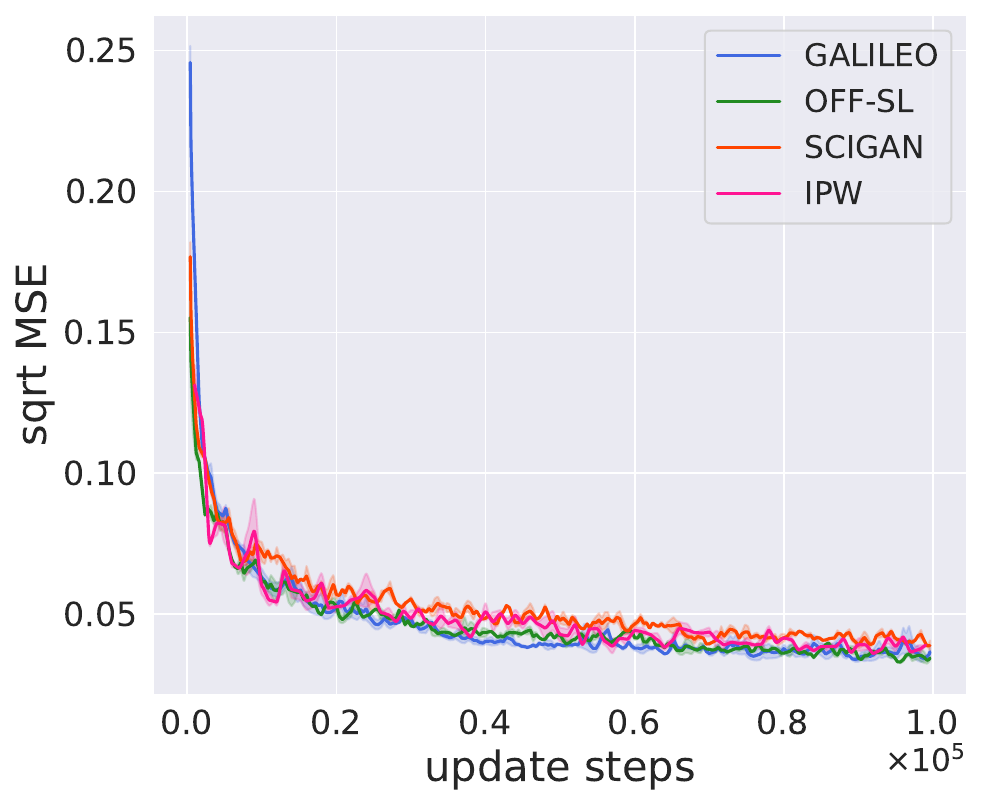}
				}
				\subfigure[walker2d-medium (train)]{
					\includegraphics[width=0.305\linewidth]{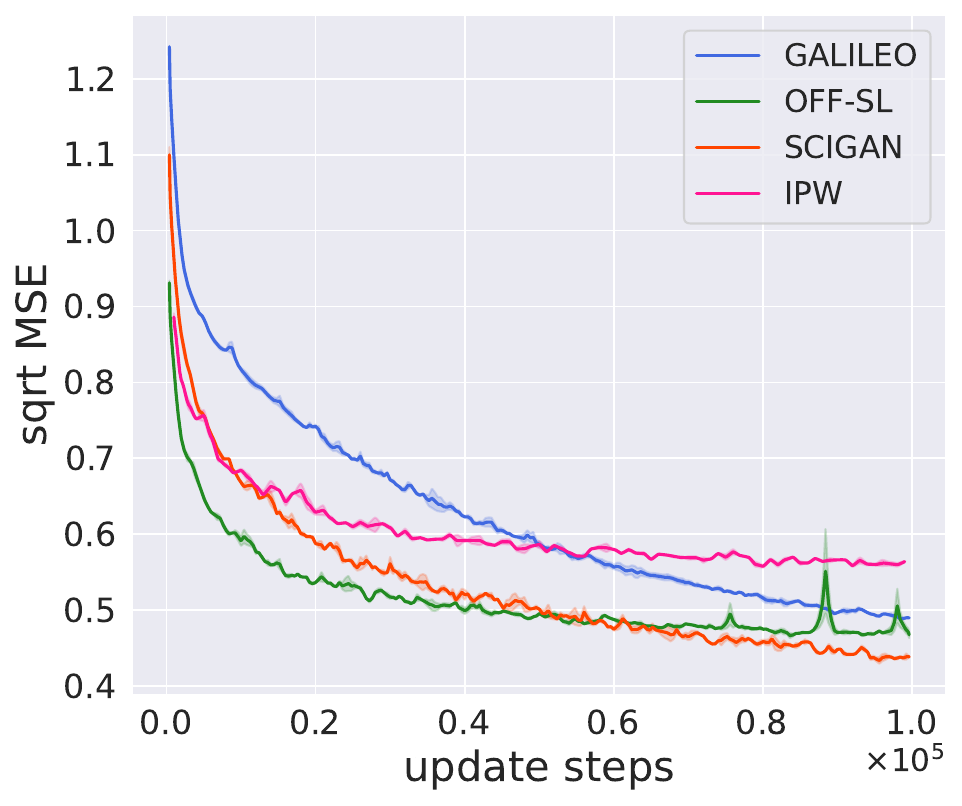}
				}
				
				\subfigure[halfcheetah-medium-replay (test)]{
					\includegraphics[width=0.305\linewidth]{{resources/mujoco/halfcheetah_medium-replay-mse}.pdf}
				}
				\subfigure[hopper-medium-replay (test)]{
					\includegraphics[width=0.305\linewidth]{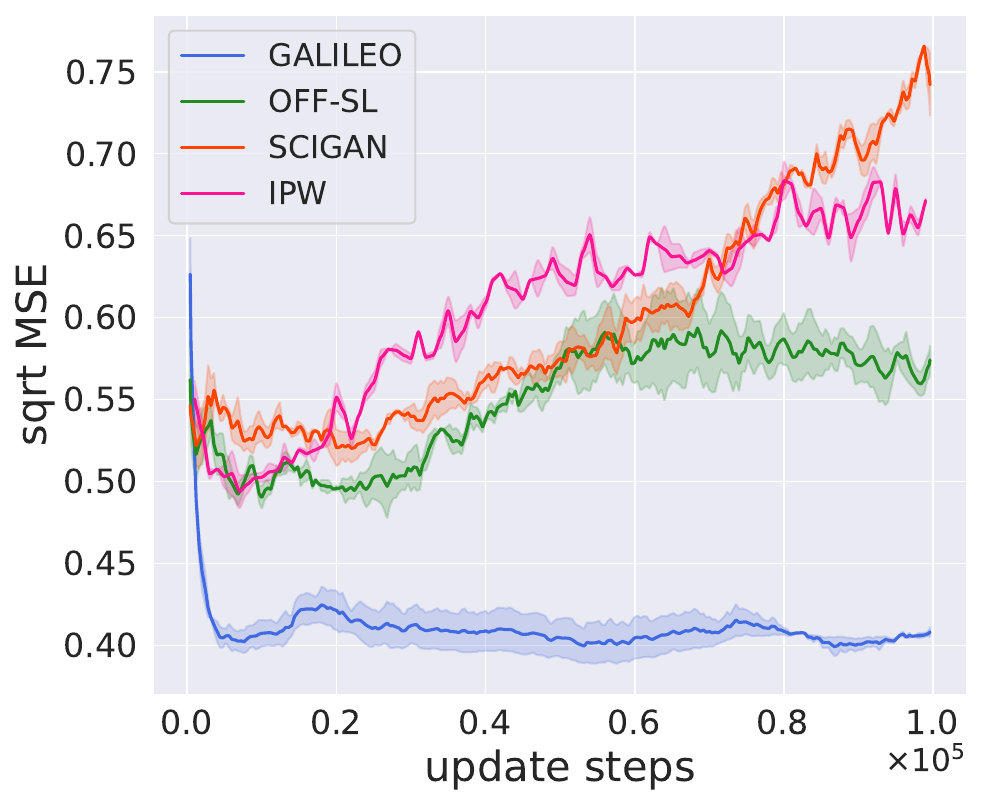}
				}
				\subfigure[walker2d-medium-replay (test)]{
					\includegraphics[width=0.305\linewidth]{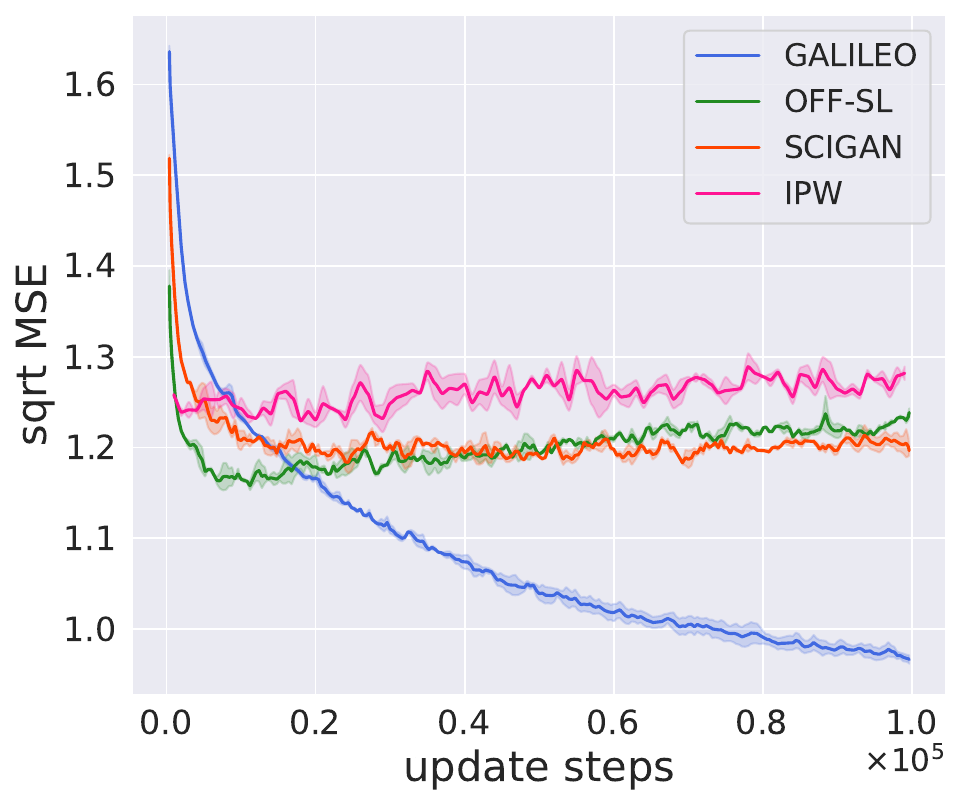}
				}
				
				\subfigure[halfcheetah-expert (test)]{
					\includegraphics[width=0.305\linewidth]{{resources/mujoco/halfcheetah_expert-mse}.pdf}
				}
				\subfigure[hopper-expert (test)]{
					\includegraphics[width=0.305\linewidth]{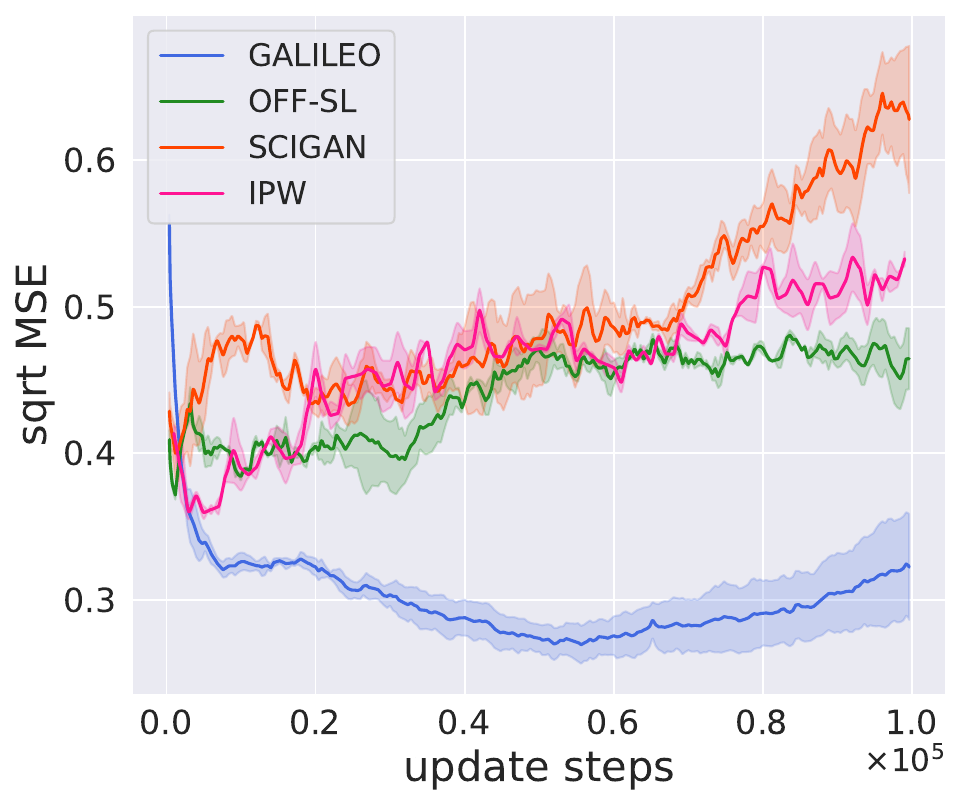}
				}
				\subfigure[walker2d-expert (test)]{
					\includegraphics[width=0.305\linewidth]{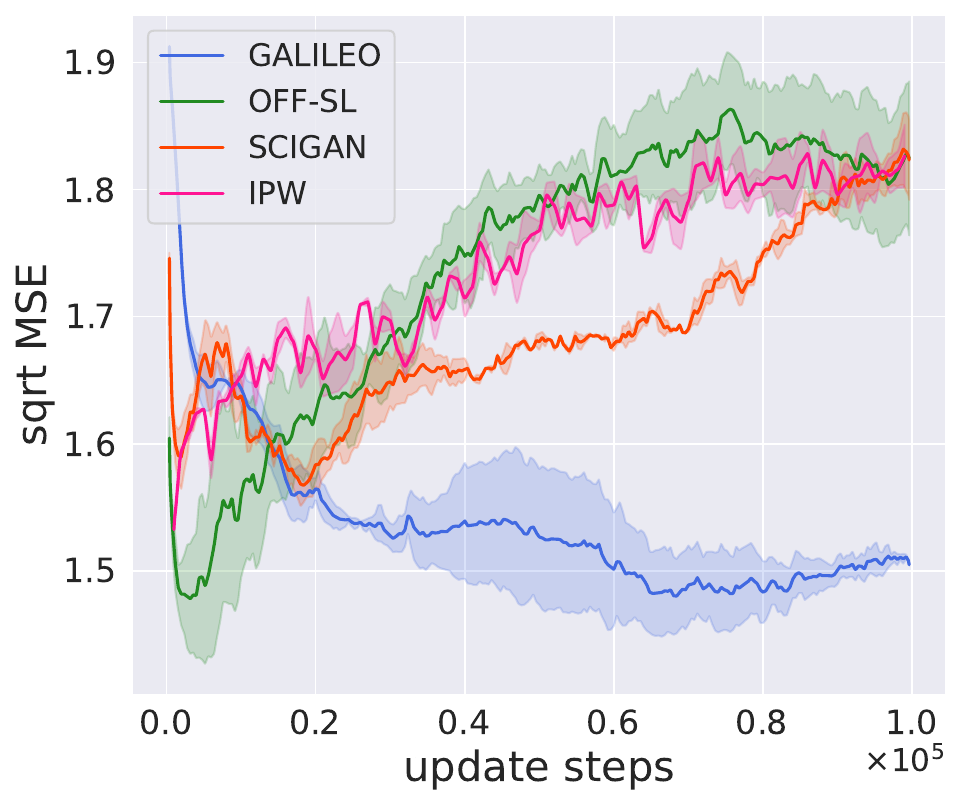}
				}
				\caption{Illustration of learning curves of the MuJoCo Tasks. The X-axis record the steps of the environment model update, and the Y-axis is the corresponding prediction error. The figures with titles ending in ``(train)'' means the dataset is used for training while the titles ending in ``(test)'' means the dataset is \textit{just used for testing}. The solid curves are the mean reward and the shadow is the standard error of three seeds.}
				\label{fig:learning_curve_mujoco}
			\end{figure} 
			In Fig.~\ref{fig:learning_curve_mujoco}, we can see that all algorithms perform well in the training datasets. OFF-SL and SCIGAN can even reach a bit lower error in halfcheetah and walker2d. However, when we verify the models through ``expert'' and ``medium-replay'' datasets, which are collected by other policies, the performance of GALILEO is significantly more stable and better than all other algorithms. As the training continues, the baseline algorithms even gets worse and worse.
			However, whether in GALILEO or other baselines, the performance for testing is at least 2x worse than in the training dataset, and the error is large especially in halfcheetah. The phenomenon indicates that although GALILEO can make better performances for counterfactual queries, the risks of using the models are still large and still challenging to be further solved. 
			
			\begin{table}[ht]
				\caption{ Results of policy performance directly optimized through SAC~\cite{sac} using the learned dynamics models and deployed in MuJoCo environments. MAX-RETURN is the policy performance of SAC in the MuJoCo environments, and ``avg. norm.'' is the averaged normalized return of the policies in the 9 tasks, where the returns are normalized to lie between 0 and 100, where a score of 0 corresponds to the worst policy, and 100 corresponds to MAX-RETURN.}
			\resizebox{\textwidth}{11mm}{
				\begin{tabular}{l|rrr|rrr|rrr|r}
					\toprule
					Task         & \multicolumn{3}{c|}{Hopper}                       & \multicolumn{3}{c|}{Walker2d}            & \multicolumn{3}{c|}{HalfCheetah}  & avg. norm.    \\ \midrule
					Horizon     & 
					\multicolumn{1}{c}{H=10} &
					\multicolumn{1}{c}{H=20} &
					\multicolumn{1}{c|}{H=40} &
					\multicolumn{1}{c}{H=10} &
					\multicolumn{1}{c}{H=20} &
					\multicolumn{1}{c|}{H=40} &
					\multicolumn{1}{c}{H=10} &
					\multicolumn{1}{c}{H=20} &
					\multicolumn{1}{c|}{H=40} &   \multicolumn{1}{c}{/} \\
					\midrule
					\cellcolor{mygray} GALILEO &
					\cellcolor{mygray}  \textbf{13.0 $\pm$ 0.1} &
					\cellcolor{mygray} 	\textbf{33.2 $\pm$ 0.1} &
					\cellcolor{mygray} 	\textbf{53.5 $\pm$ 1.2} &
					\cellcolor{mygray} 	\textbf{11.7 $\pm$ 0.2} &
					\cellcolor{mygray} 	\textbf{29.9 $\pm$ 0.3} &
					\cellcolor{mygray} 	\textbf{61.2 $\pm$ 3.4} &
					\cellcolor{mygray} 0.7 $\pm$ 0.2 &
					\cellcolor{mygray} 	-1.1 $\pm$ 0.2 &
					\cellcolor{mygray} 	-14.2 $\pm$ 1.4 &	\cellcolor{mygray}  \textbf{51.1} \\
					OFF-SL           & 4.8 $\pm$ 0.5  & 3.0 $\pm$ 0.2  & 4.6 $\pm$ 0.2  & 10.7 $\pm$ 0.2 & 20.1 $\pm$ 0.8  &   37.5 $\pm$ 6.7   & 0.4 $\pm$ 0.5 & -1.1 $\pm$ 0.6 &  -13.2 $\pm$ 0.3 &  21.1 \\
					IPW      &       5.9 $\pm$ 0.7         &     4.1 $\pm$ 0.6           &           5.9 $\pm$ 0.2     &     4.7 $\pm$ 1.1           &        2.8 $\pm$ 3.9         &   14.5 $\pm$ 1.4   &        \textbf{ 1.6 $\pm$ 0.2}      &        \textbf{0.5 $\pm$ 0.8}                 &  \textbf{-11.3 $\pm$ 0.9} & 19.7    \\
					SCIGAN       &      12.7 $\pm$ 0.1        &         29.2 $\pm$ 0.6       &     46.2 $\pm$ 5.2             &        8.4 $\pm$ 0.5        &         9.1 $\pm$ 1.7        &  1.0 $\pm$ 5.8    &        1.2 $\pm$ 0.3       &       -0.3 $\pm$ 1.0                  &   -11.4 $\pm$ 0.3 & 41.8  \\
					\midrule
					MAX-RETURN & 13.2 $\pm$ 0.0 & 33.3 $\pm$ 0.2 & 71.0 $\pm$ 0.5 & 14.9 $\pm$ 1.3 & 60.7 $\pm$ 11.1 &  221.1 $\pm$ 8.9    & 2.6 $\pm$ 0.1 & 13.3 $\pm$ 1.1          &      49.1 $\pm$ 2.3 &  100.0
					\\ \bottomrule
				\end{tabular}
			}
			\label{tab:mj_policy-app}
			\end{table}

			We then verify the generalization ability of the learned models above by adopting them into offline model-based RL. 
			Instead of designing sophisticated tricks to suppress policy exploration and learning in risky regions as current offline model-based RL algorithms~\cite{mopo,morel} do, we just use the standard SAC algorithm~\cite{sac} to exploit the models for policy learning to strictly verify the ability of the models. 
			Unfortunately, we found that the compounding error will still be inevitably large in the 1,000-step rollout, which is the standard horizon in MuJoCo tasks, leading all models to fail to derive a reasonable policy. 
			To better verify the effects of models on policy optimization, we learn and evaluate the policies with three smaller horizons: $H \in \{10,20,40\}$. 
			
			The results have been listed in Tab.~\ref{tab:mj_policy-app}. We first averaged the normalized return (refer to ``avg. norm.'') under each task, and we can see that the policy obtained by GALILEO is significantly higher than other models (the improvements are 24\% to 161\%). At the same time, we found that SCIGAN performed better in policy learning, while IPW performed similarly to SL.  
			This is in line with our expectations, since IPW only considers the uniform policy as the target policy for debiasing, while policy optimization requires querying a wide variety of policies. Minimizing the prediction risks only under a uniform policy cannot yield a good environment model for policy optimization. 
			On the other hand, SCIGAN, as a partial implementation of GALILEO (refer to Appx.~\ref{app:compare}), also roughly achieves AWRM and considers the cumulative effects of policy on the state distribution, so its overall performance is better; 
			In addition, we find that GALILEO achieves significant improvement in 6 of the 9 tasks. 
			
			\subsection{Off-policy Evaluation (OPE) in the MuJoCo Tasks}
			\label{app:ope}
			\subsubsection{Training and Evaluation Settings}
			
			\begin{algorithm}[ht]
				\caption{Off-policy Evaluation with GALILEO model}
				\label{algo:ope}
				\begin{algorithmic}
					\State \textbf{Require:} GALILEO  model ($M_\theta$), evaluated policy $\pi$, number of rollouts $N$. set of initial states $\mathcal{S}_0$, discount factor $\gamma$, horizon length $H$.
					\For{$i=1$ {\bfseries to} $N$}
					\State $R_i = 0$
					\State Sample initial state $s_0 \sim \mathcal{S}_0$
					\State Initialize $\tau_{-1} = \boldsymbol{0}$
					\For{$t=0$ {\bfseries to} $H-1$}
					\State $a_t \sim \pi(\cdot|s_t)$
					\State $s_{t+1}, r_t \sim M_\theta(\cdot |s_t, a_t)$
					\State $R_i = R_i + \gamma^t r_t$
					\EndFor
					\EndFor
					\State \textbf{return} $\frac{1}{N}\sum_{i=1}^N R_i$
				\end{algorithmic}
			\end{algorithm}
			
			We select 3 environments from D4RL~\cite{d4rl} to construct our model learning tasks as Appx.~\ref{app:mujoco_res}. To match the experiment setting in DOPE~\cite{dope}, here we use the whole datasets to the train GALILEO model, instead of truncated dataset in Appx.~\ref{app:mujoco_res}, for GALILEO model training.

			OPE via a learned dynamics model is straightforward, which only needs to compute the return using simulated trajectories generated by the evaluated policy under the learned dynamics model. Due to the stochasticity in the model and the policy, we estimate the return for a policy with Monte-Carlo sampling. See Alg.~\ref{algo:ope} for pseudocode, where  we use $\gamma=0.995$, $N=10$, $H=1000$ for all of the tasks.

			\subsubsection{Metrics}
			
			The metrics we use in our paper are defined as follows:
			
			\textbf{Absolute Error} The absolute error is defined as the difference between the value and estimated value of a policy:
			\begin{equation}
				\text{AbsErr} = | V^{\pi}-\hat{V}^{\pi} |,
			\end{equation}
			where $V^{\pi}$ is the true value of the policy and $\hat{V}^{\pi}$ is the estimated value of the policy.
			
			\textbf{Rank correlation} Rank correlation measures the correlation between the ordinal rankings of the value estimates and the true values, which can be written as:
			\begin{equation}
				\text{RankCorr}=\frac{\operatorname{Cov}(V_{1: N}^\pi, \hat{V}_{1: N}^\pi)}{\sigma(V_{1: N}^\pi) \sigma(\hat{V}_{1: N}^\pi)},
			\end{equation}
			where $1:N$ denotes the indices of the evaluated policies.
			
			\textbf{Regret@k} Regret@k is the difference between the value of the best policy in the entire set, and the value of the best policy in the top-k set (where the top-k set is chosen by estimated values). It can be defined as:
			\begin{equation}
				\text{Regret @} \mathrm{k}=\max _{i \in 1: N} V_i^\pi-\max _{j \in \operatorname{topk}(1: N)} V_j^\pi,
			\end{equation}
			where topk($1:N$) denotes the indices of the top K policies as measured by estimated values $\hat{V}^{\pi}$.

			\clearpage
			\subsubsection{Detailed Results}

			The results of OPE on three tasks are in Tab.~\ref{tab:ope-all}. Firstly, we can see that for all recorded rank-correlation scores, GALILEO is significantly better than all of the baseline methods, with at least 25 \% improvements. As for regret@1, although GALILEO cannot reach the best performance among all of the tasks, it is always one of the top-2 methods among the three tasks, which demonstrates the stability of GALILEO. Finally, GALILEO has the smallest value gaps except for Halfcheetah. However, for all of the top-4 methods, the value gaps in halfcheetah are around 1,200. Thus we believe that their performances on value gaps are roughly at the same level.

			\begin{table*}[ht]
				\centering
				\caption{Results of OPE on DOPE benchmark.   $\pm$ is the standard deviation. We bold the top-2 scores for each metric. The results are tested on 3 random seeds. The results of the baselines are from ~\cite{dope}. Note that the rank correlation is ``NAN'' for HalfCheetah because the scores are not given in \cite{dope}. }
				\label{tab:ope-all}
				\begin{tabular}{l|ccc}
					\toprule
					TASK &\multicolumn{3}{c}{HalfCheetah}\\
					\midrule
					METRIC & Absolute value gap & Rank correlation & Regret@1 \\ 
					\midrule
					\cellcolor{mygray} GALILEO       & 	\cellcolor{mygray} 1280 $\pm$ 83    & 	\cellcolor{mygray} \textbf{0.313 $\pm$ 0.09 } &	\cellcolor{mygray}  \textbf{0.12 $\pm$ 0.02}    \\
					Best DICE     & 1382 $\pm$  130 & NAN & 0.82 $\pm$ 0.29\\
					VPM           & 1374 $\pm$ 153  &  NAN & 0.33 $\pm$ 0.19 \\
					FQE (L2)      & \textbf{1211 $\pm$ 130}  &  NAN& 0.38 $\pm$ 0.13  \\
					IS            & \textbf{1217 $\pm$ 123 } & NAN & \textbf{0.05 $\pm$ 0.05}  \\
					Doubly Rubost & 1222 $\pm$  134 & NAN  & 0.37 $\pm$ 0.13  \\
					\midrule
					\midrule
					TASK &\multicolumn{3}{c}{Walker2d}\\
					\midrule
					METRIC & Absolute value gap & Rank correlation & Regret@1 \\ 
					\midrule
					\cellcolor{mygray} GALILEO       & 	\cellcolor{mygray} \textbf{176  $\pm$  52}  &	\cellcolor{mygray}  \textbf{0.57 $\pm$ 0.08}  &	\cellcolor{mygray}  \textbf{0.08 $\pm$ 0.06}    \\
					Best DICE     & \textbf{273 $\pm$  31 }& 0.12 $\pm$ 0.38  & 0.27 $\pm$ 0.43\\
					VPM           & 426 $\pm$ 60  & \textbf{0.44 $\pm$ 0.21}  & \textbf{0.08 $\pm$ 0.06}\\
					FQE (L2)      & 350 $\pm$ 79  & -0.09 $\pm$ 0.36 & 0.31 $\pm$ 0.10  \\
					IS            & 428 $\pm$ 60  & -0.25 $\pm$ 0.35 & 0.70 $\pm$ 0.39  \\
					Doubly Rubost & 368 $\pm$  74 & 0.02 $\pm$ 0.37  & 0.25 $\pm$ 0.09  \\
					\midrule
					\midrule
					TASK &\multicolumn{3}{c}{Hopper}\\
					\midrule
					METRIC & Absolute value gap & Rank correlation & Regret@1 \\ 
					\midrule
					\cellcolor{mygray}	GALILEO       &	\cellcolor{mygray} \textbf{156  $\pm$ 23}          &	\cellcolor{mygray} \textbf{0.45 $\pm$ 0.1}  &	\cellcolor{mygray} \textbf{0.08 $\pm$ 0.08}    \\
					Best DICE     & \textbf{215 $\pm$  41} &\textbf{ 0.19 $\pm$ 0.33}  & 0.18 $\pm$ 0.19\\
					VPM           & 433 $\pm$ 44  & 0.13 $\pm$ 0.37  & \textbf{0.10 $\pm$ 0.14}  \\
					FQE (L2)      & 283 $\pm$ 73  & -0.29 $\pm$ 0.33 & 0.32 $\pm$ 0.32  \\
					IS            & 405 $\pm$ 48  & -0.55 $\pm$ 0.26 & 0.32 $\pm$ 0.32  \\
					Doubly Rubost & 307 $\pm$  73 & -0.31 $\pm$ 0.34  & 0.38 $\pm$ 0.28  \\
					\bottomrule
				\end{tabular}
			\end{table*}
			\clearpage
			\subsection{Detailed Results in the BAT Task}
			\label{app:bat_all_res}
			The core challenge of the environment model learning in BAT tasks is similar to the challenge in Fig.~\ref{fig:response-curve}. Specifically, the behavior policy in BAT tasks is a human-expert policy, which will tend to increase the budget of allowance in the time periods with a lower supply of delivery clerks, otherwise will decrease the budget (Fig.~\ref{fig:response_data} gives an instance of this phenomenon in the real data).

			Since there is no oracle environment model for querying, we have to describe the results with other metrics. 
			
			\begin{figure}[ht]
				\centering
				\subfigure[City-A]{
					\includegraphics[width=0.305\linewidth]{{resources/meituan-response-A}.pdf}
				}
				\subfigure[City-B]{
					\includegraphics[width=0.305\linewidth]{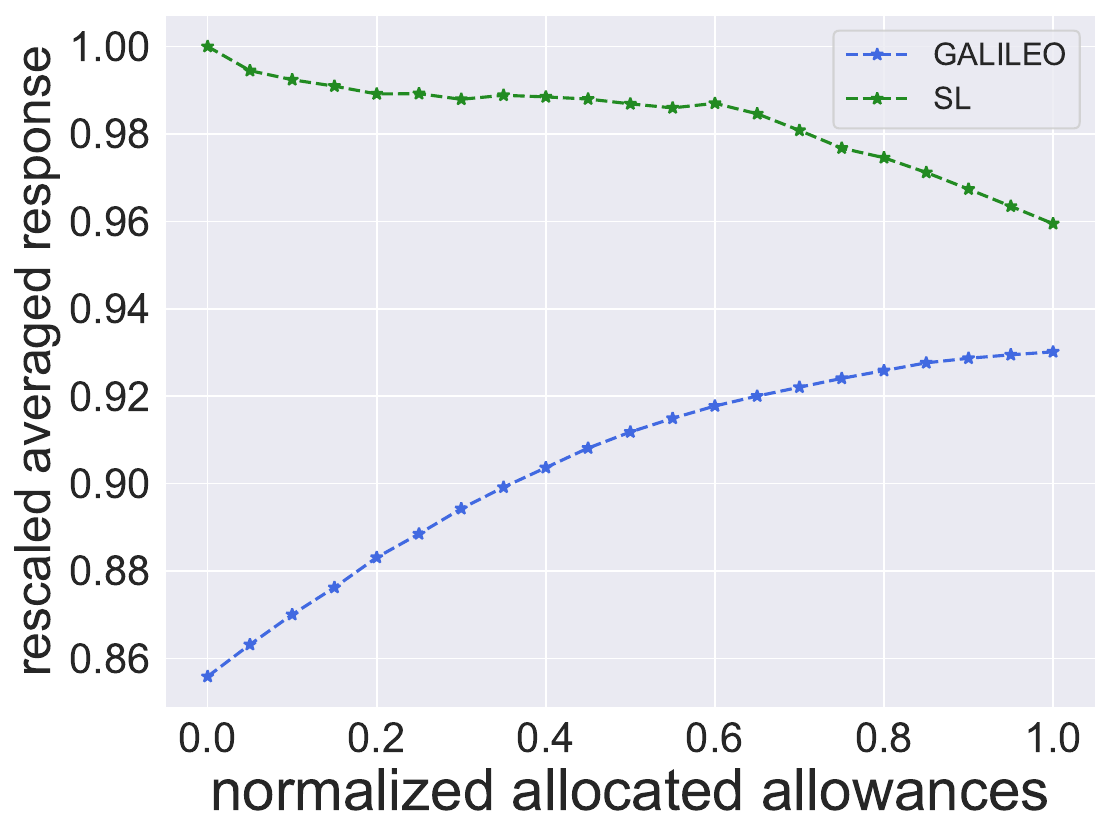}
				}
				\subfigure[City-C]{
					\includegraphics[width=0.305\linewidth]{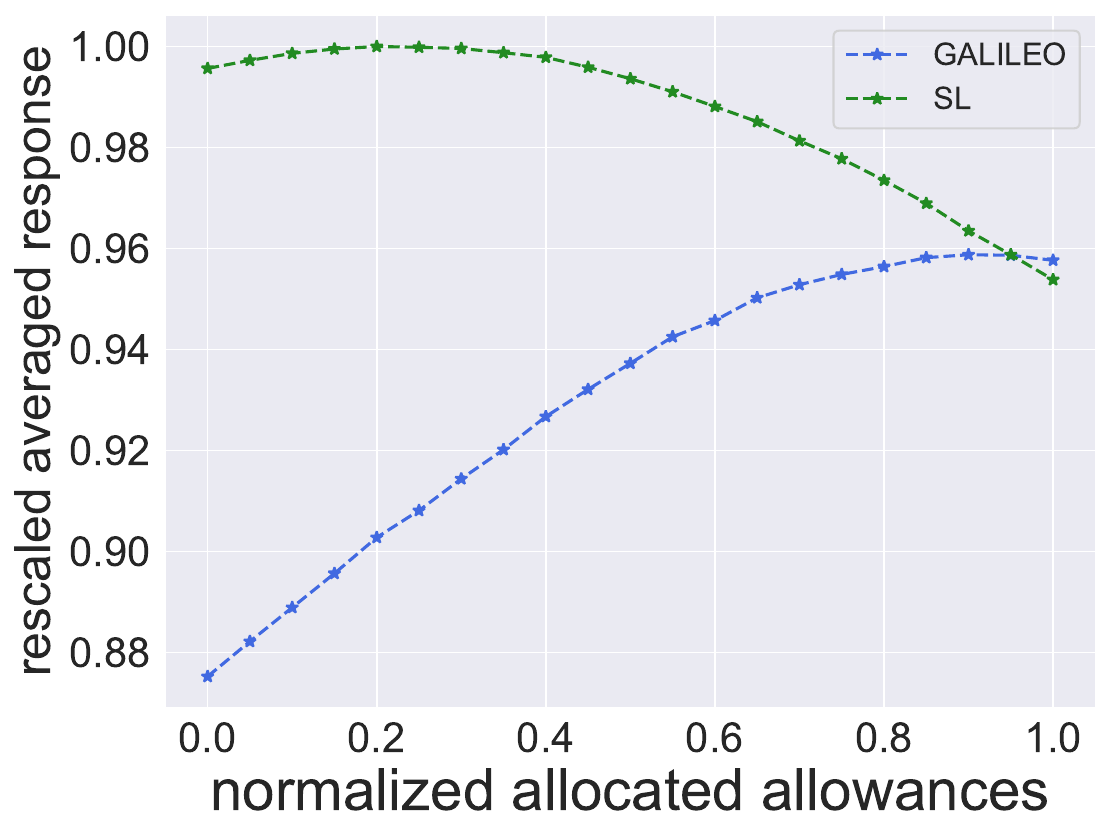}
				}
				
				\subfigure[City-D]{
					\includegraphics[width=0.305\linewidth]{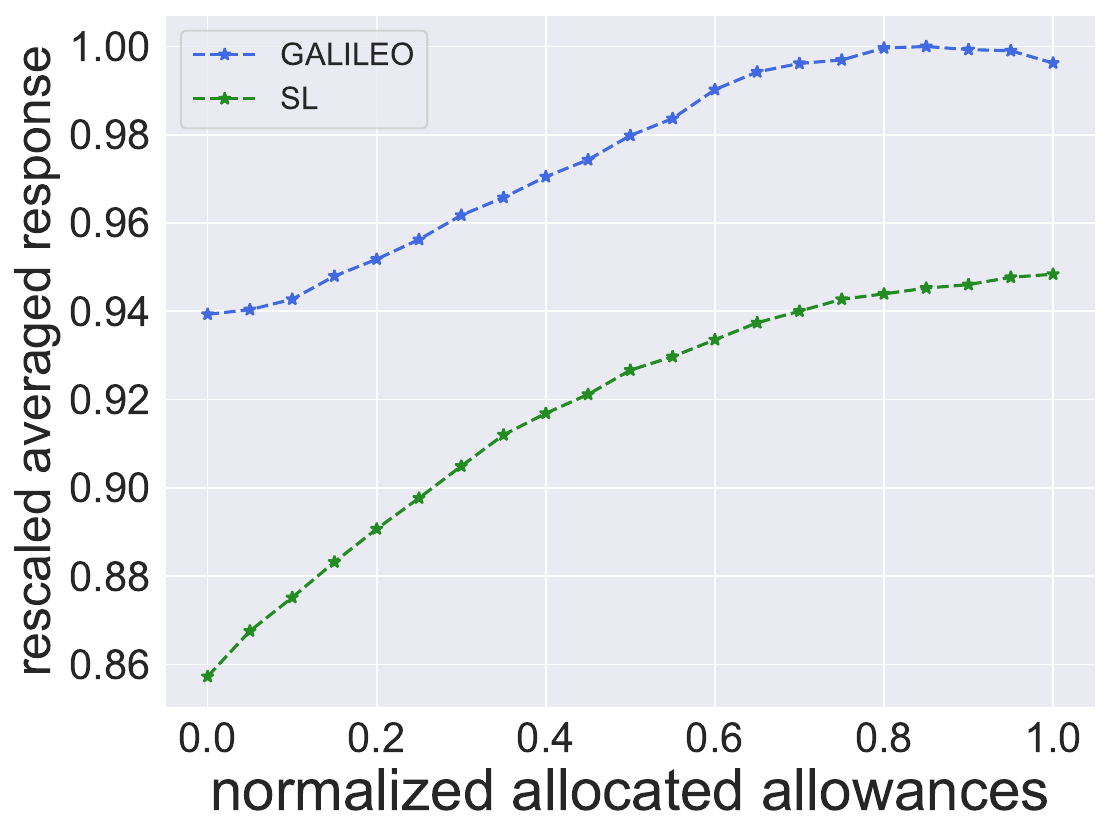}
				}
				\subfigure[City-E]{
					\includegraphics[width=0.305\linewidth]{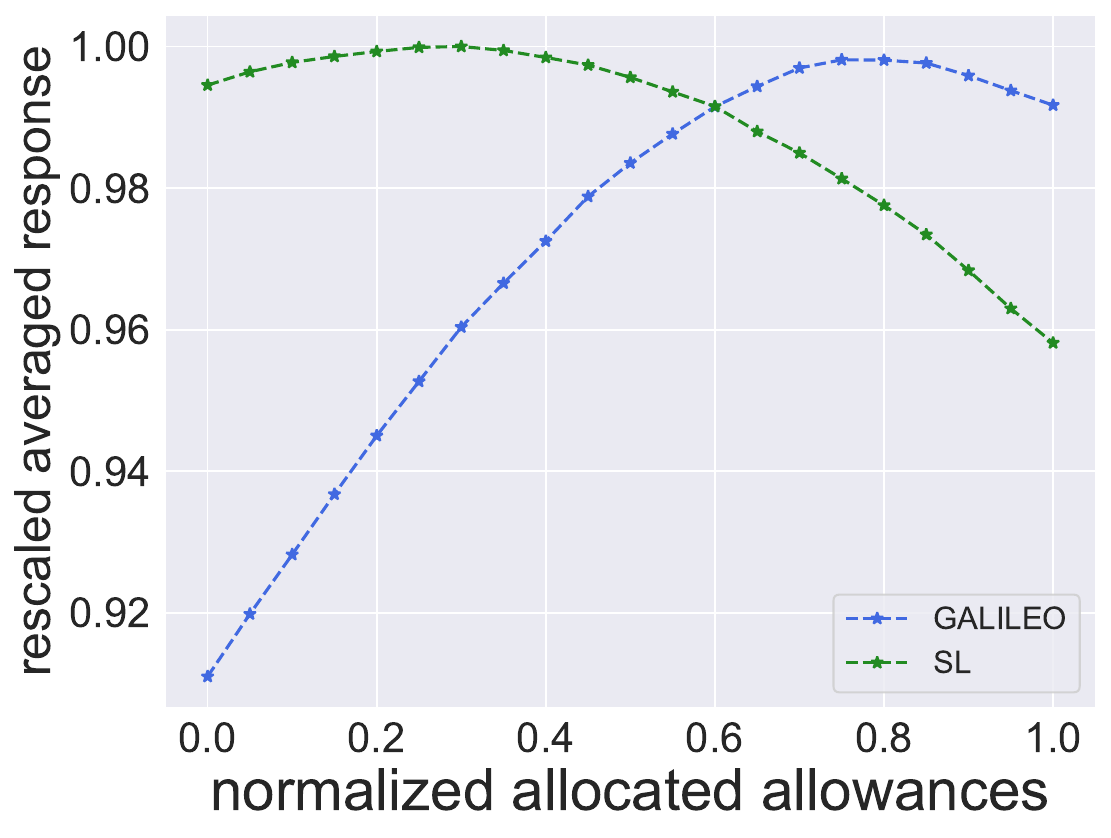}
				}
				\subfigure[City-F]{
					\includegraphics[width=0.305\linewidth]{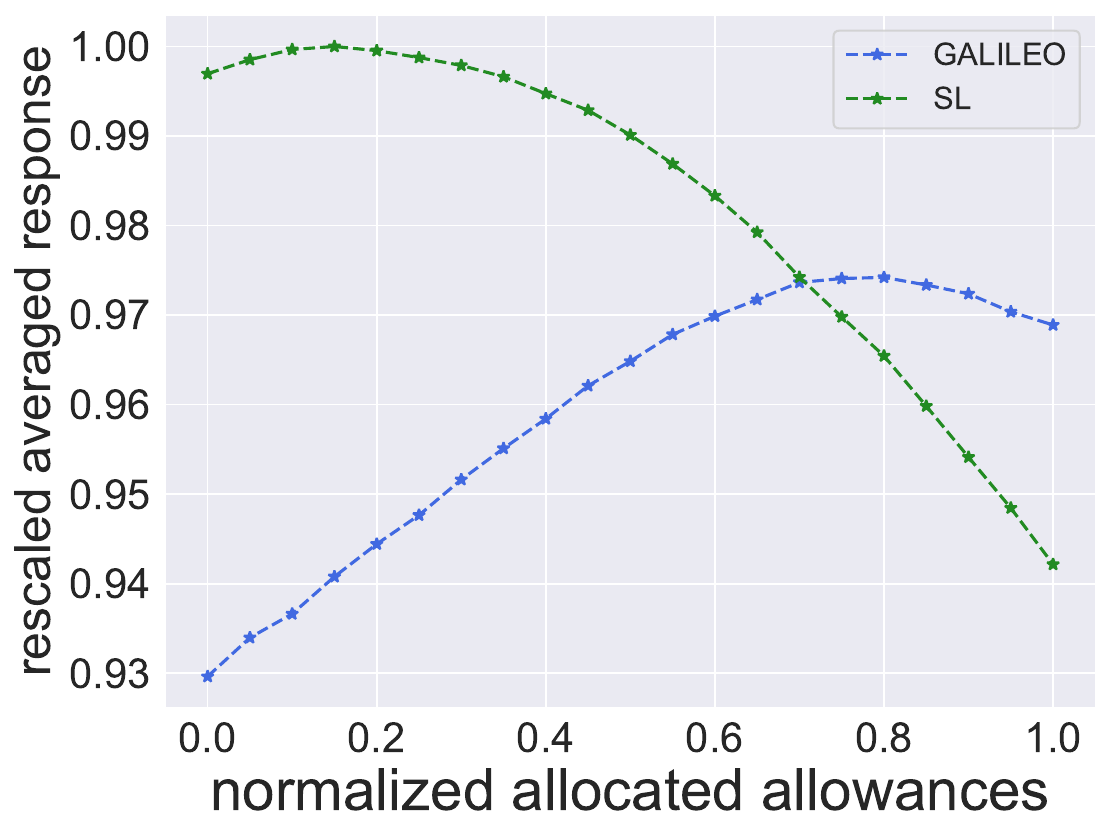}
				}
				
				\caption{Illustration of the response curves in the 6 cities. Although the ground-truth curves are unknown, through human expert knowledge, \textit{we know that it is expected to be monotonically increasing}.}
				\label{fig:meituan-response-all}
			\end{figure} 

   			\begin{wrapfigure}[18]{r}{0.4 \textwidth}
					\centering
					\includegraphics[width=0.8\linewidth]{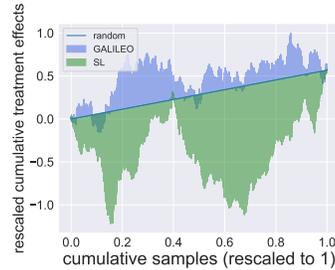}
						
					\caption{Illustration of the AUCC result for BAT.  The model with larger areas above the ``random'' line makes better predictions in randomized-controlled-trials data~\cite{aucc}. }
					\label{fig:meituan-auuc-app}
				\end{wrapfigure}
			First, we review whether the tendency of the response curve is consistent. In this application, with a larger budget of allowance, the supply will not be decreased. As can be seen in Fig.~\ref{fig:meituan-response-all}, the tendency of GALILEO's response is valid in 6 cities but almost all of the models of SL give opposite directions to the response. If we learn a policy through the model of SL, the optimal solution is canceling all of the allowances, which is obviously incorrect in practice.

			Second, we conduct randomized controlled trials (RCT) in one of the testing cities. Using the RCT samples, we can evaluate the correctness of the sort order of the model predictions via Area Under the Uplift Curve (AUUC)~\cite{auuc}. To plot AUUC, we first sort the RCT samples based on the predicted treatment effects. Then the cumulative treatment effects are computed by scanning the sorted sample list. If the sort order of the model predictions is better, the sample with larger treatment effects will be computed early. Then the area of AUUC will be larger than the one via a random sorting strategy.	The result of AUUC show GALILEO gives a reasonable sorting to the RCT samples (see Fig.~\ref{fig:meituan-auuc-app}).

			Finally, we search for the optimal policy via the cross-entropy method planner~\cite{planet} based on the learned model. We test the online supply improvement in $6$ cities. The algorithm compared is a human-expert policy, which is also the behavior policy of the offline datasets. We conduct  online A/B tests for each of the cities. For each test, we randomly split a city into two partitions, one is for deploying the optimal policy learned from the GALILEO model, and the other is as a control group, which keeps the human-expert policy as before. Before the intervention, we collect $10$ days' observation data and compute the averaged five-minute order-taken rates as the baselines of the treatment and control group, named $b^t$ and $b^c$ respectively. Then we start intervention and observe the five-minute order-taken rate in the following $14$ days for the two groups. The results of the treatment and control groups  are $y^t_i$ and $y^c_i$ respectively, where $i$ denotes the $i$-th day of the deployment. The percentage points of the supply improvement are computed via difference-in-difference (DID):
			\begin{align*}
				\frac{\sum_i^T (y^t_i - b^t) - (y^c_i - b^c)}{T} \times 100,
			\end{align*}
			where $T$ is the total days of the intervention and $T=14$ in our experiments.
			
			\begin{table}[ht]
				\centering
				\caption{Results on BAT. We use City-X to denote the experiments on different cities. ``pp'' is an abbreviation of percentage points on the supply improvement. }
				\begin{tabular}{l|c|c|c}
					\toprule
					target city  & City-A & City-B & City-C  \\
					\midrule
					supply improvement  & +1.63pp & +0.79pp & +0.27pp \\
					\midrule
					\midrule
					target city & City-D& City-E & City-F \\
					\midrule
					supply improvement  & +0.2pp & +0.14pp & +0.41pp \\
					\bottomrule
				\end{tabular}
				\label{tab:real}
			\end{table}

			The results are summarized in Tab.~\ref{tab:real}. The online experiment is conducted in $14$ days and the results show that the policy learned with GALILEO can make better (the supply improvements are from \textbf{0.14 to 1.63} percentage points) budget allocation than the behavior policies in \textbf{all the testing cities}. We give detailed results which record the supply difference between the treatment group and the control group in Fig.~\ref{fig:meituan-daily-response-all}.
			
			\begin{figure}[ht]
				\centering
				\subfigure[City-A]{
					\includegraphics[width=0.305\linewidth]{{resources/meituan-daily-response-A}.pdf}
				}
				\subfigure[City-B]{
					\includegraphics[width=0.305\linewidth]{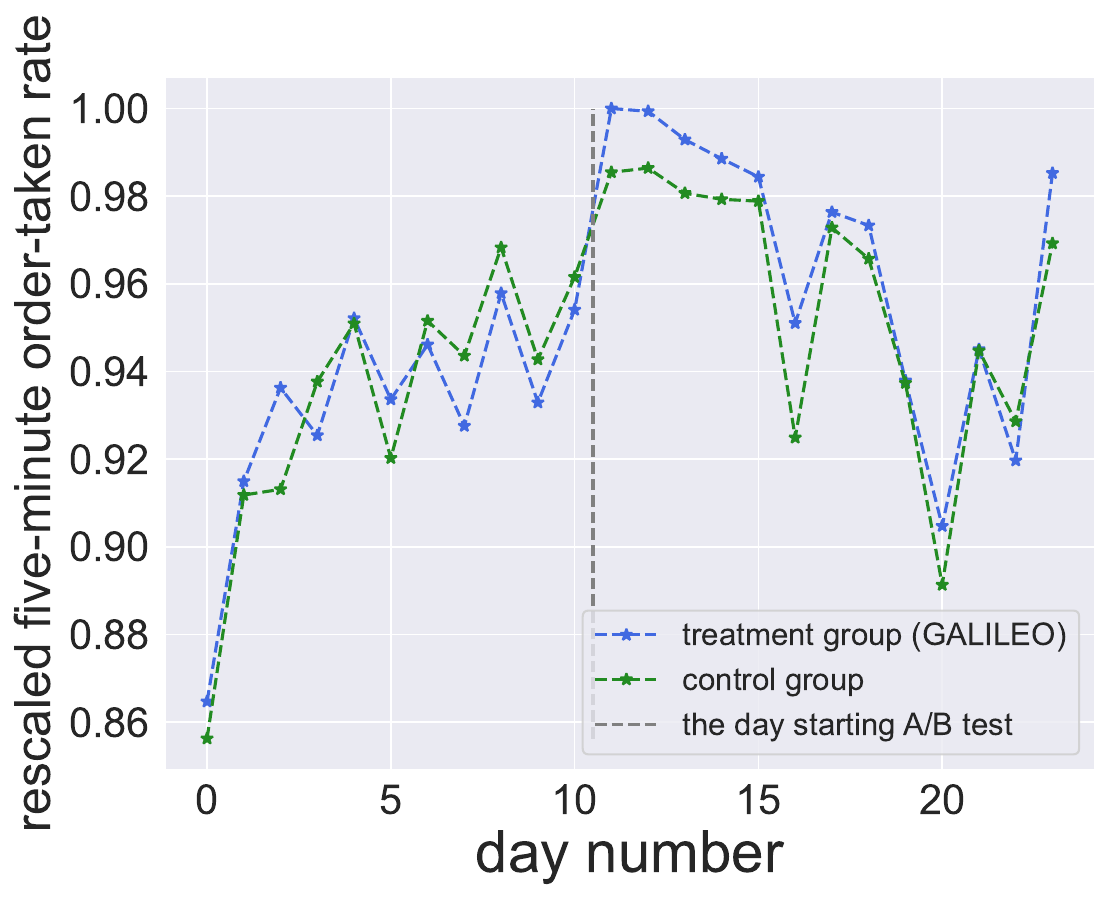}
				}
				\subfigure[City-C]{
					\includegraphics[width=0.305\linewidth]{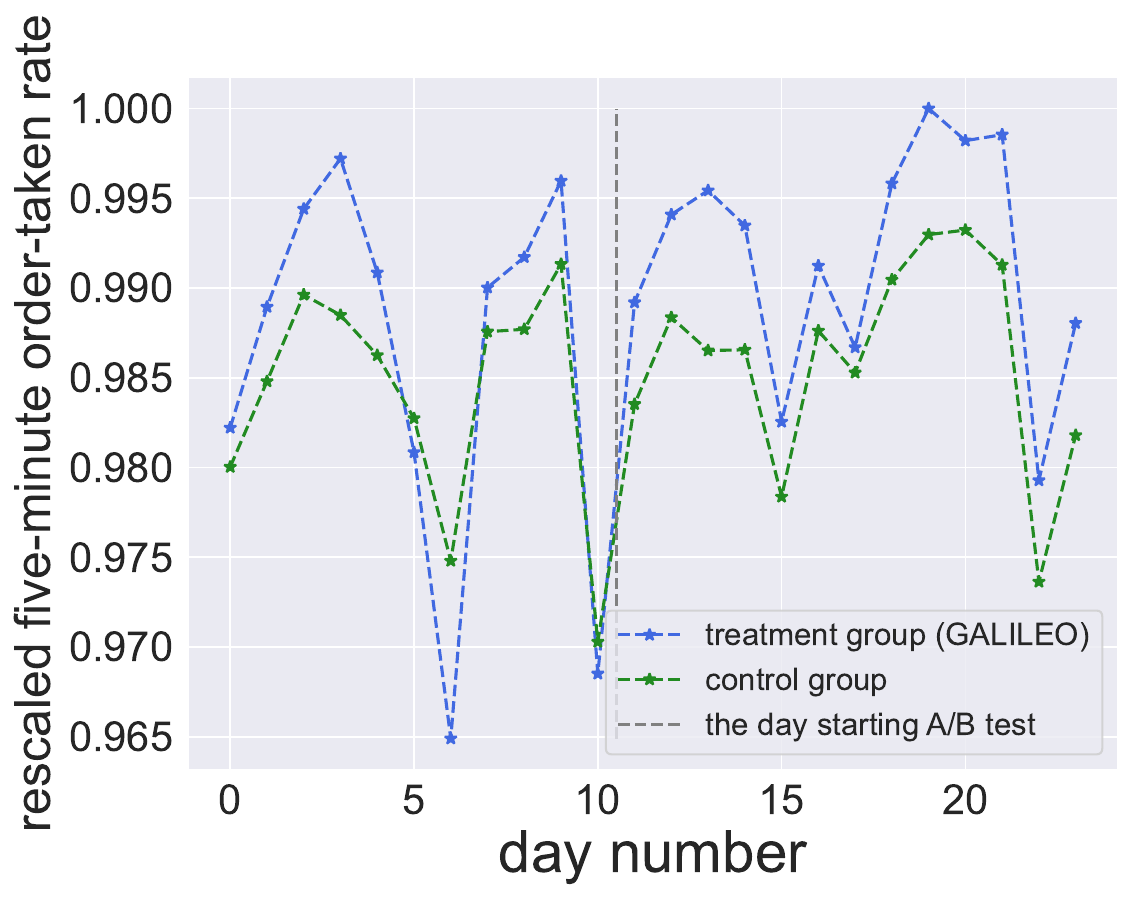}
				}
				
				\subfigure[City-D]{
					\includegraphics[width=0.305\linewidth]{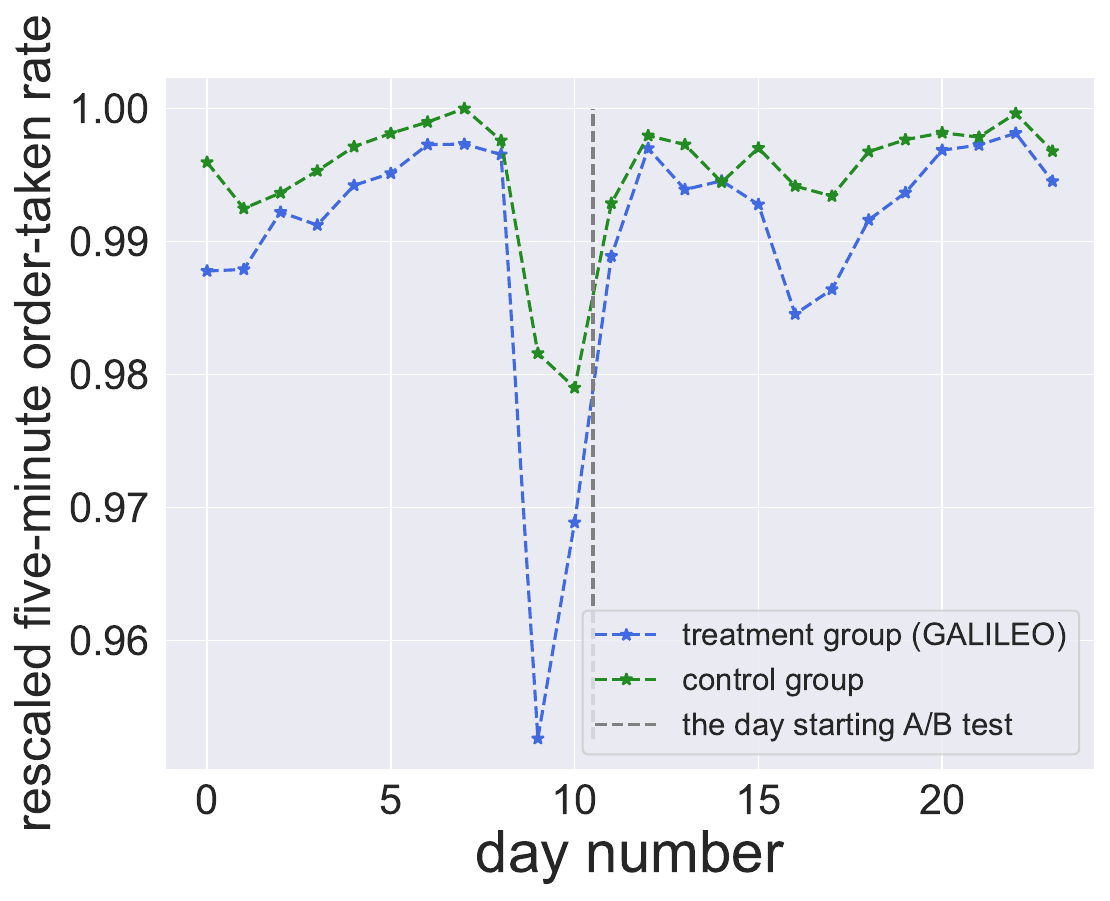}
				}
				\subfigure[City-E]{
					\includegraphics[width=0.305\linewidth]{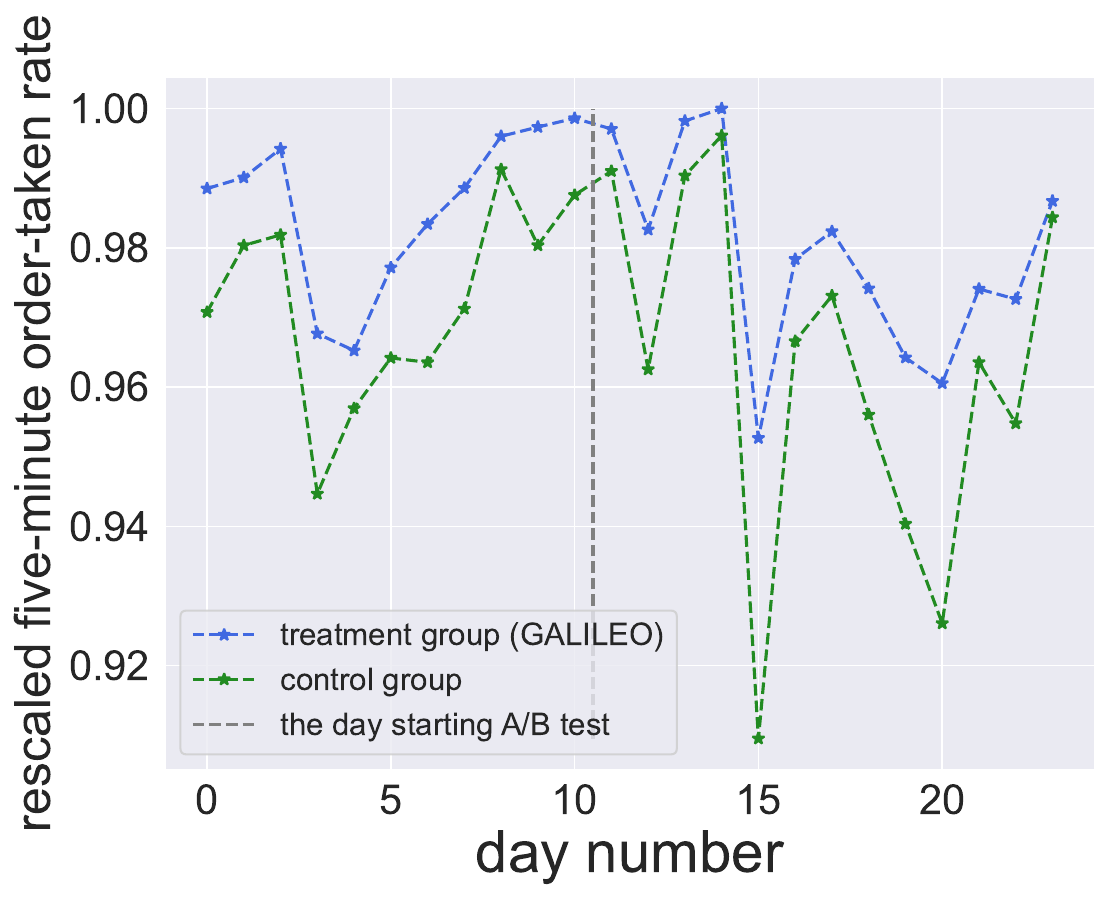}
				}
				\subfigure[City-F]{
					\includegraphics[width=0.305\linewidth]{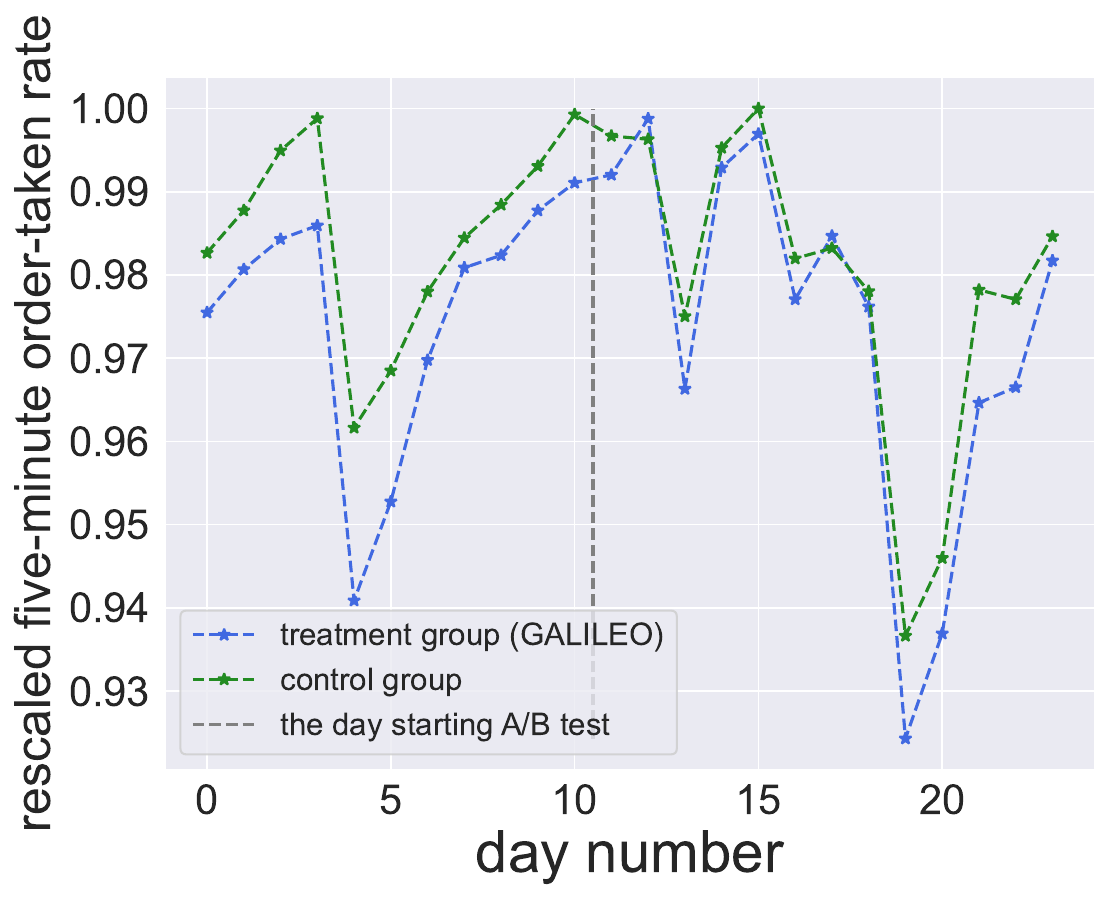}
				}
				
				\caption{Illustration of the daily responses in the A/B test in the 6 cities.}
				\label{fig:meituan-daily-response-all}
			\end{figure} 
			
			\clearpage

			\begin{table*}[t]
				\centering
				\caption{The root mean square errors on MuJoCo tasks. We bold the lowest error for each task. ``medium'' dataset \textbf{is used for training}, while ``expert''  and ``medium-replay''  datasets \textbf{are just used for testing}.  $\pm$ follows the standard deviation of three seeds.}
				\begin{tabular}{l|c|c|c}
					\toprule
					TASK & \multicolumn{3}{c}{HalfCheetah}\\
					\midrule
					DATASET & medium (train) &  expert (test) & medium-replay (test) \\
					\midrule
					\cellcolor{mygray} GALILEO &  	\cellcolor{mygray} 0.378 $\pm$ 0.003 & 	\cellcolor{mygray} \textbf{2.287}$\pm$ 0.005 &	\cellcolor{mygray}  \textbf{1.411} $\pm$ 0.037   \\
					OFF-SL &  0.404 $\pm$ 0.001 & 3.311 $\pm$ 0.055 & 2.246 $\pm$ 0.016 \\
					IPW &  0.513 $\pm$ 0.033 & 2.892 $\pm$ 0.050 & 2.058 $\pm$ 0.021 \\
					SCIGAN &  \textbf{0.309 }$\pm$ 0.002 & 3.813 $\pm$ 0.133 & 2.484 $\pm$ 0.040 \\
					\midrule
					\midrule
					TASK & \multicolumn{3}{c}{Walker2d}\\
					\midrule
					DATASET & medium (train) &  expert (test) & medium-replay (test) \\
					\midrule
					\cellcolor{mygray} GALILEO &  	\cellcolor{mygray} 0.49 $\pm$ 0.001 & 	\cellcolor{mygray} \textbf{1.514} $\pm$ 0.002 & 	\cellcolor{mygray} \textbf{0.968} $\pm$ 0.004  \\
					OFF-SL &  0.467 $\pm$ 0.004 & 1.825 $\pm$ 0.061  &  1.239 $\pm$ 0.004  \\
					IPW &  0.564 $\pm$ 0.001 & 1.826 $\pm$ 0.025 & 1.282 $\pm$ 0.007 \\
					SCIGAN &  \textbf{0.438 } $\pm$ 0.001 & 1.825 $\pm$ 0.031 & 1.196 $\pm$ 0.005 \\
					\midrule
					\midrule
					TASK & \multicolumn{3}{c}{Hopper}\\
					\midrule
					DATASET & medium (train) &  expert (test) & medium-replay (test) \\
					\midrule
					\cellcolor{mygray} GALILEO &  	\cellcolor{mygray} 0.037 $\pm$ 0.002 & 	\cellcolor{mygray} \textbf{0.322} $\pm$ 0.036 &  	\cellcolor{mygray} \textbf{0.408} $\pm$ 0.003  \\
					OFF-SL &  \textbf{0.034} $\pm$ 0.001 & 0.464 $\pm$ 0.021 &  0.574 $\pm$ 0.008  \\
					IPW &  0.039 $\pm$ 0.001 & 0.533 $\pm$ 0.00 & 0.671 $\pm$ 0.001 \\
					SCIGAN &  0.039 $\pm$ 0.002 & 0.628 $\pm$ 0.050 & 0.742 $\pm$ 0.019 \\
					\bottomrule
				\end{tabular}
				\label{tab:mujoco}
			\end{table*}

			\begin{table}
				\centering
				\caption{$\sqrt{MISE}$ results on GNFC. We bold the lowest error for each task.   $\pm$ is the standard deviation of three random seeds. }
				\label{tab:result-gnfc}
				\begin{tabular}{llll}
					\toprule
					& e1\_p1 & e0.2\_p1 & e0.05\_p1  \\
					\midrule
						\cellcolor{mygray} GALILEO & 	\cellcolor{mygray} 5.17 $\pm$ 0.06     & 	\cellcolor{mygray} \textbf{4.73 $\pm$ 0.13}     &	\cellcolor{mygray}  \textbf{4.70 $\pm$ 0.02}     \\    
					SL     & \textbf{5.15 $\pm$ 0.23 }    & 4.73 $\pm$ 0.31     & 23.64 $\pm$ 4.86     \\   
					IPW    & 5.22 $\pm$ 0.09     & 5.50 $\pm$ 0.01     & 5.02 $\pm$ 0.07     \\
					SCIGAN & 7.05 $\pm$ 0.52     & 6.58 $\pm$ 0.58     & 18.55 $\pm$ 3.50     \\
					\midrule
					\midrule
					& e1\_p0.2 & e0.2\_p0.2 & e0.05\_p0.2 \\
					\midrule
						\cellcolor{mygray}GALILEO & 	\cellcolor{mygray}\textbf{5.03 $\pm$ 0.09}    & 	\cellcolor{mygray}\textbf{4.72 $\pm$ 0.05}     &	\cellcolor{mygray} \textbf{4.87 $\pm$ 0.15}     \\    
					SL     & 5.21 $\pm$ 0.63     & 6.74 $\pm$ 0.15     & 33.52 $\pm$ 1.32     \\   
					IPW    & 5.27 $\pm$ 0.05     & 5.69 $\pm$ 0.00    & 20.23 $\pm$ 0.45     \\
					SCIGAN & 16.07 $\pm$ 0.27     & 12.07 $\pm$ 1.93     & 19.27 $\pm$ 10.72     \\
					\midrule
					\midrule
					& e1\_p0.05 & e0.2\_p0.05 & e0.05\_p0.05  \\
					\midrule
						\cellcolor{mygray} GALILEO & 	\cellcolor{mygray} \textbf{5.23 $\pm$ 0.41}     & 	\cellcolor{mygray} \textbf{5.01 $\pm$ 0.08}     &	\cellcolor{mygray}  \textbf{6.17 $\pm$ 0.33}     \\    
					SL     & 5.89 $\pm$ 0.88     & 14.25 $\pm$ 3.48     & 37.50 $\pm$ 2.29     \\   
					IPW    & 5.21 $\pm$ 0.01     & 5.52 $\pm$ 0.44     & 31.95 $\pm$ 0.05     \\
					SCIGAN & 11.50 $\pm$ 7.76     & 13.05 $\pm$ 4.19     & 25.74 $\pm$ 8.30     \\
					\bottomrule
				\end{tabular}
			\end{table}

			\begin{table}
				\centering
				\caption{$\sqrt{MISE}$ results on TCGA. We bold the lowest error for each task.  $\pm$ is the standard deviation of three random seeds. }
				\label{tab:result-tcga}
				\begin{tabular}{llll}
					\toprule
					& t0\_bias\_2.0 & t0\_bias\_20.0 & t0\_bias\_50.0  \\
					\midrule
						\cellcolor{mygray} GALILEO &	\cellcolor{mygray}  \textbf{0.34 $\pm$ 0.05}     &	\cellcolor{mygray} \textbf{0.67 $\pm$ 0.13}     & 	\cellcolor{mygray} \textbf{2.04 $\pm$ 0.12}     \\    
					SL     & 0.38 $\pm$ 0.13     & 1.50 $\pm$ 0.31     & 3.06 $\pm$ 0.65     \\   
					IPW    & 6.57 $\pm$ 1.16     & 6.88 $\pm$ 0.30     & 5.84 $\pm$ 0.71     \\
					SCIGAN & 0.74 $\pm$ 0.05     & 2.74 $\pm$ 0.35     & 3.19 $\pm$ 0.09     \\
					\midrule
					\midrule
					& t1\_bias\_2.0 & t1\_bias\_6.0 & t1\_bias\_8.0 \\
					\midrule
						\cellcolor{mygray} GALILEO & 	\cellcolor{mygray} \textbf{0.43 $\pm$ 0.05}     & 	\cellcolor{mygray} \textbf{0.25 $\pm$ 0.02}     &	\cellcolor{mygray}  \textbf{0.21 $\pm$ 0.04}     \\    
					SL     & 0.47 $\pm$ 0.05     & 1.33 $\pm$ 0.97     & 1.18 $\pm$ 0.73     \\   
					IPW    & 3.67 $\pm$ 2.37     & 0.54 $\pm$ 0.13     & 2.69 $\pm$ 1.17     \\
					SCIGAN & 0.45 $\pm$ 0.25     & 1.08 $\pm$ 1.04     & 1.01 $\pm$ 0.77     \\
					\midrule
					\midrule
					& t2\_bias\_2.0 & t2\_bias\_6.0 & t2\_bias\_8.0  \\
					\midrule
					\cellcolor{mygray}	GALILEO & 	\cellcolor{mygray} 1.46 $\pm$ 0.09     & 	\cellcolor{mygray} \textbf{0.85 $\pm$ 0.04}     & 	\cellcolor{mygray} \textbf{0.46 $\pm$ 0.01}     \\    
					SL     & 0.81 $\pm$ 0.14     & 3.74 $\pm$ 2.04     & 3.59 $\pm$ 0.14     \\   
					IPW    & 2.94 $\pm$ 1.59     & 1.24 $\pm$ 0.01     & 0.99 $\pm$ 0.06     \\
					SCIGAN & \textbf{0.73 $\pm$ 0.15}     & 1.20 $\pm$ 0.53     & 2.13 $\pm$ 1.75     \\
					\bottomrule
				\end{tabular}
			\end{table}

			\begin{table}
				\centering
				\caption{$\sqrt{MMSE}$ results on GNFC. We bold the lowest error for each task.   $\pm$ is the standard deviation of three random seeds. }
				\label{tab:result-gnfc-mmse}
				\begin{tabular}{llll}
					\toprule
					& e1\_p1 & e0.2\_p1 & e0.05\_p1  \\
					\midrule
					\cellcolor{mygray}	GALILEO & 	\cellcolor{mygray} \textbf{3.86 $\pm$ 0.03}     &	\cellcolor{mygray}  \textbf{3.99 $\pm$ 0.01}     &	\cellcolor{mygray}  \textbf{4.07 $\pm$ 0.03}     \\    
					SL     & 5.73 $\pm$ 0.33   & 5.80 $\pm$ 0.28     & 18.78 $\pm$ 3.13     \\   
					IPW    & 4.02 $\pm$ 0.05     &  4.15 $\pm$ 0.12   & 22.66 $\pm$ 0.33    \\
					SCIGAN & 8.84 $\pm$ 0.54     & 12.62 $\pm$ 2.17     & 24.21 $\pm$ 5.20     \\
					\midrule
					\midrule
					& e1\_p0.2 & e0.2\_p0.2 & e0.05\_p0.2 \\
					\midrule
						\cellcolor{mygray} GALILEO & 	\cellcolor{mygray} \textbf{4.13 $\pm$ 0.10}    & 	\cellcolor{mygray} \textbf{4.11 $\pm$ 0.15}     &	\cellcolor{mygray}  \textbf{4.21 $\pm$ 0.15}     \\    
					SL     & 5.87 $\pm$ 0.43     & 7.44 $\pm$ 1.13     & 29.13 $\pm$ 3.44     \\   
					IPW    & 4.12 $\pm$ 0.02     & 6.12 $\pm$ 0.48    & 30.96 $\pm$ 0.17     \\
					SCIGAN & 12.87 $\pm$ 3.02     & 14.59 $\pm$ 2.13     & 24.57 $\pm$ 3.00     \\
					\midrule
					\midrule
					& e1\_p0.05 & e0.2\_p0.05 & e0.05\_p0.05  \\
					\midrule
						\cellcolor{mygray} GALILEO &	\cellcolor{mygray}  \textbf{4.39 $\pm$ 0.20}     &	\cellcolor{mygray}  \textbf{4.34 $\pm$ 0.20}     &	\cellcolor{mygray} \textbf{5.26 $\pm$ 0.29}     \\    
					SL     & 6.12 $\pm$ 0.43     & 14.88 $\pm$ 4.41     & 30.81 $\pm$ 1.69     \\   
					IPW    & 13.60 $\pm$ 7.83     & 26.27 $\pm$ 2.67     & 32.55 $\pm$ 0.12     \\
					SCIGAN & 9.19 $\pm$ 1.04     & 15.08 $\pm$ 1.26     & 17.52 $\pm$ 0.02     \\
					\bottomrule
				\end{tabular}
			\end{table}

			\begin{table}
				\centering
				\caption{$\sqrt{MMSE}$ results on TCGA. We bold the lowest error for each task.  $\pm$ is the standard deviation of three random seeds. }
				\label{tab:result-tcga-mmse}
				\begin{tabular}{llll}
					\toprule
					& t0\_bias\_2.0 & t0\_bias\_20.0 & t0\_bias\_50.0  \\
					\midrule
						\cellcolor{mygray} GALILEO & 	\cellcolor{mygray} \textbf{1.56 $\pm$ 0.04}     & 	\cellcolor{mygray} \textbf{1.96 $\pm$ 0.53}     &	\cellcolor{mygray}  \textbf{3.16 $\pm$ 0.13}     \\    
					SL     & 1.92 $\pm$ 0.67     & 2.31 $\pm$ 0.19     & 5.11 $\pm$ 0.66     \\   
					IPW    & 7.42 $\pm$ 0.46     & 5.36 $\pm$ 0.96     & 5.38 $\pm$ 1.24     \\
					SCIGAN & 2.11 $\pm$ 0.47     & 5.23 $\pm$ 0.27     & 5.59 $\pm$ 1.02     \\
					\midrule
					\midrule
					& t1\_bias\_2.0 & t1\_bias\_6.0 & t1\_bias\_8.0 \\
					\midrule
						\cellcolor{mygray} GALILEO & 	\cellcolor{mygray} 1.43 $\pm$ 0.06     & 	\cellcolor{mygray} 1.09 $\pm$ 0.05    &	\cellcolor{mygray}  \textbf{1.36 $\pm$ 0.36}     \\    
					SL     & \textbf{1.12 $\pm$ 0.15}    & 3.65 $\pm$ 1.91     & 3.96 $\pm$ 1.81     \\   
					IPW    & 1.14 $\pm$ 0.11     & \textbf{0.90 $\pm$ 0.09 }    & 2.04 $\pm$ 0.99     \\
					SCIGAN & 3.32 $\pm$ 0.88     & 4.74 $\pm$ 2.12     & 5.17 $\pm$ 2.42     \\
					\midrule
					\midrule
					& t2\_bias\_2.0 & t2\_bias\_6.0 & t2\_bias\_8.0  \\
					\midrule
					\cellcolor{mygray} 	GALILEO & 	\cellcolor{mygray} 3.77 $\pm$ 0.35     & 	\cellcolor{mygray} 3.99 $\pm$ 0.40     &	\cellcolor{mygray}  \textbf{2.08 $\pm$ 0.60}     \\    
					SL     & \textbf{2.70 $\pm$ 0.67}     & 8.33 $\pm$ 5.05     & 9.70 $\pm$ 3.12     \\   
					IPW    & 2.92 $\pm$ 0.15     & 3.90 $\pm$ 0.17     & 4.47 $\pm$ 2.16     \\
					SCIGAN & 3.82 $\pm$ 2.12     & \textbf{1.83 $\pm$ 1.49}     & 3.62 $\pm$ 4.9    \\
					\bottomrule
				\end{tabular}
			\end{table}

			\begin{figure}[ht]
				\centering
				\subfigure[t0\_bias2]{
					\includegraphics[width=0.305\linewidth]{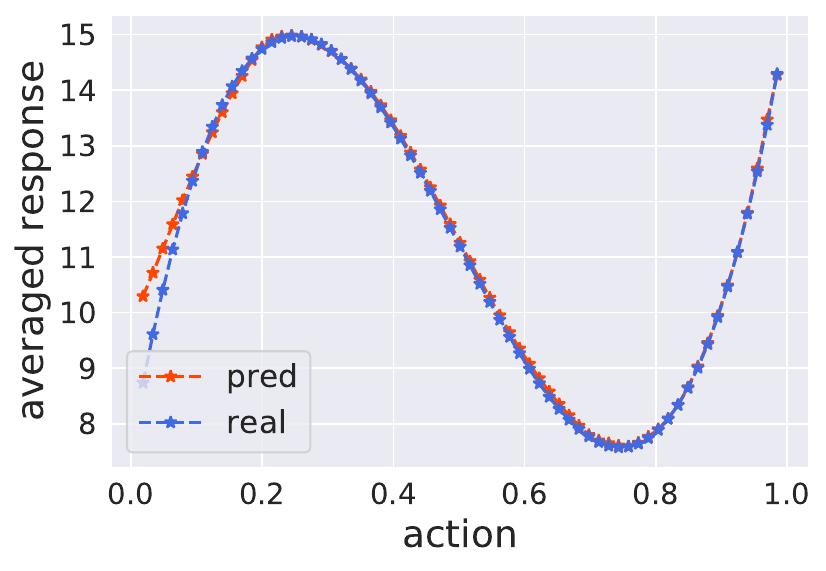}
				}
				\subfigure[t0\_bias20]{
					\includegraphics[width=0.305\linewidth]{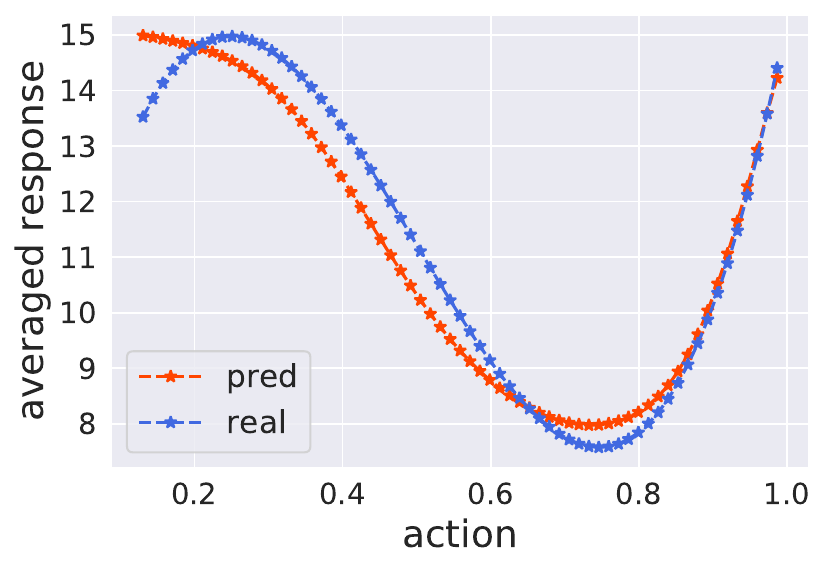}
				}
				\subfigure[t0\_bias50]{
					\includegraphics[width=0.305\linewidth]{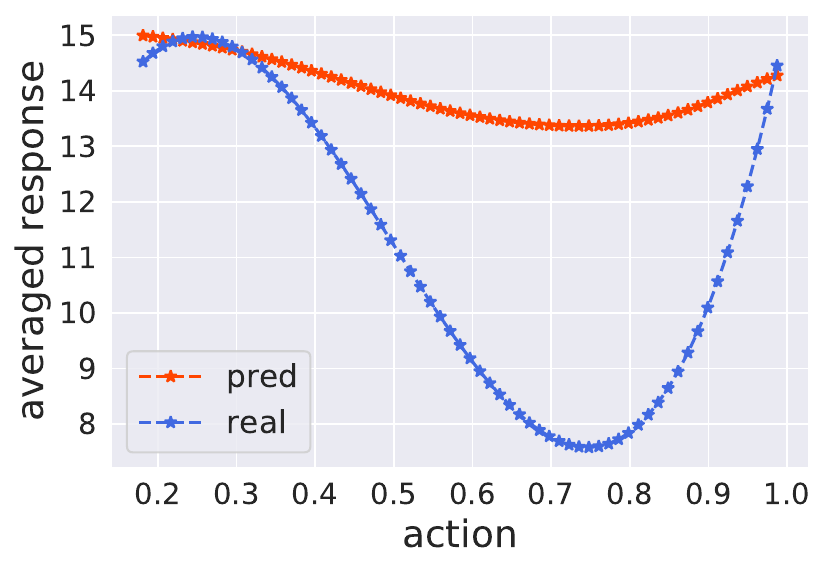}
				}
				\subfigure[t1\_bias2]{
					\includegraphics[width=0.305\linewidth]{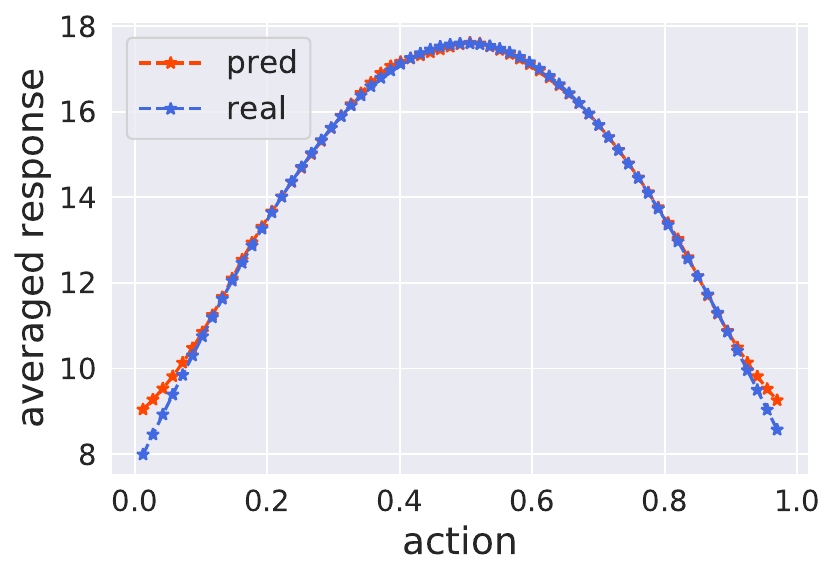}
				}
				\subfigure[t1\_bias6]{
					\includegraphics[width=0.305\linewidth]{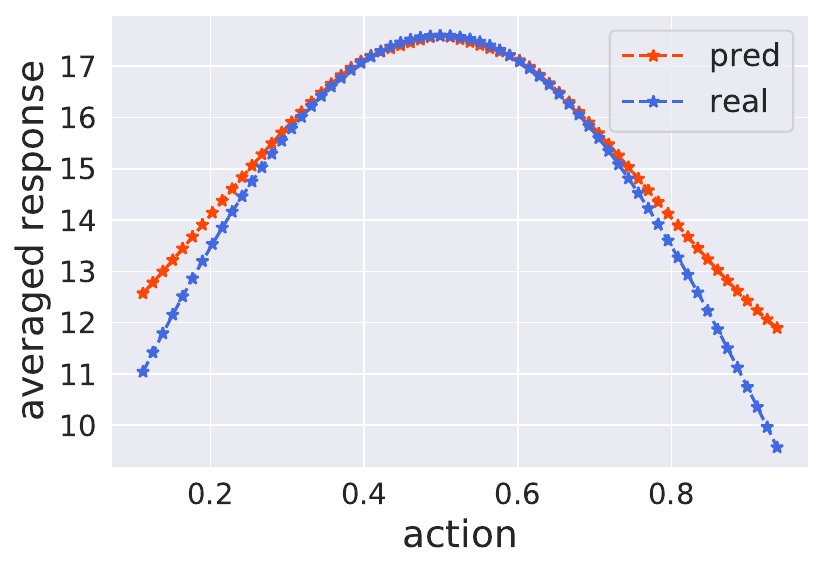}
				}
				\subfigure[t1\_bias8]{
					\includegraphics[width=0.305\linewidth]{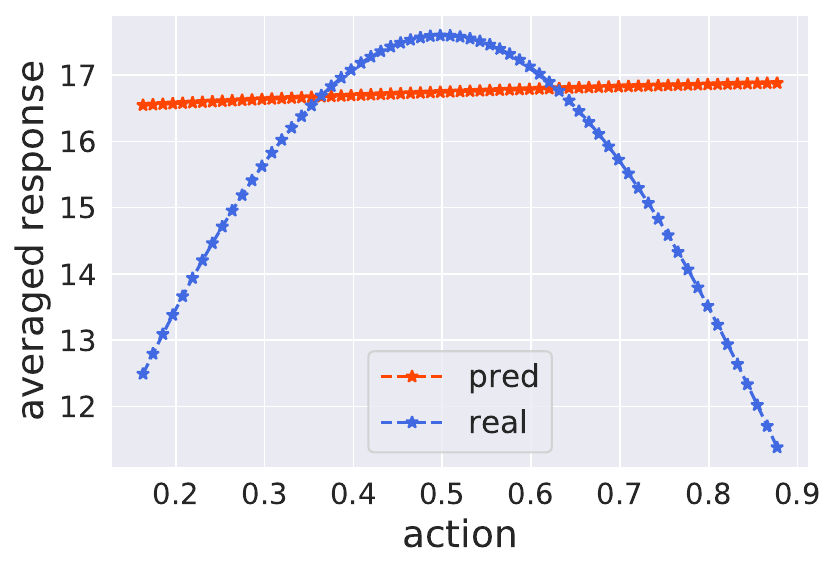}
				}
				\subfigure[t2\_bias2]{
					\includegraphics[width=0.305\linewidth]{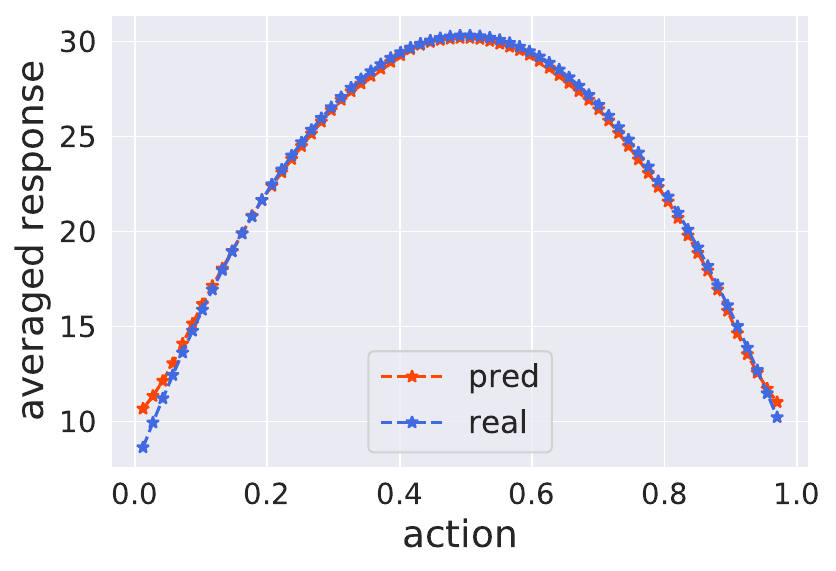}
				}
				\subfigure[t2\_bias6]{
					\includegraphics[width=0.305\linewidth]{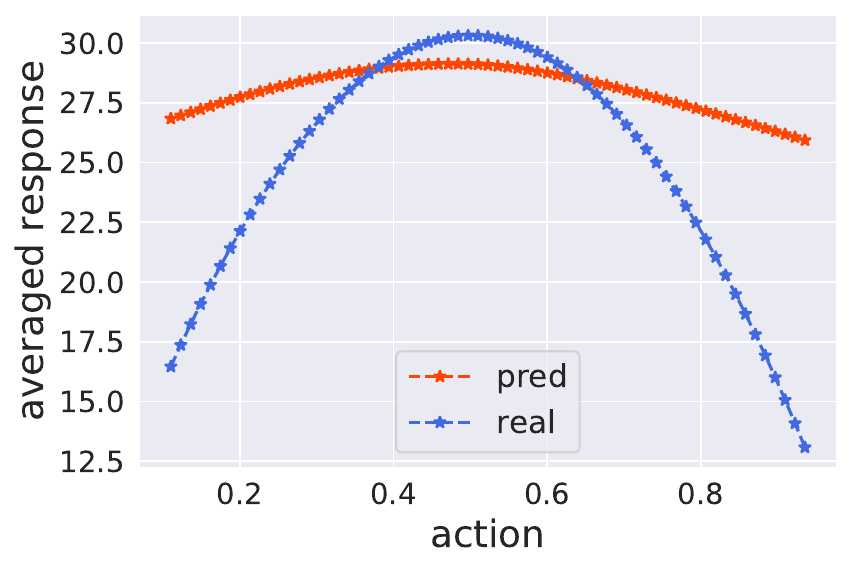}
				}
				\subfigure[t2\_bias8]{
					\includegraphics[width=0.305\linewidth]{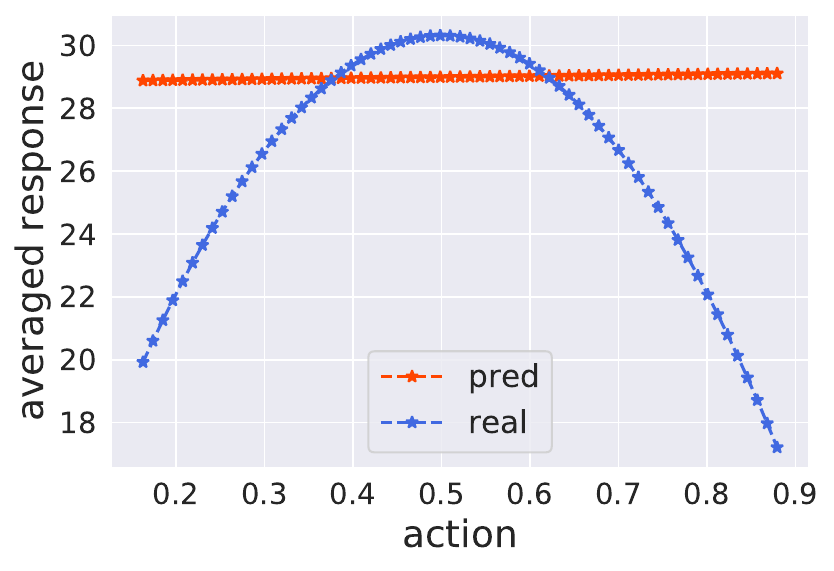}
				}
				\caption{Illustration of the averaged response curves of Supervised Learning (SL) in TCGA. }
				\label{fig:response_sl_tcga}
			\end{figure}
			
			\begin{figure}[ht]
				\centering
				\subfigure[e1\_p1]{
					\includegraphics[width=0.305\linewidth]{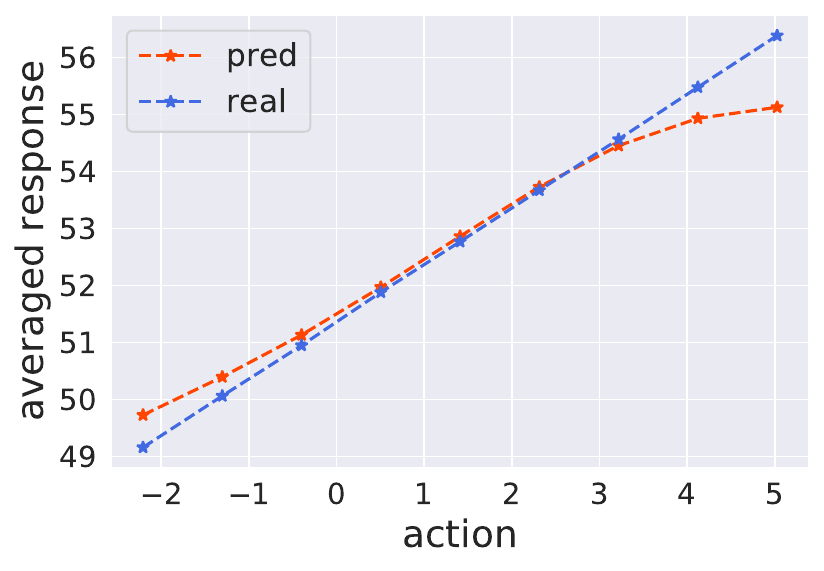}
				}
				\subfigure[e0.2\_p1]{
					\includegraphics[width=0.305\linewidth]{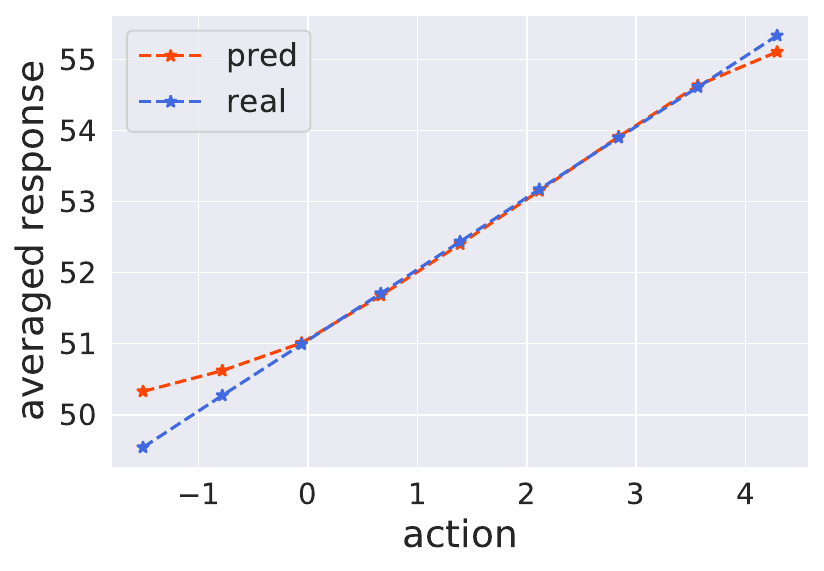}
				}
				\subfigure[e0.05\_p1]{
					\includegraphics[width=0.305\linewidth]{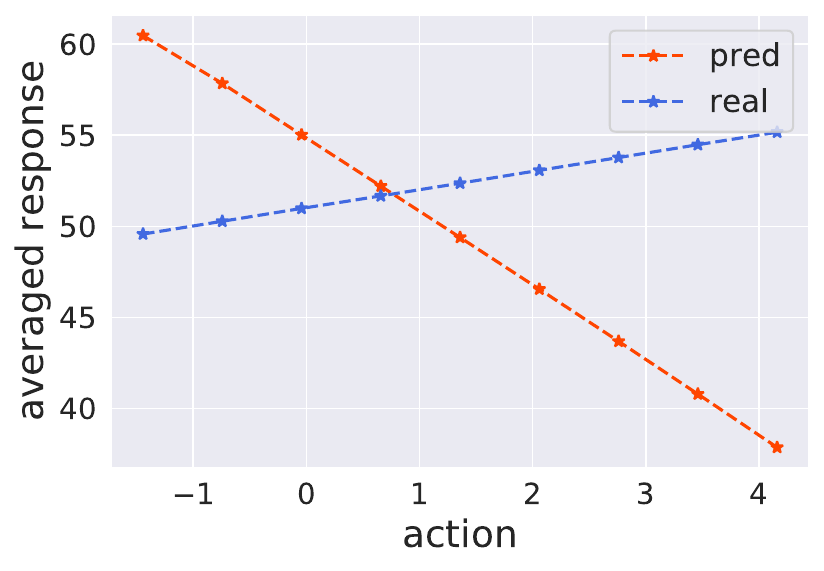}
				}
				\subfigure[e1\_p0.2]{
					\includegraphics[width=0.305\linewidth]{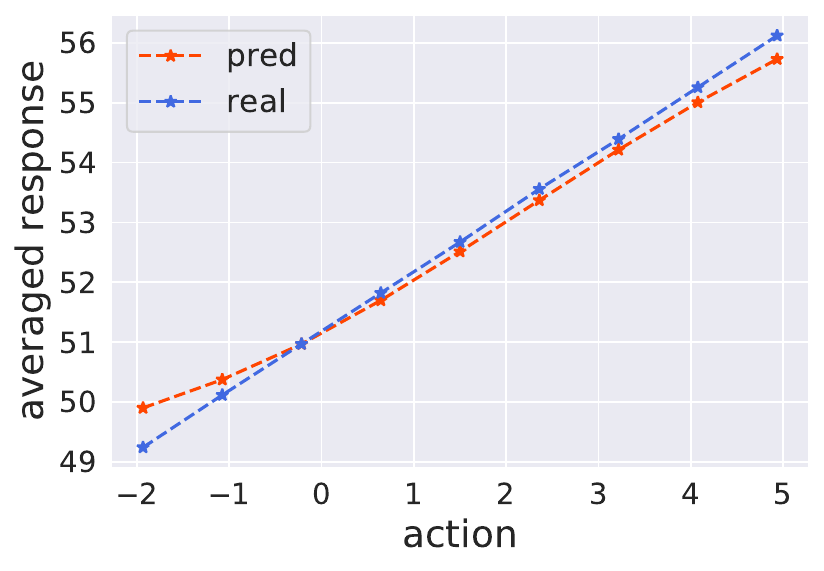}
				}
				\subfigure[e0.2\_p0.2]{
					\includegraphics[width=0.305\linewidth]{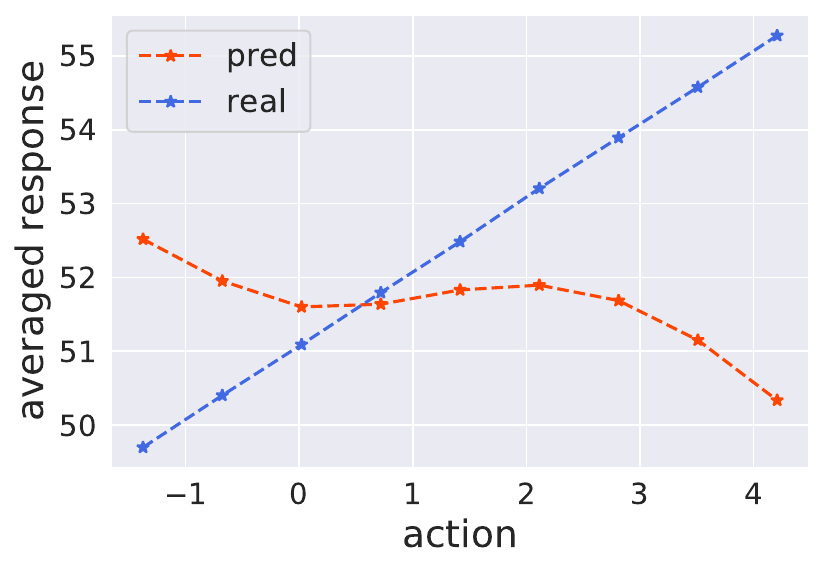}
				}
				\subfigure[e0.05\_p0.2]{
					\includegraphics[width=0.305\linewidth]{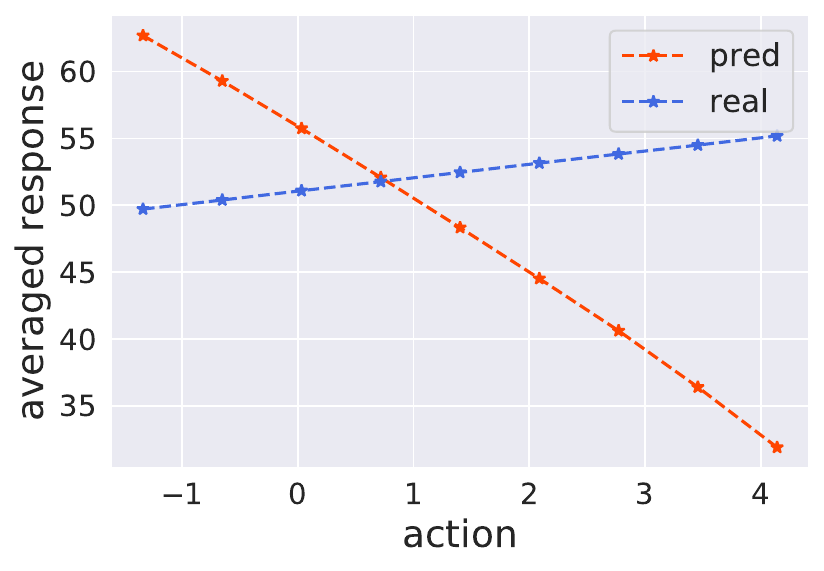}
				}
				\subfigure[e1\_p0.05]{
					\includegraphics[width=0.305\linewidth]{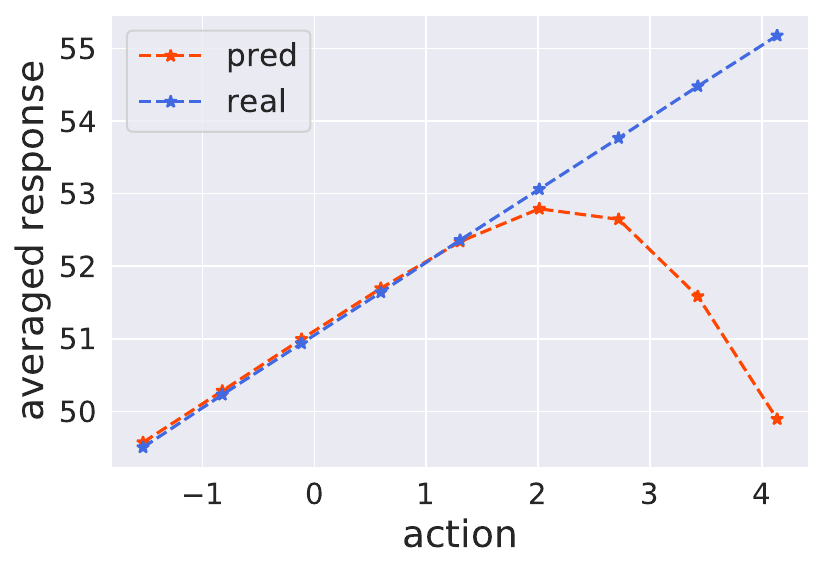}
				}
				\subfigure[e0.2\_p0.05]{
					\includegraphics[width=0.305\linewidth]{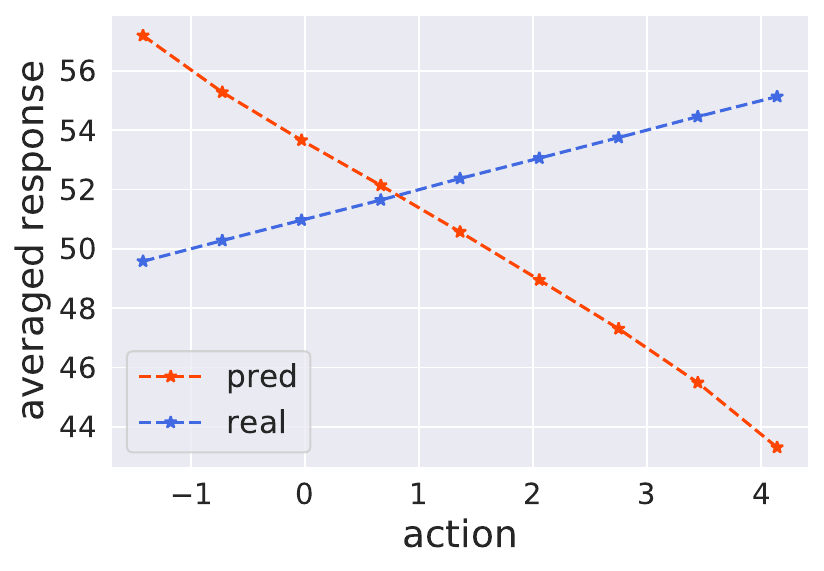}
				}
				\subfigure[e0.05\_p0.05]{
					\includegraphics[width=0.305\linewidth]{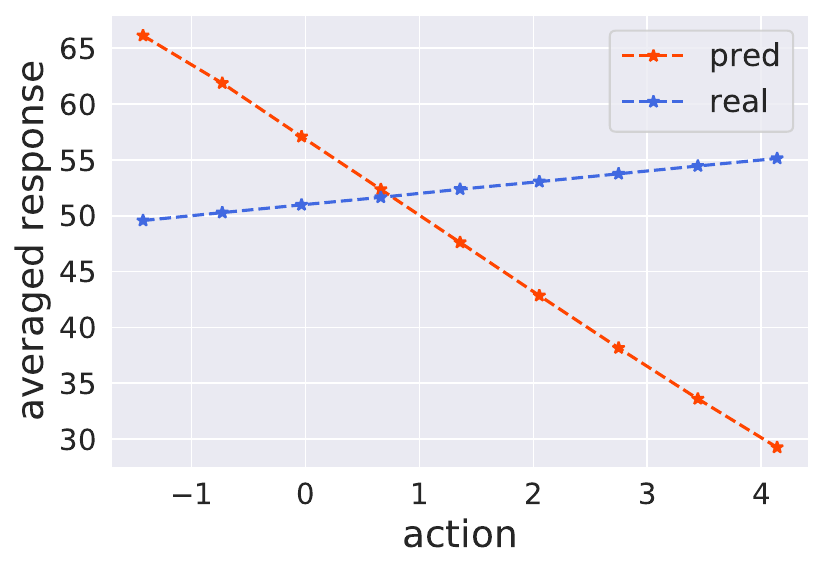}
				}
				\caption{Illustration of the averaged response curves of Supervised Learning (SL) in GNFC. }
				\label{fig:response_sl_gnfc}
			\end{figure}
			
			\begin{figure}[ht]
				\centering
				\subfigure[t0\_bias2]{
					\includegraphics[width=0.305\linewidth]{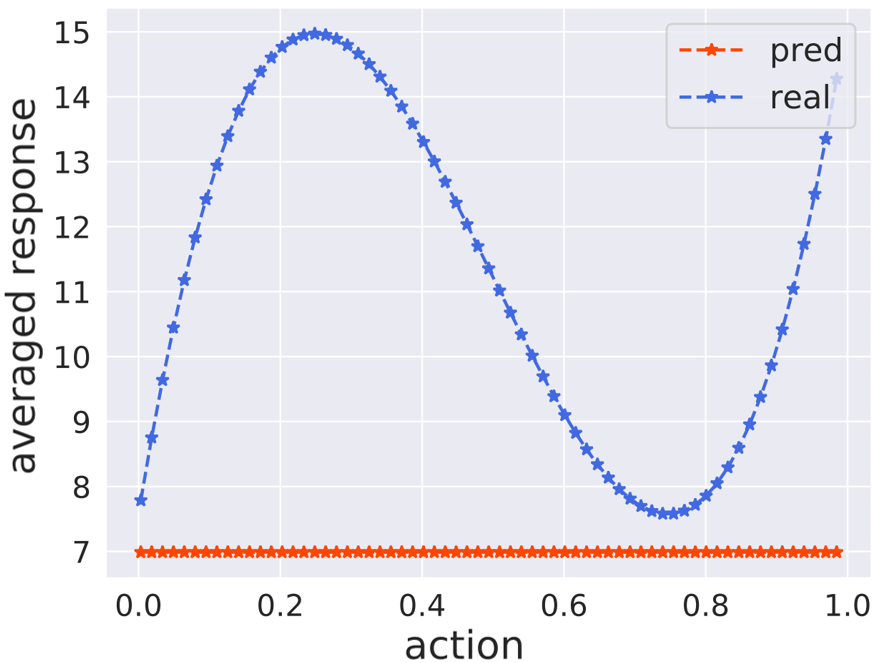}
				}
				\subfigure[t0\_bias20]{
					\includegraphics[width=0.305\linewidth]{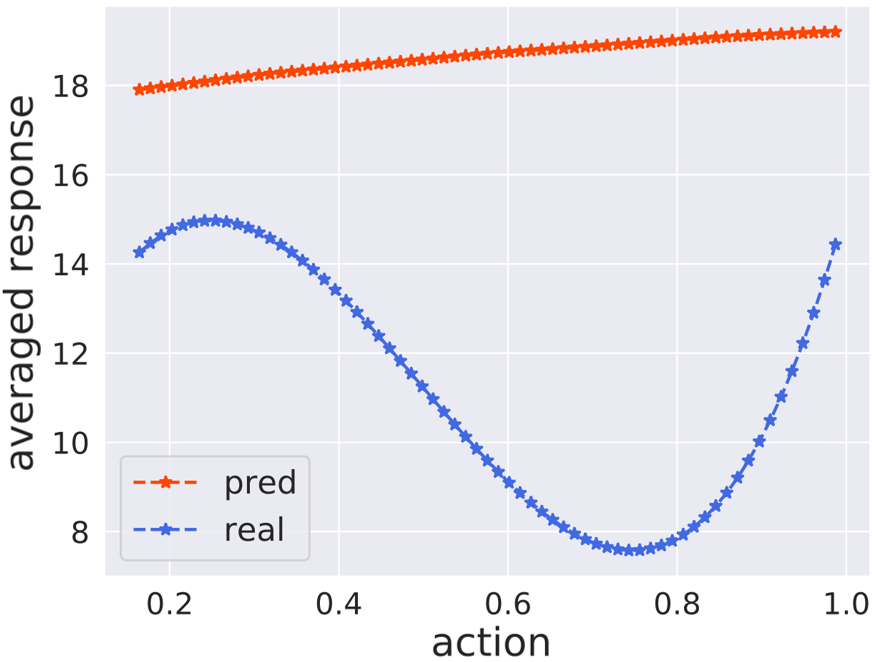}
				}
				\subfigure[t0\_bias50]{
					\includegraphics[width=0.305\linewidth]{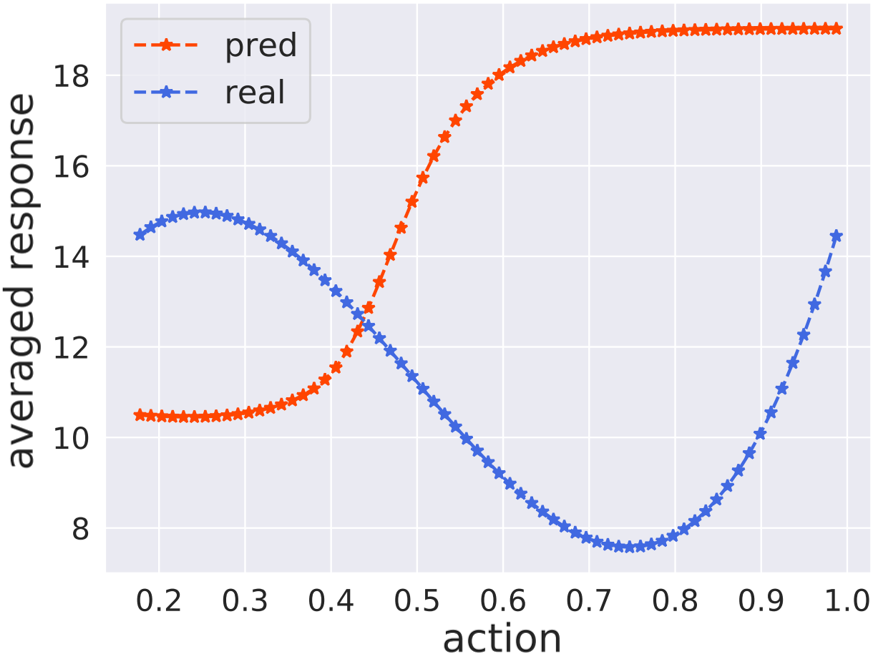}
				}
				\subfigure[t1\_bias2]{
					\includegraphics[width=0.305\linewidth]{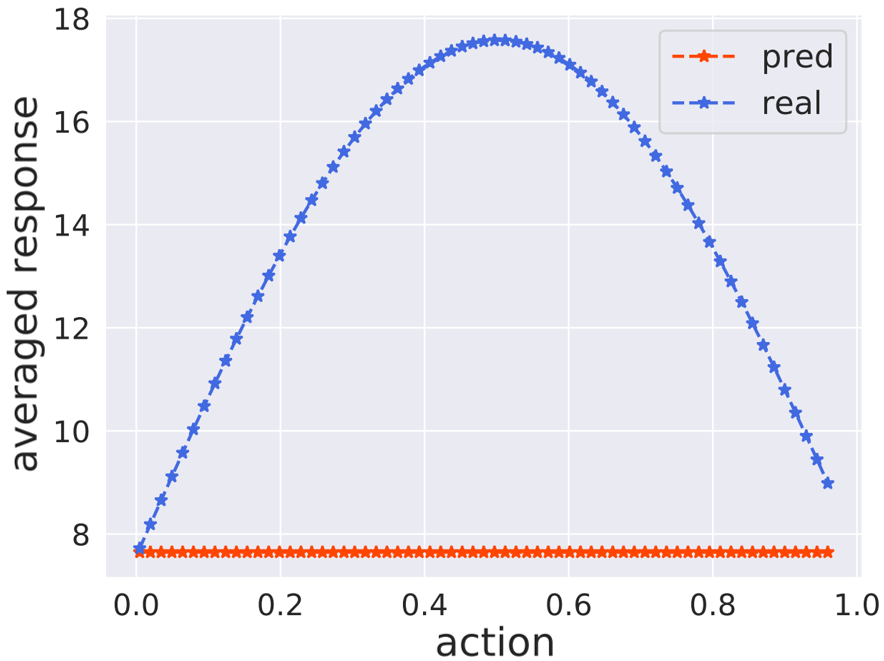}
				}
				\subfigure[t1\_bias6]{
					\includegraphics[width=0.305\linewidth]{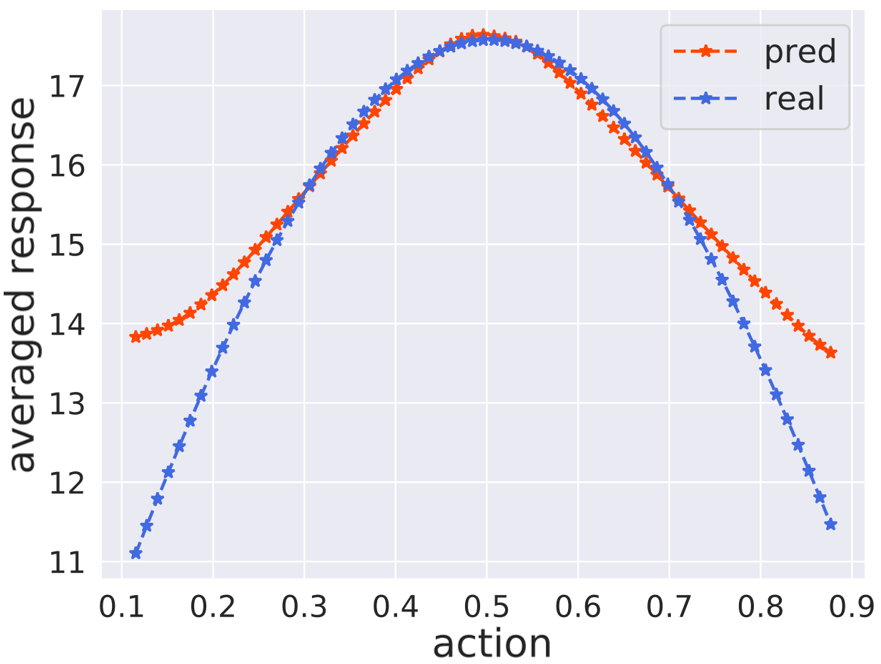}
				}
				\subfigure[t1\_bias8]{
					\includegraphics[width=0.305\linewidth]{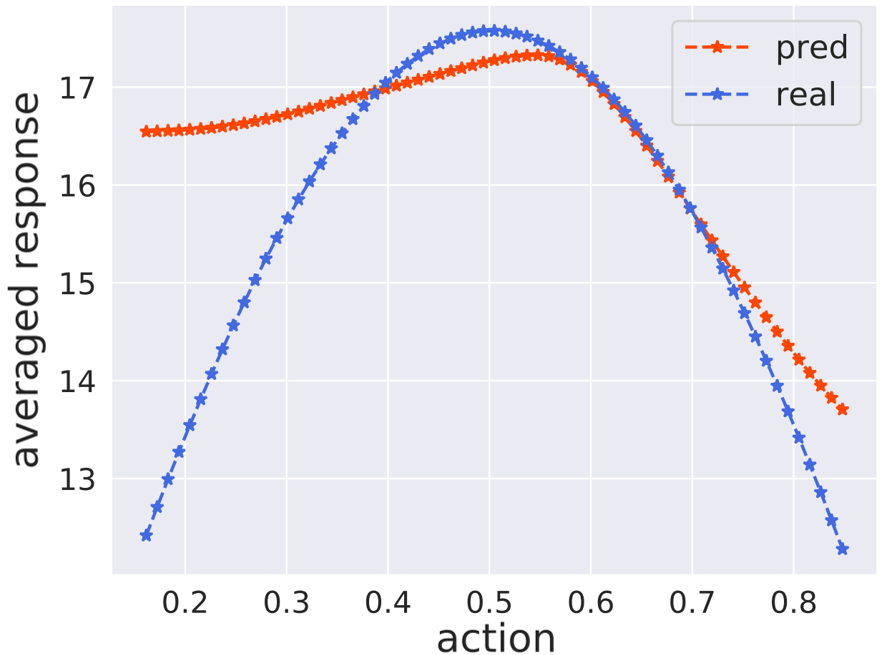}
				}
				\subfigure[t2\_bias2]{
					\includegraphics[width=0.305\linewidth]{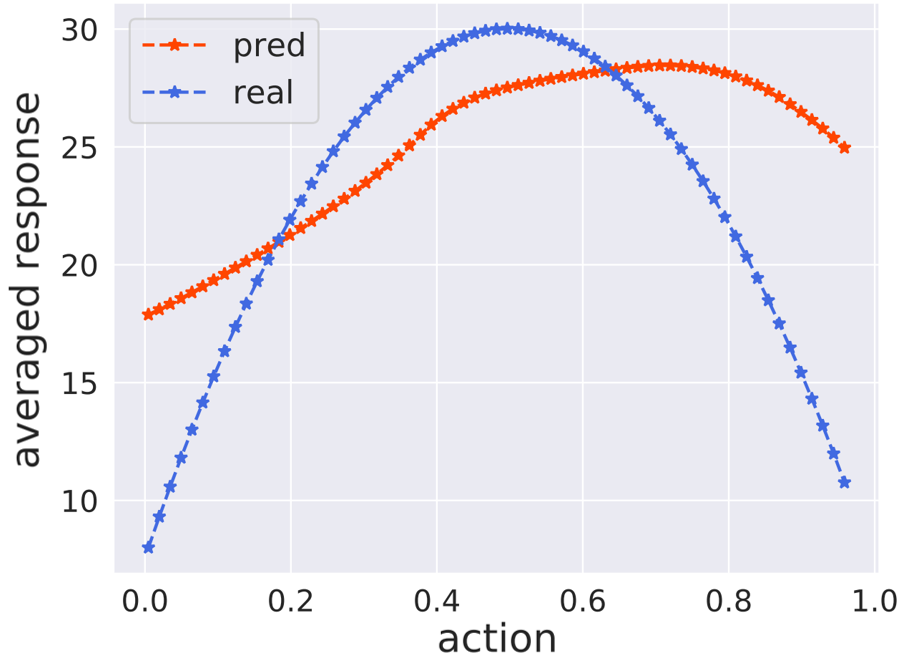}
				}
				\subfigure[t2\_bias6]{
					\includegraphics[width=0.305\linewidth]{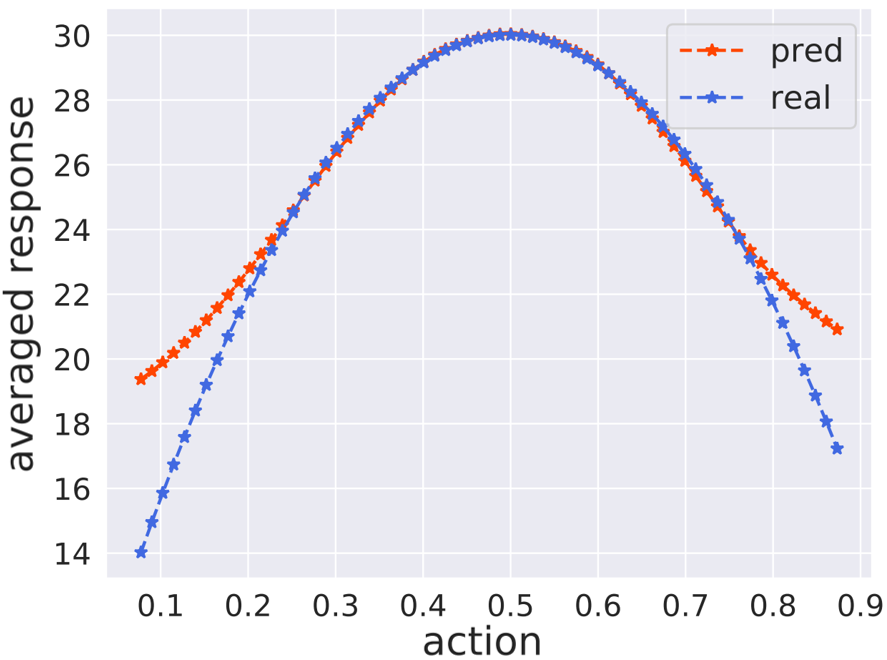}
				}
				\subfigure[t2\_bias8]{
					\includegraphics[width=0.305\linewidth]{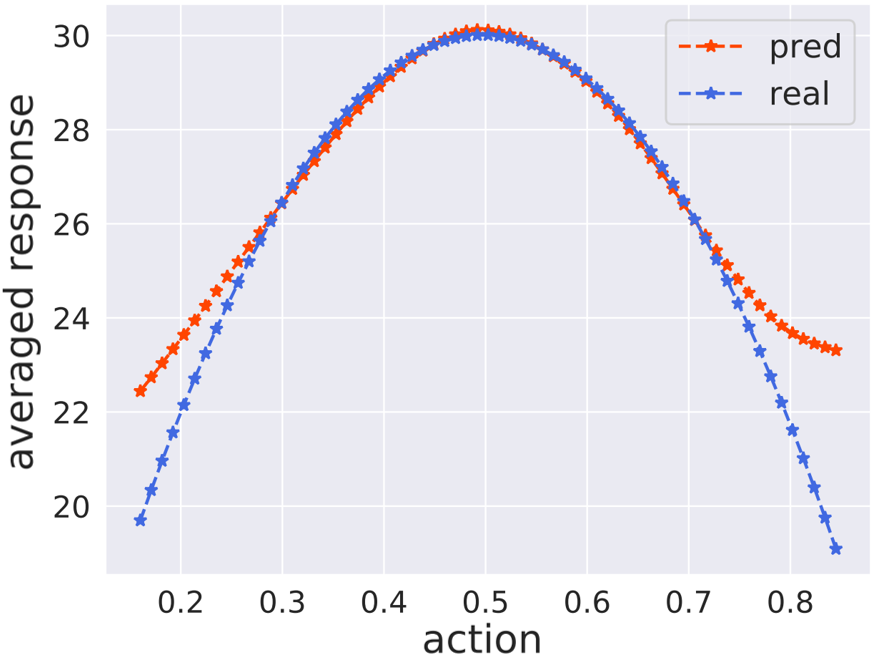}
				}
				\caption{Illustration of the averaged response curves of Inverse Propensity Weighting (IPW) in TCGA. }
				\label{fig:response_ipw_tcga}
			\end{figure}
			
			\begin{figure}[ht]
				\centering
				\subfigure[e1\_p1]{
					\includegraphics[width=0.305\linewidth]{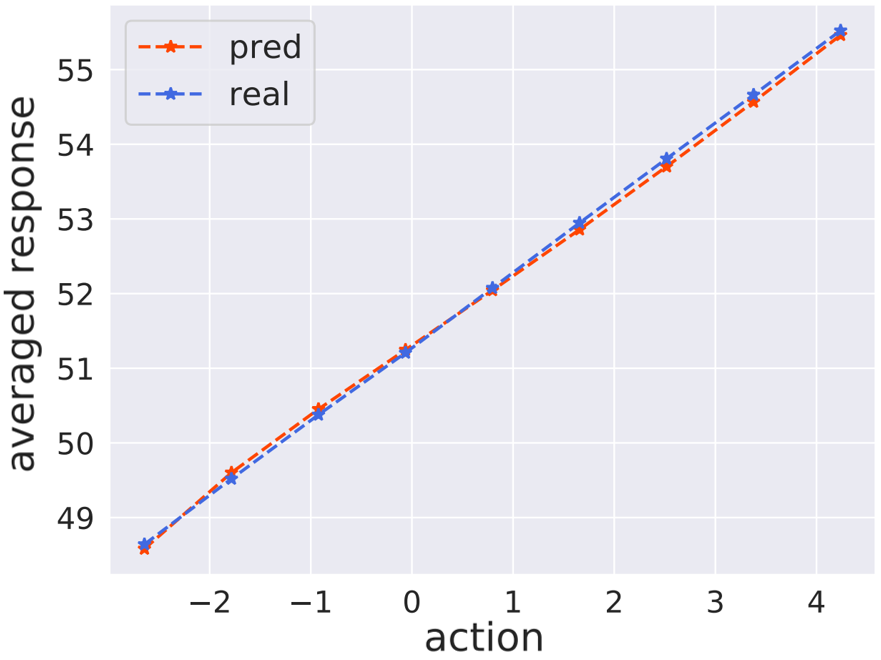}
				}
				\subfigure[e0.2\_p1]{
					\includegraphics[width=0.305\linewidth]{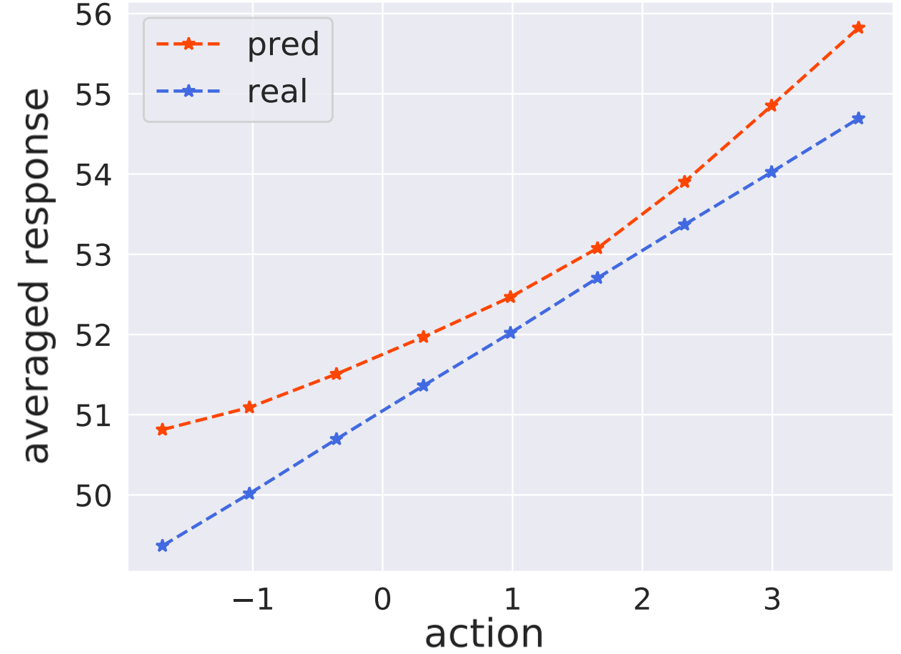}
				}
				\subfigure[e0.05\_p1]{
					\includegraphics[width=0.305\linewidth]{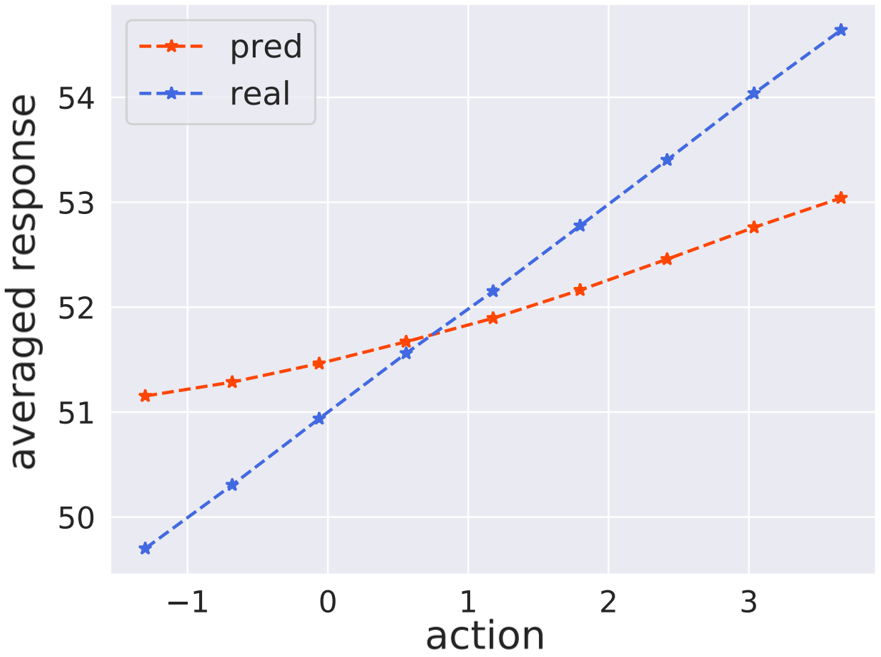}
				}
				\subfigure[e1\_p0.2]{
					\includegraphics[width=0.305\linewidth]{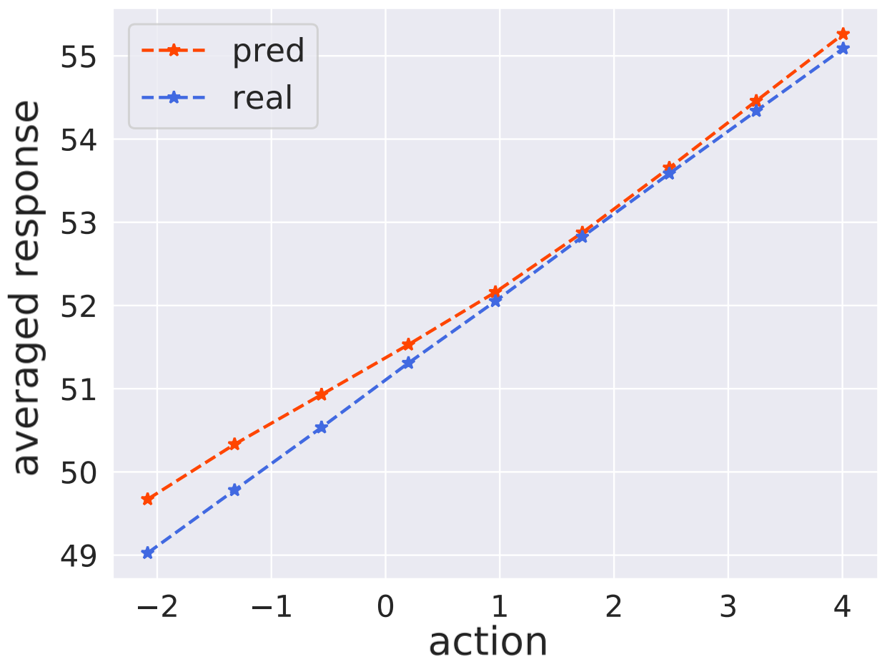}
				}
				\subfigure[e0.2\_p0.2]{
					\includegraphics[width=0.305\linewidth]{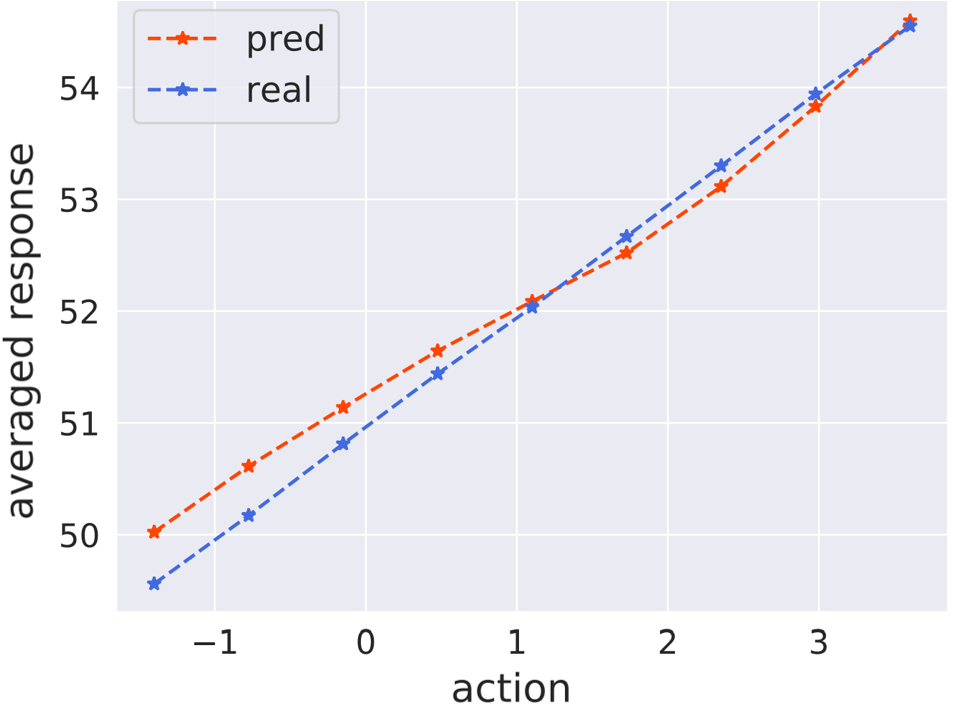}
				}
				\subfigure[e0.05\_p0.2]{
					\includegraphics[width=0.305\linewidth]{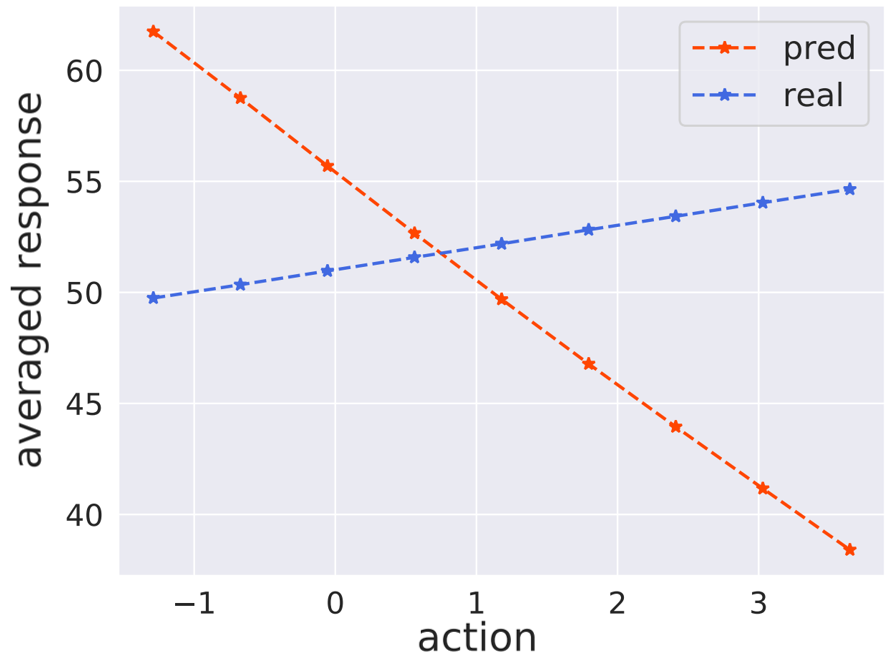}
				}
				\subfigure[e1\_p0.05]{
					\includegraphics[width=0.305\linewidth]{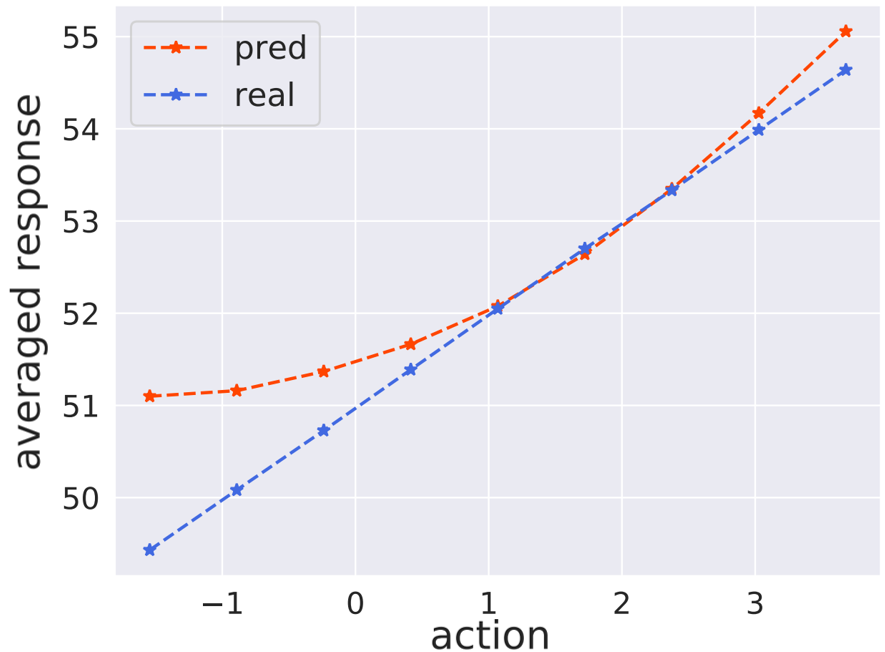}
				}
				\subfigure[e0.2\_p0.05]{
					\includegraphics[width=0.305\linewidth]{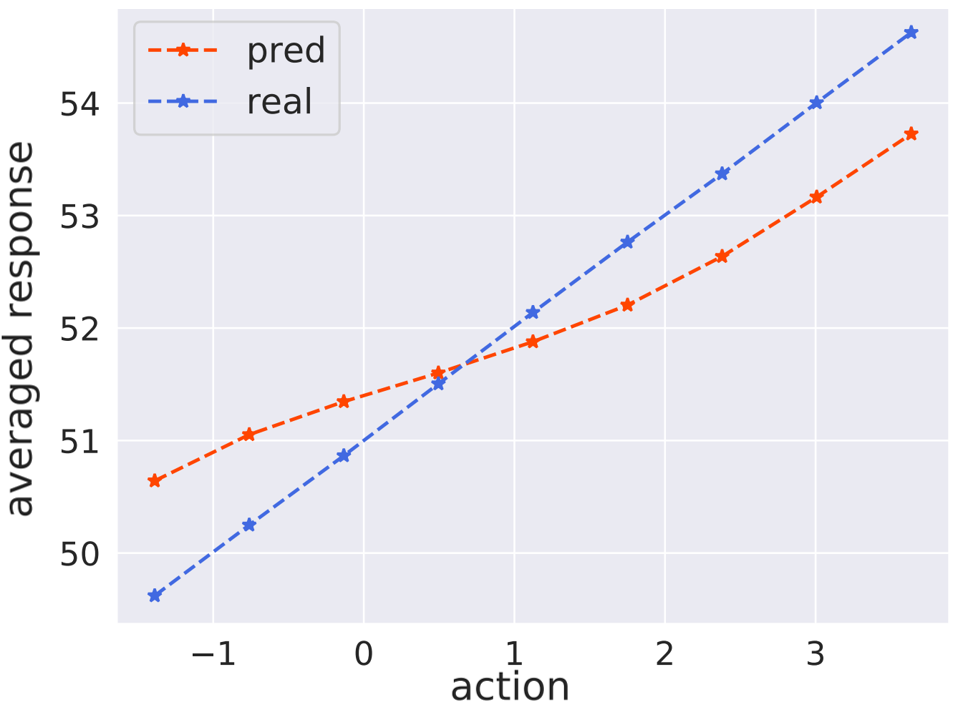}
				}
				\subfigure[e0.05\_p0.05]{
					\includegraphics[width=0.305\linewidth]{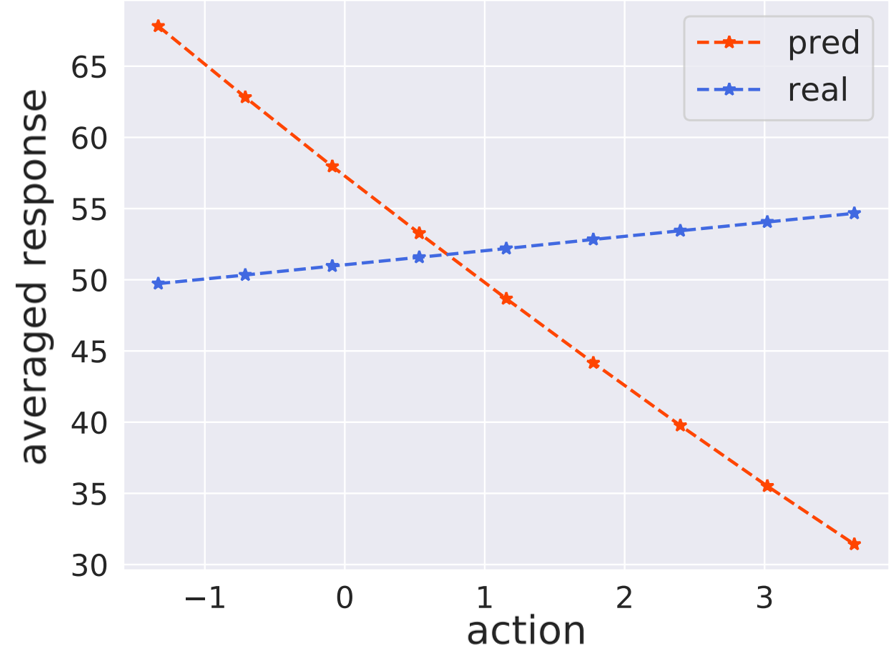}
				}
				\caption{Illustration of the averaged response curves of Inverse Propensity Weighting (IPW) in GNFC. }
				\label{fig:response_ipw_gnfc}
			\end{figure}
			
			\begin{figure}[ht]
				\centering
				\subfigure[t0\_bias2]{
					\includegraphics[width=0.305\linewidth]{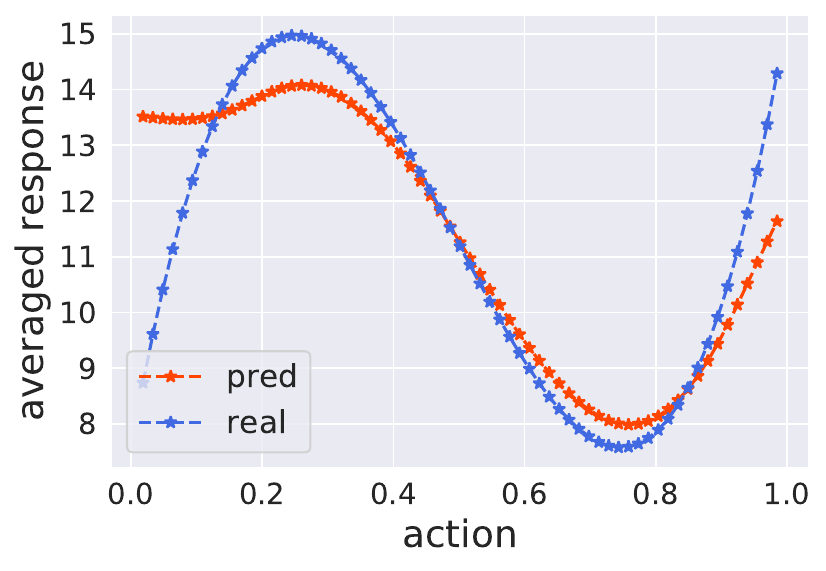}
				}
				\subfigure[t0\_bias20]{
					\includegraphics[width=0.305\linewidth]{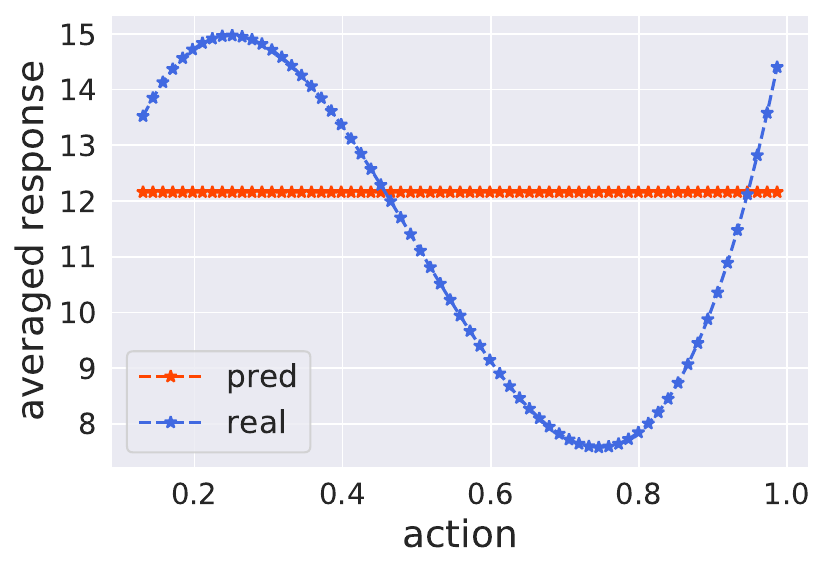}
				}
				\subfigure[t0\_bias50]{
					\includegraphics[width=0.305\linewidth]{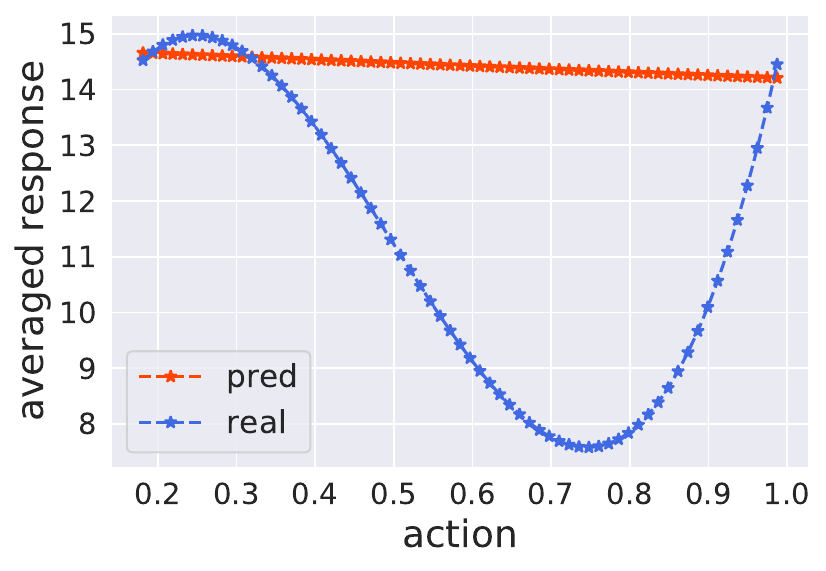}
				}
				\subfigure[t1\_bias2]{
					\includegraphics[width=0.305\linewidth]{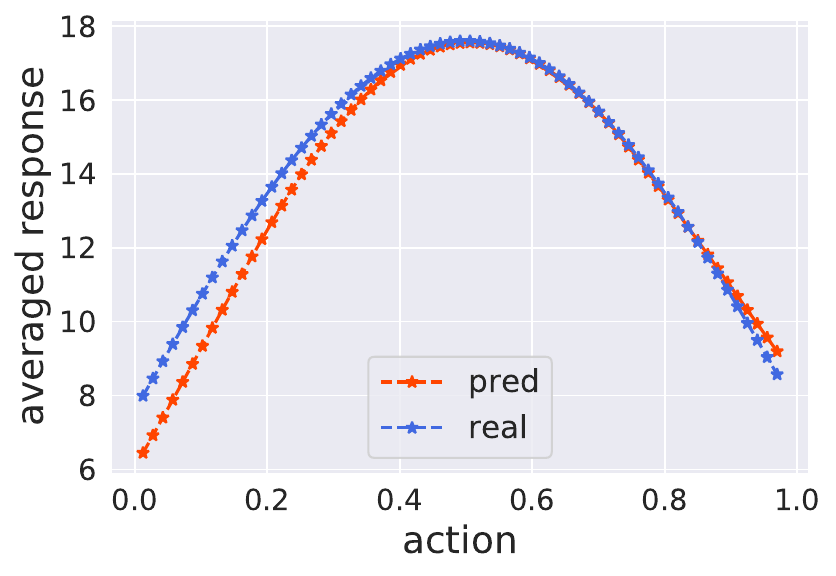}
				}
				\subfigure[t1\_bias6]{
					\includegraphics[width=0.305\linewidth]{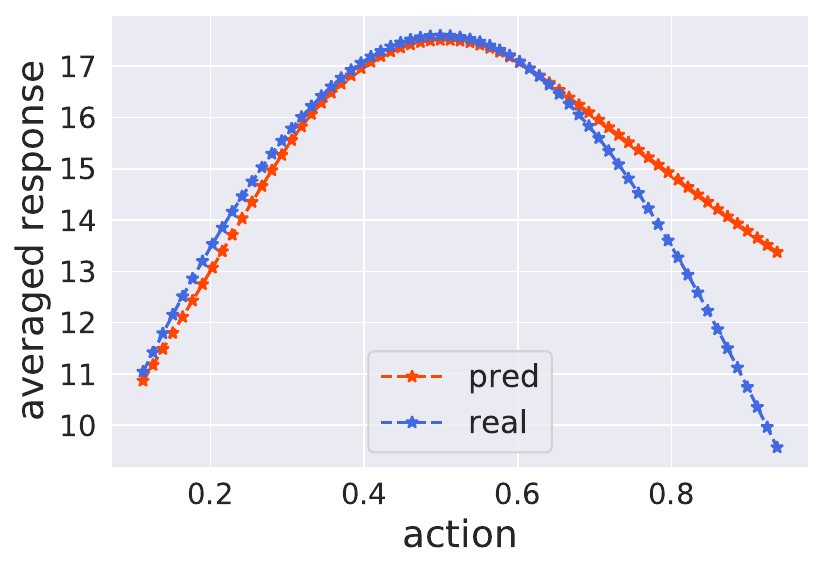}
				}
				\subfigure[t1\_bias8]{
					\includegraphics[width=0.305\linewidth]{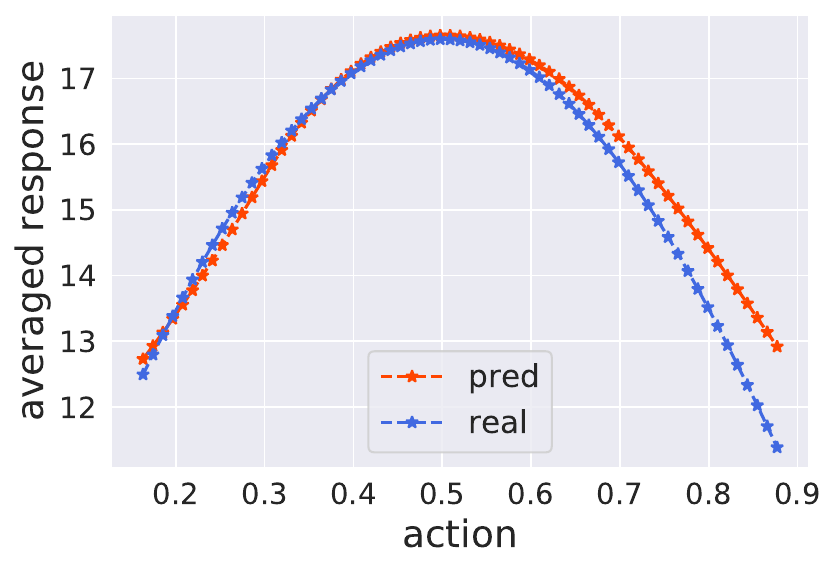}
				}
				\subfigure[t2\_bias2]{
					\includegraphics[width=0.305\linewidth]{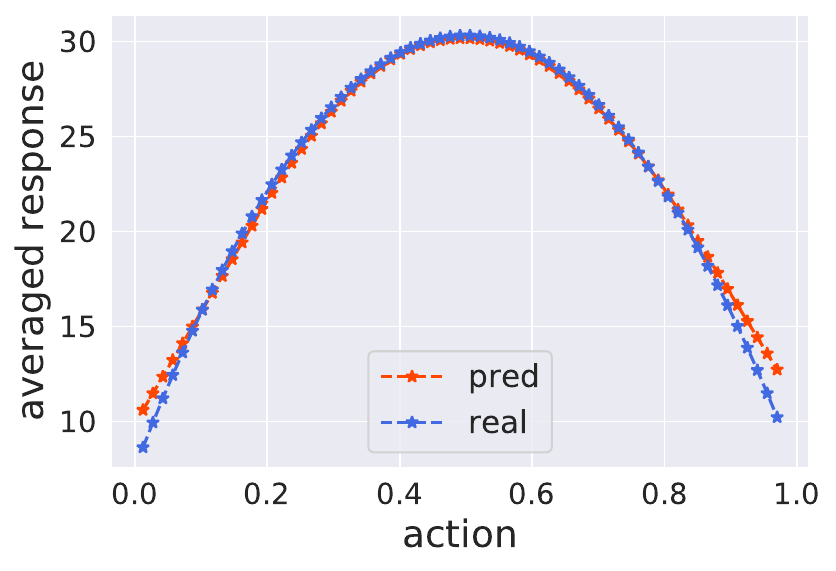}
				}
				\subfigure[t2\_bias6]{
					\includegraphics[width=0.305\linewidth]{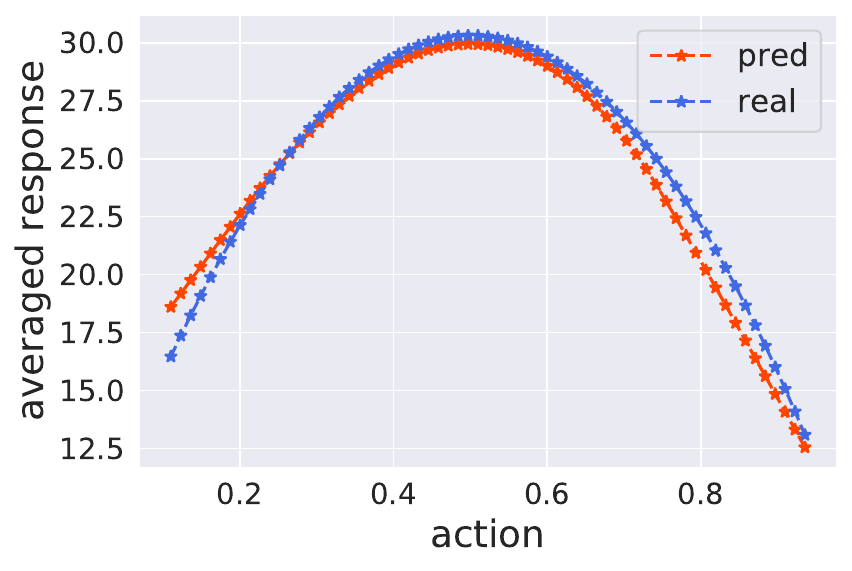}
				}
				\subfigure[t2\_bias8]{
					\includegraphics[width=0.305\linewidth]{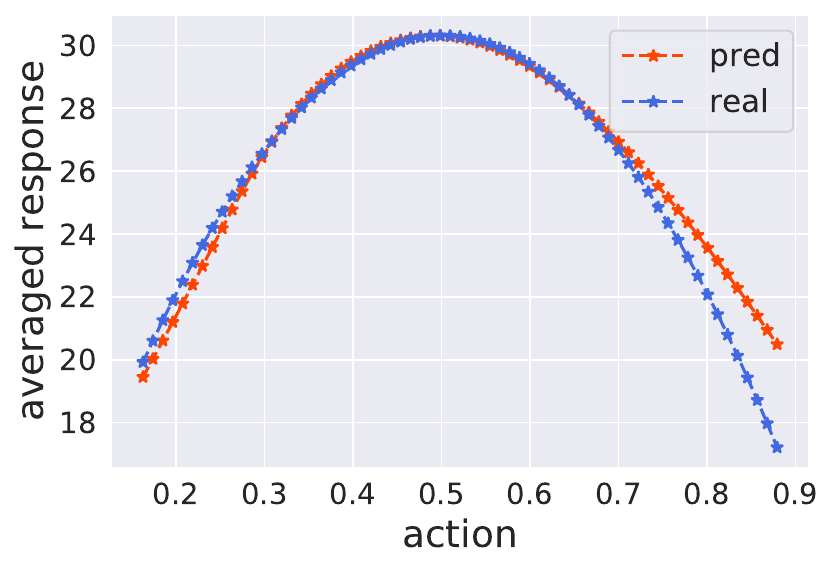}
				}
				\caption{Illustration of the averaged response curves of SCIGAN in TCGA. }
				\label{fig:response_scigan_tcga}
			\end{figure}
			
			\begin{figure}[ht]
				\centering
				\subfigure[e1\_p1]{
					\includegraphics[width=0.305\linewidth]{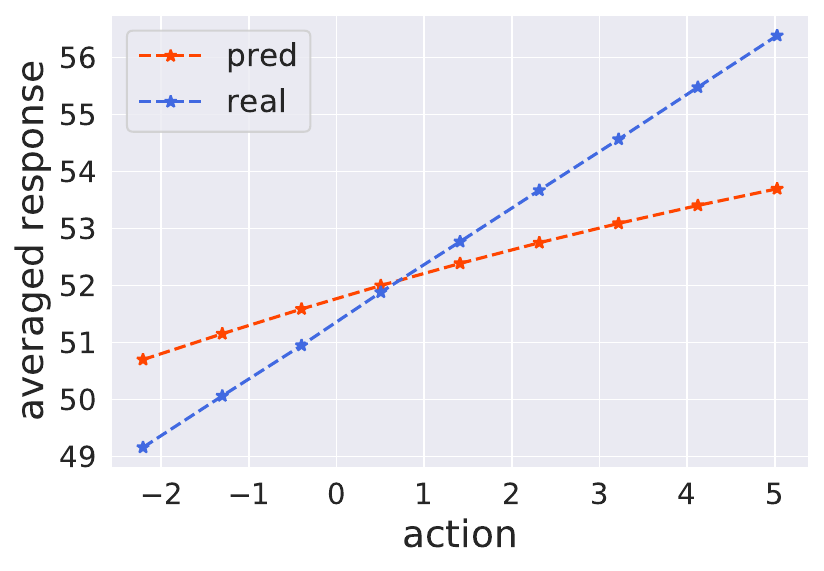}
				}
				\subfigure[e0.2\_p1]{
					\includegraphics[width=0.305\linewidth]{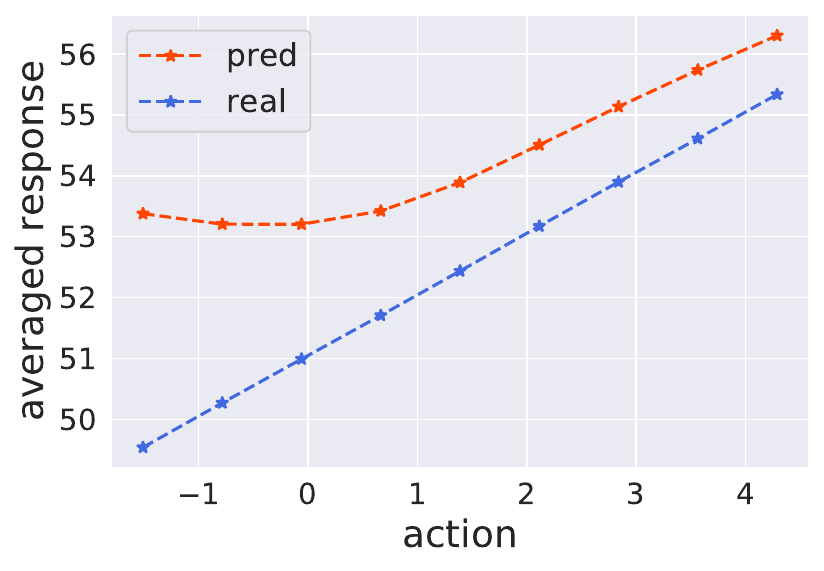}
				}
				\subfigure[e0.05\_p1]{
					\includegraphics[width=0.305\linewidth]{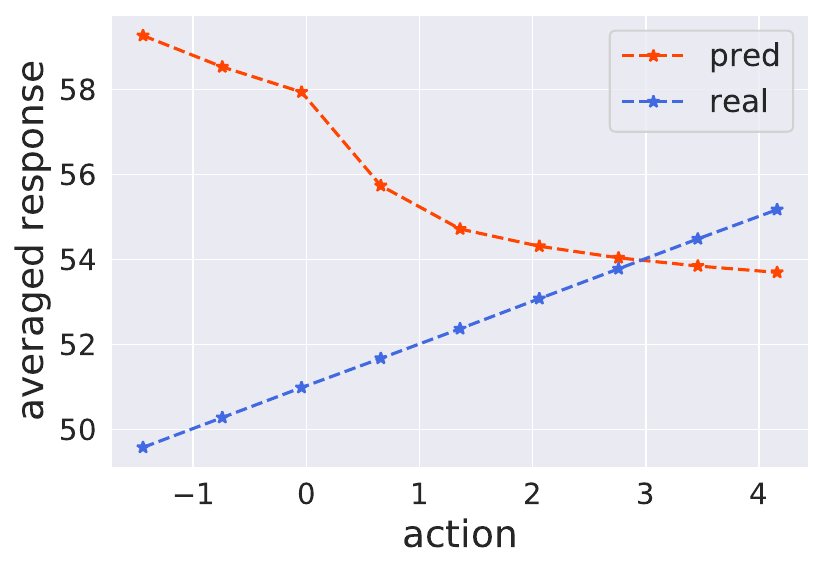}
				}
				\subfigure[e1\_p0.2]{
					\includegraphics[width=0.305\linewidth]{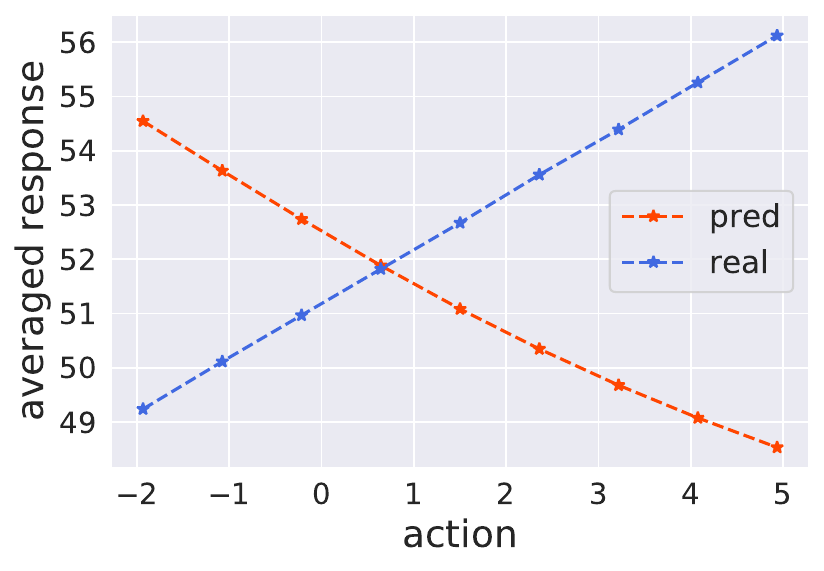}
				}
				\subfigure[e0.2\_p0.2]{
					\includegraphics[width=0.305\linewidth]{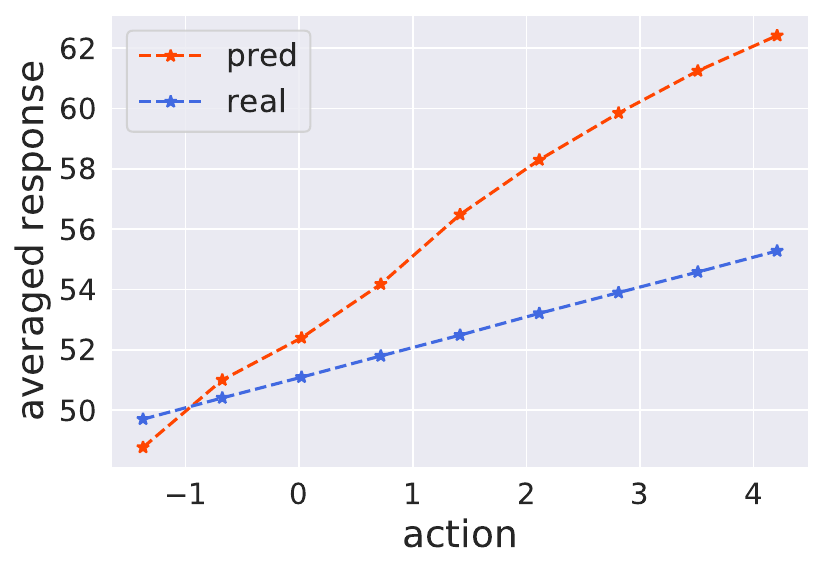}
				}
				\subfigure[e0.05\_p0.2]{
					\includegraphics[width=0.305\linewidth]{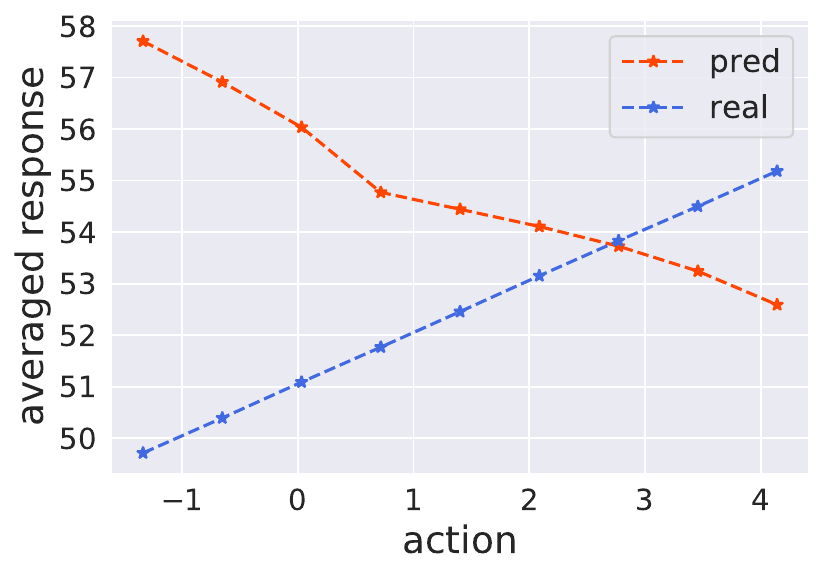}
				}
				\subfigure[e1\_p0.05]{
					\includegraphics[width=0.305\linewidth]{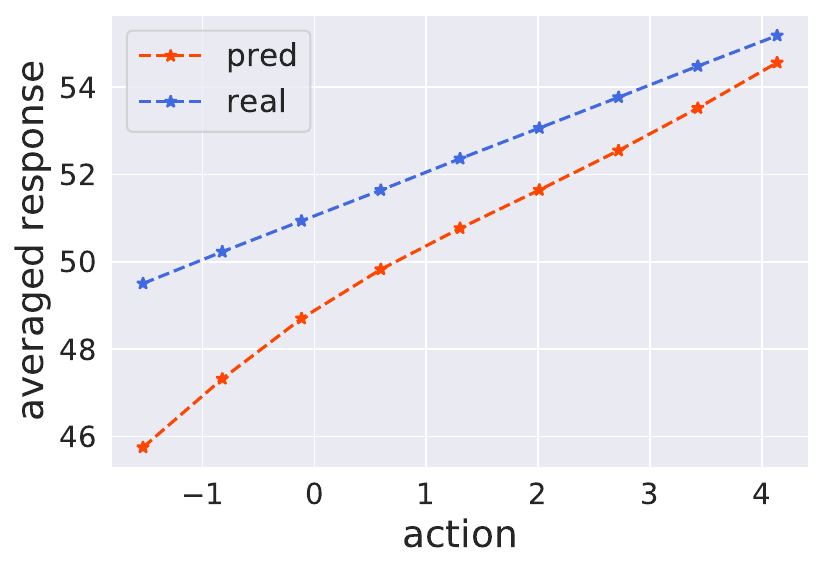}
				}
				\subfigure[e0.2\_p0.05]{
					\includegraphics[width=0.305\linewidth]{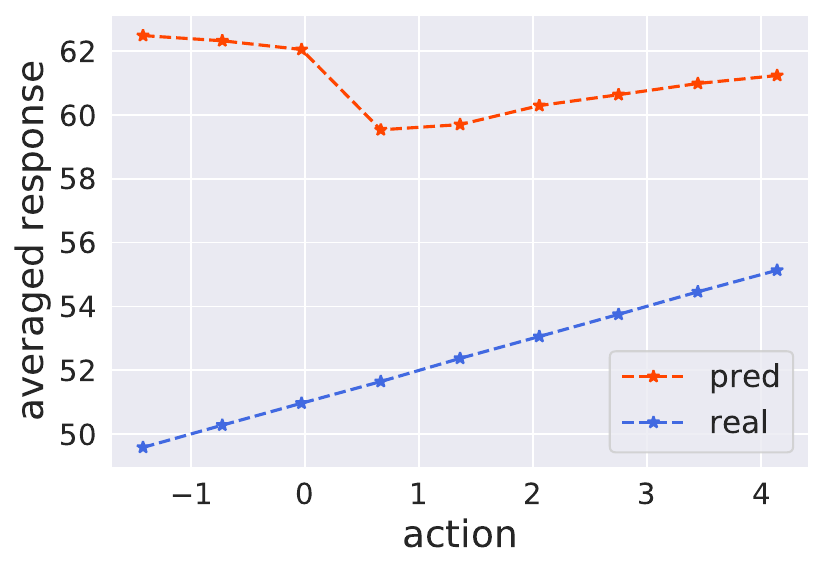}
				}
				\subfigure[e0.05\_p0.05]{
					\includegraphics[width=0.305\linewidth]{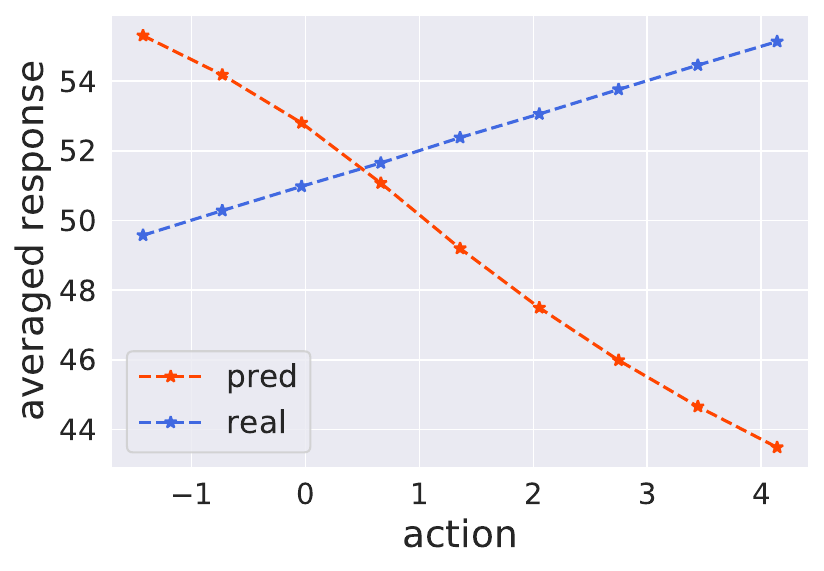}
				}
				\caption{Illustration of the averaged response curves of SCIGAN in GNFC. }
				\label{fig:response_scigan_gnfc}
			\end{figure}
			
			\begin{figure}[ht]
				\centering
				\subfigure[t0\_bias2]{
					\includegraphics[width=0.305\linewidth]{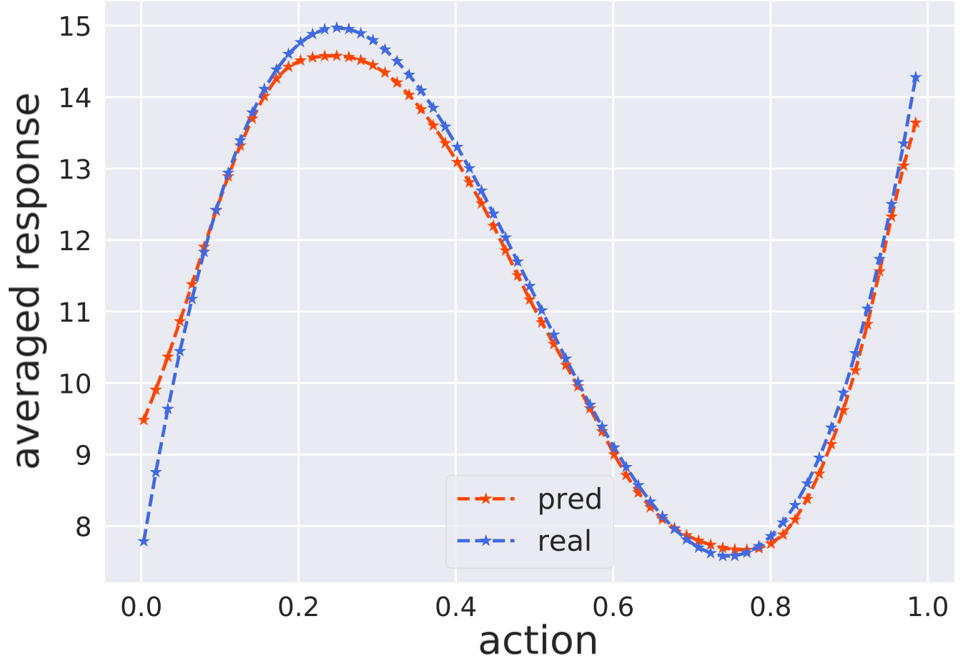}
				}
				\subfigure[t0\_bias20]{
					\includegraphics[width=0.305\linewidth]{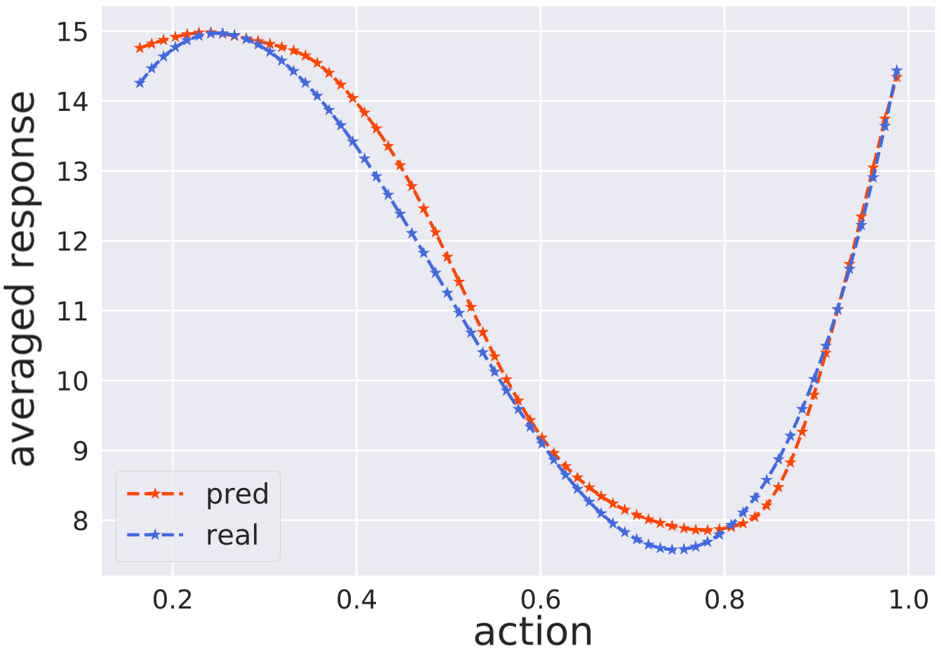}
				}
				\subfigure[t0\_bias50]{
					\includegraphics[width=0.305\linewidth]{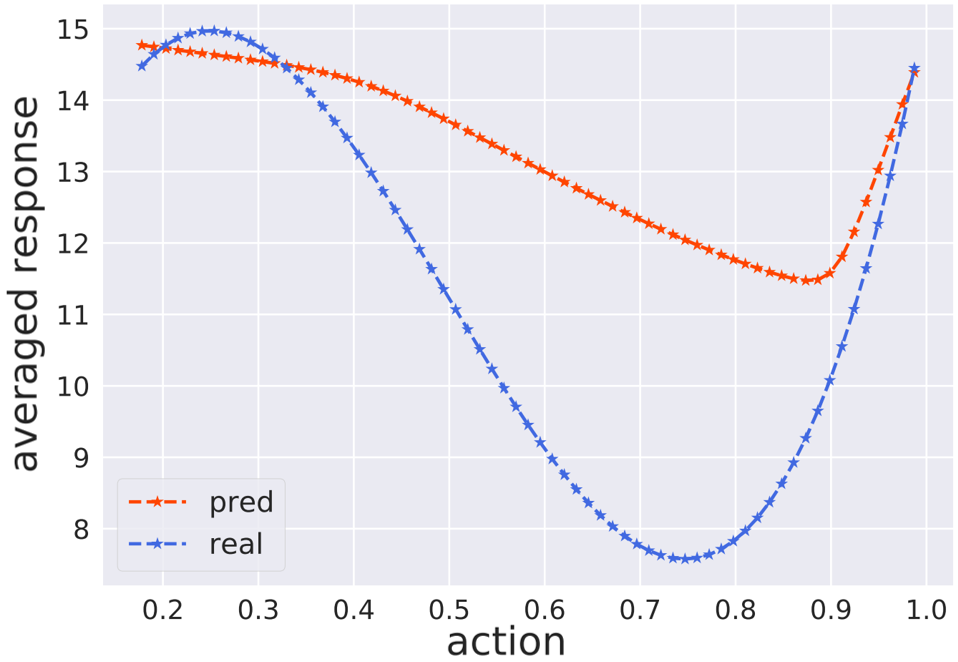}
				}
				\subfigure[t1\_bias2]{
					\includegraphics[width=0.305\linewidth]{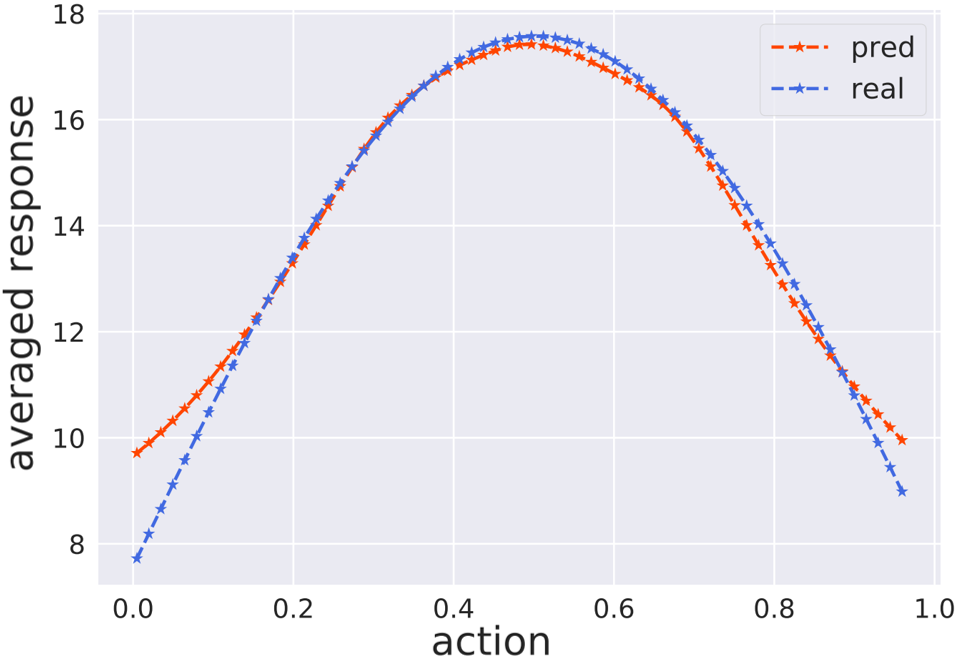}
				}
				\subfigure[t1\_bias6]{
					\includegraphics[width=0.305\linewidth]{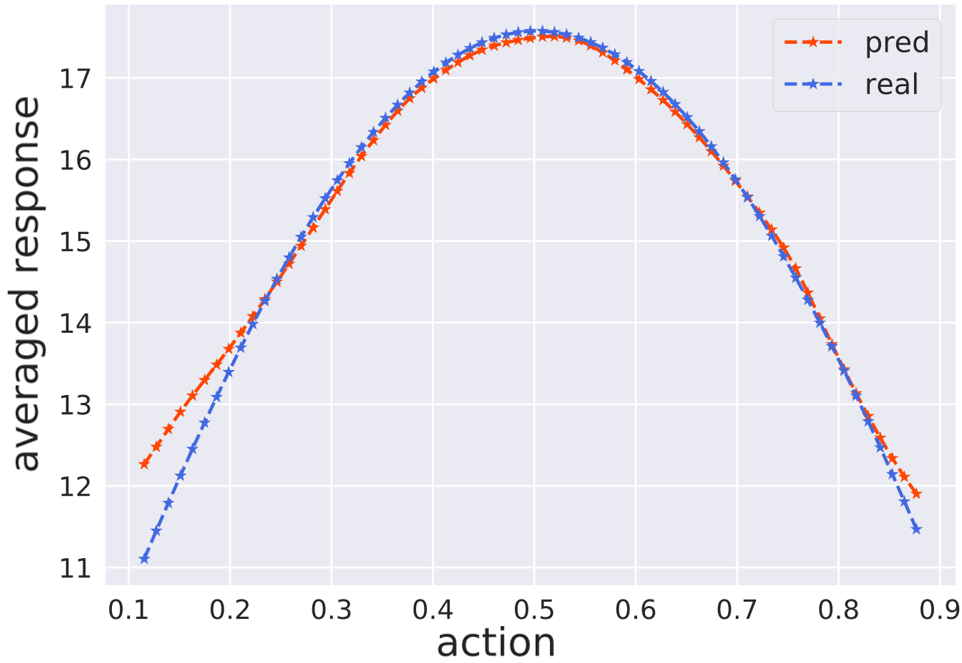}
				}
				\subfigure[t1\_bias8]{
					\includegraphics[width=0.305\linewidth]{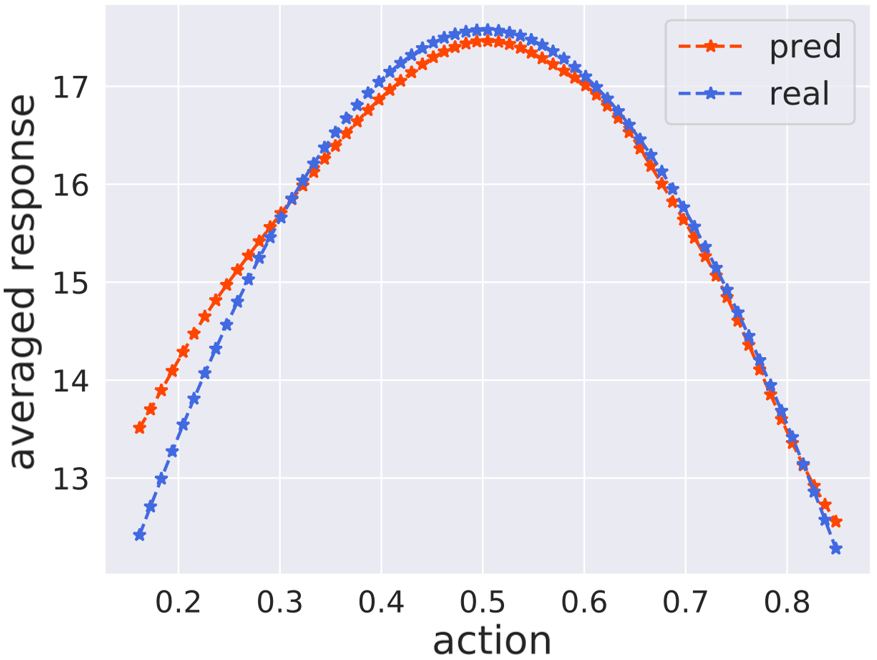}
				}
				\subfigure[t2\_bias2]{
					\includegraphics[width=0.305\linewidth]{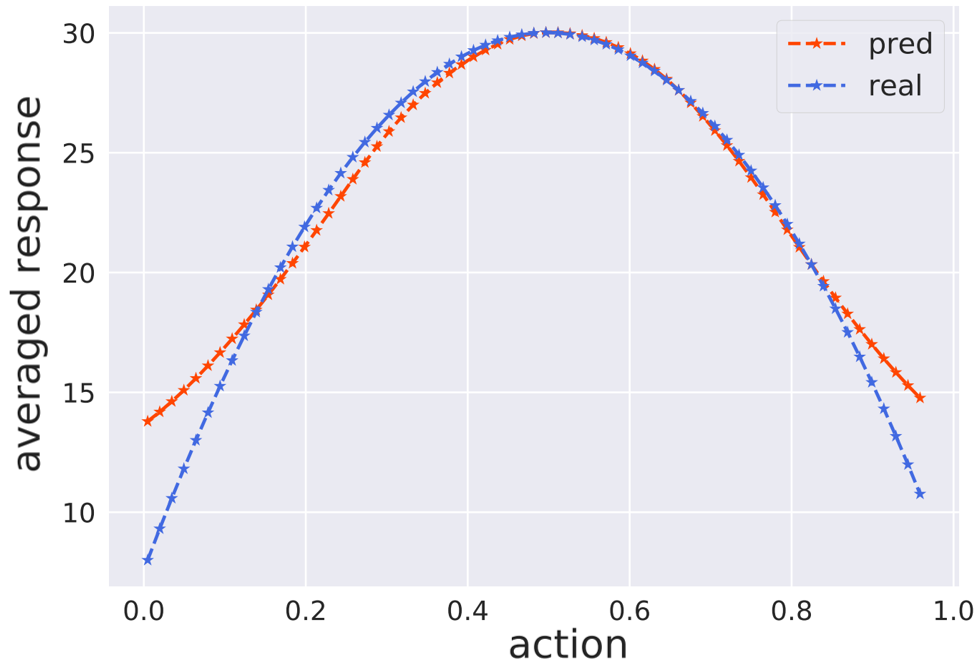}
				}
				\subfigure[t2\_bias6]{
					\includegraphics[width=0.305\linewidth]{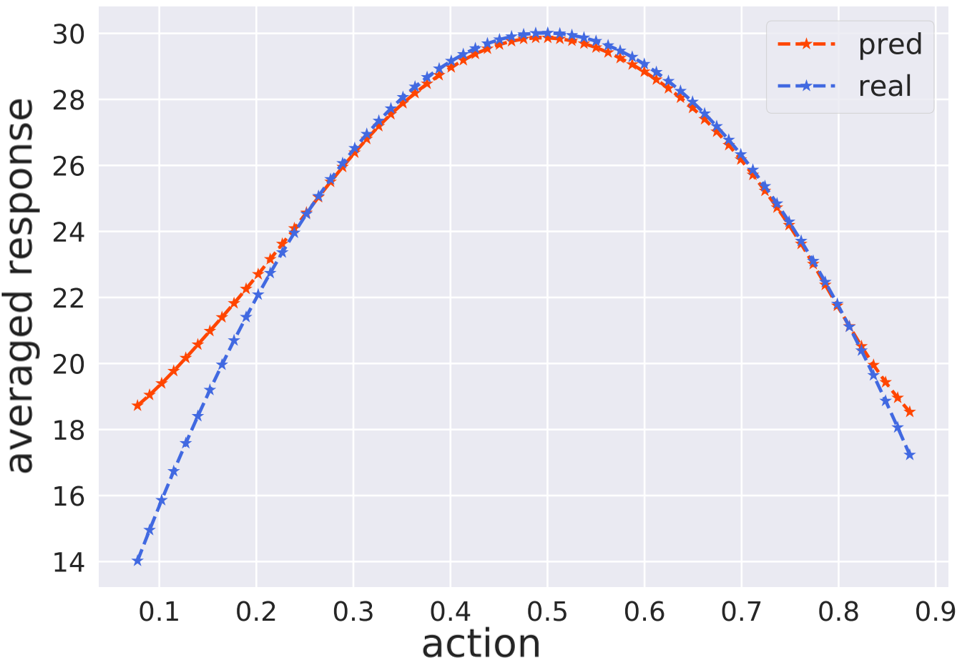}
				}
				\subfigure[t2\_bias8]{
					\includegraphics[width=0.305\linewidth]{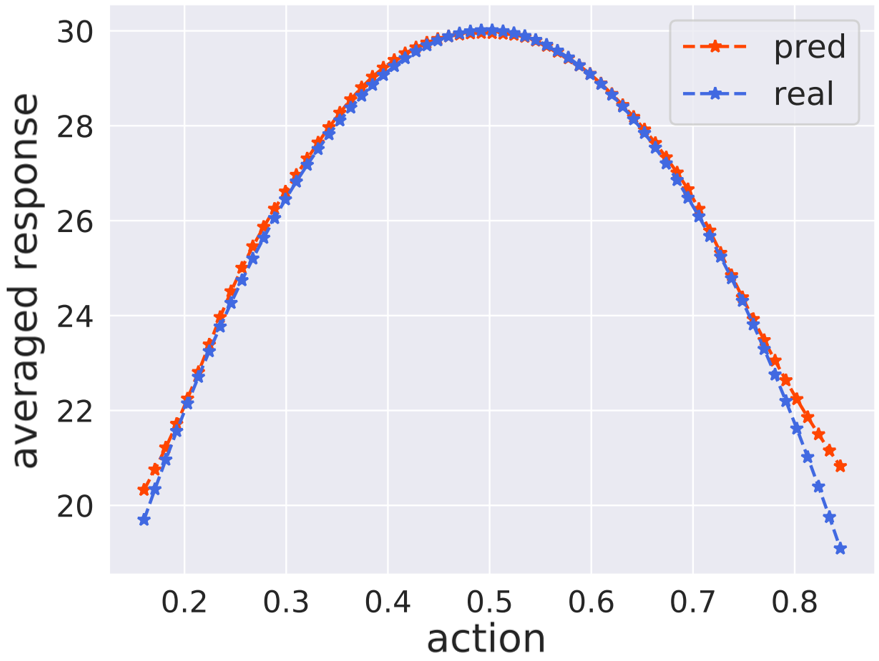}
				}
				\caption{Illustration of the averaged response curves of GALILEO in TCGA. }
				\label{fig:response_galileo_tcga}
			\end{figure}
			
			\begin{figure}[ht]
				\centering
				\subfigure[e1\_p1]{
					\includegraphics[width=0.305\linewidth]{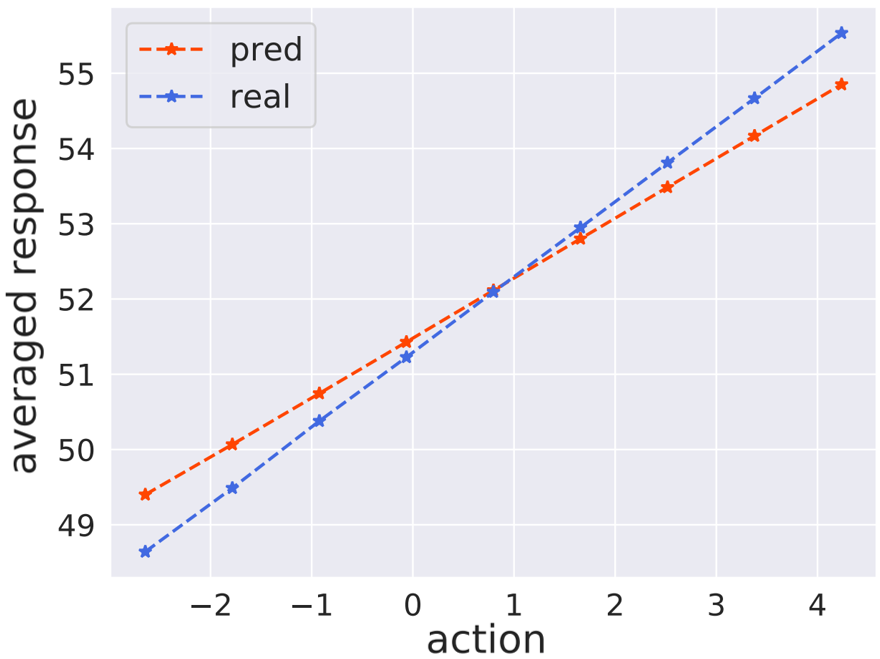}
				}
				\subfigure[e0.2\_p1]{
					\includegraphics[width=0.305\linewidth]{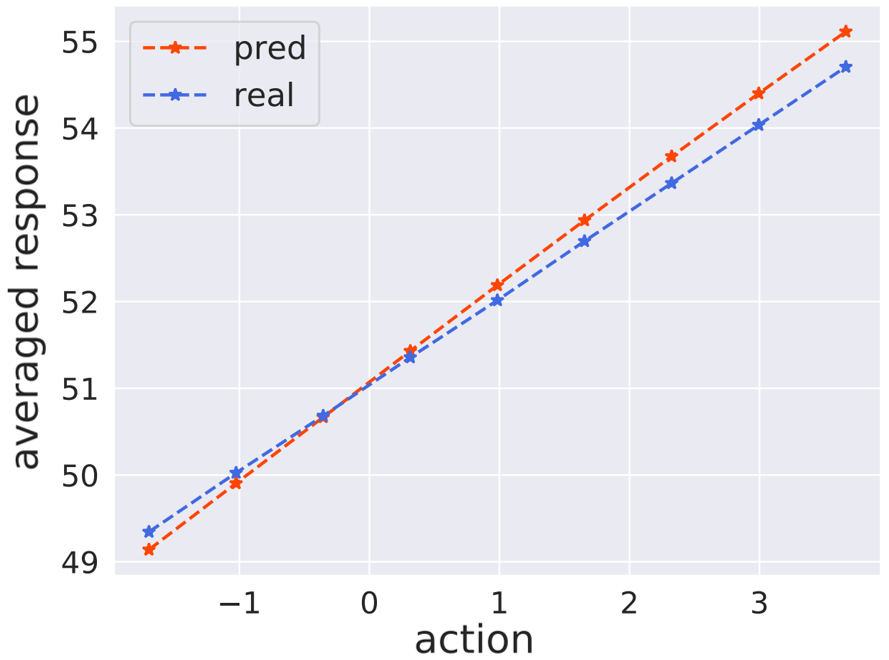}
				}
				\subfigure[e0.05\_p1]{
					\includegraphics[width=0.305\linewidth]{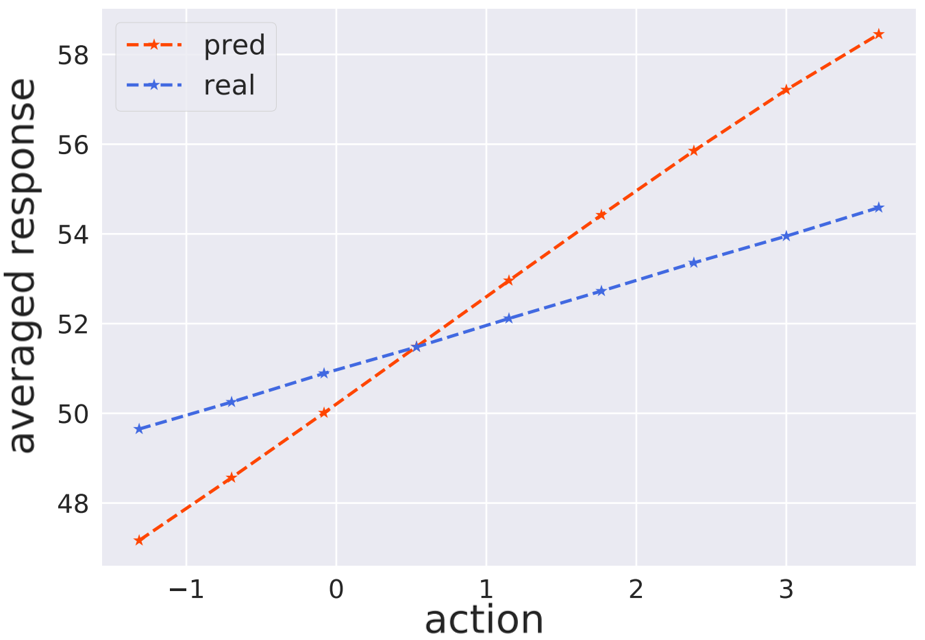}
				}
				\subfigure[e1\_p0.2]{
					\includegraphics[width=0.305\linewidth]{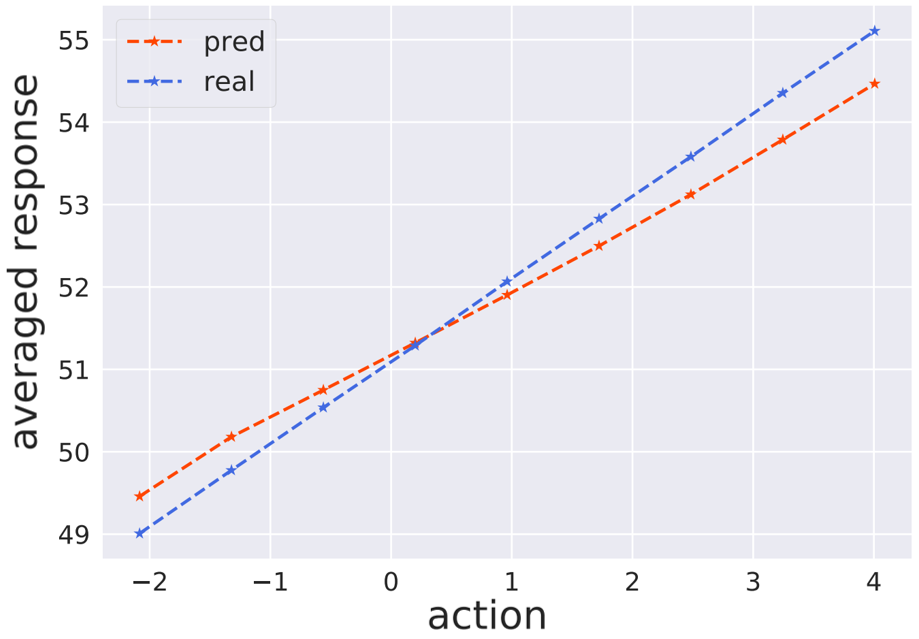}
				}
				\subfigure[e0.2\_p0.2]{
					\includegraphics[width=0.305\linewidth]{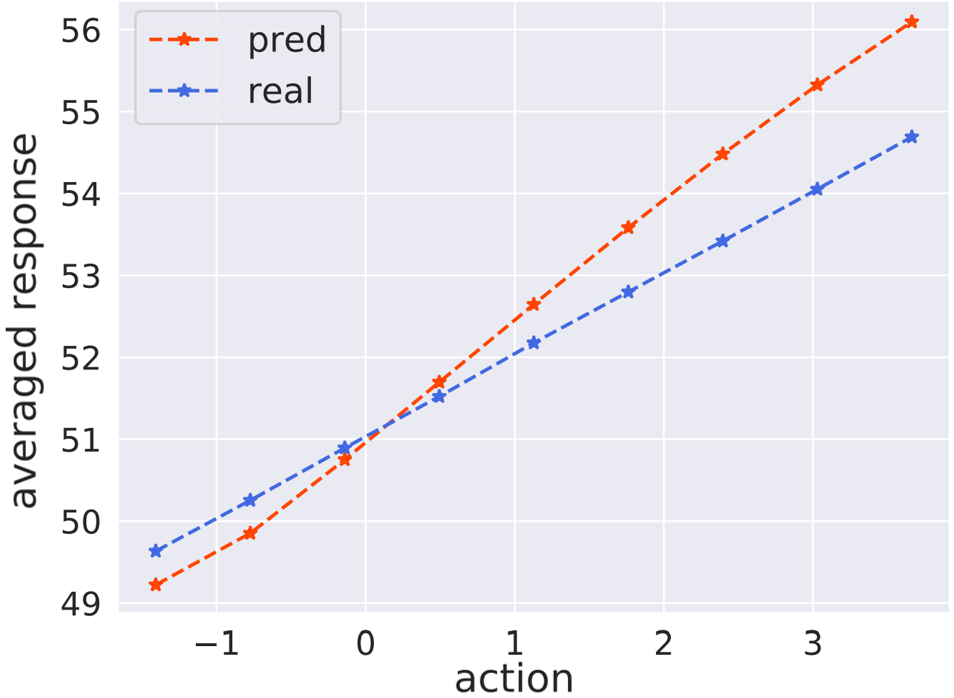}
				}
				\subfigure[e0.05\_p0.2]{
					\includegraphics[width=0.305\linewidth]{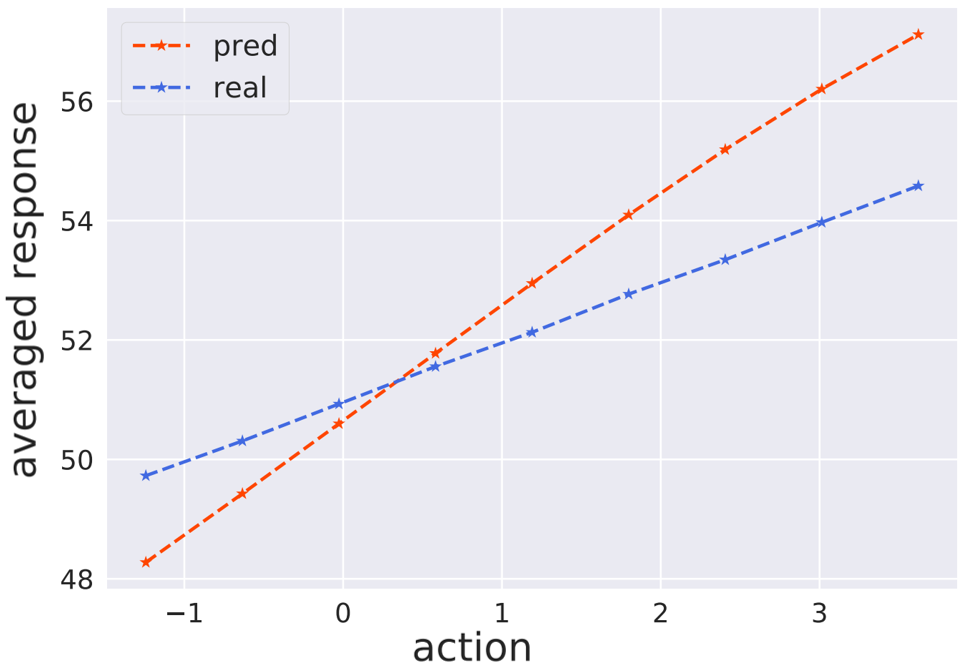}
				}
				\subfigure[e1\_p0.05]{
					\includegraphics[width=0.305\linewidth]{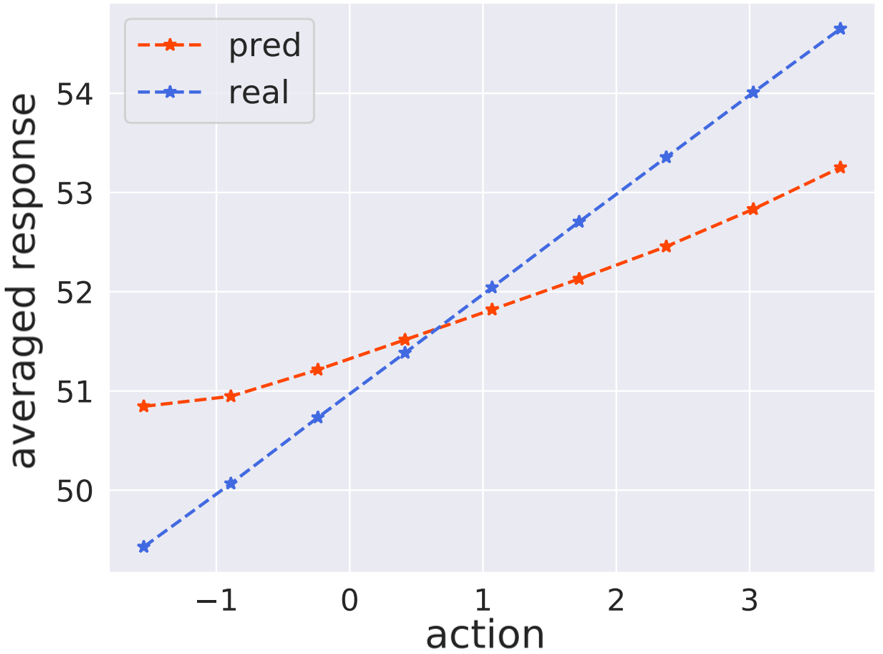}
				}
				\subfigure[e0.2\_p0.05]{
					\includegraphics[width=0.305\linewidth]{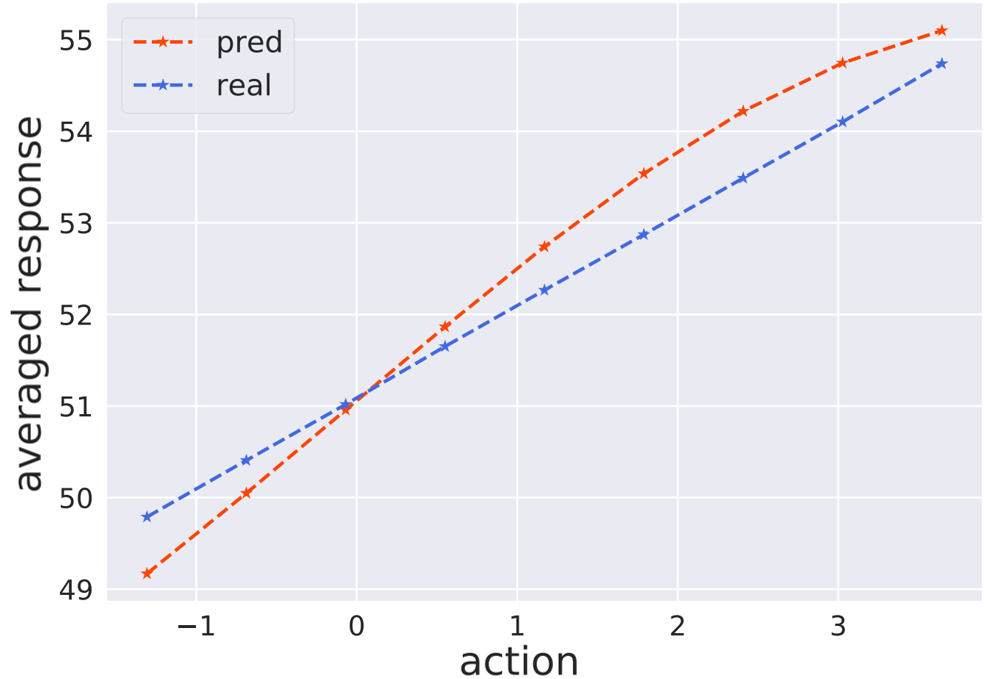}
				}
				\subfigure[e0.05\_p0.05]{
					\includegraphics[width=0.305\linewidth]{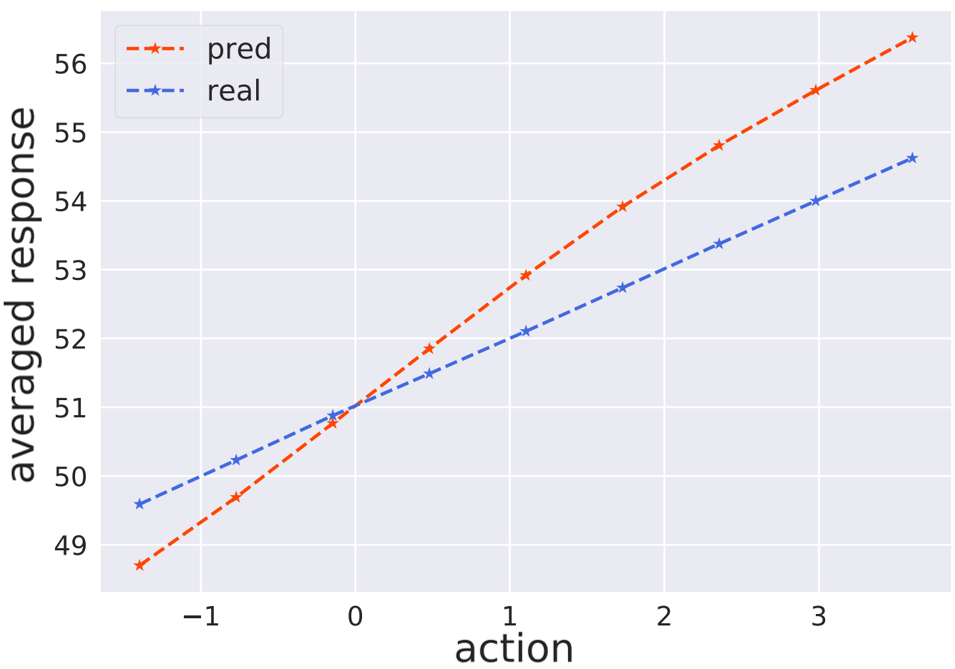}
				}
				\caption{Illustration of the averaged response curves of GALILEO in GNFC. }
				\label{fig:response_galileo_gnfc}
			\end{figure}


\end{document}